\newtheorem{assumption}{\textbf{Assumption}}
\newtheorem{definition}{\textbf{Definition}}
\newtheorem{corollary}{\textbf{Corollary}}
\newtheorem{lemma}{\textbf{Lemma}}
\newtheorem{proposition}{\textbf{Proposition}}
\newtheorem{remark}{\textbf{Remark}}
\newcommand{\bP}{\text{\boldmath{$P$}}}
\newcommand{\bmu}{\text{\boldmath{$\mu$}}}
\newcommand{\bx}{\boldsymbol{x}}
\newcommand{\bxi}{\boldsymbol{\xi}}
\newcommand{\br}{\boldsymbol{r}}
\newcommand{\bu}{\boldsymbol{u}}
\newcommand{\bb}{\boldsymbol{b}}
\newcommand{\bc}{\boldsymbol{c}}
\newcommand{\by}{\boldsymbol{y}}
\newcommand{\bS}{\boldsymbol{S}}
\newcommand{\btheta}{\boldsymbol{\theta}}
\newcommand{\bphi}{\boldsymbol{\phi}}
\newcommand{\bp}{\boldsymbol{p}}
\newcommand{\bI}{\boldsymbol{I}}
\newcommand{\bA}{\boldsymbol{A}}
\newcommand{\bV}{\boldsymbol{V}}
\newcommand{\bbP}{\mathbb{P}}
\newcommand{\bbR}{\mathbb{R}}
\newcommand{\bv}{\boldsymbol{v}}
\newcommand{\bw}{\boldsymbol{w}}
\newcommand{\diag}{\mathsf{diag}}
\newcommand{\mE}{\mathbb{E}}
\newcommand{\Var}{\mathsf{Var}}
\newcommand{\TV}{\mathsf{TV}}
 \newcommand{\cE}{\mathcal{E}}
\newcommand{\cX}{\mathcal{X}}
\newcommand{\cC}{\mathcal{C}}
\newcommand{\cA}{\mathcal{A}}
\newcommand{\cH}{\mathcal{H}}
\newcommand{\cO}{\mathcal{O}}
\newif\ifappendix
\icmltitlerunning{Towards a Theoretical Understanding to the Generalization of RLHF}
\begin{document}

\twocolumn[
  \icmltitle{Towards a Theoretical Understanding to the Generalization of RLHF}



  \icmlsetsymbol{equal}{*}

  \begin{icmlauthorlist}
    \icmlauthor{Zhaochun Li}{bit,bjzgca}
    \icmlauthor{Mingyang Yi}{ruc}
    \icmlauthor{Yue Wang}{bjzgca}
    \icmlauthor{Shisheng Cui}{bit}
    \icmlauthor{Yong Liu}{ruc}
   
    
  \end{icmlauthorlist}

  \icmlaffiliation{bit}{Beijing Institute of Technology}
  \icmlaffiliation{bjzgca}{Zhongguancun Academy}
  \icmlaffiliation{ruc}{Renmin University of China}

  \icmlcorrespondingauthor{Mingyang Yi}{yimingyang@ruc.edu.cn}

  \icmlkeywords{Machine Learning, ICML}

  \vskip 0.3in
]



\printAffiliationsAndNotice{}  

\etocdepthtag.toc{mtchapter} 

\begin{abstract}
Reinforcement Learning from Human Feedback (RLHF) and its variants have emerged as the dominant approaches for aligning Large Language Models with human intent. While empirically effective, the theoretical generalization properties of these methods in high-dimensional settings remain to be explored. 
To this end, we build the generalization theory on RLHF of LLMs under the linear reward model, through the framework of algorithmic stability. 
In contrast to the existing works built upon the consistency of maximum likelihood estimations on reward model, our analysis is presented under an end-to-end learning framework, which is consistent with practice. Concretely, we prove that under a key \textbf{feature coverage} condition, the empirical optima of policy model have a generalization bound of order $\cO(n^{-\frac{1}{2}})$. Moreover, the results can be extrapolated to parameters obtained by gradient-based learning algorithms, i.e., Gradient Ascent (GA) and Stochastic Gradient Ascent (SGA). Thus, we argue that our results provide new theoretical evidence for the empirically observed generalization of LLMs after RLHF.    
\end{abstract}

\section{Introduction}

Reinforcement Learning from Human Feedback (RLHF) and its variants have emerged as the dominant approaches for aligning Large Language Models (LLMs) with human intent\citep{christiano2017deep,ouyang2022training,guo2025deepseek, yang2025qwen3, comanici2025gemini}. Guided by this paradigm, LLMs have demonstrated remarkable capabilities across diverse domains, such as image generation \citep{liu2024sora, shen2025vlm}, Embodied AI \citep{intelligence2025pi,ye2025vla} and so on . While empirically effective, the theoretical understanding of the generalization properties of RLHF remains to be explored, particularly in high-dimensional settings. 



Currently, the theoretical frameworks of RLHF for LLMs largely rely on the structural insight that the optimal policy for the RLHF objective admits a closed-form posterior Boltzmann distribution depending on the reward model \citep{rafailov2023direct}. Building on this, the existing literature \citep{zhu2023principled, xiong2023iterative} typically focus on the statistical quality of reward modeling: they analyze the convergence of reward parameters learned via Maximum Likelihood Estimation (MLE) from offline data, and subsequently treat the estimated parameters in the closed-form posterior Boltzmann distribution as the learned policy. 

However, this analytical paradigm faces two fundamental limitations. First, it necessitates stringent data coverage assumptions to guarantee the consistency of reward parameters\citep{song2024importance}, which is rarely satisfied in high-dimensional regimes \citep{ethayarajh2019contextual, aghajanyan2021intrinsic, ruppik2025less}. 
Second, and perhaps more critically, it focuses solely on the supervised learning nature of reward modeling. 
By treating policy optimization as an exact maximization step, such analyses fail to capture the benefits inherent to the reinforcement learning process itself. 


To address these limitations, 
we propose an end-to-end theoretical framework that captures the dynamic interaction between the policy and the environment during practical RLHF training. 
Our analyses are conducted on the KL-regularized RLHF objective \citep{zhao2024sharp, azar2024general} 
under the standard linear reward assumption \citep{jin2020provably}. 
Within this framework, we focus our analysis on the suboptimality gap of the learned policy, isolating the generalization error as a critical component.



Technically, our analyzes are grounded in a key \textbf{feature coverage assumption}, which intuitively posits that as the dataset grows, the column space of the empirical covariance matrix of model's feature progressively expands to cover the feature space. Consequently, the residual component of any feature vector falling outside the covered subspace diminishes. This geometric property allows us to rigorously evaluate the generalization error of model obtained across different optimization regimes, varying from the empirical optima to models obtained by gradient-based methods.


Our theoretical results can be summarized as follows:
\begin{itemize}
    \item \textbf{The Suboptimality Gap on Empirical Optima.} Under a strong no-degenerate assumption (Assumption \ref{ass:positive definite}), we prove the empirical optima can match the ground truth parameters when data is sufficient. Moreover, under the proposed feature coverage assumption (Assumption \ref{ass:span space}), by analyzing the algorithmic stability \citep{bousquet2002stability,yi2022characterization}, we prove a $\tilde{\cO}(n^{-\frac{1}{2}})$ bound of suboptimality gap on empirical optima, when we have $n$ training data. 
    \item  \textbf{The Suboptimality Gap under Gradient-Based Algorithms.}
    Moreover, we extend our algorithmic stability analysis to the practical setting of iterative optimization algorithms. Under the key feature coverage assumption, we derive explicit suboptimality gap bounds for Gradient Ascent (GA) and Stochastic Gradient Ascent (SGA) of order $\tilde{\cO}(T^{-\frac{1}{4}}+n^{-\frac{1}{2}})$ and $\tilde{\cO}(T^{-\frac{1}{8}}+n^{-\frac{1}{2}})$, respectively.
\end{itemize}

The remainder of this paper is organized as follows. Section \ref{sec:related work} and Section \ref{sec:preliminary} review related work and introduce the formal setting, respectively.
Section \ref{sec:suboptimality} formally defines the suboptimality gap and presents its decomposition. Section \ref{sec:suboptimality of empirical optimum} analyzes the suboptimality gap of the empirical optimum with respect to the true optimal policy. The analysis based on the algorithmic stability is extended to the gradient-based optimization algorithms (GA and SGA) in Section \ref{sec:suboptimality of gradient based algorithms}. Section \ref{sec:experiments} empirically verifies the theoretical results in this paper.

\section{Related Work}\label{sec:related work}
\paragraph{Theoretical Foundations of RLHF.}

The theoretical foundations of RLHF are deeply rooted in the context of dueling bandits \citep{dudik2015contextual, bengs2021preference, chang2024dataset} and preference-based reinforcement learning \citep{wirth2017survey, xu2020preference}. For dueling bandits, the learner aims to minimize regret based on pairwise comparisons between actions. Theoretical analyses in this domain have progressively expanded from the tabular setting \citep{zhan2023provable} to linear models \citep{xiong2023iterative, saha2023dueling, xu2024dpo} and general function approximation \citep{chen2022human}. With the empirical success of RLHF, these analyzes are extended to RLHF or directly preference-based optimization (DPO) \citep{rafailov2023direct}.  
Furthermore, \cite{zhu2023principled} quantified the estimation error between the reward parameters learned via MLE and the ground truth, demonstrating that applying pessimistic strategies to the learned reward model guarantees that the resulting policy achieves low sub-optimality. Subsequent research has adopted a similar analytical framework to extend these guarantees to DPO and other related algorithms \citep{ ethayarajh2024kto, bai2025online}.

In contrast to prior works \citep{xiong2023iterative,zhu2023principled,chaudhari2025rlhf} that primarily evaluate the consistency between parameters estimated from offline preference data and the ground truth, we adopt an end-to-end analytical framework and directly analyze the suboptimality gap of the policy obtained by RL algorithm. 

\paragraph{Algorithmic Stability and Generalization.}
Algorithmic stability, rigorously formalized by \citep{bousquet2002stability}, provides a powerful framework for bounding generalization error. While early works focused on deterministic algorithms, \citep{hardt2016train} significantly expanded this framework to randomized algorithms, establishing uniform stability guarantees for Stochastic Gradient Descent in both convex and non-convex settings \citep{bassily2020stability, yi2022characterization}. Subsequently, \citep{feldman2019high} and \citep{bousquet2020sharper} derived sharper high-probability bounds by improving the dependence on sample size and failure probability. 
Complementing these statistical advancements, recent works have grounded stability in the optimization dynamics, showing that the algorithm's sensitivity is explicitly controlled by the gradient norm \citep{kuzborskij2018data, lei2020fine, wang2022generalization, zhu2025stability}. Despite these advancements in supervised learning, the application of stability analysis to Reinforcement Learning remains limited. Existing works primarily focus on value-based methods in tabular settings \citep{fan2020theoretical, agarwal2020pc}. In this paper, we bridge this gap by applying the uniform stability framework to KL-regularized RLHF objective, deriving explicit generalization bounds for GA and SGA in the context of RLHF.

\section{Formal Setting}\label{sec:preliminary}
This section presents the background material that will be used throughout the paper.

\paragraph{Notation.} We use $\langle \bu,\bv \rangle$ to denote the inner product and $\|\cdot\|$ for the Euclidean norm of vectors and the spectral norm for matrices. For a matrix $\bA$, let $\mathrm{C}(\bA)$ and $\mathrm{N}(\bA)$ be its column and null spaces, while $\sigma_{\max}(\bA)$, $\sigma_{\min}(\bA)$ and $\sigma_{\min}^{+}(\bA)$ represent its largest, smallest, and smallest positive eigenvalues, respectively. $\mathcal{X}$ and $\mathcal{A}$ denote the context and action spaces. Finally, $\mathcal{O}(\cdot)$ refers to the standard big-O notation, $\tilde{\mathcal{O}}(\cdot)$ suppresses logarithmic factors, and $a \lesssim b$ implies $a \le C b$ for some constant $C > 0$.


\subsection{Preliminaries}

\textbf{Reinforcement Learning from Human Feedback}

We situate our work within the reinforcement learning from human feedback (RLHF) paradigm as formulated in \cite{ziegler2019fine}, given a dataset of prompts $\bS = \{x_{1},\cdots ,x_{n}\}$ and a fixed
reference policy $\pi_{\rm{ref}}$. Unlike the standard two-stage RLHF framework where a reward model $r(x, a)$ is learned from preferences, we operate in a setting where ground-truth rewards on prompts $x\in\cX$ and response $a\in\cA$ are available. 
For a given $x$, denote the objective as:
\begin{equation}\label{eq:single sample loss}
    \begin{aligned}
        f_{\pi}(x)=\mE_{a\sim\pi(\cdot\mid x)}\left[r(x,a)\right] - \lambda \text{D}_{\text{KL}}(\pi(\cdot \mid x)\|\pi_{\rm{ref}}(\cdot\mid x)).
    \end{aligned}
\end{equation}

The goal of RLHF is to learn a policy $\pi$ to maximize the objective $J(\pi)$, where:
\begin{equation}\label{eq:expectation loss}
    \small
     \begin{aligned}
     J(\pi)=\mathbb{E}_{x\in \mathcal{X}}[f_{\pi}(x)],
    \end{aligned}
\end{equation}
i.e., we want to maximize the expected reward without deviating too much from the reference policy. W.l.o.g., to simplify following analysis, we set $\lambda=1$ below. In the case that $\lambda \neq 1$, we divide both sides of \eqref{eq:single sample loss} by $\lambda$, and treat the term $\frac{r(x,a)}{\lambda}$ as a new reward function. This rescaling does not affect the solutions of the original objective function.

Before introducing the parametric model, we first characterize the optimal solution to \eqref{eq:expectation loss} over the space of all valid probability distributions.

\begin{proposition}[\citet{rafailov2023direct}]\label{pro:closed form}
    For any reward function $r(x, a)$, the optimal policy $\pi^*$ that maximizes \eqref{eq:expectation loss} admits the following closed-form expression:
    \begin{equation}\label{eq:closed form}
        \small
        \begin{aligned}
            \pi^{*}(a|x)=\frac{\pi_{\rm{ref}}(a|x)\exp(r(x,a))}{\sum_{a^{\prime}}\pi_{\rm{ref}}(a|x)\exp(r(x,a))}.
        \end{aligned}
    \end{equation}
\end{proposition}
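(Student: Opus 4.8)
The plan is to exploit the separability of the objective across contexts and then solve a single-context optimization in closed form. Since $J(\pi)=\mE_{x\in\cX}[f_\pi(x)]$ and each $f_\pi(x)$ depends only on the conditional distribution $\pi(\cdot\mid x)$, with no coupling across distinct values of $x$, maximizing $J$ over all valid policies reduces to maximizing $f_\pi(x)$ independently over the probability simplex on $\cA$ for each fixed $x$. First I would therefore fix an arbitrary $x$ and treat $p(\cdot):=\pi(\cdot\mid x)$ as the optimization variable subject to $\sum_a p(a)=1$ and $p(a)\ge 0$.

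The key algebraic step is a ``completing-the-KL'' rewriting. Introducing the partition function $Z(x):=\sum_{a'}\pi_{\rm{ref}}(a'\mid x)\exp(r(x,a'))$ and the candidate distribution $\pi^{*}(a\mid x)=\pi_{\rm{ref}}(a\mid x)\exp(r(x,a))/Z(x)$, I would expand
\begin{equation*}
f_\pi(x)=\sum_a p(a)\Big[r(x,a)-\log\tfrac{p(a)}{\pi_{\rm{ref}}(a\mid x)}\Big]=-\,\text{D}_{\text{KL}}\big(p\,\big\|\,\pi^{*}(\cdot\mid x)\big)+\log Z(x).
\end{equation*}
Because $\log Z(x)$ is independent of $p$, maximizing $f_\pi(x)$ is equivalent to minimizing the divergence $\text{D}_{\text{KL}}\big(p\,\|\,\pi^{*}(\cdot\mid x)\big)$. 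By Gibbs' inequality this divergence is nonnegative and vanishes if and only if $p=\pi^{*}(\cdot\mid x)$, which immediately identifies the unique maximizer and yields the claimed closed form.

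As an alternative derivation that also exposes uniqueness, I would set up the Lagrangian with a single multiplier $\mu$ for the normalization constraint, differentiate with respect to $p(a)$, and solve the stationarity condition $r(x,a)-\log(p(a)/\pi_{\rm{ref}}(a\mid x))-1-\mu=0$ to obtain $p(a)\propto\pi_{\rm{ref}}(a\mid x)\exp(r(x,a))$; enforcing $\sum_a p(a)=1$ then fixes the normalizer to be exactly $Z(x)$. The one point requiring care — and the only real obstacle — is justifying that this stationary point is the global optimum over the constrained simplex. This follows from concavity: the reward term is linear in $p$ while the negative entropy contribution $\sum_a p(a)\log p(a)$ is strictly convex, so $f_\pi(x)$ is strictly concave and the interior critical point is its unique maximizer. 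The nonnegativity constraints are automatically inactive since the solution is strictly positive on the support of $\pi_{\rm{ref}}$, and one should note $Z(x)$ is finite whenever $\cA$ is finite or $r$ is bounded, so that $\pi^{*}$ is well defined.
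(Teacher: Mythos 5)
Your proposal is correct and takes essentially the same route as the paper: the paper's own proof of this fact (carried out for the parametric case in Proposition~\ref{pro:expected optimum is empirical optimum}) uses exactly your completing-the-KL rewriting, $f_{\pi}(x)=\log Z(x)-\text{D}_{\text{KL}}\left(\pi(\cdot\mid x)\,\|\,\pi^{*}(\cdot\mid x)\right)$ with $Z(x)=\mE_{\pi_{\rm{ref}}(\cdot\mid x)}[\exp(r(x,a))]$, concluding via nonnegativity of the KL divergence with equality iff $\pi(\cdot\mid x)=\pi^{*}(\cdot\mid x)$. Your Lagrangian/concavity variant is a valid alternative but adds nothing beyond the KL argument.
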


Motivated by this optimal form, we adopt a parametric approach to approximate the reward and the policy. We consider the widely used linear reward assumption \citep{zhu2023principled,xiong2023iterative} and the corresponding posterior Boltzmann policy class.
\begin{definition}[linear reward function class]
    Let $\bphi(x, a)\in \bbR^{d}$ be a fixed feature mapping. We consider the class of linear reward functions defined as:
    \begin{equation*}
    \small
        \begin{aligned}
            \mathcal{F}=\left\{r_{\btheta}|r_{\btheta}(x,a)=\left\langle\btheta, \bphi(x,a)\right\rangle\right\}. 
        \end{aligned}
    \end{equation*}
\end{definition}
Correspondingly, motivated by the optimal solution form in \eqref{eq:closed form}, we consider the following policy class:
\begin{definition}[Posterior Boltzmann policy function class]\label{def:policy function}
    \begin{equation*}
    \small
        \begin{aligned}
            \Pi= \left\{\pi_{\btheta}|\pi_{\btheta}(a|x) =\frac{\pi_{\rm{ref}}(a|x)\exp\left(\left\langle\btheta, \bphi(x,a)\right\rangle\right)}{\sum_{a^{\prime}\in \mathcal{A}}\pi_{\rm{ref}}(a^{\prime}|x)\exp\left(\left\langle\btheta, \bphi(x,a)\right\rangle\right)}\right\}.
        \end{aligned}
    \end{equation*}   
\end{definition}

To facilitate theoretical analysis, we make the following standard regularity assumptions regarding the boundedness of features and the realizability of the true reward.
\begin{assumption}\label{ass:bounded}
    For all $(x, a) \in \mathcal{X} \times \mathcal{A}$, the feature norm is bounded such that $\|\boldsymbol{\phi}(x, a)\| \le C$. 
\end{assumption}

\begin{assumption}\label{ass:ground truth r}
    There exists a ground-truth parameter $\btheta^{*} \in \bbR^{d}$ such that the true reward function is linear, i.e. $r(x, a) = \langle \btheta^{*}, \bphi(x, a) \rangle$, with the parameter bounded by  $\left\|\btheta^{*}\right\| \le D$.
\end{assumption}



Under Assumption \ref{ass:ground truth r} and Proposition \ref{pro:closed form}, it is straightforward to verify that the policy model parameterized by ground-truth $\btheta^{*}$, denoted as $\pi_{\btheta^{*}}$, is the global maximizer of the expected objective $J(\pi_{\btheta})$, i.e.
\begin{equation}
    \small
    \begin{aligned}
        \btheta^{*} \in \arg\max_{\btheta \in \mathbb{R}^{d}}J(\pi_{\btheta}). 
    \end{aligned}
\end{equation}

However, since the context space $\mathcal{X}$ is vast, directly maximizing the expected objective in \eqref{eq:expectation loss} is computationally intractable. Instead, we employ the empirical approximation based on finite samples. In practice, 
we approximate the expectation using a finite prompt dataset $\bS=\left\{x_{i}\right\}_{i=1}^{n}$, leading to the following empirical objective:
\begin{equation}\label{eq:empirical loss}
    \small
    \begin{aligned}
        \max_{\btheta}J_{\bS}(\pi_{\btheta}) = \max_{\btheta}\frac{1}{n}\sum\limits_{i=1}^{n}f_{\btheta}(x_{i}),
    \end{aligned}
\end{equation}

We use $\btheta_{\bS}$ to denote the parameters obtained on $\bS$. For our analysis, we focus on the empirical stationary points within a bounded region $\Theta_{R}:= \{ \btheta \in \mathbb{R}^d \mid \|\btheta\| \le R \}$ in \eqref{eq:empirical optimal}, where we set $R=3D$. This focus is sufficient because the parameters obtained by gradient-based algorithms are naturally confined within this bound (see Lemma \ref{lem:bounded parameter}), eliminating the need to consider the entire space.
\begin{equation}\label{eq:empirical optimal}
    \small
    \begin{aligned}
        \btheta_{\bS}^{*} \in \{\btheta \in \Theta_{R}| \nabla_{\btheta}J_{\bS}(\btheta) = 0\} \triangleq \Theta_{\bS}.
    \end{aligned}
\end{equation}

\section{Suboptimality Decomposition}\label{sec:suboptimality}

To rigorously quantify the performance of the learned policy on the unknown context space $\mathcal{X}$, we define the sub-optimality gap and decompose it into three distinct components: concentration, optimization, and generalization.
\begin{definition}[Suboptimality Gap Decomposition]\label{def:decomposition}
    Let $\btheta_{\bS}$ be the output of an algorithm on dataset $\bS$. The suboptimality gap is decomposed as:
       \begin{equation}\label{eq:decomposition of suboptimality}
        \small
        \begin{aligned}
            &\mathrm{Subopt}(\btheta_{\bS})=J(\pi_{\btheta^{*}})-J(\pi_{\btheta_{\bS}})
            \\ \le &\underbrace{|J(\pi_{\btheta^{*}})-J_{\bS}(\pi_{\btheta_{\bS}^{*}})|}_{\mathrm{Concentration\ error}}+\underbrace{|J_{\bS}(\pi_{\btheta_{\bS}^{*}})-J_{\bS}(\pi_{\btheta_{\bS}})|}_{\mathrm{Optimization\ error}}
            \\ + & \underbrace{|J_{\bS}(\pi_{\btheta_{\bS}})-J(\pi_{\btheta_{\bS}})|}_{\mathrm{Generalization\ error}},
        \end{aligned}
    \end{equation}
    where $\btheta_{\bS}^{*} \in \Theta_{\bS}$ is one of the empirical stationary point.
\end{definition}

\paragraph{Optimization error.}
The optimization error quantifies the performance gap between the learned parameter $\btheta_{\bS}$ and the empirical stationary point $\btheta_{\bS}^{*}$ on the training objective $J_{\bS}$. In the context of non-convex optimization, iterative algorithms typically possess theoretical guarantees of convergence to empirical stationary points. Crucially, based on Lemma \ref{lem:key lemma}, we can demonstrate that such optimization error is explicitly controlled by the gradient norm $\small \|\nabla_{\btheta} J_{\bS}(\btheta_{\bS})\|$.
\begin{corollary}
Under Assumptions \ref{ass:span space}, for any $\btheta_{\bS}\in \Theta_{R}$ and a positive constant $\delta$, if $ \|\nabla_{\btheta}J_{\bS}(\pi_{\btheta_{\bS}})\| \le \delta$, then it holds
\begin{equation*}
    \small
    \begin{aligned}
        &|J_{\bS}(\pi_{\btheta_{\bS}^{*}})-J_{\bS}(\pi_{\btheta_{\bS}})| \le 4R\epsilon_{n}+\frac{2C\delta\cdot \sigma_{\max}(\bV_{\bS}(\btheta_{\bS}^{*}))}{\sigma_{\min}^{+}(\bV_{\bS}(\btheta_{\bS}^{*}))\sigma_{\min}^{+}(\bV_{\bS}(\btheta_{\bS}))}
        \\ &+2RC\sqrt{4R\epsilon_{n}+\frac{2C\delta\cdot \sigma_{\max}(\bV_{\bS}(\btheta_{\bS}^{*}))}{\sigma_{\min}^{+}(\bV_{\bS}(\btheta_{\bS}^{*}))\sigma_{\min}^{+}(\bV_{\bS}(\btheta_{\bS}))}}.
    \end{aligned}
\end{equation*}
\end{corollary}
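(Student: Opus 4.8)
The plan is to apply the key estimate of Lemma~\ref{lem:key lemma} to the pair $(\btheta_{\bS}^{*},\btheta_{\bS})$ and then to bound the resulting policy-discrepancy quantity using the two pieces of information at hand: the exact stationarity of $\btheta_{\bS}^{*}$ and the gradient bound $\|\nabla_{\btheta}J_{\bS}(\pi_{\btheta_{\bS}})\|\le\delta$. The natural starting point is the gradient identity $\nabla_{\btheta}J_{\bS}(\pi_{\btheta})=\bV_{\bS}(\btheta)(\btheta^{*}-\btheta)$, where $\bV_{\bS}(\btheta)=\frac{1}{n}\sum_{i=1}^{n}\Cov_{a\sim\pi_{\btheta}(\cdot\mid x_{i})}\!\left[\bphi(x_{i},a)\right]$ is the averaged feature covariance (this follows from differentiating $f_{\btheta}$, since the Jacobian of the mean feature equals the covariance). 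From this identity, stationarity of $\btheta_{\bS}^{*}$ gives $\bV_{\bS}(\btheta_{\bS}^{*})(\btheta^{*}-\btheta_{\bS}^{*})=0$, i.e. $\btheta^{*}-\btheta_{\bS}^{*}\in\mathrm{N}(\bV_{\bS}(\btheta_{\bS}^{*}))$, while the gradient bound reads $\|\bV_{\bS}(\btheta_{\bS})(\btheta^{*}-\btheta_{\bS})\|\le\delta$, so that the projection of $\btheta^{*}-\btheta_{\bS}$ onto $\mathrm{C}(\bV_{\bS}(\btheta_{\bS}))$ has norm at most $\delta/\sigma_{\min}^{+}(\bV_{\bS}(\btheta_{\bS}))$.

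Invoking Lemma~\ref{lem:key lemma} with these two arguments reduces the claim to bounding an inner ``policy-distance'' quantity $\rho$, since the lemma converts such a quantity into the two-term envelope $\rho+2RC\sqrt{\rho}$. The linear term tracks the regularizer contribution to $f_{\btheta}$, whereas the $\sqrt{\rho}$ term arises from bounding the reward-expectation difference $\langle\btheta^{*},\bmu_{\btheta_{\bS}^{*}}(x_{i})-\bmu_{\btheta_{\bS}}(x_{i})\rangle$ through the total-variation distance between the two policies and Pinsker's inequality, with $\|\btheta^{*}\|\le R$ and $\|\bphi(x,a)\|\le C$ supplying the prefactor $2RC$. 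Because $t\mapsto t+2RC\sqrt{t}$ is increasing, it suffices to show $\rho\le\Delta$ for $\Delta:=4R\epsilon_{n}+\frac{2C\delta\,\sigma_{\max}(\bV_{\bS}(\btheta_{\bS}^{*}))}{\sigma_{\min}^{+}(\bV_{\bS}(\btheta_{\bS}^{*}))\sigma_{\min}^{+}(\bV_{\bS}(\btheta_{\bS}))}$, and the whole statement follows by monotonicity.

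The main work, and the main obstacle, is that the stationarity information lives in the geometry of $\bV_{\bS}(\btheta_{\bS}^{*})$ while the gradient bound is expressed through $\bV_{\bS}(\btheta_{\bS})$, and these two covariance matrices, evaluated at different parameters, need not share column or null spaces. The device that reconciles them is the identity $\bV_{\bS}(\btheta_{\bS}^{*})(\btheta_{\bS}^{*}-\btheta_{\bS})=\bV_{\bS}(\btheta_{\bS}^{*})(\btheta^{*}-\btheta_{\bS})$, obtained by adding and subtracting $\btheta^{*}$ and killing the term $\bV_{\bS}(\btheta_{\bS}^{*})(\btheta^{*}-\btheta_{\bS}^{*})=0$ via stationarity. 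Splitting $\btheta^{*}-\btheta_{\bS}$ into its $\mathrm{C}(\bV_{\bS}(\btheta_{\bS}))$ and $\mathrm{N}(\bV_{\bS}(\btheta_{\bS}))$ parts, the column part contributes $\|\bV_{\bS}(\btheta_{\bS}^{*})\|\cdot\delta/\sigma_{\min}^{+}(\bV_{\bS}(\btheta_{\bS}))\le\sigma_{\max}(\bV_{\bS}(\btheta_{\bS}^{*}))\,\delta/\sigma_{\min}^{+}(\bV_{\bS}(\btheta_{\bS}))$, while the null part is controlled by Assumption~\ref{ass:span space}: every feature has residual of norm at most $\epsilon_{n}$ outside the empirical column space, so both the cross-term here and the oscillation $|\langle\btheta^{*}-\btheta_{\bS}^{*},\bphi(x_{i},a)\rangle|\le\|\btheta^{*}-\btheta_{\bS}^{*}\|\,\epsilon_{n}\le 2R\epsilon_{n}$ (which bounds the KL contribution of $\btheta_{\bS}^{*}$ by $4R\epsilon_{n}$) stay at the $\epsilon_{n}$ scale. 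Dividing the resulting bound on $\|\bV_{\bS}(\btheta_{\bS}^{*})(\btheta_{\bS}^{*}-\btheta_{\bS})\|$ by $\sigma_{\min}^{+}(\bV_{\bS}(\btheta_{\bS}^{*}))$ recovers the column-space parameter distance $\tfrac{\sigma_{\max}(\bV_{\bS}(\btheta_{\bS}^{*}))\,\delta}{\sigma_{\min}^{+}(\bV_{\bS}(\btheta_{\bS}^{*}))\sigma_{\min}^{+}(\bV_{\bS}(\btheta_{\bS}))}$, and multiplying by the feature factor $2C$ converts it to the policy scale; adding the $4R\epsilon_{n}$ null-space contribution yields $\rho\le\Delta$. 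Feeding this into the envelope of Lemma~\ref{lem:key lemma} gives the claimed inequality.
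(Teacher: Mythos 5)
Your overall plan is the right one, and it is the paper's: apply Lemma~\ref{lem:key lemma} to the pair $(\btheta_{\bS}^{*},\btheta_{\bS})$ with $\delta_{1}=0$ (stationarity) and $\delta_{2}=\delta$, then pass from the pointwise bound on $|f_{\btheta_{\bS}^{*}}(x)-f_{\btheta_{\bS}}(x)|$ to $J_{\bS}$ by averaging over $x_{1},\dots,x_{n}$ — this is exactly how the paper proves the analogous optimization-error bound for GA (Lemma~\ref{lem:optimization of GA}). Had you stopped there, invoking Lemma~\ref{lem:key lemma} as a black box, the corollary would be immediate.

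The gap is in your re-derivation of the lemma's internals. You assert that $\bV_{\bS}(\btheta_{\bS}^{*})$ and $\bV_{\bS}(\btheta_{\bS})$ ``need not share column or null spaces,'' split $\btheta^{*}-\btheta_{\bS}=u_{C}+u_{N}$ along $\mathrm{C}(\bV_{\bS}(\btheta_{\bS}))\oplus\mathrm{N}(\bV_{\bS}(\btheta_{\bS}))$, and then claim the cross-term $\bV_{\bS}(\btheta_{\bS}^{*})u_{N}$ is controlled at the $\epsilon_{n}$ scale by Assumption~\ref{ass:span space}. That assumption cannot do this work: it bounds the residual of each \emph{feature vector} $\bphi(x,a)$ outside $\mathrm{C}(\bV_{\bS}(\btheta_{\bS}^{*}))$, and says nothing about the action of $\bV_{\bS}(\btheta_{\bS}^{*})$ on vectors in the null space of the \emph{other} matrix $\bV_{\bS}(\btheta_{\bS})$. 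If the two column spaces genuinely differed, as your premise allows, the cross-term would generically be of order $\sigma_{\max}(\bV_{\bS}(\btheta_{\bS}^{*}))\,\|u_{N}\| = \cO(R)$, not $\cO(\epsilon_{n})$, and attempting to expand it via the decomposition $\bphi=\bV_{\bS}(\btheta_{\bS}^{*})\bb_{\bS}+\br_{\bS}$ only produces a self-referential inequality with a non-contractive coefficient. What actually closes this step in the paper is Lemma~\ref{lem:general same span} (resting on Assumption~\ref{ass:linear indep} and the algebraic Lemmas in the last appendix section): \emph{all} the empirical covariance matrices $\bV_{\bS}(\btheta)$ share one and the same column space — intuitively, $v\in\mathrm{N}(\bV_{\bS}(\btheta_{\bS}))$ forces $\langle v,\bphi(x_{i},a)\rangle$ to be constant on the support of the reference policy for each $x_{i}$, hence $\bV_{\bS}(\btheta)v=0$ for every $\btheta$. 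Consequently your cross-term is exactly zero, which is precisely what Lemma~\ref{lem:projection bound} exploits when it writes $\nabla_{\btheta}J_{\bS}(\pi_{\btheta})=\bV_{\bS}(\btheta)\btheta^{C}$ with $\btheta^{C}\in\mathrm{C}(\bV_{\bS}(\btheta_{\bS}^{*}))=\mathrm{C}(\bV_{\bS}(\btheta))$. To repair your write-up, replace the unjustified $\epsilon_{n}$-scale claim for the cross-term with an appeal to Lemma~\ref{lem:general same span} (the rest of your accounting — $\|u_{C}\|\le\delta/\sigma_{\min}^{+}(\bV_{\bS}(\btheta_{\bS}))$, the $\|\bb_{\bS}\|\le C/\sigma_{\min}^{+}(\bV_{\bS}(\btheta_{\bS}^{*}))$ conversion of Lemma~\ref{lem:upper bound on b}, the $2R\epsilon_{n}$ residual terms, and the monotone envelope $t\mapsto t+2RC\sqrt{t}$ — then goes through as in the paper).
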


\begin{remark}
    In this corollary, the optimization error is obtained under the feature coverage Assumption \ref{ass:span space}, which results in the $\epsilon_{n}$ appeared in the upper bound. 
\end{remark}
    Moreover, we can prove that for all empirical stationary points (Proposition \ref{pro:first order optimum is global optimum}), their  empirical risks are identical, i.e., $J_{\bS}(\btheta)$ is invariant over the set $\{\btheta\mid \nabla_{\btheta}J_{\bS}(\btheta) = 0\}$. Thus, all local optima are global optima. \footnote{Notably, this does not imply PL-inequality \citep{karimi2016linear}, this only serves as a sufficient condition to PL-equality.}  

\paragraph{Concentration error.}
The concentration error quantifies the statistical deviation of the empirical optimal objective value from the expected optimal objective value.
Since the parameter $\btheta_{\bS}^{*}$ in $J_{\bS}(\pi_{\btheta_{\bS}^{*}})$ depends on the dataset $\bS$, the standard concentration inequality does not work.

Fortunately, according to the discussion in above, we can prove that $\nabla_{\btheta}J_{\bS}(\btheta^{*}) = \nabla_{\btheta}J_{\bS}(\btheta_{\bS}^{*})$, so that $J_{\bS}(\pi_{\btheta_{\bS}^{*}})=J_{\bS}(\pi_{\btheta^{*}})$ and $\mE_{\bS}[J_{\bS}(\pi_{\btheta_{\bS}^{*}})] = J(\pi_{\btheta^{*}})$. Therefore, the standard concentration can be applied:
\begin{restatable}{lemma}{concentrationinequality}\rm{(Concentration inequality)}\label{lem:concentration of empirical optimal}
    Under Assumptions \ref{ass:bounded}, \ref{ass:ground truth r} and \ref{ass:span space}, since for any $n\ge 1$ and $\rho \in (0,1)$, the following bound holds with probability at least $1-\rho$ over the random draw of the sample dataset $\bS$, 
    \begin{equation}\label{eq:concentration of exact optima}
        \small
        \begin{aligned}
            |J_{\bS}(\pi_{\btheta_{\bS}^{*}})-J(\pi_{\btheta^{*}})|\le 3RC\sqrt{\frac{2\log(2/\rho)}{n}}.
        \end{aligned}
    \end{equation}
\end{restatable}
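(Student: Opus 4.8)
The plan is to reduce the claim to a textbook concentration bound for the empirical mean of independent bounded variables, and the only non-routine ingredient is the device that removes the data dependence of $\btheta_{\bS}^{*}$. A priori one cannot apply a concentration inequality to $J_{\bS}(\pi_{\btheta_{\bS}^{*}})$ because $\btheta_{\bS}^{*}$ is itself a function of $\bS$. However, the discussion preceding the lemma establishes the pointwise identity $J_{\bS}(\pi_{\btheta_{\bS}^{*}})=J_{\bS}(\pi_{\btheta^{*}})$, which holds for every realization of the sample. I would therefore begin by writing
\[
|J_{\bS}(\pi_{\btheta_{\bS}^{*}})-J(\pi_{\btheta^{*}})| = |J_{\bS}(\pi_{\btheta^{*}})-J(\pi_{\btheta^{*}})|,
\]
and observe that the right-hand side is the deviation of the i.i.d. empirical average $J_{\bS}(\pi_{\btheta^{*}})=\frac{1}{n}\sum_{i=1}^{n}f_{\btheta^{*}}(x_{i})$ from its expectation $\mE_{x}[f_{\btheta^{*}}(x)]=J(\pi_{\btheta^{*}})$, where now the parameter $\btheta^{*}$ is fixed and independent of the sample.

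Next I would derive a uniform range bound on the summand. For any $\btheta\in\Theta_{R}$ I write $f_{\btheta}(x)=\mE_{a\sim\pi_{\btheta}(\cdot\mid x)}[\langle\btheta^{*},\bphi(x,a)\rangle]-\text{D}_{\text{KL}}(\pi_{\btheta}(\cdot\mid x)\,\|\,\pi_{\rm{ref}}(\cdot\mid x))$ and bound the two terms separately. By Cauchy-Schwarz together with Assumptions \ref{ass:bounded} and \ref{ass:ground truth r}, every reward value satisfies $|\langle\btheta^{*},\bphi(x,a)\rangle|\le\|\btheta^{*}\|\,\|\bphi(x,a)\|\le DC\le RC$, so the reward term is at most $RC$ in magnitude. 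For the KL term I use the Boltzmann form of $\pi_{\btheta}$, namely $\log\frac{\pi_{\btheta}(a\mid x)}{\pi_{\rm{ref}}(a\mid x)}=\langle\btheta,\bphi(x,a)\rangle-\log Z_{\btheta}(x)$ with $Z_{\btheta}(x)=\sum_{a'}\pi_{\rm{ref}}(a'\mid x)\exp(\langle\btheta,\bphi(x,a')\rangle)$; since each logit lies in $[-RC,RC]$ and $\pi_{\rm{ref}}$ is a probability distribution, $|\log Z_{\btheta}(x)|\le RC$, whence the KL term is bounded by $2RC$. Combining gives $|f_{\btheta}(x)|\le 3RC$ for all $\btheta\in\Theta_{R}$, and in particular for $\btheta^{*}$; thus each $f_{\btheta^{*}}(x_{i})$ takes values in an interval of width $6RC$.

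The final step is a direct application of Hoeffding's inequality to the independent variables $f_{\btheta^{*}}(x_{i})\in[-3RC,3RC]$:
\[
\bbP\left(|J_{\bS}(\pi_{\btheta^{*}})-J(\pi_{\btheta^{*}})|\ge t\right)\le 2\exp\left(-\frac{n t^{2}}{2(3RC)^{2}}\right).
\]
Setting the right-hand side equal to $\rho$ and solving for $t$ yields $t=3RC\sqrt{2\log(2/\rho)/n}$, which is exactly the stated bound, and combining this with the first-step identity completes the argument.

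The hard part is entirely conceptual and lives in the first step: one must be certain that $J_{\bS}(\pi_{\btheta_{\bS}^{*}})=J_{\bS}(\pi_{\btheta^{*}})$ holds sample-by-sample, so that the left-hand random variable literally equals the fixed-parameter empirical average rather than merely matching it in expectation. This is precisely what the invariance of $J_{\bS}$ over the stationary set $\Theta_{\bS}$ (together with the gradient identity $\nabla_{\btheta}J_{\bS}(\btheta^{*})=\nabla_{\btheta}J_{\bS}(\btheta_{\bS}^{*})$) supplies, and it is the genuine content that lets a classical concentration inequality apply despite the data dependence of the empirical stationary point. The range bound and the Hoeffding step are then standard; the slight looseness in the constant ($3RC$, obtained by bounding $f_{\btheta}$ uniformly over $\Theta_{R}$) is harmless since $\btheta^{*}\in\Theta_{R}$.
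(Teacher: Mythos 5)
Your proof is correct and takes essentially the same route as the paper's: you reduce via the pointwise identity $J_{\bS}(\pi_{\btheta_{\bS}^{*}})=J_{\bS}(\pi_{\btheta^{*}})$ (the content of Propositions \ref{pro:expected optimum is empirical optimum} and \ref{pro:first order optimum is global optimum}), invoke the uniform bound $|f_{\btheta^{*}}(x)|\le 3RC$ (which you re-derive inline rather than citing Lemma \ref{lem:bounded loss}), and finish with Hoeffding's inequality, arriving at identical constants. There are no gaps.
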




\paragraph{Generalization error.}
The generalization error measures the discrepancy between the true expected performance of the learned policy and its empirical performance. This gap arises because the policy $\pi_{\btheta_{\bS}}$ is optimized based on the training set $\bS$, rendering the empirical objective a potentially biased estimate of the true objective.

To rigorously bound this error, we adopt the framework of \textit{algorithmic stability}. The central premise is that if a learning algorithm is insensitive to small perturbations in the dataset, it prevents overfitting and ensures that the empirical objective is a reliable proxy for the expected objective.

For algorithm $\cH: \bS\rightarrow \mathbb{R}^{d}$, we first formally define the generalization error and the notion of uniform stability used in our analysis.

\begin{definition}[Generalization Error]
    For learned parameters $\cH(\bS) = \btheta_{\bS}$, the generalization error is the gap between its empirical/population risks: 
\begin{equation}
\small
    \left|  J(\pi_{\btheta_{\bS}}) - J_{\bS}(\pi_{\btheta_{\bS}}) \right|.
\end{equation}
\end{definition} 


\begin{definition}[Uniform Stability \cite{bousquet2002stability}]\label{def:uniform stability}
    An algorithm $\mathcal{H}$ is $\cE_{\rm{stab}}$-uniformly stable if for any two datasets $\bS, \bS^{\prime} \in \mathcal{X}^{n}$ that differ by at most one example, the uniform stability is:
    \begin{equation}
    \small
       \cE_{\rm{stab}}(\cH(\bS)) =  \sup_{x \in \mathcal{X},\bS,\bS^{\prime}} \left| f_{\mathcal{H}(\bS)}(x) - f_{\mathcal{H}(\bS')}(x) \right| .
    \end{equation}
\end{definition}
It has been proven that uniform stability implies  generalization, which is the theoretical foundation for our subsequent analysis in Sections 4 and 5. 

\begin{proposition}[\citet{bousquet2020sharper}]\label{pro:generalization bound}
    If the algorithm $\cH$ satisfies uniform stability and $|f_{\btheta}(\cdot)|\le 3RC$, we have that for any $\rho \in (0,1)$, with probability at least $1-\rho$,
    \begin{equation}
        \small
        \begin{aligned}
            \left|J(\pi_{\btheta_{\bS}}) \!-\! J_{\bS}(\pi_{\btheta_{\bS}})\right| 
             \lesssim \cE_{\rm{stab}}\log(n)\log\left(\frac{1}{\rho}\right)\!+\!6RC \sqrt{\frac{\log\left(1/\rho\right)}{n}}.
        \end{aligned}
    \end{equation}
\end{proposition}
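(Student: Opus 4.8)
The plan is to realize this proposition as a direct instantiation of the high-probability stability bound of \citet{bousquet2020sharper}, reading $f_{\btheta}(x)$ as the bounded per-sample loss evaluated at the learned parameter and $\cH$ as the (possibly randomized) learning map $\bS \mapsto \btheta_{\bS}$. The general result requires exactly two inputs: a uniform bound on the loss and $\cE_{\rm{stab}}$-uniform stability of $\cH$. Both are granted here. The hypothesis $|f_{\btheta}(\cdot)| \le 3RC$ supplies a loss range of width $6RC$, which is precisely what will surface as the $6RC$ prefactor on the $\sqrt{\log(1/\rho)/n}$ fluctuation term, and uniform stability is assumed outright. So at the top level the argument is: check the two hypotheses, then invoke the theorem with $J(\pi_{\btheta_{\bS}}) = \mE_{x}[f_{\btheta_{\bS}}(x)]$ playing the role of the population risk and $J_{\bS}(\pi_{\btheta_{\bS}}) = \frac{1}{n}\sum_{i} f_{\btheta_{\bS}}(x_{i})$ the empirical risk.

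To make the proposal self-contained I would reconstruct the two ingredients behind the cited bound. First, an \emph{in-expectation} estimate: by the standard replace-one (exchangeability) argument, $\mE_{\bS}[J(\pi_{\btheta_{\bS}}) - J_{\bS}(\pi_{\btheta_{\bS}})]$ can be rewritten as an average over the $n$ coordinates of terms of the form $\mE[f_{\btheta_{\bS}}(x) - f_{\btheta_{\bS'}}(x)]$, where $\bS$ and $\bS'$ differ in a single point and $x$ is a common evaluation point; each such term is at most $\cE_{\rm{stab}}$ by Definition \ref{def:uniform stability}. This pins the mean of the generalization gap at order $\cE_{\rm{stab}}$. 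Second, a \emph{concentration} step around this mean, which is where the $\log(n)\log(1/\rho)$ factor multiplying $\cE_{\rm{stab}}$ and the residual $\sqrt{\log(1/\rho)/n}$ term are produced.

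The main obstacle is precisely this concentration step, and it is the reason one must appeal to the sharp bound rather than a textbook inequality. A direct application of McDiarmid's bounded-differences inequality does not yield the stated rate: perturbing one coordinate changes the empirical average only by $O(RC/n)$, but it also shifts $\btheta_{\bS}$ and hence perturbs $f_{\btheta_{\bS}}$ on \emph{all} coordinates by an amount controlled only by stability, so the aggregated bounded-difference constant forces the stability contribution to appear as $\cE_{\rm{stab}}\sqrt{n}$, which is vacuous unless $\cE_{\rm{stab}} = o(n^{-1/2})$. The refinement of \citet{bousquet2020sharper} instead controls the higher moments of the centered gap directly in terms of $\cE_{\rm{stab}}$ and the loss range via a moment/entropy-method recursion, and then converts the moment bounds into a tail bound; this is what trades the spurious $\sqrt{n}$ for the benign $\log n$. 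For the purposes of this paper I would simply invoke their theorem as a black box after the hypothesis check above, since the downstream analysis in Sections \ref{sec:suboptimality of empirical optimum} and \ref{sec:suboptimality of gradient based algorithms} only needs the inequality in the stated form, with its explicit $\cE_{\rm{stab}}\log(n)\log(1/\rho)$ and $6RC\sqrt{\log(1/\rho)/n}$ terms.
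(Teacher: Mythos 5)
Your proposal matches the paper exactly: Proposition~\ref{pro:generalization bound} is not proved in the paper but imported as a black box from \citet{bousquet2020sharper}, with the two hypotheses (the range bound $|f_{\btheta}(\cdot)|\le 3RC$ from Lemma~\ref{lem:bounded loss} and $\cE_{\rm{stab}}$-uniform stability per Definition~\ref{def:uniform stability}) checked and then the theorem invoked with $J$ and $J_{\bS}$ as population and empirical risks. Your reconstruction of the internals --- the exchangeability argument pinning the mean at $\cE_{\rm{stab}}$, and the observation that McDiarmid would inflate the stability contribution to $\cE_{\rm{stab}}\sqrt{n}$, which is exactly what the moment-method refinement of \citet{bousquet2020sharper} avoids --- is accurate, though it goes beyond what the paper records.
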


\section{Suboptimality Gap of Empirical Stationary Points}\label{sec:suboptimality of empirical optimum}

In this section, we first analyze the performance of the empirical stationary points, i.e., $\btheta_{\bS}^{*}\in\Theta_{\bS}$. According to the discussion in Section \ref{sec:suboptimality},  
the critical point is to bound the generalization error. 

To this end, we analyze the error in two distinct regimes: the ideal regime with sufficient prompt coverage (Assumption \ref{ass:positive definite}), and a more realistic regime where prompts are insufficient to cover the entire feature space (Assumption \ref{ass:span space}).

\subsection{RL with Sufficient Prompts}\label{sec:rl with sufficient prompt}

Standard machine learning theory typically relies on the assumption that the training data provides sufficient coverage of feature patterns, thereby ensuring that the learned model can accurately extrapolate to the entire feature space. In linear settings, this requirement is formally characterized by the positive definiteness of the feature covariance matrix. 
\begin{assumption}\label{ass:positive definite}
	For any $\btheta$, $x$ and some $\sigma$, it holds $\mE_{x}\left[\Var_{\pi_{\btheta}(a\mid x)}[\bphi(x, a)]\right] = \mE_{x}[\bV_{x}(\btheta)]\succeq\sigma\bI$. 
\end{assumption}
\par
As can be seen, this Assumption implies the expected covariance matrix is positively-definite, meaning its column space covers the whole Euclidean space, thereby covering the feature space as well. Within this coverage assumption to the feature space, we have the following theorem, which states that with sufficient prompts, the empirical stationary point is exactly the ground-truth parameters with high probability. 
\begin{restatable}{theorem}{generalizationwithsufficientprompts}\label{thm:suffcient prompts to recover optimal para}
	Under Assumption \ref{ass:positive definite}, when
	\begin{equation}
		\small
		n > \frac{128C^{4}}{\sigma^{2}}\left(\log{\left(\frac{1}{\rho}\right)} + d\log{\left(\frac{432RC^{5}}{\sigma^{2}}\right)}\right), 
		\end{equation} 
	it holds that with probability at least $1 - \rho$, the ground truth parameter $\btheta^{*}$ is the unique empirical stationary point with the region $\Theta_{R}$. 
\end{restatable}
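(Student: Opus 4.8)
The plan is to first show that the empirical gradient factorizes in a way that makes $\btheta^{*}$ automatically stationary, which collapses the entire theorem to a \emph{uniqueness} statement, and then to establish uniqueness by proving that the empirical feature-covariance matrix is positive definite simultaneously at every $\btheta\in\Theta_{R}$.

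First I would compute $\nabla_{\btheta}f_{\btheta}(x)$ explicitly. Writing $Z_{\btheta}(x)$ for the partition function of $\pi_{\btheta}(\cdot\mid x)$ and $\bmu_{\btheta}(x)=\mE_{a\sim\pi_{\btheta}}[\bphi(x,a)]$, the KL term collapses through the log-partition identity, so that $f_{\btheta}(x)=\mE_{a\sim\pi_{\btheta}}[\langle\btheta^{*}-\btheta,\bphi(x,a)\rangle]+\log Z_{\btheta}(x)$. Using the exponential-family identities $\nabla_{\btheta}\log Z_{\btheta}(x)=\bmu_{\btheta}(x)$ and $\nabla_{\btheta}\pi_{\btheta}(a\mid x)=\pi_{\btheta}(a\mid x)\,(\bphi(x,a)-\bmu_{\btheta}(x))$, the score-function contribution produces exactly $\bV_{x}(\btheta)(\btheta^{*}-\btheta)$, while the direct derivative of $\langle\btheta^{*}-\btheta,\cdot\rangle$ gives $-\bmu_{\btheta}(x)$ and the $\log Z$ term gives $+\bmu_{\btheta}(x)$, so these cancel. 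Hence $\nabla_{\btheta}f_{\btheta}(x)=\bV_{x}(\btheta)(\btheta^{*}-\btheta)$ and, averaging, $\nabla_{\btheta}J_{\bS}(\btheta)=\bV_{\bS}(\btheta)(\btheta^{*}-\btheta)$ with $\bV_{\bS}(\btheta):=\tfrac{1}{n}\sum_{i=1}^{n}\bV_{x_{i}}(\btheta)$. Substituting $\btheta=\btheta^{*}$ shows $\btheta^{*}$ is a stationary point for every $\bS$ (and $\btheta^{*}\in\Theta_{R}$ since $\|\btheta^{*}\|\le D<R$), so all that remains is to show that $\bV_{\bS}(\btheta)$ is nonsingular — in fact positive definite — for \emph{all} $\btheta\in\Theta_{R}$, because then $\bV_{\bS}(\btheta)(\btheta^{*}-\btheta)=0$ forces $\btheta=\btheta^{*}$.

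Next I would obtain a pointwise matrix concentration bound. For fixed $\btheta$, each summand $\bV_{x_{i}}(\btheta)=\Var_{\pi_{\btheta}}[\bphi(x_{i},a)]$ is PSD with $\|\bV_{x_{i}}(\btheta)\|\le\mE[\|\bphi\|^{2}]\le C^{2}$ by Assumption \ref{ass:bounded}, while $\mE_{x}[\bV_{x}(\btheta)]\succeq\sigma\bI$ by Assumption \ref{ass:positive definite}. A matrix Hoeffding inequality then gives $\|\bV_{\bS}(\btheta)-\mE_{x}[\bV_{x}(\btheta)]\|\le\sigma/4$ except with probability of order $2d\exp(-n\sigma^{2}/(128C^{4}))$; this is precisely the origin of the $128C^{4}/\sigma^{2}$ prefactor and the $\log(1/\rho)$ term. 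To upgrade this to a bound uniform over the continuum $\Theta_{R}$, I would take an $\epsilon$-net $\mathcal{N}_{\epsilon}$ and fill the gaps via Lipschitz control of $\btheta\mapsto\bV_{x}(\btheta)$: its directional derivative is the third central moment tensor of $\bphi$ under $\pi_{\btheta}$, whose spectral norm is $\cO(C^{3})$ (a factor $\|\bphi-\bmu\|\le 2C$ against a variance bounded by $C^{2}$), so $\bV_{\bS}(\cdot)$ is $\cO(C^{3})$-Lipschitz. Choosing $\epsilon\asymp\sigma/C^{3}$ keeps the off-net fluctuation below $\sigma/4$, and $|\mathcal{N}_{\epsilon}|\le(3R/\epsilon)^{d}$ contributes the $d\log(432RC^{5}/\sigma^{2})$ term after a union bound (the explicit constant coming from a convenient over-estimate in absorbing the net cardinality and the factor $2d$). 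On the resulting high-probability event, $\bV_{\bS}(\btheta)\succeq(\sigma-\tfrac{\sigma}{4}-\tfrac{\sigma}{4})\bI=\tfrac{\sigma}{2}\bI\succ0$ for every $\btheta\in\Theta_{R}$, which forces $\btheta=\btheta^{*}$ and completes the uniqueness claim.

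The routine part is the gradient identity of the first step; the real work is the uniform positive-definiteness, i.e.\ converting pointwise matrix concentration into a statement holding over the whole continuum of policy parameters. The two delicate points are (i) pinning down the correct third-moment Lipschitz constant of $\btheta\mapsto\bV_{x}(\btheta)$, and (ii) balancing the net resolution $\epsilon$ against the matrix-Hoeffding deviation budget so that both slacks remain below $\sigma/4$; carefully tracking these constants is what yields the explicit threshold on $n$ stated in the theorem.
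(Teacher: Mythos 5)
Your proposal is correct, and its skeleton coincides with the paper's: the gradient identity $\nabla_{\btheta}J_{\bS}(\pi_{\btheta})=\bV_{\bS}(\btheta)(\btheta^{*}-\btheta)$ (the paper's Lemma \ref{lem:gradient}, which you rederive correctly) reduces the theorem to uniform positive definiteness of $\bV_{\bS}(\btheta)$ over $\Theta_{R}$, which both you and the paper establish via pointwise concentration, a covering net of $\Theta_{R}$ of cardinality $(3R/r)^{d}$, continuity of $\btheta\mapsto\bV_{\bS}(\btheta)$ to fill the gaps, and a Weyl-type eigenvalue comparison. Where you genuinely differ is in the two technical ingredients. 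For the pointwise bound, the paper uses Rademacher symmetrization plus a sub-Gaussian moment-generating-function bound on the spectral-norm deviation, obtaining $\exp(-n\delta^{2}/(32C^{4}))$ with no dimension prefactor, whereas you invoke matrix Hoeffding and pay the honest $2d$ factor, absorbing it into the net term; your route is the more standard one, and arguably the safer one, since the paper's MGF step treats the supremum over the unit sphere of the symmetrized process as if it were a single sub-Gaussian variable, which is precisely where a dimension factor normally enters. For the continuity step, the paper proves only H\"older-$\tfrac{1}{2}$ continuity $\|\bV_{x}(\btheta_{1})-\bV_{x}(\btheta_{2})\|\le 3C^{5/2}\|\btheta_{1}-\btheta_{2}\|^{1/2}$ via Pinsker (Lemma \ref{lem:Holder continuous}), which forces the net resolution $r\asymp\sigma^{2}/C^{5}$ and is the exact origin of the constant $432RC^{5}/\sigma^{2}$ in the statement; your third-central-moment computation gives genuine Lipschitzness, since the directional derivative of $\bV_{x}(\btheta)$ along $h$ is $\mE_{\pi_{\btheta}}\left[(\bphi-\bmu)(\bphi-\bmu)^{\top}\langle\bphi-\bmu,h\rangle\right]$ with spectral norm at most $8C^{3}$, allowing the coarser net $\epsilon\asymp\sigma/C^{3}$. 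Because $\sigma\le\|\bV_{x}(\btheta)\|\le C^{2}$, your covering number is smaller than the paper's, so the stated threshold on $n$ holds a fortiori after the over-estimation you flag; the one caveat is that your constants do not reproduce $432RC^{5}/\sigma^{2}$ exactly — that figure is an artifact of the H\"older-$\tfrac{1}{2}$ route — and your version additionally yields the quantitative margin $\bV_{\bS}(\btheta)\succeq\tfrac{\sigma}{2}\bI$ uniformly on $\Theta_{R}$, which the paper's proof does not state explicitly.
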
 
Our result indicates that with sufficient data coverage, the empirical stationary point achieves exact identification of the ground truth $\btheta^{*}$ with high probability.

Notably, this result relies on the assumption that feature vectors span the entire Euclidean space. However, in practice, due to limited prompt diversity, feature vectors typically reside on a low-dimensional manifold. This motivates our analysis of the insufficient coverage regime in the following.

\subsection{RL without Sufficient Prompts}\label{sec:rl without sufficient prompts}


As discussed at the end of Section \ref{sec:rl with sufficient prompt}, the strict feature coverage assumption is often not met in practical scenarios. In regimes where the feature vectors fail to span the entire space, the exact identification of the ground truth parameter becomes mathematically impossible.
Thus, instead of pursuing the exact identification of the ground truth parameter, we shift our focus to analyzing the \textbf{suboptimality gap} of the policy induced by the learned parameters.
 
We begin by establishing a standard regularity condition on the feature representations within the action space. To ensure that the responses associated with a given prompt provide distinct feature signals, we assume the following:


\begin{assumption} \label{ass:linear indep}
    For a given $x$ and its corresponding response $\{a_{i}\}_{i=1}^{|\cA|}$, $|\cA| \leq d$\footnote{We can relax the assumption on $\cA$ to $\cA_{x}$ depends on $x$, which does not influence our conclusions in sequel.}, the set of feature vectors $\{\bphi(x,a_{i})|i=1,2,\cdots,|\cA|\}$ are linearly independent. 
\end{assumption}


Next, we introduce the core assumption of this paper. We characterize the feature vector $\bphi(x, a)$ by a direct sum decomposition, where part of it is from the column space of empirical covariance matrix of feature.
\begin{assumption}[\textbf{Feature Coverage}]\label{ass:span space}
    For any prompt $x$, action $a$, and training set $\bS$, the feature vector $\bphi(x, a)$ admits the following decomposition: 
    \[\small
    \bphi(x, a) = \bV_{\bS}(\btheta_{\bS}^{*})\bb_{\bS}(x, a) + \br_{\bS}(x, a),\] 
    where the residual $\br_{\bS}(x,a)$ satisfies  $\sup_{x,a}\|\br_{\bS}(x, a)\|\leq \epsilon_{n}$ with $\epsilon_{n} \leq \cO(n^{-1})$ when $n\to \infty$.  
\end{assumption}
Due to the fundamental theorem of linear algebra \citep{strang2012linear}, the $\br_{\bS}(x, a)$ can be located in $\mathrm{N}(\bV_{\bS}(\btheta_{\bS}^{*}))$ since $\bV_{\bS}(\btheta_{\bS}^{*})\bb_{\bS}(x, a)$$\in\mathrm{C}(\bV_{\bS}(\btheta_{\bS}^{*}))$ and $\bbR^{d} = \mathrm{N}(\bV_{\bS}(\btheta_{\bS}^{*}))\oplus \mathrm{C}(\bV_{\bS}(\btheta_{\bS}^{*}))$, where $\oplus$ is the direct sum. 

This assumption implies that although the empirical covariance matrix 
\[
\bV_{\bS}(\btheta_{\bS}^{*}) = \frac{1}{n}\sum_{i=1}^{n}\Var_{\pi_{\btheta_{\bS}^{*}}}(\bphi(a, x))
\]
fails to cover the entire feature space. However, as the diversity of collected prompts expands with increasing $n$, the column space of the empirical covariance matrix can approximately cover the feature space $\{\bphi(x, a)| (x, a)\in\cX\times\cA\}$, and the magnitude of the residual component is decreasing with sample size $n$. In fact, due to Lemma \ref{lem:general same span}, $\rm{C}(\bV_{\bS}(\btheta_{\bS}^{*}))$ is invariant over $\btheta$. Thus, it can be viewed as the feature accessed during the training process, which is required to approximately cover feature space $\{\bphi(x, a)| (x, a)\in\cX\times\cA\}$.

Building on the Assumption \ref{ass:span space}, we analyze the algorithmic stability $\cE_{\rm{stab}}(\btheta_{\bS}^{*})$ of the empirical stationary point $\btheta_{\bS}^{*}$ to get the generalization bound of $\btheta_{\bS}^{*}$.

\appendixfalse
\begin{restatable}{theorem}{generalizationoffirstorderoracle}\label{thm:generalization of exact optimization}
Under Assumptions \ref{ass:bounded}, \ref{ass:ground truth r} and \ref{ass:span space}, we have 
\begin{equation}
    \small
    \cE_{\rm{stab}}(\btheta_{\bS}^{*}) \leq \left(\frac{\Gamma_{1}}{n}+2RC\sqrt{\frac{\Gamma_{1}}{n}}\right),
\end{equation}
where \[\small\Gamma_{1}=\sup_{\bS,\bS^{\prime}}\left\{4nR\epsilon_{n}+\frac{8RC^{3}\cdot \sigma_{\max}(\bV_{\bS}(\btheta_{\bS}^{*}))}{\sigma_{\min}^{+}(\bV_{\bS}(\btheta_{\bS}^{*}))\sigma_{\min}^{+}(\bV_{\bS}(\btheta_{\bS^{\prime}}^{*}))}\right\}.\]
Then, for any $\rho \in (0,1)$, the following bound holds with probability at least $1-\rho$ over the randomness of $\bS$,  
    \begin{equation}
        \small
        \begin{aligned}
            &\left|J_{\bS}(\pi_{\btheta_{\bS}^{*}})-J(\pi_{\btheta_{\bS}^{*}})\right|
            \\
             & \le \cE_{\rm{stab}}(\btheta_{\bS}^{*})\log (n)\log\left(\frac{1}{\rho}\right)+6RC\sqrt{\frac{\log(1/\rho)}{n}} 
            \\
            & = \tilde{\cO}\left(\frac{1}{\sqrt{n}}\right).
        \end{aligned}
    \end{equation}
\end{restatable}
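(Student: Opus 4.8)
The plan is to prove the two displayed inequalities in order. The entire content sits in the first one, the uniform-stability estimate on $\cE_{\rm{stab}}(\btheta_{\bS}^{*})$; once that is in hand, the second display is an immediate consequence of Proposition~\ref{pro:generalization bound} together with the rate bookkeeping. So I would spend essentially all the effort bounding the stability, and treat the generalization inequality as a corollary.

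First I would unfold the definition of uniform stability: fix two prompt sets $\bS,\bS^{\prime}$ differing in a single prompt, fix an arbitrary test point $x\in\cX$, and set $\btheta_{1}=\btheta_{\bS}^{*}$, $\btheta_{2}=\btheta_{\bS^{\prime}}^{*}$, so that the target is $\sup_{x}|f_{\btheta_{1}}(x)-f_{\btheta_{2}}(x)|$. The first substantive step is to observe that $\btheta_{2}$, while exactly stationary for $J_{\bS^{\prime}}$, is only an \emph{approximate} stationary point for $J_{\bS}$. Using $\nabla_{\btheta}J_{\bS}(\btheta)=\bV_{\bS}(\btheta)(\btheta^{*}-\btheta)$ and $\nabla_{\btheta}J_{\bS^{\prime}}(\btheta_{2})=0$, the gradient $\nabla_{\btheta}J_{\bS}(\btheta_{2})$ is $\frac1n$ times the difference of the two single-prompt gradients at the swapped prompts; each is bounded via $\|\bV_{x}(\btheta)\|\le C^{2}$ and $\|\btheta^{*}-\btheta_{2}\|\le 2R$, giving $\|\nabla_{\btheta}J_{\bS}(\btheta_{2})\|\le\delta:=4RC^{2}/n$.

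The central step converts this approximate stationarity into the pointwise bound. I would use the identity $f_{\btheta^{*}}(x)-f_{\btheta}(x)=\mathrm{KL}(\pi_{\btheta}(\cdot\mid x)\|\pi_{\btheta^{*}}(\cdot\mid x))$ together with a second-order expansion of the log-partition function to reduce $|f_{\btheta_{1}}(x)-f_{\btheta_{2}}(x)|$ to a covariance-weighted squared parameter distance at the test point; splitting $f$ into its reward-expectation part and its KL-regularizer part and applying Pinsker to the former produces exactly the $G+2RC\sqrt{G}$ shape, where $G$ bounds $\mathrm{KL}(\pi_{\btheta_{1}}(\cdot\mid x)\|\pi_{\btheta_{2}}(\cdot\mid x))$. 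To bound $G$ I would decompose $\btheta_{1}-\btheta_{2}$ along $\mathrm C(\bV_{\bS}(\btheta_{\bS}^{*}))$ and its orthogonal complement $\mathrm N(\bV_{\bS}(\btheta_{\bS}^{*}))$. The feature-coverage Assumption~\ref{ass:span space} makes the null-space component interact only with the residual $\br_{\bS}$, contributing the $4R\epsilon_{n}$ term; the column-space component is controlled by $\delta$, because Lemma~\ref{lem:general same span} guarantees $\mathrm C(\bV_{\bS}(\btheta))$ is the \emph{same} subspace for every $\btheta$, so exact stationarity of $\btheta_{1}$ yields $P_{\mathrm C}(\btheta_{1}-\btheta_{2})=P_{\mathrm C}(\btheta^{*}-\btheta_{2})$ and hence $\|P_{\mathrm C}(\btheta^{*}-\btheta_{2})\|\le\delta/\sigma_{\min}^{+}(\bV_{\bS}(\btheta_{2}))$. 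Carrying this through the covariance-weighted estimate (which is where the ratio $\sigma_{\max}/(\sigma_{\min}^{+}\sigma_{\min}^{+})$ is picked up) and substituting $\delta=4RC^{2}/n$ gives $G=\Gamma_{1}/n$; taking the supremum over $x,\bS,\bS^{\prime}$ then yields $\cE_{\rm{stab}}(\btheta_{\bS}^{*})\le \Gamma_{1}/n+2RC\sqrt{\Gamma_{1}/n}$. This mirrors the mechanism of the Corollary (and Lemma~\ref{lem:key lemma}) but at a test point rather than on the training objective.

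Finally I would check $|f_{\btheta}(\cdot)|\le 3RC$ (from $\log Z_{\btheta}(x)\in[-RC,RC]$, $\mE_{\pi_{\btheta}}[r]\in[-RC,RC]$ and $0\le\mathrm{KL}(\pi_{\btheta}\|\pi_{\rm{ref}})\le 2RC$), so that Proposition~\ref{pro:generalization bound} applies; substituting the stability bound gives the stated high-probability inequality. The rate then follows because Assumption~\ref{ass:span space} supplies $\epsilon_{n}\le\cO(n^{-1})$ while $\sigma_{\max}\le C^{2}$ and the positive eigenvalues stay bounded below, so $\Gamma_{1}/n=\cO(n^{-1})$, whence $\cE_{\rm{stab}}=\cO(n^{-1/2})$ and the overall bound is $\tilde{\cO}(n^{-1/2})$. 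I expect the main obstacle to be the central step: cleanly separating the null-space contribution (controlled by the residual $\epsilon_{n}$) from the column-space contribution (controlled by the gradient norm $\delta$), justifying via Lemma~\ref{lem:general same span} that the $\btheta_{2}$-gradient information transfers to the $\btheta_{\bS}^{*}$-defined projection, and tracking the spectral ratio $\sigma_{\max}/(\sigma_{\min}^{+}\sigma_{\min}^{+})$ faithfully through the covariance-weighted Cauchy--Schwarz step that yields the $G+2RC\sqrt{G}$ form.
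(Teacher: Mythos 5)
Your proposal is correct and takes essentially the same route as the paper: the cross-gradient bound $\|\nabla_{\btheta}J_{\bS}(\btheta_{\bS^{\prime}}^{*})\|\le 4RC^{2}/n$ is exactly Lemma \ref{lem:gradient norm between two different sets}, your central Pinsker/log-partition step with the column/null-space separation is precisely Lemma \ref{lem:key lemma} (assembled from Lemmas \ref{lem:general stability gap step I}, \ref{lem:projection bound}, and \ref{lem:upper bound on b}, applied with $\delta_{1}=0$ and $\delta_{2}=4RC^{2}/n$), and the conclusion follows from Proposition \ref{pro:generalization bound} via the boundedness in Lemma \ref{lem:bounded loss}, just as in the paper. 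Your decomposition of the parameter difference $\btheta_{1}-\btheta_{2}$ along $\mathrm{C}(\bV_{\bS}(\btheta_{\bS}^{*}))$ and its complement is the dual of the paper's feature decomposition $\bphi=\bV_{\bS}(\btheta_{\bS}^{*})\bb_{\bS}+\br_{\bS}$ and recovers the identical constant $\Gamma_{1}$.
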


As detailed in Appendix \ref{app:sec_insufficient prompts}, we observe that under the key feature coverage Assumption \ref{ass:span space}, the generalization error bound is of order $\tilde{\cO}(n^{-\frac{1}{2}})$ and the dominated constant factor $\Gamma_{1}$ depends on the ``conditional number'' $\frac{\sigma_{\max}(\bV_{\bS}(\btheta_{\bS}^{*}))}{\sigma_{\min}^{+}(\bV_{\bS}(\btheta_{\bS}^{*}))}$ and the magnitude of the residual term $\br_{\bS}$.


The estimation of the algorithmic stability coefficient hinges on Lemma \ref{lem:key lemma}, which explicitly bridges the pointwise objective difference with the empirical gradient norms. The core intuition is that both empirical stationary points, $\btheta^{*}_{\bS}$ and $\btheta^{*}_{\bS^{\prime}}$, exhibit negligible gradient norms on the empirical objective $J_{\bS}$. Consequently, this smallness in gradient norms directly translates into a tight stability coefficient via Lemma \ref{lem:key lemma}



Building upon the generalization error bound for $\btheta_{\bS}^{*}$ established in Theorem \ref{thm:generalization of exact optimization}, we can now proceed to bound the suboptimality gap with respect to the ground truth $\btheta^{*}$. 


\begin{restatable}{theorem}{errorboundoffirstorderoracle}\label{thm:error bound under exact optimization}
Under Assumption \ref{ass:bounded}, \ref{ass:ground truth r}, \ref{ass:span space}, for any $\rho \in (0,1)$, the following bound holds with probability at least $1-\rho$ over the randomness of $\bS$, 
    \begin{equation}\label{eq:generalization of exact optima}
        \small 
        \begin{aligned}
            &|J(\pi_{\btheta_{\bS}^{*}})-J(\pi_{\btheta^{*}})| 
            \\ \le & \left(\frac{\Gamma_{1}}{n}+2RC\sqrt{\frac{\Gamma_{1}}{n}}\right)\log (n)\log\left(\frac{2}{\rho}\right)
            +6RC\sqrt{\frac{\log(2/\rho)}{n}}
            \\ &+3RC\sqrt{\frac{2\log(4/\rho)}{n}}
            \\ = &\tilde{\cO}\left(n^{-\frac{1}{2}}\right).
        \end{aligned}
    \end{equation} 
\end{restatable}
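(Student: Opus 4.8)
The plan is to recognize this theorem as a direct consequence of the suboptimality decomposition in Definition \ref{def:decomposition}, specialized to the case where the learned parameter coincides with the empirical stationary point, $\btheta_{\bS} = \btheta_{\bS}^{*}$. In that case the optimization error term $|J_{\bS}(\pi_{\btheta_{\bS}^{*}}) - J_{\bS}(\pi_{\btheta_{\bS}})|$ vanishes identically, so only the concentration and generalization errors survive. Concretely, I would start from the triangle inequality obtained by inserting and subtracting $J_{\bS}(\pi_{\btheta_{\bS}^{*}})$,
\[
|J(\pi_{\btheta_{\bS}^{*}}) - J(\pi_{\btheta^{*}})| \le \underbrace{|J(\pi_{\btheta_{\bS}^{*}}) - J_{\bS}(\pi_{\btheta_{\bS}^{*}})|}_{\text{generalization}} + \underbrace{|J_{\bS}(\pi_{\btheta_{\bS}^{*}}) - J(\pi_{\btheta^{*}})|}_{\text{concentration}}.
\]

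Next I would bound the two surviving terms by invoking results already in hand. For the generalization term, I apply Theorem \ref{thm:generalization of exact optimization} with the failure probability set to $\rho/2$, which replaces each $\log(1/\rho)$ by $\log(2/\rho)$ and yields
\[
\left(\frac{\Gamma_{1}}{n} + 2RC\sqrt{\frac{\Gamma_{1}}{n}}\right)\log(n)\log\left(\frac{2}{\rho}\right) + 6RC\sqrt{\frac{\log(2/\rho)}{n}}.
\]
For the concentration term, I apply Lemma \ref{lem:concentration of empirical optimal} with failure probability $\rho/2$, turning its $3RC\sqrt{2\log(2/\rho)/n}$ into $3RC\sqrt{2\log(4/\rho)/n}$. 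A union bound over these two events guarantees that both estimates hold simultaneously with probability at least $1 - \rho/2 - \rho/2 = 1-\rho$, and summing the two displays reproduces the claimed bound term by term.

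Finally, to certify the order $\tilde{\cO}(n^{-1/2})$, I would verify that $\Gamma_{1} = \cO(1)$: the summand $4nR\epsilon_{n}$ remains bounded because Assumption \ref{ass:span space} imposes $\epsilon_{n} \le \cO(n^{-1})$, while the ``conditional number'' factor $\sigma_{\max}(\bV_{\bS}(\btheta_{\bS}^{*}))/(\sigma_{\min}^{+}(\bV_{\bS}(\btheta_{\bS}^{*}))\,\sigma_{\min}^{+}(\bV_{\bS}(\btheta_{\bS^{\prime}}^{*})))$ is treated as a bounded constant under the feature-coverage regime. Hence $\Gamma_{1}/n = \cO(n^{-1})$ and $2RC\sqrt{\Gamma_{1}/n} = \cO(n^{-1/2})$, so the generalization contribution is $\tilde{\cO}(n^{-1/2})$, matching the $\cO(n^{-1/2})$ concentration contribution.

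The genuinely difficult work does not lie in this theorem but in the two inputs it combines: Theorem \ref{thm:generalization of exact optimization}, whose stability estimate rests on Lemma \ref{lem:key lemma} bridging pointwise objective gaps to empirical gradient norms, and Lemma \ref{lem:concentration of empirical optimal}, whose validity hinges on the identity $\nabla_{\btheta}J_{\bS}(\btheta^{*}) = \nabla_{\btheta}J_{\bS}(\btheta_{\bS}^{*})$ that forces $\mE_{\bS}[J_{\bS}(\pi_{\btheta_{\bS}^{*}})] = J(\pi_{\btheta^{*}})$ and thereby legitimizes a standard concentration argument. Given these, the only subtlety remaining here is bookkeeping: splitting the failure budget $\rho$ evenly across the two high-probability events so the union bound closes cleanly, and confirming the uniform boundedness of $\Gamma_{1}$ that delivers the $n^{-1/2}$ rate rather than a degraded one.
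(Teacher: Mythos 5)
Your proposal is correct and matches the paper's own proof essentially line for line: the same triangle-inequality decomposition through $J_{\bS}(\pi_{\btheta_{\bS}^{*}})$, the same invocation of Theorem \ref{thm:generalization of exact optimization} and Lemma \ref{lem:concentration of empirical optimal} with the failure budget split as $\rho_{1}=\rho_{2}=\rho/2$, and the same union bound producing the $\log(2/\rho)$ and $\log(4/\rho)$ factors. Your additional remark that $\Gamma_{1}=\cO(1)$ under Assumption \ref{ass:span space} (since $n\epsilon_{n}$ stays bounded) is a correct and slightly more explicit justification of the $\tilde{\cO}(n^{-1/2})$ rate than the paper spells out.
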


	This result demonstrates that under the posterior Boltzmann policy model structure, we achieve a dimension free suboptimality gap of order $\tilde{\cO}(n^{-\frac{1}{2}})$, even when the training data provides limited coverage of the feature space. The constant term $\Gamma_{1}$ dependence is adopted from the analysis to generalization error. 
	\par
    The intuitive explanation for this dimension-independent bound is that the feature patterns learned by the model are determined by the linear subspace spanned by the empirical covariance matrix $\bV_{\bS}$. Under the feature coverage assumption, as the sample size $n$ increases, this subspace progressively expands. Consequently, the feature patterns of unseen data are effectively covered by the patterns already captured in the training set, implying that the dimensions orthogonal to the covered subspace make negligible contribution to the generalization error, thereby decoupling the suboptimality gap from the dimension $d$.

\section{Suboptimality Gap under Gradient-Based Algorithms}\label{sec:suboptimality of gradient based algorithms}

In the previous section, we analyzed the suboptimality gap of the empirical stationary point $\btheta_{\bS}^{*}$. However, in practice, we can only approximate $\btheta_{\bS}^{*}$ instead of exactly find it. The approximation process relies on the iterative gradient-based algorithms, such as Gradient Ascent (GA) or Stochastic Gradient Ascent (SGA). In this section, we extend our analysis to characterize the suboptimality gap of the policy learned by these gradient-based algorithms.
	
	
	As clarified in Section \ref{sec:suboptimality}, the suboptimality gap depends on generalization error, optimization error, and concentration error, where the last one is handled in Lemma \ref{lem:concentration of empirical optimal}. Thus, we focus on the first two terms in sequel.
	\subsection{The Suboptimality Gap under GA}\label{sec:suboptimality of GA}
	We first analyze GA, which has the following update rule for each iteration $t$:
	\begin{equation} \label{eq:GD update rule}
		\small
		\btheta_{t + 1} = \btheta_{t} + \eta\nabla_{\btheta}J_{\bS}(\pi_{\btheta_{t}}),
	\end{equation} 
	where $\eta > 0$ is the learning rate. 
	\begin{definition}[GA algorithm output]\label{def:GA output}
		Let $T$ be the number of iterations, $\small \Theta_{T}=\{\btheta_{t}\}_{t=1}^{T}$ be the set of iterations obtained by \eqref{eq:GD update rule}. We define $\btheta_{\bS,T}^{\rm{GA}}$ as:
		\begin{align*}
			\small
			\btheta_{\bS,T}^{\rm{GA}}= \arg\min_{\btheta \in \Theta_{T}}\{\|\nabla_{\btheta}J_{\bS}(\pi_{\btheta})\|\}.
		\end{align*}
	\end{definition}
	We start with the optimization error. The standard result in non-convex optimization theory (e.g., Theorem 3.7 in \cite{bubeck2015convex}) shows that under the L-smoothness condition proved in Lemma \ref{lem:smoothness coefficient}, $\btheta_{\bS,T}^{\rm{GA}}$ converges to the stationary point as in the following lemma.
	\begin{restatable}{lemma}{convergencerateofGA}\label{lem:convergence of GA}
	    By taking $\eta = \frac{1}{L_{f}}$ ($L_{f}$ is defined in Lemma \ref{lem:smoothness coefficient}), the parameter $\btheta_{\bS,T}^{\rm{GA}}$ satisfies
	    \begin{equation}
		      \small
		      \|\nabla_{\btheta}J_{\bS}(\pi_{\btheta_{\bS,T}^{\rm{GA}}})\|^{2} \leq \frac{12L_{f}RC}{T}.
	    \end{equation}
	\end{restatable}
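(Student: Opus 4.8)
The plan is to establish the claimed convergence rate $\|\nabla_{\btheta}J_{\bS}(\pi_{\btheta_{\bS,T}^{\rm{GA}}})\|^{2} \leq \frac{12L_{f}RC}{T}$ via the standard descent-lemma argument for smooth nonconvex optimization, adapted to the ascent (maximization) setting. The key ingredients are: (i) the $L_{f}$-smoothness of $J_{\bS}$ proved in Lemma \ref{lem:smoothness coefficient}, which gives the quadratic upper bound on the change in objective per step; (ii) the step-size choice $\eta = 1/L_{f}$; and (iii) a uniform bound on the total objective gap $J_{\bS}(\pi_{\btheta_{T}}) - J_{\bS}(\pi_{\btheta_{1}})$, which must be controlled by a quantity of order $RC$ to match the numerator.

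First I would invoke $L_{f}$-smoothness of $J_{\bS}$ to write the one-step ascent inequality. Since $J_{\bS}$ is being maximized and is $L_{f}$-smooth, the descent lemma applied to the update $\btheta_{t+1} = \btheta_{t} + \eta\nabla_{\btheta}J_{\bS}(\pi_{\btheta_{t}})$ yields
\begin{equation*}
\small
J_{\bS}(\pi_{\btheta_{t+1}}) \ge J_{\bS}(\pi_{\btheta_{t}}) + \left(\eta - \frac{L_{f}\eta^{2}}{2}\right)\|\nabla_{\btheta}J_{\bS}(\pi_{\btheta_{t}})\|^{2}.
\end{equation*}
Substituting $\eta = 1/L_{f}$ gives a per-step progress of $\frac{1}{2L_{f}}\|\nabla_{\btheta}J_{\bS}(\pi_{\btheta_{t}})\|^{2}$. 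Next I would sum this telescoping inequality over $t = 1,\dots,T$ to obtain $\frac{1}{2L_{f}}\sum_{t=1}^{T}\|\nabla_{\btheta}J_{\bS}(\pi_{\btheta_{t}})\|^{2} \le J_{\bS}(\pi_{\btheta_{T+1}}) - J_{\bS}(\pi_{\btheta_{1}})$.

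The right-hand side must now be bounded by a constant of order $RC$. Here I would use the boundedness facts already available in the paper: since the reward is $\langle\btheta,\bphi(x,a)\rangle$ with $\|\btheta\|\le R$ (iterates stay in $\Theta_{R}$ by Lemma \ref{lem:bounded parameter}) and $\|\bphi\|\le C$ (Assumption \ref{ass:bounded}), the reward term is bounded in magnitude by $RC$; combined with the fact that the KL term only subtracts from the objective, the total objective difference $J_{\bS}(\pi_{\btheta_{T+1}}) - J_{\bS}(\pi_{\btheta_{1}})$ is controlled by a constant of order $RC$ (the constant $6$ absorbs the precise accounting of reward range and KL nonnegativity). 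This yields $\frac{1}{2L_{f}}\sum_{t=1}^{T}\|\nabla_{\btheta}J_{\bS}(\pi_{\btheta_{t}})\|^{2} \le 6RC L_{f}/L_{f}$-type bound; the minimum gradient norm over the trajectory is no larger than the average, so $\|\nabla_{\btheta}J_{\bS}(\pi_{\btheta_{\bS,T}^{\rm{GA}}})\|^{2} = \min_{t}\|\nabla_{\btheta}J_{\bS}(\pi_{\btheta_{t}})\|^{2} \le \frac{1}{T}\sum_{t=1}^{T}\|\nabla_{\btheta}J_{\bS}(\pi_{\btheta_{t}})\|^{2} \le \frac{12L_{f}RC}{T}$, using the definition of $\btheta_{\bS,T}^{\rm{GA}}$ as the minimizer of the gradient norm.

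The main obstacle I anticipate is pinning down the precise constant in the bound on $J_{\bS}(\pi_{\btheta_{T+1}}) - J_{\bS}(\pi_{\btheta_{1}})$: one needs to verify that the objective range is genuinely of order $RC$ rather than picking up an additional $\log|\cA|$ factor from the KL divergence, and to track how the factor of $6$ arises from combining the per-step constant $\frac{1}{2L_{f}}$ with the objective-range bound. The smoothness argument itself is routine once Lemma \ref{lem:smoothness coefficient} is granted; the subtlety is entirely in the constant bookkeeping and in confirming that the telescoping sum's right-hand side does not secretly depend on $T$.
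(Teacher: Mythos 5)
Your proposal is correct and follows essentially the same route as the paper's proof: the descent lemma with $\eta = 1/L_{f}$ giving per-step progress $\frac{1}{2L_{f}}\|\nabla_{\btheta}J_{\bS}(\pi_{\btheta_{t}})\|^{2}$, telescoping, bounding the objective range by $6RC$, and using that the minimizer of the gradient norm over the trajectory is no worse than the average. The constant bookkeeping you flag as the main obstacle is resolved in the paper exactly as you anticipate, via Lemma \ref{lem:bounded loss}, which shows $|J_{\bS}(\pi_{\btheta})| \le 3RC$ on $\Theta_{R}$ with no $\log|\cA|$ factor (the log-partition term $\log\mE_{\pi_{\rm{ref}}}[\exp(\langle\btheta,\bphi\rangle)]$ is squeezed between $\pm RC$ because $\pi_{\rm{ref}}$ is a probability distribution), so that $2L_{f}\cdot 6RC = 12L_{f}RC$.
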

	This lemma shows that GA approximates the empirical stationary point. As discussed in Section \ref{sec:suboptimality}, we can prove that small empirical gradient norm leads to a small optimization error as detailed in the following lemma.
    \begin{restatable}{lemma}{optimizationofGA}\label{lem:optimization of GA}
		Under Assumptions \ref{ass:bounded}, \ref{ass:ground truth r} and \ref{ass:span space}, for any $\btheta_{\bS}^{*}\in \Theta_{\bS}$, and $\btheta_{\bS,T}^{\rm{GA}}$ defined in Definition \ref{def:GA output}, we have:
		\begin{equation}\label{eq:optimization of GA}
			\small
			\begin{aligned}
				&\left|J_{\bS}(\pi_{\btheta_{\bS,T}^{\rm{GA}}})-J_{\bS}(\pi_{\btheta_{\bS}^{*}})\right| 
                =\cO\left(T^{-\frac{1}{4}}+n^{-\frac{1}{2}}\right).
			\end{aligned}
		\end{equation}
	\end{restatable}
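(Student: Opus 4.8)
The plan is to obtain the result by directly chaining the GA convergence guarantee with the optimization-error corollary of Section~\ref{sec:suboptimality} (the one built on Lemma~\ref{lem:key lemma}), and then substituting the asymptotic scalings of the gradient-norm tolerance and of the residual $\epsilon_{n}$. The key observation is that this corollary already bounds the empirical objective gap $|J_{\bS}(\pi_{\btheta_{\bS}^{*}})-J_{\bS}(\pi_{\btheta_{\bS}})|$ in terms of an \emph{arbitrary} gradient-norm tolerance $\delta$, so it suffices to feed in the specific tolerance that GA attains after $T$ iterations.

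First I would invoke Lemma~\ref{lem:convergence of GA} with step size $\eta=1/L_{f}$, which gives $\|\nabla_{\btheta}J_{\bS}(\pi_{\btheta_{\bS,T}^{\rm{GA}}})\|\le \sqrt{12L_{f}RC/T}=:\delta_{T}$, so that $\delta_{T}=\cO(T^{-1/2})$. Since the GA iterates remain inside the bounded region $\Theta_{R}$ by Lemma~\ref{lem:bounded parameter}, the output $\btheta_{\bS,T}^{\rm{GA}}$ is admissible for the corollary. Applying it with $\btheta_{\bS}=\btheta_{\bS,T}^{\rm{GA}}$ and $\delta=\delta_{T}$, and abbreviating the conditioning factor as $\kappa_{\bS}:=\sigma_{\max}(\bV_{\bS}(\btheta_{\bS}^{*}))/[\sigma_{\min}^{+}(\bV_{\bS}(\btheta_{\bS}^{*}))\,\sigma_{\min}^{+}(\bV_{\bS}(\btheta_{\bS,T}^{\rm{GA}}))]$, yields
\[
\left|J_{\bS}(\pi_{\btheta_{\bS,T}^{\rm{GA}}})-J_{\bS}(\pi_{\btheta_{\bS}^{*}})\right|\le 4R\epsilon_{n}+2C\kappa_{\bS}\delta_{T}+2RC\sqrt{4R\epsilon_{n}+2C\kappa_{\bS}\delta_{T}}.
\]

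The rate then follows by substituting the two known scalings. Under the feature-coverage Assumption~\ref{ass:span space} we have $\epsilon_{n}=\cO(n^{-1})$, and from the first step $\delta_{T}=\cO(T^{-1/2})$. Treating $R,C,\kappa_{\bS}$ as constants, the first two summands are $\cO(n^{-1})$ and $\cO(T^{-1/2})$ respectively, while the decisive third term satisfies $2RC\sqrt{\cO(n^{-1})+\cO(T^{-1/2})}=\cO(n^{-1/2}+T^{-1/4})$; here the exponent $-1/4$ appears precisely because the outer square root halves the $T^{-1/2}$ decay of the gradient norm. Since $n^{-1/2}$ dominates $n^{-1}$ and $T^{-1/4}$ dominates $T^{-1/2}$, the three contributions collapse to the claimed $\cO(T^{-1/4}+n^{-1/2})$.

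The hard part will not be the chaining itself but justifying that $\kappa_{\bS}$ may be absorbed into the $\cO(\cdot)$ as a constant uniform in $n$ and $T$. This demands a lower bound on $\sigma_{\min}^{+}(\bV_{\bS}(\btheta))$ holding uniformly over $\btheta\in\Theta_{R}$ (and over datasets), since $\btheta_{\bS,T}^{\rm{GA}}$ itself varies with $T$. I would secure this by first using Lemma~\ref{lem:general same span} to argue that the column space $\mathrm{C}(\bV_{\bS}(\btheta))$, and hence the rank, is invariant in $\btheta$, so that the set of positive eigenvalues cannot collapse; the smallest positive eigenvalue is then a continuous, strictly positive function of $\btheta$ on the compact set $\Theta_{R}$ (with Assumption~\ref{ass:linear indep} ruling out degeneracy among the per-prompt action features), and therefore attains a positive minimum. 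With this uniform positivity in hand, $\kappa_{\bS}=\cO(1)$ and the rate extraction above goes through verbatim.
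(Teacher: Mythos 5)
Your proposal matches the paper's proof essentially verbatim: the paper likewise feeds the GA gradient-norm bound $2\sqrt{3L_{f}RC/T}$ from Lemma~\ref{lem:convergence of GA} (with $\delta_2=0$ for the stationary point $\btheta_{\bS}^{*}$) into Lemma~\ref{lem:key lemma}, then extracts the rate from the $X+\alpha\sqrt{X}$ structure via $\sqrt{a+b}\le\sqrt{a}+\sqrt{b}$, so that the outer square root turns $\epsilon_{n}=\cO(n^{-1})$ and $T^{-1/2}$ into $n^{-1/2}$ and $T^{-1/4}$. Your closing concern about absorbing the conditioning factor $\kappa_{\bS}$ as a constant uniform in $n$, $T$, and the dataset is in fact more careful than the paper, which simply defines $\Gamma^{\rm{GA}}_{\bS,\bS}=\sup_{\bS}\frac{\sigma_{\max}(\bV_{\bS}(\btheta_{\bS}^{*}))}{\sigma_{\min}^{+}(\bV_{\bS}(\btheta_{\bS}^{*}))\sigma_{\min}^{+}(\bV_{\bS}(\btheta_{\bS,T}^{\rm{GA}}))}$ and treats it as $\cO(1)$ without the compactness-plus-rank-invariance argument you sketch.
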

    The proof of Lemma \ref{lem:optimization of GA} is provided in Section \ref{sec:suboptimality}. This lemma shows that the optimization error of the parameter $\btheta_{\bS,T}^{\rm{GA}}$ obtained via GA is of order $\mathcal{O}(T^{-\frac{1}{4}}+n^{-\frac{1}{2}})$ and the constant factor depends on ``conditional number'' of $\bV_{\bS}(\btheta_{\bS}^{*})$ and the magnitude of the residual term $\br_{\bS}$, respectively. 
    
    Having quantified the optimization error for $\btheta_{\bS,T}^{\rm{GA}}$, we now turn to the generalization error. 
    We adapt the algorithmic stability framework established in Section \ref{sec:rl without sufficient prompts} for empirical stationary points to the analysis of $\btheta_{\bS,T}^{\rm{GA}}$ as follows:
    
	\begin{restatable}{theorem}{generalizationofGA}\label{thm:generalization of GA}
		Under Assumptions \ref{ass:bounded}, \ref{ass:ground truth r} and \ref{ass:span space}, for the GA output $\btheta_{\bS,T}^{\rm{GA}}$ defined in Definition \ref{def:GA output}, we have 
        \[ \small \cE_{\rm{stab}}(\btheta_{\bS,T}^{GA}) \lesssim \sqrt{\epsilon_n} + \left(\sqrt{\Gamma^{\rm{GA}}_{\bS,\bS}}+\sqrt{\Gamma^{\rm{GA}}_{\bS,\bS^{\prime}}}\right) \left( T^{-\frac{1}{4}} + n^{-\frac{1}{2}} \right), \]
        where \[\small \Gamma^{\rm{GA}}_{\bS,\bS^{\prime}}=\sup_{\bS,\bS^{\prime}}\frac{\sigma_{\max}(\bV_{\bS}(\btheta_{\bS}^{*}))}{\sigma_{\min}^{+}(\bV_{\bS}(\btheta_{\bS}^{*}))\sigma_{\min}^{+}(\bV_{\bS}(\btheta_{\bS^{\prime},T}^{\rm{GA}}))}\] and 
        \[\small \Gamma^{\rm{GA}}_{\bS,\bS}=\sup_{\bS}\frac{\sigma_{\max}(\bV_{\bS}(\btheta_{\bS}^{*}))}{\sigma_{\min}^{+}(\bV_{\bS}(\btheta_{\bS}^{*}))\sigma_{\min}^{+}(\bV_{\bS}(\btheta_{\bS,T}^{\rm{GA}}))}.\] 
        
       Then, for any $\rho \in (0,1)$, the following bound holds with probability at least $1-\rho$ over the randomness of $\bS$:
		\begin{equation}\label{eq:generalization of GA}
			\small
			\begin{aligned}
				&\left|J(\pi_{\btheta_{\bS,T}^{\rm{GA}}})-J_{\bS}(\pi_{\btheta_{\bS,T}^{\rm{GA}}})\right| 
				\\ \le & \cE_{\rm{stab}}(\btheta_{\bS,T}^{GA})\log (n)\log\left(\frac{1}{\rho}\right)+6RC\sqrt{\frac{\log(1/\rho)}{n}} 
				\\ = & \tilde{\cO}\left(T^{-\frac{1}{4}}+n^{-\frac{1}{2}}\right).
			\end{aligned}
		\end{equation}
	\end{restatable}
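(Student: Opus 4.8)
The plan is to first bound the uniform stability coefficient $\cE_{\rm{stab}}(\btheta_{\bS,T}^{\rm{GA}})$ and then feed it into the generalization bound of Proposition \ref{pro:generalization bound}. The starting observation is that GA produces only an \emph{approximate} stationary point: by Lemma \ref{lem:convergence of GA}, $\|\nabla_{\btheta} J_{\bS}(\pi_{\btheta_{\bS,T}^{\rm{GA}}})\| = \cO(T^{-\frac{1}{2}})$, whereas any fixed empirical stationary point $\btheta_{\bS}^{*}\in\Theta_{\bS}$ satisfies $\nabla_{\btheta} J_{\bS}(\pi_{\btheta_{\bS}^{*}})=0$ exactly. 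Recalling that $\nabla_{\btheta} J_{\bS}(\pi_{\btheta})=\bV_{\bS}(\btheta)(\btheta^{*}-\btheta)$, Lemma \ref{lem:general same span} guarantees that $\mathrm{C}(\bV_{\bS}(\btheta))$ is invariant in $\btheta$, so the column/null-space decomposition underlying Assumption \ref{ass:span space} is consistent across every parameter appearing below, and the factors $\sigma_{\min}^{+}(\bV_{\bS}(\cdot))$ entering $\Gamma_{\bS,\bS}^{\rm{GA}}$ and $\Gamma_{\bS,\bS^{\prime}}^{\rm{GA}}$ are well defined and comparable.

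To control $\cE_{\rm{stab}}(\btheta_{\bS,T}^{\rm{GA}})=\sup_{x,\bS,\bS^{\prime}}|f_{\btheta_{\bS,T}^{\rm{GA}}}(x)-f_{\btheta_{\bS^{\prime},T}^{\rm{GA}}}(x)|$, I would insert the stationary point $\btheta_{\bS}^{*}$ of the reference dataset $\bS$ and split via the triangle inequality into the same-dataset term $|f_{\btheta_{\bS,T}^{\rm{GA}}}(x)-f_{\btheta_{\bS}^{*}}(x)|$ and the cross-dataset term $|f_{\btheta_{\bS}^{*}}(x)-f_{\btheta_{\bS^{\prime},T}^{\rm{GA}}}(x)|$. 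For the first term both arguments are (approximately) stationary for $J_{\bS}$, so I apply the pointwise form of Lemma \ref{lem:key lemma} with $\delta=\|\nabla_{\btheta} J_{\bS}(\pi_{\btheta_{\bS,T}^{\rm{GA}}})\|=\cO(T^{-\frac{1}{2}})$. This yields a bound of the shape $4R\epsilon_{n}+\Gamma_{\bS,\bS}^{\rm{GA}}\cdot\cO(T^{-\frac{1}{2}})+2RC\sqrt{4R\epsilon_{n}+\Gamma_{\bS,\bS}^{\rm{GA}}\cdot\cO(T^{-\frac{1}{2}})}$, whose dominant contribution (using $\epsilon_{n}\le\cO(n^{-1})$ and the fact that $\sqrt{\Gamma}\,T^{-\frac{1}{4}}\ge\Gamma\,T^{-\frac{1}{2}}$ in the relevant regime) collapses into $\sqrt{\epsilon_{n}}+\sqrt{\Gamma_{\bS,\bS}^{\rm{GA}}}\,(T^{-\frac{1}{4}}+n^{-\frac{1}{2}})$.

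The cross-dataset term is the crux and the main obstacle, because $\btheta_{\bS^{\prime},T}^{\rm{GA}}$ is near-stationary only for $J_{\bS^{\prime}}$, not for the reference objective $J_{\bS}$ used by Lemma \ref{lem:key lemma}. I would first transfer its small-gradient property across datasets: since $\bS$ and $\bS^{\prime}$ differ in a single prompt $x_{j}\to x_{j}^{\prime}$, the identity above gives $\|\nabla_{\btheta} J_{\bS}(\pi_{\btheta})-\nabla_{\btheta} J_{\bS^{\prime}}(\pi_{\btheta})\|=\frac{1}{n}\|\nabla f_{\btheta}(x_{j})-\nabla f_{\btheta}(x_{j}^{\prime})\|\le\cO(n^{-1})$ uniformly on $\Theta_{R}$ (using $\sigma_{\max}(\bV_{x}(\btheta))\le C^{2}$ and $\|\btheta^{*}-\btheta\|\le 4D$), hence $\|\nabla_{\btheta} J_{\bS}(\pi_{\btheta_{\bS^{\prime},T}^{\rm{GA}}})\|\le\cO(T^{-\frac{1}{2}}+n^{-1})$. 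Now $\btheta_{\bS}^{*}$ has zero gradient and $\btheta_{\bS^{\prime},T}^{\rm{GA}}$ has gradient $\cO(T^{-\frac{1}{2}}+n^{-1})$ with respect to the \emph{same} objective $J_{\bS}$, so Lemma \ref{lem:key lemma} applies with reference covariance $\bV_{\bS}$ and produces the conditional-number factor $\Gamma_{\bS,\bS^{\prime}}^{\rm{GA}}$; after the square root the $\cO(n^{-1})$ gradient contribution becomes the benign $\cO(n^{-\frac{1}{2}})$ rate, giving $\sqrt{\epsilon_{n}}+\sqrt{\Gamma_{\bS,\bS^{\prime}}^{\rm{GA}}}\,(T^{-\frac{1}{4}}+n^{-\frac{1}{2}})$. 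The delicate point is precisely this transfer: one must verify the perturbation is genuinely $\cO(n^{-1})$ so that it survives the square root as $n^{-\frac{1}{2}}$, and ensure the $\sigma_{\min}^{+}$ factors remain controlled under the common column space of Lemma \ref{lem:general same span}. Summing the two terms and taking suprema over $\bS,\bS^{\prime}$ delivers the stated stability bound.

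Finally, since $|f_{\btheta}(x)|\le 3RC$ for all $\btheta\in\Theta_{R}$ (from Assumptions \ref{ass:bounded} and \ref{ass:ground truth r}), I invoke Proposition \ref{pro:generalization bound} with the stability coefficient just derived, obtaining $|J(\pi_{\btheta_{\bS,T}^{\rm{GA}}})-J_{\bS}(\pi_{\btheta_{\bS,T}^{\rm{GA}}})|\lesssim\cE_{\rm{stab}}(\btheta_{\bS,T}^{\rm{GA}})\log(n)\log(1/\rho)+6RC\sqrt{\log(1/\rho)/n}$. Treating the conditional-number factors $\Gamma_{\bS,\bS}^{\rm{GA}},\Gamma_{\bS,\bS^{\prime}}^{\rm{GA}}$ as problem constants and using $\sqrt{\epsilon_{n}}=\cO(n^{-\frac{1}{2}})$, the leading order reduces to $\tilde{\cO}(T^{-\frac{1}{4}}+n^{-\frac{1}{2}})$, which completes the proof.
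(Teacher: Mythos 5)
Your proposal is correct and takes essentially the same route as the paper's proof: both rest on Lemma \ref{lem:convergence of GA} for the $\cO(T^{-\frac{1}{2}})$ gradient norms, on Lemma \ref{lem:gradient norm between two different sets} to transfer the perturbed output's near-stationarity to $J_{\bS}$ at cost $\frac{4RC^{2}}{n}$ (so that the perturbation survives the square root as $n^{-\frac{1}{2}}$), on Lemma \ref{lem:key lemma} to convert small gradient norms into the pointwise stability bound carrying the $\Gamma^{\rm{GA}}_{\bS,\bS}$ and $\Gamma^{\rm{GA}}_{\bS,\bS^{\prime}}$ factors, and on Proposition \ref{pro:generalization bound} to pass from stability to generalization. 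The only immaterial difference is that you split through the intermediate stationary point $\btheta_{\bS}^{*}$ and apply Lemma \ref{lem:key lemma} twice, whereas the paper applies it once directly to the pair $\left(\btheta_{\bS,T}^{\rm{GA}},\btheta_{\bS^{\prime},T}^{\rm{GA}}\right)$; this merely doubles the $\epsilon_{n}$ constant and leaves the rate and the $\Gamma$ dependence unchanged.
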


The proof of Theorem \ref{thm:generalization of GA} is in Appendix \ref{app:sec_GA}. As can be seen, the generalization error of $\btheta_{\bS,T}^{\rm{GA}}$ is of order $\tilde{\cO}(T^{-\frac{1}{4}} + n^{-\frac{1}{2}})$ with constant dependence similar to Theorem \ref{thm:generalization of exact optimization}.   


Interestingly, standard results in \citep{hardt2016train} suggest that in the non-convex setting, operating more iterations leads to worse algorithmic stability, thereby worse generalization. In contrast, our analysis shows the algorithmic stability of $\btheta_{\bS,T}^{\rm{GA}}$ is decreasing with $T$. This phenomenon occurs because, for RLHF problem we explored, the critical Lemma \ref{lem:key lemma} links the gradient norm $\small \|\nabla_{\btheta}J_{\bS}(\btheta_{\bS,T}^{\rm{GA}})\|$ with algorithmic stability, which implies that the small gradient norm of $\btheta_{\bS,T}^{\rm{GA}}$ leads to small algorithmic stability. 
	
Finally, as discussed in Section \ref{sec:suboptimality}, by combining the generalization error bound \eqref{eq:generalization of GA}, the optimization error bound \eqref{eq:optimization of GA}, and the concentration error bound \eqref{eq:concentration of exact optima}, we prove the suboptimality gap of $\btheta_{\bS,T}^{\rm{GA}}$ obtained by GA. 
	\begin{restatable}{theorem}{errorboundofGA}\label{thm:error bound of GA}
		Under Assumptions \ref{ass:bounded}, \ref{ass:ground truth r}, and \ref{ass:span space}, for any  $\rho \in (0,1)$, the following bound holds with probability at least $1-\rho$ over the randomness of $\bS$:
		\begin{equation}
			\small 
			\begin{aligned}
				&\left|J(\pi_{\btheta_{\bS,T}^{\rm{GA}}})-J(\pi_{\btheta^{*}})\right|
				=\tilde{\cO}\left(T^{-\frac{1}{4}}+n^{-\frac{1}{2}}\right).
			\end{aligned}
		\end{equation}
	\end{restatable}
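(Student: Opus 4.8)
The plan is to invoke the suboptimality decomposition of Definition \ref{def:decomposition} with $\btheta_{\bS} = \btheta_{\bS,T}^{\rm{GA}}$, which bounds the target gap by a sum of concentration, optimization, and generalization terms, and then to control each summand by the result already established for the GA output. Concretely, fix an arbitrary empirical stationary point $\btheta_{\bS}^{*}\in\Theta_{\bS}$ (its empirical value is unambiguous since, by Proposition \ref{pro:first order optimum is global optimum}, $J_{\bS}(\pi_{\btheta})$ is constant on $\Theta_{\bS}$), and write
\begin{align*}
\small
\left|J(\pi_{\btheta^{*}}) - J(\pi_{\btheta_{\bS,T}^{\rm{GA}}})\right|
&\le \underbrace{|J(\pi_{\btheta^{*}})-J_{\bS}(\pi_{\btheta_{\bS}^{*}})|}_{(\mathrm{I})} \\
&\quad + \underbrace{|J_{\bS}(\pi_{\btheta_{\bS}^{*}})-J_{\bS}(\pi_{\btheta_{\bS,T}^{\rm{GA}}})|}_{(\mathrm{II})} \\
&\quad + \underbrace{|J_{\bS}(\pi_{\btheta_{\bS,T}^{\rm{GA}}})-J(\pi_{\btheta_{\bS,T}^{\rm{GA}}})|}_{(\mathrm{III})}.
\end{align*}

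Next I would bound the three terms individually. For the concentration term $(\mathrm{I})$, Lemma \ref{lem:concentration of empirical optimal} gives, with probability at least $1-\rho/2$, that $(\mathrm{I})\le 3RC\sqrt{2\log(4/\rho)/n}=\tilde{\cO}(n^{-\frac{1}{2}})$. For the optimization term $(\mathrm{II})$, Lemma \ref{lem:optimization of GA} applies directly to $\btheta_{\bS,T}^{\rm{GA}}$ and yields $(\mathrm{II})=\cO(T^{-\frac{1}{4}}+n^{-\frac{1}{2}})$ given the assumptions, since the convergence rate of Lemma \ref{lem:convergence of GA} controls $\|\nabla_{\btheta}J_{\bS}(\pi_{\btheta_{\bS,T}^{\rm{GA}}})\|$ and the key Lemma \ref{lem:key lemma} converts this vanishing gradient norm into a small empirical objective gap. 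For the generalization term $(\mathrm{III})$, Theorem \ref{thm:generalization of GA} gives, with probability at least $1-\rho/2$, that $(\mathrm{III})=\tilde{\cO}(T^{-\frac{1}{4}}+n^{-\frac{1}{2}})$, where the stability coefficient $\cE_{\rm{stab}}(\btheta_{\bS,T}^{\rm{GA}})$ itself shrinks with $T$ precisely because the GA output has small empirical gradient norm.

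Then I would combine the pieces by a union bound over the two high-probability events (replacing $\rho$ by $\rho/2$ in each, which only inflates the suppressed logarithmic factors), so that with probability at least $1-\rho$ all three bounds hold simultaneously. Summing them, the three $n^{-\frac{1}{2}}$ contributions from $(\mathrm{I})$, $(\mathrm{II})$, $(\mathrm{III})$ collapse into a single $\tilde{\cO}(n^{-\frac{1}{2}})$, and the two $T^{-\frac{1}{4}}$ contributions from $(\mathrm{II})$ and $(\mathrm{III})$ collapse into $\tilde{\cO}(T^{-\frac{1}{4}})$, which gives the claimed $\tilde{\cO}(T^{-\frac{1}{4}}+n^{-\frac{1}{2}})$.

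The main obstacle lies not in the assembly but in certifying that the prefactors are genuinely uniform constants. The bounds for $(\mathrm{II})$ and $(\mathrm{III})$ carry factors built from the ``conditional number'' $\frac{\sigma_{\max}(\bV_{\bS}(\btheta_{\bS}^{*}))}{\sigma_{\min}^{+}(\bV_{\bS}(\btheta_{\bS}^{*}))\,\sigma_{\min}^{+}(\bV_{\bS}(\btheta_{\bS,T}^{\rm{GA}}))}$ and from the residual magnitude $\epsilon_{n}$, and one must verify these remain bounded uniformly over the draw of $\bS$ (and its neighbour $\bS^{\prime}$) before absorbing them into the $\tilde{\cO}$ constants. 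The feature coverage Assumption \ref{ass:span space} (with $\epsilon_{n}\le\cO(n^{-1})$), together with the invariance of $\mathrm{C}(\bV_{\bS}(\cdot))$ over $\btheta$ from Lemma \ref{lem:general same span}, is exactly what keeps the smallest positive eigenvalue bounded away from zero so that the conditional number does not blow up and the residual term stays lower order, thereby guaranteeing that the collapsed rate is the advertised $\tilde{\cO}(T^{-\frac{1}{4}}+n^{-\frac{1}{2}})$.
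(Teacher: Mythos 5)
Your proposal matches the paper's proof: it uses the same three-term suboptimality decomposition from Definition \ref{def:decomposition} and assembles exactly the same ingredients---Lemma \ref{lem:concentration of empirical optimal} for concentration, Lemma \ref{lem:optimization of GA} for optimization, and Theorem \ref{thm:generalization of GA} for generalization---via a union bound. Your added caution about the ``conditional number'' prefactors is consistent with how the paper handles them, namely by taking suprema over $\bS,\bS^{\prime}$ inside the $\Gamma^{\rm{GA}}$ constants and absorbing them into the $\tilde{\cO}$ notation.
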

    By taking $T = n^{2}$, i.e., conducting the GA for $n^{2}$ steps, the suboptimality gap of $\btheta_{\bS,T}^{\rm{GA}}$ has the same order of empirical stationary points proved in Section \ref{sec:rl without sufficient prompts}. 

\subsection{The Suboptimality Gap under SGA}\label{sec:suboptimality of SGA}

While GA provides theoretical clarity, computing full-batch gradients is computationally prohibitive in practice. A more realistic algorithm is SGA \citep{harvey2019tight}, which has less computational complexity. The update rule of SGA is:
	\begin{equation} \label{eq:SGD update rule}
		\small
		\btheta_{t+1} = \btheta_{t} + \eta_{t}\nabla_{\btheta}f_{\btheta_{t}}(x_{i_{t}}),
	\end{equation}
where the index $i_t$ is uniformly sampled from $\{1, \dots, n\}$ for each iteration $t$, and $\eta_{t}$ is the learning rate. 



\begin{definition}[SGA Algorithm Output]\label{def:SGA output}
    Let $T$ be the number of iterations, $\Theta_{T}=\{\btheta_{t}\}_{t=1}^{T}$ be the set of iterations obtained by \eqref{eq:SGD update rule}. Similar to GA, we define $\btheta_{\bS,T}^{\rm{SGA}}$ as:
    \begin{equation*}
        \small
        \btheta_{\bS,T}^{\rm{SGA}}=\arg\min_{\btheta \in \Theta_{T}} \left\{\left\|\nabla_{\btheta}J_{\bS}(\pi_{\btheta})\right\|\right\}.
    \end{equation*}
\end{definition}  
\begin{remark}
     We note that identifying $\btheta_{\bS,T}^{\rm{SGA}}$ as Definition \ref{def:SGA output} requires evaluating the full gradient at each step, which incurs additional computational cost. However, this is standard in the theoretical analysis of non-convex stochastic optimization to ensure optimal convergence rates \citep{ghadimi2013stochastic}. While practical implementations often use the last iterate, theoretical guarantees for the last iterate typically require stronger assumptions (e.g., Polyak-Lojasiewicz condition), which are beyond the scope of this work.
\end{remark}
Similar to the results in Lemma \ref{lem:convergence of GA}, we also have the following lemma to characterize the convergence rate of SGA. 
\begin{restatable}{lemma}{convergencerateofSGA}\label{lem:gradient norm of SGA}
    By taking $\eta_{t}=\frac{1}{2L_{f}\sqrt{t}}$\rm{(}$L_{f}$ is defined in Lemma \ref{lem:smoothness coefficient}\rm{)}, for any $\rho^{\prime} \in (0,1)$, with probability at least $1-\rho^{\prime}$, the parameter $\small \btheta_{\bS,T}^{\rm{SGA}}$ satisfies
     \begin{equation}
        \small
        \begin{aligned}
            &\left\|\nabla_{\btheta}J_{\bS}(\pi_{\btheta_{\bS,T}^{\rm{SGA}}})\right\|^{2}
            \\ \le &\frac{6L_{f}}{\sqrt{T}}\left(\left(6RC+\frac{2R^{2}C^{4}}{L_{f}}\right)+\frac{2R^{2}C^{4}}{L_{f}}\log(T)+ \frac{1}{\lambda}\log\left(\frac{1}{\rho^{\prime}}\right)\right).
        \end{aligned}
    \end{equation}
    where $\lambda>0$ satisfies $\frac{e^{\lambda}-\lambda-1}{\lambda}\le\frac{L_{f}}{8R^{2}C^{4}}$.
\end{restatable}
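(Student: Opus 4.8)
The plan is to follow the standard template for high-probability convergence of stochastic first-order methods on smooth non-convex objectives (in the spirit of \citep{ghadimi2013stochastic,harvey2019tight}), adapted to the ascent setting. First I would invoke the $L_f$-smoothness of $J_{\bS}$ from Lemma \ref{lem:smoothness coefficient} to obtain the one-step ascent inequality
\begin{equation*}
J_{\bS}(\pi_{\btheta_{t+1}}) \ge J_{\bS}(\pi_{\btheta_t}) + \eta_t\langle\nabla_{\btheta}J_{\bS}(\pi_{\btheta_t}),\bg_t\rangle - \frac{L_f\eta_t^2}{2}\|\bg_t\|^2,
\end{equation*}
where $\bg_t=\nabla_{\btheta}f_{\btheta_t}(x_{i_t})$ is the stochastic gradient. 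Because $i_t$ is sampled uniformly, $\bg_t$ is conditionally unbiased for $\nabla_{\btheta}J_{\bS}(\pi_{\btheta_t})$, so I would split $\langle\nabla_{\btheta}J_{\bS}(\pi_{\btheta_t}),\bg_t\rangle=\|\nabla_{\btheta}J_{\bS}(\pi_{\btheta_t})\|^2+\langle\nabla_{\btheta}J_{\bS}(\pi_{\btheta_t}),\bg_t-\nabla_{\btheta}J_{\bS}(\pi_{\btheta_t})\rangle$, isolating a conditionally mean-zero noise term.

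Next I would rearrange to move $\eta_t\|\nabla_{\btheta}J_{\bS}(\pi_{\btheta_t})\|^2$ to one side, sum over $t=1,\dots,T$, and telescope the objective differences. The telescoped term is bounded by the range of the objective, $J_{\bS}(\pi_{\btheta_{T+1}})-J_{\bS}(\pi_{\btheta_1})\le 6RC$, using $|f_{\btheta}(\cdot)|\le 3RC$. The curvature term $\sum_t\frac{L_f\eta_t^2}{2}\|\bg_t\|^2$ is controlled by the uniform gradient bound $\|\bg_t\|\le 2RC^2$ --- which holds since Assumption \ref{ass:bounded} bounds the features and Lemma \ref{lem:bounded parameter} keeps every iterate in $\Theta_R$ --- together with $\eta_t=\frac{1}{2L_f\sqrt t}$ and $\sum_{t\le T}t^{-1}\le 1+\log T$, yielding the constant and $\frac{2R^2C^4}{L_f}\log T$ contributions. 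What remains on the left is the weighted sum $\sum_t\eta_t\|\nabla_{\btheta}J_{\bS}(\pi_{\btheta_t})\|^2$, which by the definition of $\btheta_{\bS,T}^{\rm{SGA}}$ in Definition \ref{def:SGA output} and $\sum_{t\le T}t^{-1/2}\ge\sqrt T$ lower-bounds $\|\nabla_{\btheta}J_{\bS}(\pi_{\btheta_{\bS,T}^{\rm{SGA}}})\|^2\cdot\frac{\sqrt T}{2L_f}$.

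The crux is the high-probability control of the accumulated noise $N_T:=\sum_{t=1}^{T}\eta_t\langle\nabla_{\btheta}J_{\bS}(\pi_{\btheta_t}),\bg_t-\nabla_{\btheta}J_{\bS}(\pi_{\btheta_t})\rangle$, a martingale whose increments $D_t$ are bounded (again by the bounded-gradient argument) and conditionally mean-zero. I would apply a Chernoff argument to the exponential supermartingale $\exp(\lambda\sum_{s\le t}D_s)$: bounding the conditional log-moment-generating function of each $D_t$ by the standard estimate for bounded centered variables produces a cumulant term of the form $\frac{e^{\lambda}-\lambda-1}{\lambda}\,\mathbb{E}[D_t^2\mid\mathcal F_{t-1}]$, exactly the expression appearing in the lemma's hypothesis on $\lambda$. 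The decisive point is that $\mathbb{E}[D_t^2\mid\mathcal F_{t-1}]\lesssim\eta_t^2R^2C^4\|\nabla_{\btheta}J_{\bS}(\pi_{\btheta_t})\|^2$, so the accumulated cumulant is proportional to the same weighted gradient-norm sum that sits on the left-hand side. The condition $\frac{e^{\lambda}-\lambda-1}{\lambda}\le\frac{L_f}{8R^2C^4}$ is calibrated so that, using $\eta_t\le\frac{1}{2L_f}$, this cumulant is absorbed into at most half of that left-hand sum; the Chernoff/Ville step then contributes the $\frac1\lambda\log(1/\rho')$ term with probability at least $1-\rho'$. Dividing through by the surviving fraction of $\sum_t\eta_t\ge\frac{\sqrt T}{2L_f}$ delivers the claimed $\frac{6L_f}{\sqrt T}(\cdots)$ bound.

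I expect the main obstacle to be precisely this variance-absorption step. Unlike a Freedman-type bound with a fixed predictable variance proxy, here the conditional variance is itself a multiple of $\|\nabla_{\btheta}J_{\bS}(\pi_{\btheta_t})\|^2$ --- the very quantity being bounded --- so the concentration estimate and the descent recursion must be resolved simultaneously. Making this rigorous requires the careful choice of $\lambda$ encoded in the hypothesis and a verification that the surviving positive fraction of $\sum_t\eta_t\|\nabla_{\btheta}J_{\bS}(\pi_{\btheta_t})\|^2$ still dominates $\|\nabla_{\btheta}J_{\bS}(\pi_{\btheta_{\bS,T}^{\rm{SGA}}})\|^2\sum_t\eta_t$. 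The remaining ingredients --- smoothness, telescoping, the harmonic and $\sum t^{-1/2}$ estimates, and constant bookkeeping --- are routine.
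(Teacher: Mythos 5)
Your proposal is correct and follows essentially the same route as the paper's proof: the $L_f$-smoothness one-step inequality, telescoping against the $6RC$ objective range, the $\sum_{t}t^{-1}\le 1+\log T$ bound on the curvature term, a Freedman-type exponential-martingale bound whose conditional-variance proxy $\eta_t^2\|\nabla_{\btheta}J_{\bS}(\pi_{\btheta_t})\|^2\cdot\cO(R^2C^4)$ is absorbed into the left-hand weighted gradient sum via exactly the stated condition $\frac{e^{\lambda}-\lambda-1}{\lambda}\le\frac{L_f}{8R^2C^4}$, and finally the argmin definition of $\btheta_{\bS,T}^{\rm{SGA}}$ together with $\sum_t\eta_t\gtrsim\sqrt{T}/L_f$. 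The only deviations are bookkeeping ones — you keep $\|\bg_t\|^2$ intact with the uniform bound $\|\bg_t\|\le 2RC^2$ where the paper expands $\|\nu_t+\nabla_{\btheta}J_{\bS}\|^2$ and uses $m_t=\eta_t-L_f\eta_t^2\le\eta_t/2$ as the cross-term coefficient (so your per-step absorption is $\eta_t t^{-1/2}$ rather than literally "at most half" of $\eta_t$ for $t\le 3$, which still suffices in aggregate) — and these affect only constants, not the argument.
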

SGA updates rely on gradient estimates with stochastic noise. Thus, unlike the deterministic convergence of GA in Section \ref{sec:suboptimality of GA}, Lemma \ref{lem:gradient norm of SGA} establishes a high-probability bound for $\btheta_{\bS,T}^{\rm{SGA}}$. Similar to Lemma \ref{lem:optimization of GA}, Lemma \ref{lem:gradient norm of SGA} implies the optimization error. Detailed proofs are in Appendix \ref{app:sec_SGA}.

\begin{restatable}{lemma}{optimizationofSGA}\label{lem:optimization of SGA}
 Under Assumptions \ref{ass:bounded}, \ref{ass:ground truth r} and \ref{ass:span space}, for any $\rho^{\prime} \in (0,1)$, with probability at least $1-\frac{\rho^{\prime}}{2}$, $\btheta_{\bS,T}^{\rm{SGA}}$ satisfies:  
    \begin{equation}
        \small
        \begin{aligned}
            &\left|J_{\bS}(\pi_{\btheta_{\bS,T}^{\rm{SGA}}})-J_{\bS}(\pi_{\btheta_{\bS}^{*}})\right| = \tilde{\cO}\left(T^{-\frac{1}{8}}+n^{-\frac{1}{2}}\right).
        \end{aligned}
    \end{equation}
\end{restatable}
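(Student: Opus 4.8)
The plan is to reduce the SGA optimization error to a bound on the empirical gradient norm, exactly mirroring the GA argument behind Lemma~\ref{lem:optimization of GA}, and then to substitute the stochastic convergence rate of Lemma~\ref{lem:gradient norm of SGA}. The bridge is the optimization-error Corollary in Section~\ref{sec:suboptimality} (a consequence of Lemma~\ref{lem:key lemma}): whenever $\|\nabla_{\btheta}J_{\bS}(\pi_{\btheta_{\bS}})\|\le \delta$ for some $\btheta_{\bS}\in\Theta_{R}$, the gap $|J_{\bS}(\pi_{\btheta_{\bS}^{*}})-J_{\bS}(\pi_{\btheta_{\bS}})|$ is at most $A+2RC\sqrt{A}$, where $A=4R\epsilon_{n}+2C\delta\,\kappa_{\bS}$ and $\kappa_{\bS}$ denotes the conditional-number factor $\sigma_{\max}(\bV_{\bS}(\btheta_{\bS}^{*}))/(\sigma_{\min}^{+}(\bV_{\bS}(\btheta_{\bS}^{*}))\,\sigma_{\min}^{+}(\bV_{\bS}(\btheta_{\bS})))$. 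Thus the entire statement reduces to controlling $\delta$ at the iterate $\btheta_{\bS,T}^{\rm{SGA}}$ and then checking that the resulting $A$ is small enough.

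First I would instantiate Lemma~\ref{lem:gradient norm of SGA} with its failure probability set to $\rho^{\prime}/2$ rather than $\rho^{\prime}$, so that the controlling event holds with probability at least $1-\rho^{\prime}/2$, as required; this also reserves the remaining half of the probability mass for the union bound used in the eventual full SGA suboptimality theorem. On this event the lemma gives $\|\nabla_{\btheta}J_{\bS}(\pi_{\btheta_{\bS,T}^{\rm{SGA}}})\|^{2}=\tilde{\cO}(T^{-1/2})$, since its right-hand side scales as $T^{-1/2}$ up to the $\log T$ and $\log(1/\rho^{\prime})$ factors that $\tilde{\cO}$ absorbs; taking square roots yields $\delta=\tilde{\cO}(T^{-1/4})$.

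Next I would substitute this $\delta$, with $\btheta_{\bS}=\btheta_{\bS,T}^{\rm{SGA}}$, into the Corollary. Using $\epsilon_{n}\le\cO(n^{-1})$ from Assumption~\ref{ass:span space} and treating $\kappa_{\bS}$ as a bounded constant, the bracket becomes $A=\tilde{\cO}(T^{-1/4}+n^{-1})$. Since $A\to 0$, the $2RC\sqrt{A}$ term dominates the linear $A$ term, and $\sqrt{A}=\tilde{\cO}(\sqrt{T^{-1/4}+n^{-1}})=\tilde{\cO}(T^{-1/8}+n^{-1/2})$, which is precisely the claimed rate. The degradation from GA's $T^{-1/4}$ (Lemma~\ref{lem:optimization of GA}) to SGA's $T^{-1/8}$ is traced directly to feeding the slower stochastic gradient-norm rate $\delta=\tilde{\cO}(T^{-1/4})$ — versus $\cO(T^{-1/2})$ for GA in Lemma~\ref{lem:convergence of GA} — through the square root.

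The main obstacle is justifying that $\kappa_{\bS}$ is a genuine constant, i.e. that $\sigma_{\min}^{+}(\bV_{\bS}(\btheta_{\bS,T}^{\rm{SGA}}))$ is bounded away from zero uniformly in $\bS$ and along the trajectory. This is where the structural results enter: Lemma~\ref{lem:general same span} guarantees that $\mathrm{C}(\bV_{\bS}(\btheta))$ is invariant in $\btheta$, so the rank (hence the index of the smallest positive eigenvalue) cannot jump along the SGA iterates, and Lemma~\ref{lem:bounded parameter} confines every iterate to the compact set $\Theta_{R}$, on which the positive part of the spectrum of $\bV_{\bS}(\cdot)$ attains a strictly positive minimum. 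A secondary point is the probability bookkeeping: the event of Lemma~\ref{lem:gradient norm of SGA} is the only source of randomness, so no further union bound beyond the single $\rho^{\prime}/2$ allocation is required, and one must verify that the $\log T$ factor is absorbed into $\tilde{\cO}$ without altering the $T^{-1/8}$ exponent.
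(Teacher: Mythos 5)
Your proposal matches the paper's proof essentially step for step: you invoke Lemma \ref{lem:gradient norm of SGA} at confidence level $1-\rho^{\prime}/2$ to get $\|\nabla_{\btheta}J_{\bS}(\pi_{\btheta_{\bS,T}^{\rm{SGA}}})\| = \tilde{\cO}(T^{-1/4})$, feed this together with $\|\nabla_{\btheta}J_{\bS}(\pi_{\btheta_{\bS}^{*}})\| = 0$ into Lemma \ref{lem:key lemma} (the Section~\ref{sec:suboptimality} corollary is just its specialization), use $\epsilon_{n} \le \cO(n^{-1})$, and observe that the $2RC\sqrt{A}$ term dominates via $\sqrt{a+b}\le\sqrt{a}+\sqrt{b}$, exactly as the paper does. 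Your extra remarks on why the conditional-number factor $\Gamma^{\rm{SGA}}_{\bS,\bS}$ can be treated as a constant (column-space invariance from Lemma \ref{lem:general same span} plus compactness of $\Theta_{R}$ via Lemma \ref{lem:bounded parameter}) are consistent with, and slightly more explicit than, the paper's treatment, which simply carries this factor as a constant in the bound.
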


Next, we turn to the analysis of generalization error by applying algorithmic stability.
\begin{restatable}{theorem}{generalizationofSGA}\label{thm:generalization of SGA}
 Under Assumptions \ref{ass:bounded}, \ref{ass:ground truth r} and \ref{ass:span space}, for any $\rho^{\prime} \in (0,1)$ and parameter $\btheta_{\bS,T}^{\rm{SGA}}$ , we have
 \begin{equation*}
     \small
     \cE_{\rm{stab}}(\btheta_{\bS,T}^{\rm{SGA}}) \lesssim \sqrt{\epsilon_n} + \left(\sqrt{\Gamma^{\rm{SGA}}_{\bS,\bS}}+\sqrt{\Gamma^{\rm{SGA}}_{\bS,\bS^{\prime}}}\right) \left( T^{-\frac{1}{8}} + n^{-\frac{1}{2}} \right),
 \end{equation*}
 where \[\small\Gamma^{\rm{SGA}}_{\bS,\bS}=\sup_{\bS}\frac{\sigma_{\max}(\bV_{\bS}(\btheta_{\bS}^{*}))}{\sigma_{\min}^{+}(\bV_{\bS}(\btheta_{\bS}^{*}))\sigma_{\min}^{+}(\bV_{\bS}(\btheta_{\bS,T}^{\rm{SGA}}))}\] 
 and \[\small\Gamma^{\rm{SGA}}_{\bS,\bS^{\prime}}=\sup_{\bS,\bS^{\prime}}\frac{\sigma_{\max}(\bV_{\bS}(\btheta_{\bS}^{*}))}{\sigma_{\min}^{+}(\bV_{\bS}(\btheta_{\bS}^{*}))\sigma_{\min}^{+}(\bV_{\bS}(\btheta_{\bS^{\prime},T}^{\rm{SGA}}))}.\]
 Then, for any $\rho \in (0,1)$, the following bound holds with probability at least $1-\rho-\rho^{\prime}$ over the randomness of $\bS$ and $i_{t}$ in SGA:
    \begin{equation}
        \small
        \begin{aligned}
            &\left|J(\pi_{\btheta_{\bS,T}^{\rm{SGA}}})-J_{\bS}(\pi_{\btheta_{\bS,T}^{\rm{SGA}}})\right| 
            \\ \le & \cE_{\rm{stab}}(\btheta_{\bS,T}^{\rm{SGA}})\log (n)\log\left(\frac{1}{\rho}\right)+6RC\sqrt{\frac{\log(1/\rho)}{n}} 
            \\ = & \tilde{\cO}\left(T^{-\frac{1}{8}}+n^{-\frac{1}{2}}\right).
        \end{aligned}
    \end{equation}
\end{restatable}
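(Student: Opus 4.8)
The plan is to bound the uniform stability coefficient
$\cE_{\rm{stab}}(\btheta_{\bS,T}^{\rm{SGA}})=\sup_{x,\bS,\bS^{\prime}}|f_{\btheta_{\bS,T}^{\rm{SGA}}}(x)-f_{\btheta_{\bS^{\prime},T}^{\rm{SGA}}}(x)|$ by anchoring the two stochastic trajectories to their respective empirical stationary points, and then to feed the resulting coefficient into Proposition \ref{pro:generalization bound}. Concretely, for neighboring datasets $\bS,\bS^{\prime}$ and any prompt $x$ I would use the triangle inequality
\begin{align*}
|f_{\btheta_{\bS,T}^{\rm{SGA}}}(x)-f_{\btheta_{\bS^{\prime},T}^{\rm{SGA}}}(x)|
&\le |f_{\btheta_{\bS,T}^{\rm{SGA}}}(x)-f_{\btheta_{\bS}^{*}}(x)| \\
&\quad + |f_{\btheta_{\bS}^{*}}(x)-f_{\btheta_{\bS^{\prime}}^{*}}(x)|
 + |f_{\btheta_{\bS^{\prime}}^{*}}(x)-f_{\btheta_{\bS^{\prime},T}^{\rm{SGA}}}(x)|,
\end{align*}
so that the first and third terms compare an SGA iterate with the empirical stationary point on the \emph{same} dataset, while the middle term is exactly the empirical-stationary-point stability already controlled inside the proof of Theorem \ref{thm:generalization of exact optimization}.

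For the first and third terms I would invoke the pointwise estimate of Lemma \ref{lem:key lemma} (which also underlies the optimization-error corollary of Section \ref{sec:suboptimality}), dominating $|f_{\btheta_{1}}(x)-f_{\btheta_{2}}(x)|$ by a quantity of the form $4R\epsilon_{n}+\tfrac{2C\delta\,\sigma_{\max}}{\sigma_{\min}^{+}\sigma_{\min}^{+}}+2RC\sqrt{4R\epsilon_{n}+\tfrac{2C\delta\,\sigma_{\max}}{\sigma_{\min}^{+}\sigma_{\min}^{+}}}$, where $\delta=\|\nabla_{\btheta}J_{\bS}(\pi_{\btheta_{\bS,T}^{\rm{SGA}}})\|$ is the empirical gradient norm at the SGA output. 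Substituting the high-probability rate $\delta=\tilde{\cO}(T^{-1/4})$ from Lemma \ref{lem:gradient norm of SGA} and $\epsilon_{n}\le\cO(n^{-1})$ from Assumption \ref{ass:span space}, the dominant contribution is the square-root term: $\sqrt{\Gamma^{\rm{SGA}}\cdot T^{-1/4}}=\sqrt{\Gamma^{\rm{SGA}}}\,T^{-1/8}$, together with $\sqrt{\epsilon_{n}}=\cO(n^{-1/2})$. This is precisely the mechanism that converts the $T^{-1/4}$ gradient-norm rate into the $T^{-1/8}$ in the statement, and the conditional-number factors $\Gamma^{\rm{SGA}}_{\bS,\bS}$ and $\Gamma^{\rm{SGA}}_{\bS,\bS^{\prime}}$ arise by taking suprema over datasets of the ratio $\sigma_{\max}/(\sigma_{\min}^{+}\sigma_{\min}^{+})$. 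The middle term reuses the stationary-point stability, contributing $\sqrt{\epsilon_{n}}+\cO(n^{-1/2})$; summing the three pieces gives the claimed bound on $\cE_{\rm{stab}}(\btheta_{\bS,T}^{\rm{SGA}})$.

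Finally I would turn the stability bound into a generalization bound by applying Proposition \ref{pro:generalization bound} with $|f_{\btheta}(\cdot)|\le 3RC$ (which follows from Assumptions \ref{ass:bounded} and \ref{ass:ground truth r}), producing the $\cE_{\rm{stab}}\log(n)\log(1/\rho)+6RC\sqrt{\log(1/\rho)/n}$ form and hence the $\tilde{\cO}(T^{-1/8}+n^{-1/2})$ rate. A union bound over the event of Lemma \ref{lem:gradient norm of SGA} (probability $1-\rho^{\prime}$, on which the gradient norm is small so that the stability coefficient is as bounded) and the concentration event of Proposition \ref{pro:generalization bound} (probability $1-\rho$) yields the stated $1-\rho-\rho^{\prime}$ confidence.

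The hard part will be controlling the denominators $\sigma_{\min}^{+}(\bV_{\bS}(\btheta_{\bS^{\prime},T}^{\rm{SGA}}))$ uniformly. Unlike the deterministic GA case, the SGA output on the perturbed dataset $\bS^{\prime}$ is a random, dataset-coupled trajectory, so one must argue that the smallest positive eigenvalue of its feature covariance stays bounded away from zero on the \emph{same} high-probability event on which the gradient norm is small. This coupling between the stochastic optimization path and the spectral quantity $\sigma_{\min}^{+}$, rather than the (now routine) substitution of rates, is where the analysis demands the most care, which is exactly why $\Gamma^{\rm{SGA}}$ is defined as a supremum over $\bS,\bS^{\prime}$.
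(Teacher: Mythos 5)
Your proposal is correct and rests on the same core machinery as the paper's proof --- Lemma \ref{lem:gradient norm of SGA} for the $\tilde{\cO}(T^{-1/4})$ high-probability gradient-norm rate, Lemma \ref{lem:key lemma} to convert small gradient norms into pointwise objective gaps (with the square root producing the $T^{-1/8}$), and the randomized-stability route to generalization --- but your decomposition differs structurally. The paper applies Lemma \ref{lem:key lemma} \emph{directly} to the pair $(\btheta_{\bS,T}^{\rm{SGA}}, \btheta_{\bS^{\prime},T}^{\rm{SGA}})$: it uses Lemma \ref{lem:gradient norm between two different sets} to transfer the $J_{\bS^{\prime}}$-gradient bound on $\btheta_{\bS^{\prime},T}^{\rm{SGA}}$ into a $J_{\bS}$-gradient bound $\Delta(T,\rho^{\prime})+\frac{4RC^{2}}{n}$, so both SGA outputs have small gradients on the \emph{same} objective $J_{\bS}$, and one invocation of the key lemma yields exactly the stated constants $\Gamma^{\rm{SGA}}_{\bS,\bS}$ and $\Gamma^{\rm{SGA}}_{\bS,\bS^{\prime}}$. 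You instead anchor through the empirical stationary points via a three-term triangle inequality, transferring the \emph{stationary point's} gradient across datasets (the middle term, reusing Theorem \ref{thm:generalization of exact optimization}) rather than the SGA iterate's. This is valid and gives the same $\tilde{\cO}(T^{-1/8}+n^{-1/2})$ order, at the cost of slightly looser constants: your bound picks up an additional $\Gamma_{1}$-type factor involving $\sigma_{\min}^{+}(\bV_{\bS}(\btheta_{\bS^{\prime}}^{*}))$ on top of the $\Gamma^{\rm{SGA}}$ terms, whereas the paper's denominators involve only the SGA outputs. What your route buys is a clean separation of the $T$-dependent optimization proximity from the $n$-dependent stationary-point stability. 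One small bookkeeping note: for the final step the paper does not use Proposition \ref{pro:generalization bound} bare but the high-probability stability notion for randomized algorithms (Definition \ref{def:uniform stability for randomized algorithms} and Corollary \ref{cor:generalization via stability for rondomized algorithm}), which formalizes precisely the union bound you sketch, with the sup over $\bS,\bS^{\prime}$ taken inside the probability over the shared random seed; your gloss of this point is the only place where your write-up is less careful than the paper, and your closing concern about uniformly controlling $\sigma_{\min}^{+}(\bV_{\bS}(\btheta_{\bS^{\prime},T}^{\rm{SGA}}))$ is handled by the paper exactly as you anticipate --- it is simply absorbed into the supremum defining $\Gamma^{\rm{SGA}}_{\bS,\bS^{\prime}}$ and treated as a constant.
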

The proof of Theorem \ref{thm:generalization of SGA} is provided in Appendix \ref{app:sec_SGA}. The proof strategy aligns with the analysis of GA in Section \ref{sec:suboptimality of GA}, with the difference in the bound's order stemming from the slower convergence rate of the stochastic algorithm.


\begin{figure*}[t!] 
    \centering
    \begin{subfigure}[b]{0.24\textwidth}
        \centering
        \includegraphics[width=\linewidth]{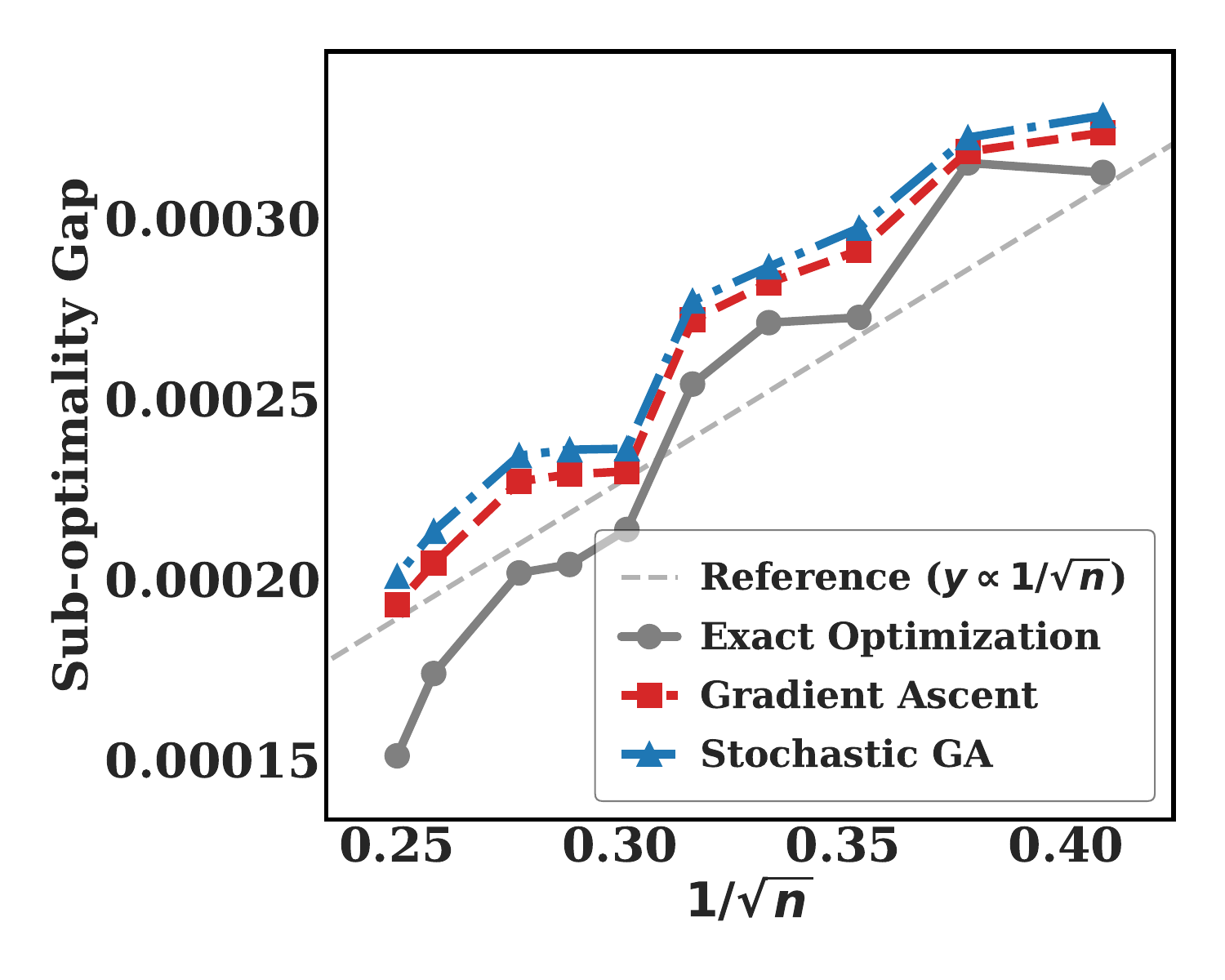}
        \vspace{-6mm}
        \caption{$d=24$}
        \label{fig:img1}
    \end{subfigure}
    \hfill 
    \begin{subfigure}[b]{0.24\textwidth}
        \centering
        \includegraphics[width=\linewidth]{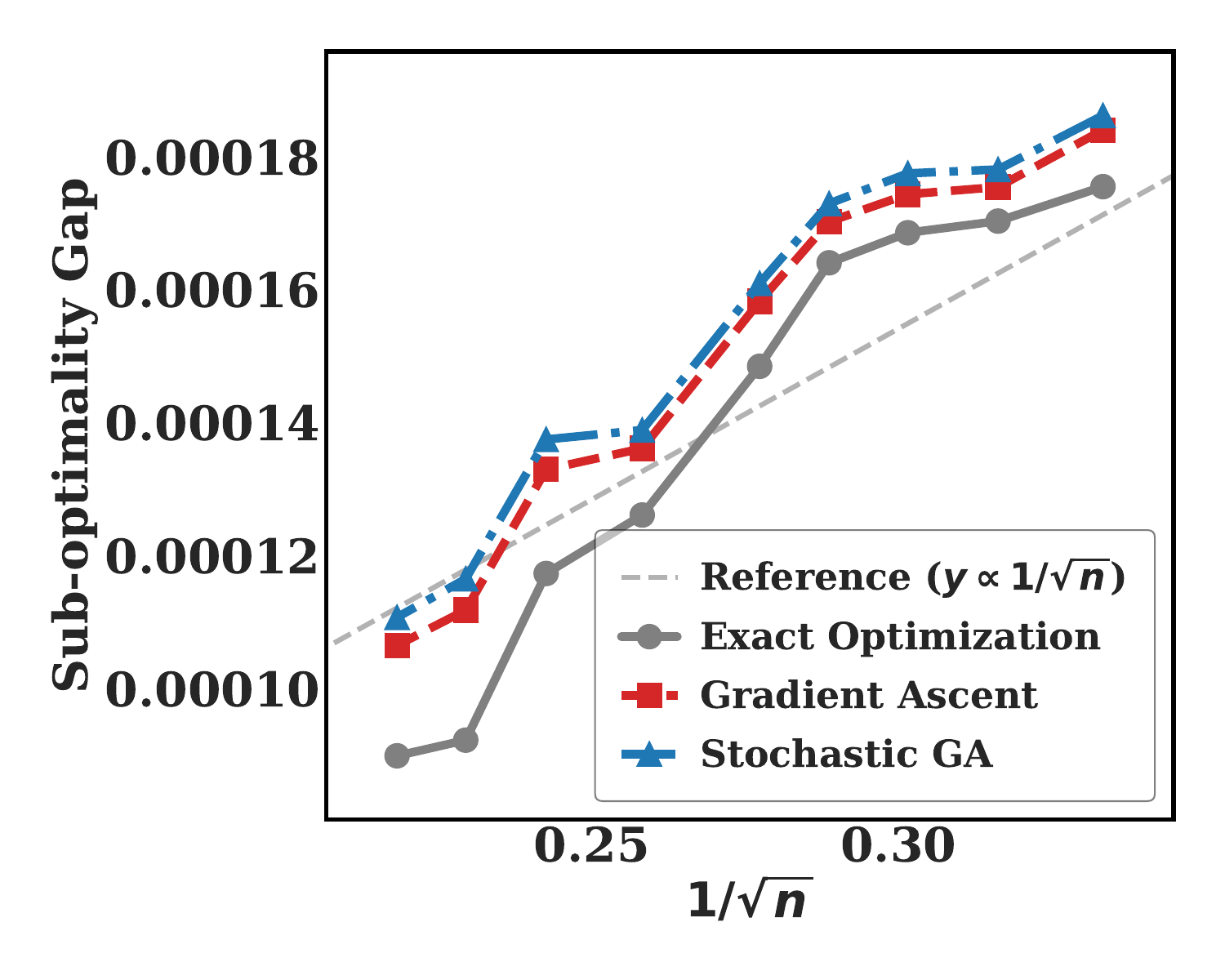}
        \vspace{-6mm}
        \caption{$d=32$}
        \label{fig:img2}
    \end{subfigure}
    \hfill 
    \begin{subfigure}[b]{0.24\textwidth}
        \centering
        \includegraphics[width=\linewidth]{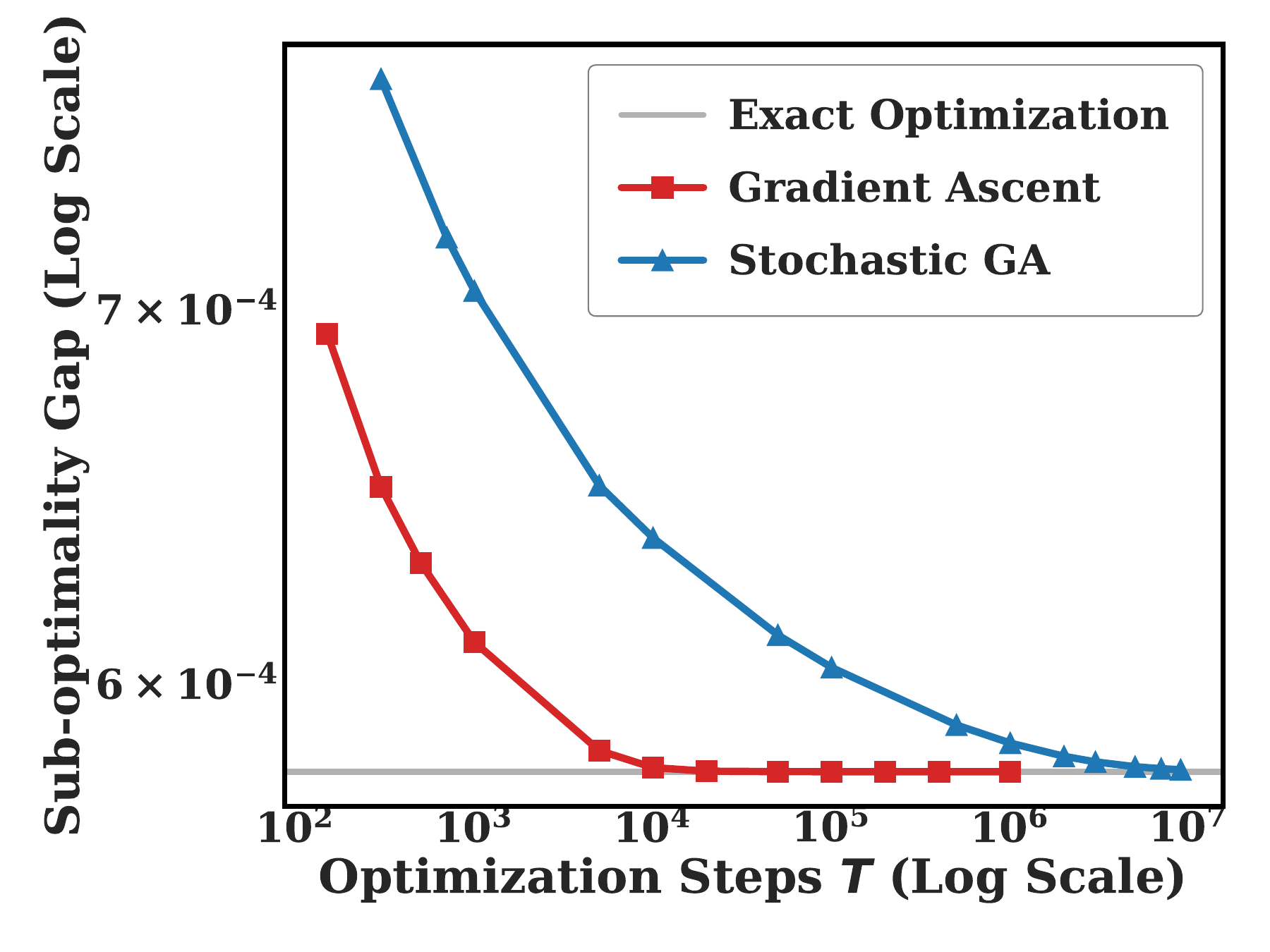}
        \vspace{-6mm}
        \caption{$n=7$}
        \label{fig:img3}
    \end{subfigure}
    \hfill 
    \begin{subfigure}[b]{0.24\textwidth}
        \centering
        \includegraphics[width=\linewidth]{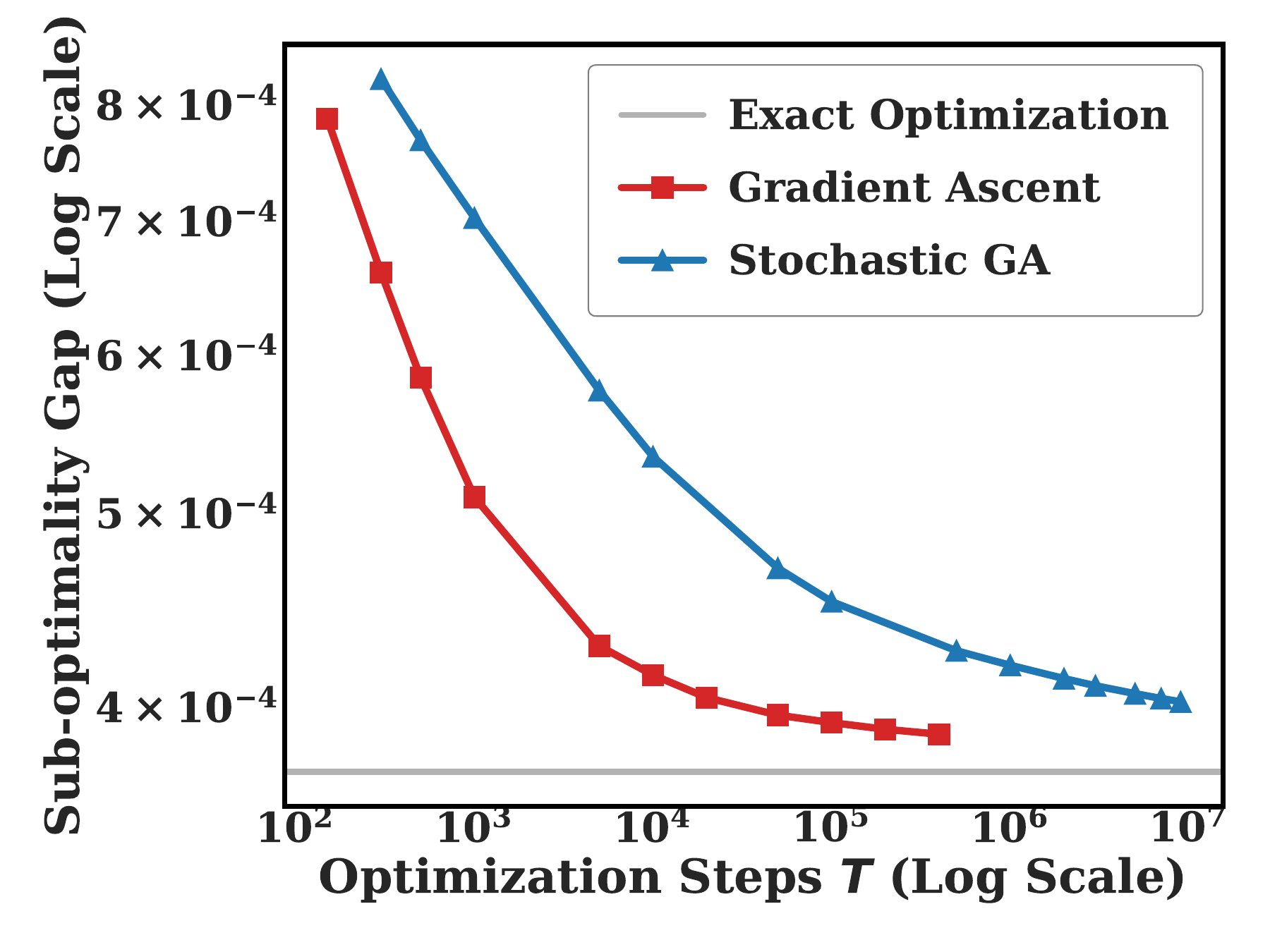}
        \vspace{-6mm}
        \caption{$n=10$}
        \label{fig:img4}
    \end{subfigure}
    \vspace{-1mm}
    \caption{The suboptimality gap under the feature coverage assumption. The parameters are empirical stationary points or obtained by gradient-based methods. \textbf{Left panels (\ref{fig:img1}-\ref{fig:img2}):} The suboptimality gap plotted against $n^{-\frac{1}{2}}$.  \textbf{Right panels (\ref{fig:img3}-\ref{fig:img4}):} The suboptimality gap w.r.t. optimization steps $T$ in log-log scale.}
    \label{fig:suboptimality gap}
    \vspace{-5mm}
\end{figure*}

Finally, we prove the suboptimality gap of $\btheta_{\bS,T}^{\rm{SGA}}$.
\begin{restatable}{theorem}{errorboundofSGA}\label{thm:error bound of SGA}
Under Assumptions \ref{ass:bounded}, \ref{ass:ground truth r} and \ref{ass:span space}, for any $\rho,\rho^{\prime} \in (0,1)$, the following bound holds with probability at least $1-\rho-\rho^{\prime}$ over the randomness of $\bS$:
    \begin{equation}
        \small
        \begin{aligned}
            &\left|J(\pi_{\btheta_{\bS,T}^{\rm{SGA}}})-J(\pi_{\btheta^{*}})\right|
            = \tilde{\cO}\left(T^{-\frac{1}{8}}+n^{-\frac{1}{2}}\right). 
        \end{aligned}
    \end{equation}
\end{restatable}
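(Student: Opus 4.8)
The plan is to derive the bound directly from the suboptimality decomposition of Definition~\ref{def:decomposition}, instantiated at $\btheta_{\bS} = \btheta_{\bS,T}^{\rm{SGA}}$. That decomposition upper bounds $\mathrm{Subopt}(\btheta_{\bS,T}^{\rm{SGA}}) = J(\pi_{\btheta^{*}}) - J(\pi_{\btheta_{\bS,T}^{\rm{SGA}}})$ by the sum of a concentration error $|J(\pi_{\btheta^{*}})-J_{\bS}(\pi_{\btheta_{\bS}^{*}})|$, an optimization error $|J_{\bS}(\pi_{\btheta_{\bS}^{*}})-J_{\bS}(\pi_{\btheta_{\bS,T}^{\rm{SGA}}})|$, and a generalization error $|J_{\bS}(\pi_{\btheta_{\bS,T}^{\rm{SGA}}})-J(\pi_{\btheta_{\bS,T}^{\rm{SGA}}})|$. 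Each of these three quantities has already been separately controlled for the SGA iterate, so the proof is essentially an assembly step: plug in the established rates and track the failure probabilities.

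Next I would substitute the three existing estimates. The concentration term is controlled by Lemma~\ref{lem:concentration of empirical optimal}, which gives $\tilde{\cO}(n^{-1/2})$ and holds with probability at least $1-\rho$ over the draw of $\bS$ (recall that $J_{\bS}(\pi_{\btheta_{\bS}^{*}})=J_{\bS}(\pi_{\btheta^{*}})$ and $\mE_{\bS}[J_{\bS}(\pi_{\btheta_{\bS}^{*}})]=J(\pi_{\btheta^{*}})$, which is what makes the standard concentration applicable). The optimization term is exactly Lemma~\ref{lem:optimization of SGA}, yielding $\tilde{\cO}(T^{-1/8}+n^{-1/2})$ with probability at least $1-\rho^{\prime}/2$ over the stochastic sampling indices $i_t$. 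The generalization term is Theorem~\ref{thm:generalization of SGA}, giving $\tilde{\cO}(T^{-1/8}+n^{-1/2})$ over the joint randomness of $\bS$ and the $i_t$. Adding the three orders, the $n^{-1/2}$ contributions merge and the dominating rate is $\tilde{\cO}(T^{-1/8}+n^{-1/2})$, matching the claim.

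The remaining, and principal, technical point is the probability bookkeeping, since the three bounds live on different but overlapping sources of randomness: the dataset $\bS$ enters the concentration and generalization terms, while the stochastic indices enter the optimization and generalization terms. To obtain a clean statement with total failure budget $\rho+\rho^{\prime}$, I would apply a union bound that allocates this budget across the three events, for instance replacing $\rho$ by $\rho/2$ in the concentration and generalization applications and $\rho^{\prime}$ by $\rho^{\prime}/2$ in the optimization and generalization applications, adjusting the suppressed logarithmic constants accordingly (mirroring the $\log(4/\rho)$ splitting already used in the derivation of Theorem~\ref{thm:error bound under exact optimization}). On the intersection of the three good events, which then has probability at least $1-\rho-\rho^{\prime}$, all three bounds hold simultaneously and the triangle inequality of the decomposition delivers the result.

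Before concluding I would also check that the constant factors $\Gamma^{\rm{SGA}}_{\bS,\bS}$ and $\Gamma^{\rm{SGA}}_{\bS,\bS^{\prime}}$ appearing in the optimization and generalization bounds remain finite uniformly in $\bS$; this is guaranteed by the feature coverage Assumption~\ref{ass:span space} together with the boundedness of the spectral quantities of $\bV_{\bS}$ under Assumption~\ref{ass:bounded}, so that these ``conditional-number'' factors are absorbed into the $\tilde{\cO}$. I expect the hard part to be not any new estimate — all the analytical work is done upstream in Lemmas~\ref{lem:gradient norm of SGA} and~\ref{lem:optimization of SGA} and Theorem~\ref{thm:generalization of SGA} — but rather ensuring the union bound and the suppressed logarithmic factors compose cleanly so that no term degrades the advertised $\tilde{\cO}(T^{-1/8}+n^{-1/2})$ rate.
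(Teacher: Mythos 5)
Your proposal is correct and follows essentially the same route as the paper, which likewise instantiates the decomposition of Definition \ref{def:decomposition} at $\btheta_{\bS,T}^{\rm{SGA}}$ and combines Theorem \ref{thm:generalization of SGA}, Lemma \ref{lem:optimization of SGA}, and Lemma \ref{lem:concentration of empirical optimal}. The only (immaterial) difference is the probability bookkeeping: where you split the budget via a union bound, the paper notes in a remark that the good event underlying Theorem \ref{thm:generalization of SGA} already implies that of Lemma \ref{lem:optimization of SGA}, so no further splitting of $\rho^{\prime}$ is needed --- either way the failure probability is at most $\rho+\rho^{\prime}$ and the $\tilde{\cO}(T^{-\frac{1}{8}}+n^{-\frac{1}{2}})$ rate is unaffected.
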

The proof proceeds according to the sub-optimality decomposition defined in Section \ref{sec:suboptimality}. By taking $T=n^{4}$, the suboptimality gap of $\btheta_{\bS,T}^{\rm{SGA}}$ has the same order of empirical stationary points in Section \ref{sec:rl without sufficient prompts}.
\begin{remark}
    Notably, to obtain the algorithmic stability of SGA, we use the high-probabilistic algorithmic stability \citep{yuan2023l_2} mentioned in Appendix \ref{app:sec_SGA}, since SGA is proved to be stable with high probability. 
\end{remark}


\section{Experiment}\label{sec:experiments}
In this section, we present simulation experiments to empirically verify our theoretical findings. 

\subsection{The Results Positive Definite Covariance Matrix}\label{subsec:parameter identifiable}
We first verify the results under Assumption \ref{ass:positive definite}, i.e., the expected covariance matrix of feature is positive-definite. Our Theorem \ref{thm:suffcient prompts to recover optimal para} indicates that under this assumption, the empirical stationary point is the ground-truth parameters.
\par
Lemma \ref{lem:gradient} admits that $\nabla_{\btheta}J_{\bS}(\pi_{\btheta})=\bV_{\bS}(\btheta)(\btheta^{*}-\btheta)$. Thus, when $\bV_{\bS}(\btheta)$ is positive-definite matrix for all $\btheta$, the empirical stationary point becomes the ground truth parameters. Thus, verifying the positive-definite property of $\bV_{\bS}(\btheta_{\bS}^{*})$ is enough. The setup of our example refers to Appendix \ref{app:positive definite covariance}.


\begin{figure}[ht]
    \centering
    \begin{subfigure}[b]{0.48\linewidth}
        \centering
        \includegraphics[width=\linewidth]{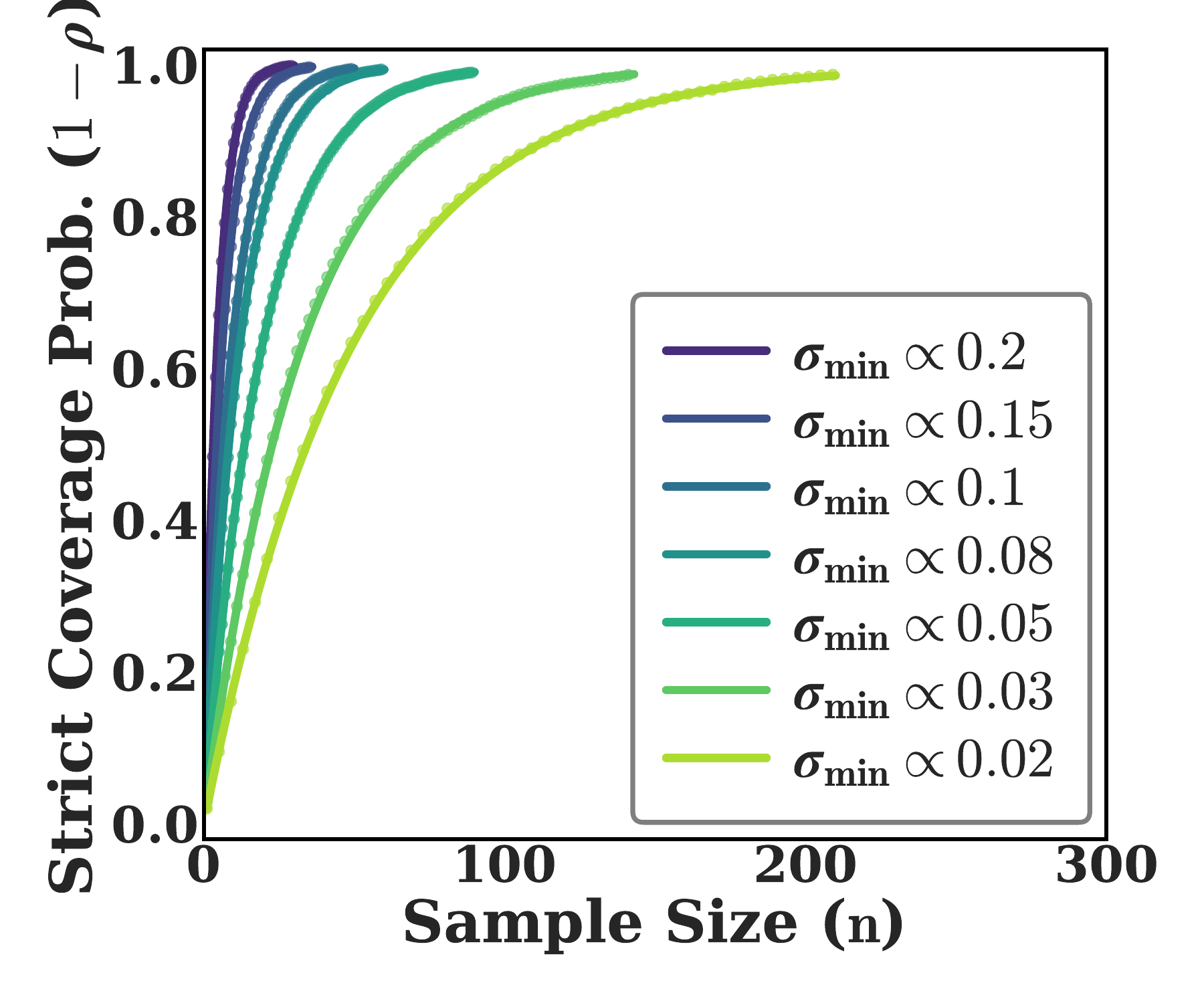}
        \label{fig:data_complexity}
    \end{subfigure}
    \hfill 
    \begin{subfigure}[b]{0.48\linewidth}
        \centering
        \includegraphics[width=\linewidth]{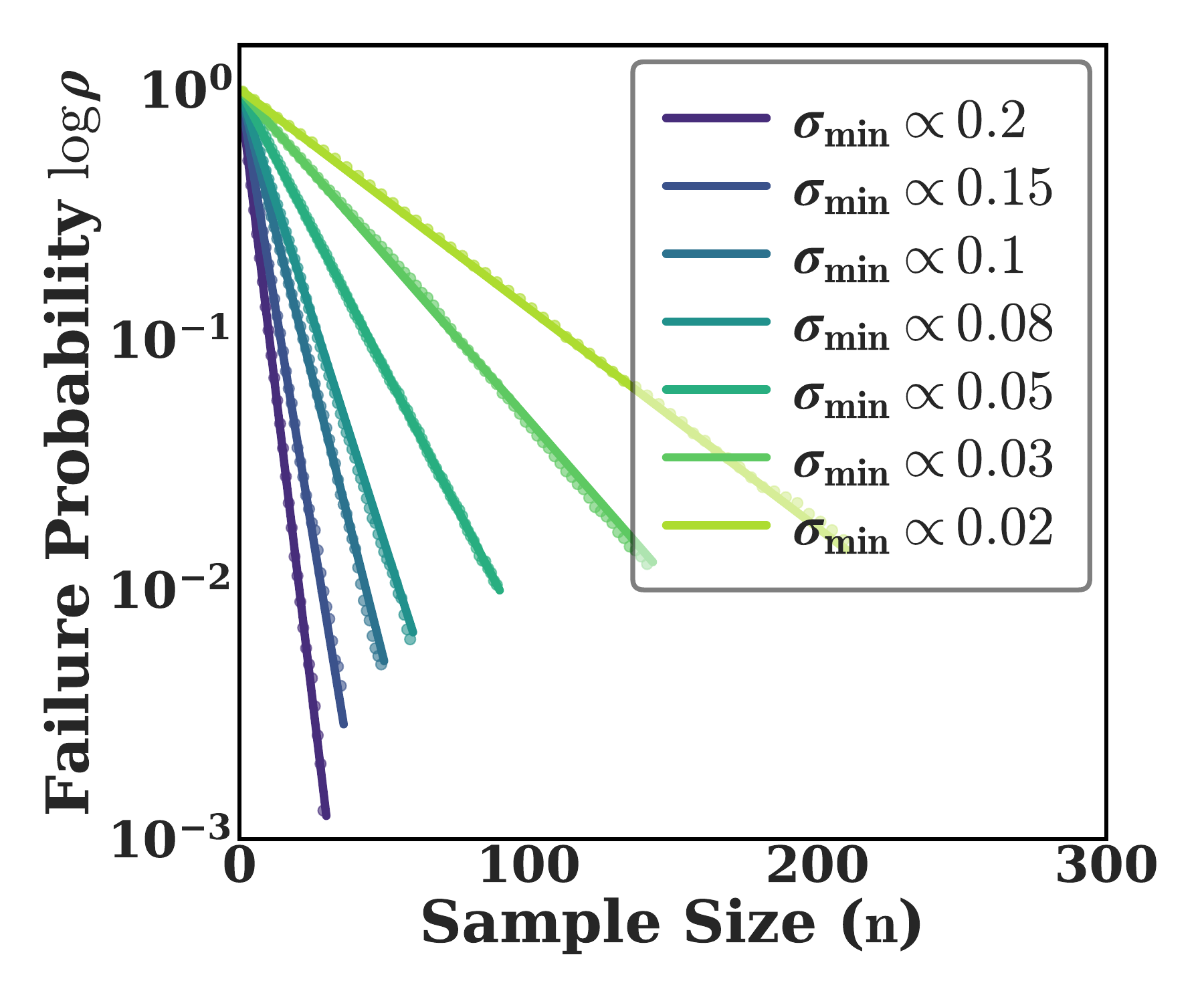}
        \label{fig:coverage_log}
    \end{subfigure}
    
    \vspace{-5mm} 
    \caption{The required samples to make empirical stationary points becomes ground truth parameters.} 
    \label{fig:full_coverage_results}
    \vspace{-3mm} 
\end{figure}

\paragraph{Results:}
In Figure \ref{fig:full_coverage_results}, we report the percentage of positive-definite matrices under varied $\sigma_{\min} = \min_{\btheta}\sigma_{\min}(\mathbb{E}_{x}[\bV_{x}(\btheta)])$. According to our conclusion in Theorem \ref{thm:suffcient prompts to recover optimal para}, the percentage of non-positive definite matrices $\rho \propto e^{-n} / \sigma_{\min}$. The relationship is verified in Figure \ref{fig:full_coverage_results}.   

\subsection{The Results under Feature Coverage Assumption}\label{subsec:insufficient coverage}
Next, we focus our analysis on the results under our key feature coverage Assumption \ref{ass:span space}. Theorems \ref{thm:error bound under exact optimization}, \ref{thm:error bound of GA} and \ref{thm:error bound of SGA} indicate that under this assumption, the suboptimality gap is guaranteed to be small for both empirical stationary points and model obtained by gradient-based methods. The detailed experimental setup is in Appendix \ref{app:feature coverage}

\paragraph{Suboptimality Gap Scaling with sample size $n$.} We verify the proved dimension-independent generalization bound of order $\mathcal{O}(n^{-\frac{1}{2}})$ in Theorem \ref{thm:generalization of exact optimization}, \ref{thm:generalization of GA}, \ref{thm:generalization of SGA}. For problems in dimensions $d = 24, d = 32$, the suboptimality gap in Figure \ref{fig:suboptimality gap} verify the order of $\mathcal{O}(n^{-\frac{1}{2}})$. 

\paragraph{Suboptimality Gap vs. steps $T$.} On the other hand, our conclusions in Theorems \ref{thm:generalization of GA} and \ref{thm:generalization of SGA} show that the generalization error is decreasing with the optimization steps. The results in Figure \ref{fig:suboptimality gap} verify our conclusions as well. 

\section{Concluding Discussion}
In this paper, we have established an end-to-end theoretical framework for RLHF based on algorithmic stability. Departing from existing paradigms grounded in the consistency of reward parameter estimation, we prove that under a key \textbf{feature coverage} condition, the empirical optima of the policy model have a suboptimality gap of order $\mathcal{O}(n^{-\frac{1}{2}})$. Furthermore, we extend our analysis to gradient-based algorithms, specifically Gradient Ascent and Stochastic Gradient Ascent , deriving explicit suboptimality bounds for the policies obtained by these algorithms. These results theoretically confirm that RLHF processes can learn generalizable policies solely from limited feedback, providing new theoretical evidence for the empirically observed success of RLHF.

\section*{Impact Statement}
This work establishes a theoretical framework for the generalization of RLHF. Positively, our derivation of dimension-independent bounds validates that alignment can be achieved with high sample efficiency under sufficient feature coverage. This theoretical insight supports the development of more data-efficient training pipelines, potentially reducing the computational footprint of large-scale model development. However, practitioners must be cognizant of the limitations inherent in our analysis. The reliance on the linear reward assumption implies that our bounds may not fully capture the risks in scenarios involving complex, non-linear human preferences. Applying these theoretical guaranties to safety-critical domains without accounting for model misspecification could lead to an underestimation of alignment errors. Furthermore, our results on algorithmic stability suggest that the learned policy will faithfully capture the distributional properties of the covered features. Consequently, if the training data contains societal biases, the stability of the algorithm implies that these biases will be propagated to the deployed model, necessitating rigorous data auditing alongside algorithmic improvements.

\bibliography{example_paper}
\bibliographystyle{icml2026}

\newpage
\appendix
\onecolumn



\vspace*{0.2cm} 
\begin{center}
    {\LARGE \textbf{Appendix}}
\end{center}

\etocdepthtag.toc{mtappendix}

\etocsettagdepth{mtchapter}{none}     
\etocsettagdepth{mtappendix}{subsection} 

\etocsettocstyle
    {} 

\tableofcontents

\vspace{2em}
\hrule
\vspace{2em}





\section{One Critical Lemma}
Firstly, we illustrate the critical lemma in this paper. 
\begin{lemma}\label{lem:key lemma}
Under Assumptions \ref{ass:span space}, for any $\btheta_{1},\btheta_{2}\in \Theta_{R}$ and positive constants $\delta_{1},\delta_{2}$, if we have 
    \begin{align*}
    \small
        \|\nabla_{\btheta}J_{\bS}(\pi_{\btheta_{i}})\| \le \delta_{i},\ i\in \{1,2\},
    \end{align*}
then for any $x \in \cA$, it holds that
    \begin{equation}
    \small
        \begin{aligned}
            |f_{\btheta_{1}}(x) - f_{\btheta_{2}}(x)| \le & 4R\epsilon_{n}+\frac{2C\delta_{1}\cdot \sigma_{\max}(\bV_{\bS}(\btheta_{\bS}^{*}))}{\sigma_{\min}^{+}(\bV_{\bS}(\btheta_{\bS}^{*}))\sigma_{\min}^{+}(\bV_{\bS}(\btheta_{1}))}+\frac{2C\delta_{2}\cdot \sigma_{\max}(\bV_{\bS}(\btheta_{\bS}^{*}))}{\sigma_{\min}^{+}(\bV_{\bS}(\btheta_{\bS}^{*}))\sigma_{\min}^{+}(\bV_{\bS}(\btheta_{2}))}
            \\ &+2CR\sqrt{4R\epsilon_{n}+\frac{2C\delta_{1}\cdot \sigma_{\max}(\bV_{\bS}(\btheta_{\bS}^{*}))}{\sigma_{\min}^{+}(\bV_{\bS}(\btheta_{\bS}^{*}))\sigma_{\min}^{+}(\bV_{\bS}(\btheta_{1}))}+\frac{2C\delta_{2}\cdot \sigma_{\max}(\bV_{\bS}(\btheta_{\bS}^{*}))}{\sigma_{\min}^{+}(\bV_{\bS}(\btheta_{\bS}^{*}))\sigma_{\min}^{+}(\bV_{\bS}(\btheta_{2}))}}.
        \end{aligned}
    \end{equation}
\end{lemma}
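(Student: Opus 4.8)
The plan is to reduce the pointwise objective gap to a divergence between Boltzmann policies, and then control that divergence through the feature-coverage decomposition. First I would record the exact-gradient identity $\nabla_{\btheta}J_{\bS}(\pi_{\btheta}) = \bV_{\bS}(\btheta)(\btheta^{*}-\btheta)$ (Lemma~\ref{lem:gradient}), together with the fact (Lemma~\ref{lem:general same span}) that $\mathrm{C}(\bV_{\bS}(\btheta))$ and $\mathrm{N}(\bV_{\bS}(\btheta))$ are the same subspaces for every $\btheta$; denote them $\mathrm{C},\mathrm{N}$ and let $P$ be the orthogonal projector onto $\mathrm{C}$. Since $\bV_{\bS}(\btheta_{i})$ annihilates $\mathrm{N}$ and acts on $\mathrm{C}$ with all eigenvalues at least $\sigma_{\min}^{+}(\bV_{\bS}(\btheta_{i}))$, the hypothesis $\|\bV_{\bS}(\btheta_{i})(\btheta^{*}-\btheta_{i})\|\le\delta_{i}$ gives at once the in-subspace bound $\|P(\btheta^{*}-\btheta_{i})\|\le \delta_{i}/\sigma_{\min}^{+}(\bV_{\bS}(\btheta_{i}))$.

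The second step is to convert this into a uniform bound on the pointwise reward gap $|\langle \btheta^{*}-\btheta_{i},\bphi(x,a)\rangle|$. Writing $\bphi(x,a)=\bV_{\bS}(\btheta_{\bS}^{*})\bb_{\bS}(x,a)+\br_{\bS}(x,a)$ from Assumption~\ref{ass:span space}, and taking $\bb_{\bS}$ in the row space so that $\|\bb_{\bS}\|\le C/\sigma_{\min}^{+}(\bV_{\bS}(\btheta_{\bS}^{*}))$ by the pseudoinverse and Assumption~\ref{ass:bounded}, the residual only survives through the null component, $\langle\btheta^{*}-\btheta_{i},\br_{\bS}\rangle=\langle (I-P)(\btheta^{*}-\btheta_{i}),\br_{\bS}\rangle\le 2R\epsilon_{n}$, while the covered part equals $\langle \bV_{\bS}(\btheta_{\bS}^{*})P(\btheta^{*}-\btheta_{i}),\bb_{\bS}\rangle$ and is at most $\sigma_{\max}(\bV_{\bS}(\btheta_{\bS}^{*}))\,\|P(\btheta^{*}-\btheta_{i})\|\,\|\bb_{\bS}\|$. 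Combining with Step~1 yields $\sup_{x,a}|\langle\btheta^{*}-\btheta_{i},\bphi(x,a)\rangle|\le M_{i}$, where $M_{i}:=2R\epsilon_{n}+\frac{C\delta_{i}\sigma_{\max}(\bV_{\bS}(\btheta_{\bS}^{*}))}{\sigma_{\min}^{+}(\bV_{\bS}(\btheta_{\bS}^{*}))\sigma_{\min}^{+}(\bV_{\bS}(\btheta_{i}))}$, so that $2M_{i}=4R\epsilon_{n}+\frac{2C\delta_{i}\sigma_{\max}(\bV_{\bS}(\btheta_{\bS}^{*}))}{\sigma_{\min}^{+}(\bV_{\bS}(\btheta_{\bS}^{*}))\sigma_{\min}^{+}(\bV_{\bS}(\btheta_{i}))}$.

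Third, I would use the optimality of $\pi_{\btheta^{*}}$ (Proposition~\ref{pro:closed form}) in the form of the identity $f_{\btheta^{*}}(x)-f_{\btheta}(x)=D_{\mathrm{KL}}(\pi_{\btheta}(\cdot\mid x)\|\pi_{\btheta^{*}}(\cdot\mid x))$, whence $f_{\btheta_{1}}(x)-f_{\btheta_{2}}(x)=D_{\mathrm{KL}}(\pi_{\btheta_{2}}\|\pi_{\btheta^{*}})-D_{\mathrm{KL}}(\pi_{\btheta_{1}}\|\pi_{\btheta^{*}})$, so $|f_{\btheta_{1}}(x)-f_{\btheta_{2}}(x)|$ is at most the larger of the two (nonnegative) KL terms. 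Writing $D_{\mathrm{KL}}(\pi_{\btheta_{i}}\|\pi_{\btheta^{*}})=\mE_{\pi_{\btheta_{i}}}[g_{i}]-\log\mE_{\pi_{\btheta^{*}}}[e^{g_{i}}]$ with $g_{i}(a)=\langle\btheta_{i}-\btheta^{*},\bphi(x,a)\rangle$, the bound $|g_{i}|\le M_{i}$ and Jensen's inequality give $D_{\mathrm{KL}}(\pi_{\btheta_{i}}\|\pi_{\btheta^{*}})\le 2M_{i}$. Hence $|f_{\btheta_{1}}(x)-f_{\btheta_{2}}(x)|\le\max_{i}2M_{i}\le 4R\epsilon_{n}+\frac{2C\delta_{1}\sigma_{\max}(\bV_{\bS}(\btheta_{\bS}^{*}))}{\sigma_{\min}^{+}(\bV_{\bS}(\btheta_{\bS}^{*}))\sigma_{\min}^{+}(\bV_{\bS}(\btheta_{1}))}+\frac{2C\delta_{2}\sigma_{\max}(\bV_{\bS}(\btheta_{\bS}^{*}))}{\sigma_{\min}^{+}(\bV_{\bS}(\btheta_{\bS}^{*}))\sigma_{\min}^{+}(\bV_{\bS}(\btheta_{2}))}$, which is exactly the linear part of the claimed bound; since the remaining square-root term is nonnegative, the stated inequality follows a fortiori.

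The genuinely delicate step is the second one, the \emph{change of covariance}: the gradient hypothesis is phrased through $\bV_{\bS}(\btheta_{i})$ whereas the feature decomposition is phrased through $\bV_{\bS}(\btheta_{\bS}^{*})$, and reconciling them (which is where the ratio $\sigma_{\max}(\bV_{\bS}(\btheta_{\bS}^{*}))/\sigma_{\min}^{+}(\bV_{\bS}(\btheta_{\bS}^{*}))$ enters) relies crucially on the common-subspace fact of Lemma~\ref{lem:general same span} and on the minimal-norm choice of $\bb_{\bS}$. I would also remark that the explicit $2CR\sqrt{\cdot}$ term in the statement is what a looser, modular route produces: bounding the reward-expectation difference $\langle\btheta^{*},\mE_{\pi_{\btheta_{1}}}[\bphi]-\mE_{\pi_{\btheta_{2}}}[\bphi]\rangle$ by $2CR\|\pi_{\btheta_{1}}-\pi_{\btheta_{2}}\|_{\TV}$ and applying Pinsker's inequality with $\|\btheta^{*}\|\le R$ and $\|\bphi\|\le C$, so that the argument of the square root is precisely the linear quantity bounded above.
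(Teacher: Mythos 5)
Your proof is correct, and it establishes the lemma by a genuinely different top-level route while reusing the paper's coverage machinery in the middle. Your Steps 1--2 coincide with the paper's own argument: the in-subspace bound $\|P(\btheta^{*}-\btheta_{i})\|\le\delta_{i}/\sigma_{\min}^{+}(\bV_{\bS}(\btheta_{i}))$ is exactly Lemma~\ref{lem:projection bound}, the minimal-norm bound $\|\bb_{\bS}\|\le C/\sigma_{\min}^{+}(\bV_{\bS}(\btheta_{\bS}^{*}))$ is Lemma~\ref{lem:upper bound on b}, and your uniform reward-gap bound $M_{i}$ reproduces the paper's estimate \eqref{eq:key lemma part 2}; note that, exactly as in the paper, this chain invokes Lemma~\ref{lem:general same span} and hence Assumption~\ref{ass:linear indep}, not only Assumption~\ref{ass:span space}. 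The divergence is in Step 3. The paper goes through Lemma~\ref{lem:general stability gap step I}, splitting $|f_{\btheta_{1}}(x)-f_{\btheta_{2}}(x)|$ into a reward-expectation difference and a KL difference between $\pi_{\btheta_{1}}$ and $\pi_{\btheta_{2}}$, and controlling the former via total variation and Pinsker --- which is precisely what manufactures the $2CR\sqrt{\cdot}$ term. You instead use the exact pointwise identity $f_{\btheta^{*}}(x)-f_{\btheta}(x)=D_{\mathrm{KL}}(\pi_{\btheta}(\cdot\mid x)\,\|\,\pi_{\btheta^{*}}(\cdot\mid x))$ (the same computation underlying Proposition~\ref{pro:expected optimum is empirical optimum}, valid pointwise under Assumption~\ref{ass:ground truth r}), so the gap becomes a difference of two nonnegative KLs to the common anchor $\pi_{\btheta^{*}}$; your log-partition representation $D_{\mathrm{KL}}(\pi_{\btheta_{i}}\|\pi_{\btheta^{*}})=\mE_{\pi_{\btheta_{i}}}[g_{i}]-\log\mE_{\pi_{\btheta^{*}}}[e^{g_{i}}]$ and Jensen correctly give $D_{\mathrm{KL}}\le 2M_{i}$, and the $\max_{i}2M_{i}$ step absorbs the duplicated $4R\epsilon_{n}$. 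This is both simpler and strictly stronger: you prove the bound with the linear part alone, making the square-root term in the statement vacuous. That is not cosmetic --- the square root in Lemma~\ref{lem:key lemma} is exactly what degrades the downstream rates to $\sqrt{\epsilon_{n}}$, $T^{-\frac{1}{4}}$ (GA, Theorem~\ref{thm:generalization of GA}) and $T^{-\frac{1}{8}}$ (SGA, Theorem~\ref{thm:generalization of SGA}); propagating your version would sharpen these to $\epsilon_{n}$, $T^{-\frac{1}{2}}$ and $T^{-\frac{1}{4}}$, while the overall $n^{-\frac{1}{2}}$ suboptimality would remain pinned by the concentration term of Lemma~\ref{lem:concentration of empirical optimal}.
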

Before proving this lemma, we need the following lemmas. 

\begin{lemma}\label{lem:energy gap}
    Let $Z_{\btheta}(x) = \log\mE_{\pi_{\rm{ref}}(a\mid x)}[\exp(\langle\btheta, \bphi(a, x)\rangle)]$, we have 
    \begin{equation}
        \small
        Z_{\btheta_{1}}(x) - Z_{\btheta_{2}}(x) = \left\langle\int_{0}^{1}\mE_{\pi_{\btheta_{t}}(a\mid x)}[\bphi(a, x)], \btheta_{1} - \btheta_{2}\right\rangle,
    \end{equation}
    for any $\btheta_{1}$ and $\btheta_{2}$, with $\btheta_{t} = t\btheta_{1} + (1 - t)\btheta_{2}$. 
\end{lemma}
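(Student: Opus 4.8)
The plan is to recognize $Z_{\btheta}(x)$ as the log-partition function of the posterior Boltzmann family in Definition \ref{def:policy function} and to exploit the classical exponential-family identity that its gradient is the mean feature under $\pi_{\btheta}$, after which the claim follows from the fundamental theorem of calculus along the segment.

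First I would compute $\nabla_{\btheta}Z_{\btheta}(x)$ directly. Differentiating $Z_{\btheta}(x) = \log\mE_{\pi_{\rm{ref}}(a\mid x)}[\exp(\langle\btheta,\bphi(a,x)\rangle)]$ under the expectation yields
\begin{equation*}
\small
\nabla_{\btheta}Z_{\btheta}(x) = \frac{\mE_{\pi_{\rm{ref}}(a\mid x)}\left[\bphi(a,x)\exp\left(\langle\btheta,\bphi(a,x)\rangle\right)\right]}{\mE_{\pi_{\rm{ref}}(a\mid x)}\left[\exp\left(\langle\btheta,\bphi(a,x)\rangle\right)\right]}.
\end{equation*}
Comparing the right-hand side with the definition of $\pi_{\btheta}(a\mid x)$ in Definition \ref{def:policy function}, the ratio is exactly the reweighting that converts $\pi_{\rm{ref}}$ into $\pi_{\btheta}$, so $\nabla_{\btheta}Z_{\btheta}(x) = \mE_{\pi_{\btheta}(a\mid x)}[\bphi(a,x)]$. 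The interchange of differentiation and expectation is justified because $\|\bphi(a,x)\|\le C$ by Assumption \ref{ass:bounded} (and because the action space is finite, the expectation is a finite sum of smooth functions), so the integrand and its $\btheta$-derivative are uniformly bounded on $\Theta_{R}$.

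Next I would set $g(t) = Z_{\btheta_{t}}(x)$ with $\btheta_{t} = t\btheta_{1} + (1-t)\btheta_{2}$, so that $g(0) = Z_{\btheta_{2}}(x)$ and $g(1) = Z_{\btheta_{1}}(x)$, and $\tfrac{d\btheta_{t}}{dt} = \btheta_{1}-\btheta_{2}$. By the chain rule together with the gradient formula just derived,
\begin{equation*}
\small
g'(t) = \left\langle\nabla_{\btheta}Z_{\btheta_{t}}(x),\ \btheta_{1}-\btheta_{2}\right\rangle = \left\langle\mE_{\pi_{\btheta_{t}}(a\mid x)}[\bphi(a,x)],\ \btheta_{1}-\btheta_{2}\right\rangle.
\end{equation*}
The fundamental theorem of calculus then gives $Z_{\btheta_{1}}(x) - Z_{\btheta_{2}}(x) = g(1)-g(0) = \int_{0}^{1}g'(t)\,dt$, and since $\btheta_{1}-\btheta_{2}$ does not depend on $t$, I pull it outside the integral by linearity of the inner product to obtain the stated identity.

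There is no serious obstacle here: the result is a smoothness-plus-exponential-family computation. The only point requiring a word of justification is the differentiation-under-the-expectation step, which is immediate from the feature boundedness in Assumption \ref{ass:bounded} (and finiteness of $\cA$); everything else is the chain rule and the fundamental theorem of calculus.
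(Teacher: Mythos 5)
Your proposal is correct and is essentially the paper's own argument: the paper likewise parametrizes the partition function along the segment and integrates $h'(t)/h(t)$ via the fundamental theorem of calculus, recognizing the reweighted $\pi_{\rm{ref}}$-expectation as $\mE_{\pi_{\btheta_{t}}}[\bphi(a,x)]$, which is exactly your chain-rule-plus-log-partition-gradient computation written in unlogged form. If anything, your version is slightly tidier, since your parametrization $\btheta_{t}=t\btheta_{1}+(1-t)\btheta_{2}$ matches the lemma statement directly and avoids the sign bookkeeping the paper's choice of path entails.
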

\begin{proof}
    Define the function $h(t) = \mE_{\pi_{\rm{ref}}(a\mid x)}\left[\exp(\langle\btheta_{1} + t(\btheta_{2} - \btheta_{1}), \bphi(a, x)\rangle)\right]$, which satisfies $\log{h(0)} = Z_{\btheta_{1}}(x)$ and $\log{h(1)} = Z_{\btheta_{2}}(x)$. Then we see 
    \begin{equation}
        \small
        \begin{aligned}
            \log{h(1)} - \log{h(0)} & = \int_{0}^{1}\frac{h^{\prime}(t)}{h(t)}dt \\
            & = \int_{0}^{1}\frac{\mE_{\pi_{\rm{ref}}(a\mid x)}\left[\exp(\langle\btheta_{1} + t(\btheta_{2} - \btheta_{1}), \bphi(a, x)\rangle)\right\langle\btheta_{1} - \btheta_{2}, \bphi(a, x)\rangle]}{\mE_{\pi_{\rm{ref}}(a\mid x)}\left[\exp(\langle\btheta_{1} + t(\btheta_{2} - \btheta_{1}), \bphi(a, x)\rangle)\right]} \\
            & = \left\langle\int_{0}^{1}\mE_{\pi_{\btheta_{t}}(a\mid x)}[\bphi(a, x)], \btheta_{1} - \btheta_{2}\right\rangle,
        \end{aligned}
    \end{equation}
    which proves our conclusion.  
\end{proof}

\begin{lemma}\label{lem:general stability gap step I}
    Under Assumption \ref{ass:bounded}, for a given prompt $x$ and any two parameters $\btheta_{1},\btheta_{2} \in \Theta_{R}$. We have 
    \begin{equation}
        \small
        \begin{aligned}
             \left|f_{\btheta_{1}}(x) - f_{\btheta_{2}}(x)\right|
            \leq & 2CR\sqrt{\left\langle \mE_{\pi_{\btheta_{1}}}\bphi(x, a), \btheta_{1} - \btheta_{2}\right\rangle + \left\langle\int_{0}^{1}\mE_{\pi_{\btheta_{t}}}[\bphi(a, x)], \btheta_{1} - \btheta_{2}\right\rangle} \\
             + & \left|\left\langle\mE_{\pi_{\btheta_{1}}}\bphi(x, a), \btheta_{1} - \btheta_{2}\right\rangle\right| + \left|\left\langle\int_{0}^{1}\mE_{\pi_{\btheta_{t}}}[\bphi(a, x)], \btheta_{1} - \btheta_{2}\right\rangle\right|. 
        \end{aligned}
    \end{equation}
\end{lemma}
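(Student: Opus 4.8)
The plan is to first reduce the per-prompt objective to a log-partition function plus a reward-correction term, and then bound the difference by the two linear functionals appearing in the statement plus a second-order remainder that is controlled through a KL/Pinsker argument. Throughout fix $x$, abbreviate $\Delta := \btheta_{1}-\btheta_{2}$, and write $Z_{\btheta}(x) = \log\mE_{\pi_{\rm{ref}}(a\mid x)}[\exp(\langle\btheta,\bphi(x,a)\rangle)]$ and $m(\btheta) := \mE_{\pi_{\btheta}(a\mid x)}\bphi(x,a)$, noting $m(\btheta)=\nabla_{\btheta}Z_{\btheta}(x)$ and that $Z_{\btheta}(x)$ is convex with Hessian $\bV_{x}(\btheta)=\Var_{\pi_{\btheta}}\bphi(x,a)$.

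First I would establish the closed form of $f_{\btheta}$. From $\log\frac{\pi_{\btheta}(a\mid x)}{\pi_{\rm{ref}}(a\mid x)} = \langle\btheta,\bphi(x,a)\rangle - Z_{\btheta}(x)$, the divergence is $D_{\mathrm{KL}}(\pi_{\btheta}\|\pi_{\rm{ref}}) = \langle\btheta,m(\btheta)\rangle - Z_{\btheta}(x)$, and since the reward is $r(x,a)=\langle\btheta^{*},\bphi(x,a)\rangle$ (Assumption \ref{ass:ground truth r}) I obtain
\[
f_{\btheta}(x) = Z_{\btheta}(x) + \langle\btheta^{*}-\btheta,\,m(\btheta)\rangle.
\]

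Next, set $T_{1}:=\langle m(\btheta_{1}),\Delta\rangle$ and $T_{2}:=\langle\int_{0}^{1}m(\btheta_{t})\,dt,\Delta\rangle$, the two inner-product terms in the statement. Subtracting the closed forms, using Lemma \ref{lem:energy gap} to identify $Z_{\btheta_{1}}(x)-Z_{\btheta_{2}}(x)=T_{2}$, and then rearranging the reward part by adding and subtracting $\langle\btheta^{*}-\btheta_{2},m(\btheta_{1})\rangle$, I get
\[
f_{\btheta_{1}}(x)-f_{\btheta_{2}}(x) = T_{2}-T_{1}+\langle\btheta^{*}-\btheta_{2},\,m(\btheta_{1})-m(\btheta_{2})\rangle.
\]
The triangle inequality then reduces the claim to bounding the cross term $\langle\btheta^{*}-\btheta_{2},m(\btheta_{1})-m(\btheta_{2})\rangle$ by a $2CR\sqrt{\,\cdot\,}$ quantity, on top of $|T_{1}|+|T_{2}|$.

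The hard part, and the only genuinely nontrivial step, is the cross term. I would apply Cauchy–Schwarz, bound $\|\btheta^{*}-\btheta_{2}\|\le\|\btheta^{*}\|+\|\btheta_{2}\|\le 2R$ (using $\|\btheta^{*}\|\le D\le R$), and control the mean gap via feature boundedness (Assumption \ref{ass:bounded}): for any unit vector $u$, $\langle u,m(\btheta_{1})-m(\btheta_{2})\rangle\le C\sum_{a}|\pi_{\btheta_{1}}(a\mid x)-\pi_{\btheta_{2}}(a\mid x)| = 2C\,\TV(\pi_{\btheta_{1}},\pi_{\btheta_{2}})$, so that $\|m(\btheta_{1})-m(\btheta_{2})\|\le 2C\,\TV(\pi_{\btheta_{1}},\pi_{\btheta_{2}})$. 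Pinsker's inequality gives $\TV\le\sqrt{\tfrac12 D_{\mathrm{KL}}(\pi_{\btheta_{1}}\|\pi_{\btheta_{2}})}$, and the identity that closes the argument is that the Bregman divergence of the log-partition function equals the KL divergence: $D_{\mathrm{KL}}(\pi_{\btheta_{1}}\|\pi_{\btheta_{2}}) = \langle m(\btheta_{1}),\Delta\rangle-(Z_{\btheta_{1}}(x)-Z_{\btheta_{2}}(x)) = T_{1}-T_{2}\ge 0$. Combining, $|\langle\btheta^{*}-\btheta_{2},m(\btheta_{1})-m(\btheta_{2})\rangle|\lesssim CR\sqrt{T_{1}-T_{2}}$, which together with $|T_{1}|+|T_{2}|$ matches the stated bound up to the constant bookkeeping that produces the $2CR$ prefactor. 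I expect the main difficulty to lie exactly here — recognizing the exponential-family Bregman/KL structure and invoking Pinsker to turn the mean gap into a square root — while the remaining steps are routine algebra; note that the nonnegative radicand is the KL divergence $T_{1}-T_{2}$, so the quantity under the root should be read as the difference (rather than the sum) of the two inner-product terms, up to the paper's sign convention for $T_{2}$.
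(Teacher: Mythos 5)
Your proof is correct and, while it shares the analytic core of the paper's argument (Pinsker's inequality combined with the log-partition identity of Lemma \ref{lem:energy gap}), it is organized along a genuinely different decomposition. Write $m(\btheta)=\mE_{\pi_{\btheta}(a\mid x)}[\bphi(x,a)]$, $T_{1}=\langle m(\btheta_{1}),\btheta_{1}-\btheta_{2}\rangle$, $T_{2}=\langle\int_{0}^{1}m(\btheta_{t})dt,\btheta_{1}-\btheta_{2}\rangle$. The paper starts from the definition \eqref{eq:single sample loss}, splits $f_{\btheta_{1}}(x)-f_{\btheta_{2}}(x)$ into a reward-difference term and a KL-difference term, bounds the former by total variation plus Pinsker in \eqref{eq:first gap}, and expands the latter into $|T_{1}|+|T_{2}|$ plus the mean-gap term $\langle\btheta_{2},m(\btheta_{1})-m(\btheta_{2})\rangle$, which is hit with Pinsker a second time in \eqref{eq:second gap}--\eqref{eq:third gap}; the $2CR\sqrt{\cdot}$ prefactor is the sum of the two Pinsker contributions. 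You instead derive the exact closed form $f_{\btheta}(x)=Z_{\btheta}(x)+\langle\btheta^{*}-\btheta,m(\btheta)\rangle$ and the identity $f_{\btheta_{1}}(x)-f_{\btheta_{2}}(x)=T_{2}-T_{1}+\langle\btheta^{*}-\btheta_{2},m(\btheta_{1})-m(\btheta_{2})\rangle$, so only a single cross term requires Pinsker. Your constant bookkeeping even recovers the stated prefactor: $\|\btheta^{*}-\btheta_{2}\|\le D+R=\tfrac{4R}{3}$ (the paper sets $R=3D$) and $\|m(\btheta_{1})-m(\btheta_{2})\|\le 2C\,\TV(\pi_{\btheta_{1}},\pi_{\btheta_{2}})$ give a cross term at most $\tfrac{4\sqrt{2}}{3}CR\sqrt{D_{KL}}\le 2CR\sqrt{D_{KL}}$, rather than the $2\sqrt{2}CR$ you get from the cruder bound $\|\btheta^{*}-\btheta_{2}\|\le 2R$. (Like the paper's own proof, you also use Assumption \ref{ass:ground truth r}, which the lemma omits from its stated hypotheses.)

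Your closing remark about the radicand is correct and exposes a genuine sign error in the paper. Since $\log\frac{\pi_{\btheta_{1}}(a\mid x)}{\pi_{\btheta_{2}}(a\mid x)}=\langle\btheta_{1}-\btheta_{2},\bphi(x,a)\rangle-Z_{\btheta_{1}}(x)+Z_{\btheta_{2}}(x)$, one has $D_{KL}(\pi_{\btheta_{1}}\|\pi_{\btheta_{2}})=T_{1}-\left(Z_{\btheta_{1}}(x)-Z_{\btheta_{2}}(x)\right)=T_{1}-T_{2}$, which is the Bregman divergence of the convex log-partition function and hence nonnegative; the derivation in \eqref{eq:kl gap} flips the sign of $Z_{\btheta_{1}}(x)-Z_{\btheta_{2}}(x)$ and arrives at $T_{1}+T_{2}$ instead. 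As stated, the lemma's radicand can be negative (nothing pins down the sign of $T_{2}=Z_{\btheta_{1}}(x)-Z_{\btheta_{2}}(x)$), so the statement should read $T_{1}-T_{2}$ under the square root, exactly as your version produces. The slip is harmless downstream, since Lemma \ref{lem:key lemma} only uses the upper bound of $|T_{1}|+|T_{2}|$ type on the radicand (see \eqref{eq:key lemma part 2}), which dominates $T_{1}-T_{2}$ just as it would $T_{1}+T_{2}$; but your route is not only valid and slightly tighter, it also corrects the statement.
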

\begin{proof}
According to the definition of the function $f_{\btheta}(x)$ in \eqref{eq:single sample loss} and the statement below, we have
\begin{equation}\label{eq:stability gap}
    \small
    \begin{aligned}
         \left|f_{\btheta_{1}}(x) - f_{\btheta_{2}}(x)\right| & = \left|\sum_{a\in\cA}r(a, x)\left(\pi_{\btheta_{1}}(a\mid x) - \pi_{\btheta_{2}}(a\mid x)\right)\right. \\
        & \left.- \lambda\sum_{a\in\cA}\left(\pi_{\btheta_{1}}(a\mid x)\log{\frac{\pi_{\btheta_{1}}(a\mid x)}{\pi_{\rm{ref}}(a\mid x)}} - \pi_{\btheta_{2}}(a\mid x)\log{\frac{\pi_{\btheta_{2}}(a\mid x)}{\pi_{\rm{ref}}(a\mid x)}}\right)\right|. 
    \end{aligned}
\end{equation}
where the hyper-parameter $\lambda=1$. Let us tackle the two terms respectively. For the first, by Pinsker's inequality, we have 
\begin{equation}\label{eq:first gap}
    \small
    \begin{aligned}
        \left|\sum_{a\in\cA}r(a, x)\left(\pi_{\btheta_{1}}(a\mid x) - \pi_{\btheta_{2}}(a\mid x)\right)\right| & \leq \max_{a\in\cA}r(a, x)\left|\sum_{a\in\cA}\left(\pi_{\btheta_{1}}(a\mid x) - \pi_{\btheta_{2}}(a\mid x)\right)\right| 
        \\ & =\max_{a \in \cA}\left\langle \bphi(x,a), \btheta^{*} \right\rangle\left|\sum_{a\in\cA}\left(\pi_{\btheta_{1}}(a\mid x) - \pi_{\btheta_{2}}(a\mid x)\right)\right|
        \\ & \le \|\btheta^{*}\|\max_{a \in \cA}\|\bphi(x,a)\|\left|\sum_{a\in\cA}\left(\pi_{\btheta_{1}}(a\mid x) - \pi_{\btheta_{2}}(a\mid x)\right)\right|
        \\ & \leq CR\cdot\TV\left(\pi_{\btheta_{1}}(a\mid x), \pi_{\btheta_{2}}(a\mid x)\right)
        \\ & \leq CR\sqrt{\frac{1}{2}D_{KL}(\pi_{\btheta_{1}}(a\mid x)\parallel \pi_{\btheta_{2}}(a\mid x))}.
    \end{aligned}
\end{equation}
For the KL divergence term, we have 
\begin{equation}\label{eq:kl gap}
    \small
    \begin{aligned}
        D_{KL}\left(\pi_{\btheta_{1}}(a\mid x)\parallel \pi_{\btheta_{2}}(a\mid x)\right) & = \sum_{a\in\cA}\left(\left\langle\btheta_{1} - \btheta_{2}, \bphi(x, a)\right\rangle\right)\pi_{\btheta_{1}}(a\mid x) \\
        & + \log\mE_{\pi_{\rm{ref}}(a\mid x)}[\exp\left(\langle\btheta_{1}, \bphi(a, x)\rangle\right)] - \log\mE_{\pi_{\rm{ref}}(a\mid x)}[\exp\left(\langle\btheta_{2}, \bphi(a, x)\rangle\right)] \\
        & = \left\langle \mE_{\pi_{\btheta_{1}}(a\mid x)}[\bphi(x, a)], \btheta_{1} - \btheta_{2}\right\rangle + Z_{\btheta_{1}}(x) - Z_{\btheta_{2}}(x) \\
        & = \left\langle \mE_{\pi_{\btheta_{1}}(a\mid x)}[\bphi(x, a)], \btheta_{1} - \btheta_{2}\right\rangle + \left\langle\int_{0}^{1}\mE_{\pi_{\btheta_{t}}(a\mid x)}[\bphi(a, x)], \btheta_{1} - \btheta_{2}\right\rangle.
    \end{aligned}
\end{equation} 
where the last equality is from Lemma \ref{lem:energy gap}, and $\btheta_{t}$ is an interpolation between $\btheta_{1}$ and $\btheta_{2}$ as in Lemma \ref{lem:energy gap}. On the other hand, let us upper bound the second term in \eqref{eq:stability gap}. By basic algebra, we have: 
\begin{equation}\label{eq:second gap}
    \small
    \begin{aligned}
        & \left|\sum_{a\in\cA}\left(\pi_{\btheta_{1}}(a\mid x)\log{\frac{\pi_{\btheta_{1}}(a\mid x)}{\pi_{\rm{ref}}(a\mid x)}} - \pi_{\btheta_{2}}(a\mid x)\log{\frac{\pi_{\btheta_{2}}(a\mid x)}{\pi_{\rm{ref}}(a\mid x)}}\right)\right| \\
        & \leq \left|\mE_{\pi_{\btheta_{1}}(a\mid x)}[\langle\btheta_{1}, \bphi(x, a)\rangle] - \mE_{\pi_{\btheta_{2}}(a\mid x)}[\langle\btheta_{2}, \bphi(x, a)\rangle]\right| + \left|Z_{\btheta_{2}}(x) - Z_{\btheta_{1}}(x)\right| \\
        & \leq \left|\langle\btheta_{1} - \btheta_{2}, \mE_{\pi_{\btheta_{1}}}[\phi(x, a)]\rangle\right| + \left|\left\langle\btheta_{2}, \mE_{\pi_{\btheta_{1}}}[\bphi(x, a)] - \mE_{\pi_{\btheta_{2}}}[\bphi(x, a)]\right\rangle\right| + \left|\left\langle\int_{0}^{1}\mE_{\pi_{\btheta_{t}}(a\mid x)}[\bphi(a, x)], \btheta_{1} - \btheta_{2}\right\rangle\right| \\
        & \leq \left|\left\langle\mE_{\pi_{\btheta_{1}}(a\mid x)}\bphi(x, a), \btheta_{1}-\btheta_{2}\right\rangle\right| + \left|\left\langle\int_{0}^{1}\mE_{\pi_{\btheta_{t}}(a\mid x)}[\bphi(a, x)], \btheta_{1}-\btheta_{2}\right\rangle\right| + \left|\left\langle\btheta_{2}, \mE_{\pi_{\btheta_{1}}}[\bphi(x, a)] - \mE_{\pi_{\btheta_{2}}}[\bphi(x, a)]\right\rangle\right|.
    \end{aligned}
\end{equation}
For the last term above, by definition, we have 
\begin{equation}\label{eq:third gap}
    \small
    \begin{aligned}
        \left|\left\langle\btheta_{2}, \mE_{\pi_{\btheta_{1}}}[\bphi(x, a)] - \mE_{\pi_{\btheta_{2}}}[\bphi(x, a)]\right\rangle\right| & \leq \|\btheta_{2}\|\max_{a\in\cA}\|\bphi(x,a)\|\TV\left(\pi_{\btheta_{1}}(a\mid x), \pi_{\btheta_{2}}(a\mid x)\right) \\
        & \leq CR\sqrt{\frac{1}{2}D_{KL}(\pi_{\btheta_{1}}(a\mid x)\parallel \pi_{\btheta_{2}}(a\mid x))} \\
        & \leq CR\sqrt{\left\langle \mE_{\pi_{\btheta_{1}}(a\mid x)}[\bphi(x, a)], \btheta_{1}-\btheta_{2}\right\rangle + \left\langle\int_{0}^{1}\mE_{\pi_{\btheta_{t}}(a\mid x)}[\bphi(a, x)], \btheta_{1}-\btheta_{2}\right\rangle}.
    \end{aligned}
\end{equation}
Then by combining \eqref{eq:stability gap}-\eqref{eq:third gap}, we obtain our conclusion. 
\end{proof}

\begin{lemma} \label{lem:projection bound}
    For given $\btheta$ and positive constant $\delta$, if 
    \begin{equation*}
        \small
        \begin{aligned}
             \|\nabla_{\btheta}J_{\bS}(\pi_{\btheta})\| \le \delta,
        \end{aligned}
    \end{equation*}
    then we have 
    \begin{equation}
        \small
        \begin{aligned}
            \|\bV_{\bS}(\btheta_{\bS}^{*})(\btheta_{\bS}^{*}-\btheta)\|=\|\bV_{\bS}(\btheta_{\bS}^{*})(\btheta^{*}-\btheta)\| \le \frac{\delta\cdot \sigma_{\max}(\bV_{\bS}(\btheta_{\bS}^{*}))}{\sigma_{\min}^{+}(\bV_{\bS}(\btheta))}.
        \end{aligned}
    \end{equation}
\end{lemma}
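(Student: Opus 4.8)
The plan is to reduce everything to the explicit gradient formula $\nabla_{\btheta}J_{\bS}(\pi_{\btheta}) = \bV_{\bS}(\btheta)(\btheta^{*} - \btheta)$ established in Lemma \ref{lem:gradient}, and then exploit the fact (Lemma \ref{lem:general same span}) that the column space $\mathrm{C}(\bV_{\bS}(\btheta))$---and hence, by symmetry and positive semi-definiteness, the null space $\mathrm{N}(\bV_{\bS}(\btheta))$---does not depend on $\btheta$. The whole argument is linear-algebraic once this structural fact is in hand.

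First I would dispatch the stated equality. Since $\btheta_{\bS}^{*}$ is an empirical stationary point, the gradient formula gives $\bV_{\bS}(\btheta_{\bS}^{*})(\btheta^{*} - \btheta_{\bS}^{*}) = \nabla_{\btheta}J_{\bS}(\pi_{\btheta_{\bS}^{*}}) = 0$, so $\bV_{\bS}(\btheta_{\bS}^{*})\btheta_{\bS}^{*} = \bV_{\bS}(\btheta_{\bS}^{*})\btheta^{*}$. Subtracting $\bV_{\bS}(\btheta_{\bS}^{*})\btheta$ from both sides immediately yields $\bV_{\bS}(\btheta_{\bS}^{*})(\btheta_{\bS}^{*} - \btheta) = \bV_{\bS}(\btheta_{\bS}^{*})(\btheta^{*} - \btheta)$, which is the first claim. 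It therefore suffices to bound $\|\bV_{\bS}(\btheta_{\bS}^{*})(\btheta^{*} - \btheta)\|$.

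The core step is a subspace decomposition. Writing $\bu = \btheta^{*} - \btheta$ and letting $N = \mathrm{N}(\bV_{\bS}(\btheta)) = \mathrm{N}(\bV_{\bS}(\btheta_{\bS}^{*}))$ (equal by Lemma \ref{lem:general same span}) with orthogonal complement the common column space $C$, I decompose $\bu = \bu_{C} + \bu_{N}$. Because both matrices annihilate $N$, we have $\bV_{\bS}(\btheta)\bu = \bV_{\bS}(\btheta)\bu_{C}$ and $\bV_{\bS}(\btheta_{\bS}^{*})\bu = \bV_{\bS}(\btheta_{\bS}^{*})\bu_{C}$. Restricted to $C$, the symmetric positive semi-definite matrix $\bV_{\bS}(\btheta)$ acts invertibly with smallest eigenvalue $\sigma_{\min}^{+}(\bV_{\bS}(\btheta))$, so $\|\bV_{\bS}(\btheta)\bu_{C}\| \ge \sigma_{\min}^{+}(\bV_{\bS}(\btheta))\|\bu_{C}\|$. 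Combined with the hypothesis $\|\bV_{\bS}(\btheta)\bu\| = \|\nabla_{\btheta}J_{\bS}(\pi_{\btheta})\| \le \delta$, this controls the relevant component: $\|\bu_{C}\| \le \delta / \sigma_{\min}^{+}(\bV_{\bS}(\btheta))$.

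Finally, applying the operator-norm bound $\|\bV_{\bS}(\btheta_{\bS}^{*})\bu_{C}\| \le \sigma_{\max}(\bV_{\bS}(\btheta_{\bS}^{*}))\|\bu_{C}\|$ and substituting the estimate for $\|\bu_{C}\|$ gives exactly $\|\bV_{\bS}(\btheta_{\bS}^{*})(\btheta^{*} - \btheta)\| \le \delta\,\sigma_{\max}(\bV_{\bS}(\btheta_{\bS}^{*}))/\sigma_{\min}^{+}(\bV_{\bS}(\btheta))$. The only real subtlety---and the step that makes the two matrices interchangeable in the decomposition---is the shared null/column space guaranteed by Lemma \ref{lem:general same span}; without it, the null-space component $\bu_{N}$ killed by $\bV_{\bS}(\btheta)$ need not be killed by $\bV_{\bS}(\btheta_{\bS}^{*})$, and the clean cancellation that lets the same $\bu_{C}$ serve both matrices would fail. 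Everything else is a routine spectral estimate on symmetric positive semi-definite matrices restricted to their common range.
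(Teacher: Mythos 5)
Your proof is correct and takes essentially the same route as the paper's: the stationarity of $\btheta_{\bS}^{*}$ together with the gradient formula of Lemma \ref{lem:gradient} yields the stated equality, and the decomposition of $\btheta^{*}-\btheta$ along the common column/null space guaranteed by Lemma \ref{lem:general same span}, followed by the spectral lower bound $\sigma_{\min}^{+}(\bV_{\bS}(\btheta))$ on the range component and the operator-norm bound $\sigma_{\max}(\bV_{\bS}(\btheta_{\bS}^{*}))$, is exactly the paper's argument (the paper phrases the lower bound via an explicit eigenvector expansion of $\btheta^{C}$, which is the same spectral estimate). You also correctly identify the shared null/column space as the load-bearing step, which is indeed where the paper invokes Lemma \ref{lem:general same span}.
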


\begin{proof}
    Due to the definition of the parameter $\btheta_{\bS}^{*}$ in \eqref{eq:empirical optimal}, we have
    \[\small 0=\nabla_{\btheta}J_{\bS}(\pi_{\btheta_{\bS}^{*}})=\bV_{\bS}(\btheta_{\bS}^{*})(\btheta^{*}-\btheta_{\bS}^{*}),\] it is straightforward to verify the first equality holds. 
    According to the fundamental theorem of linear algebra, it holds that
    \[\small \bbR^{d} = \mathrm{C}(\bV_{\bS}(\btheta)) + \mathrm{N}(\bV_{\bS}^{\top}(\btheta)),\]
    then we decompose the vector $\btheta^{*}-\btheta$ into a sum of orthogonal components, \[\small \btheta^{*}-\btheta=\btheta^{C}+\btheta^{N},\] where $\btheta^{C} \in \mathrm{C}(\bV_{\bS}(\btheta_{\bS}^{*}))$ and $\btheta^{N} \in \mathrm{N}(\bV_{\bS}^{\top}(\btheta_{\bS}^{*}))$. According to Lemma \ref{lem:general same span}, we have
    \begin{equation*}
        \small
        \begin{aligned}
            \nabla_{\btheta}J_{\bS}(\pi_{\btheta})=\bV_{\bS}(\btheta)(\btheta^{*}-\btheta)=\bV_{\bS}(\btheta)\cdot \btheta^{C},
        \end{aligned}
    \end{equation*}
    Firstly, We decompose the vector $\btheta^{C}$ as a linear combination $\btheta^{C}=\sum_{i=1}^{m}c_{i}\bxi_{i}$, where $\{\bxi_{i}\}_{i=1}^{m}$ is the set of eigenvectors of the matrix $\bV_{\bS}(\btheta)$ corresponding to its positive eigenvalues $\{\sigma_{i}\}_{i=1}^{m}$ ($m\leq d$). 
    Then, it holds that:
    \begin{equation*}
        \small
        \begin{aligned}
            \left\|\bV_{\bS}(\btheta)\cdot \btheta^{C}\right\|=\left\|\sum_{i=1}^{m}c_{i}\sigma_{i}\bxi_{i}\right\|=\sqrt{\sum_{i=1}^{m}c_{i}^{2}\sigma_{i}^{2}} \ge\sqrt{\min_{\sigma \in \{\sigma_{j}\}_{j=1}^{m}}\sum_{i=1}^{m}c_{i}^{2}\sigma^{2}}=\min_{1\le i \le m}\sigma_{i}(\bV_{\bS}(\btheta))\|\btheta^{C}\|.
        \end{aligned}
    \end{equation*}
     
     Let $\sigma_{\min}^{+}(\bV_{\bS}(\btheta))$ be the smallest positive eigenvalue of the matrix $\bV_{\bS}(\btheta)$, then we can bound the $\|\btheta^{C}\|$ as follows:
    \begin{equation*}
        \small
        \begin{aligned}
            \|\btheta^{C}\|\le \frac{\|\nabla_{\btheta}J_{\bS}(\pi_{\btheta})\|}{\sigma_{\min}^{+}(\bV_{\bS}(\btheta))}\le\frac{\delta}{\sigma_{\min}^{+}(\bV_{\bS}(\btheta))},
        \end{aligned}
    \end{equation*}
     according to the lemma \ref{lem:general same span}, it holds that $\mathrm{C}(\bV_{\bS}(\btheta_{\bS}^{*}))=\mathrm{C}(\bV_{\bS}(\btheta))$, therefore we can calculate that
     \begin{equation}
         \small
         \begin{aligned}
            \|\bV_{\bS}(\btheta_{\bS}^{*})(\btheta^{*}-\btheta)\|=\|\bV_{\bS}(\btheta_{\bS}^{*})\btheta^{C}\| \le \sigma_{\max}(\bV_{\bS}(\btheta_{\bS}^{*}))\cdot \|\btheta^{C}\| \le \frac{\delta\cdot \sigma_{\max}(\bV_{\bS}(\btheta_{\bS}^{*}))}{\sigma_{\min}^{+}(\bV_{\bS}(\btheta))}.
         \end{aligned}
     \end{equation}
\end{proof}

\begin{lemma}\label{lem:upper bound on b}
    Under Assumption \ref{ass:bounded} and \ref{ass:span space}, for any feature vector $\bphi(x,a)$, there exists a proposed $\bb_{\bS}(x,a)$ in Assumption \ref{ass:span space} satisfies $\|\bb_{\bS}(x, a)\| \leq \max_{\sigma_{i}(\bV_{\bS}(\btheta_{\bS}^{*}))>0}\frac{C}{\sigma_{i}(\bV_{\bS}(\btheta_{\bS}^{*}))}$, where $\{\sigma_{i}(\bV_{\bS}(\btheta_{\bS}^{*}))\}$ are eigenvalues of $\bV_{\bS}(\btheta_{\bS}^{*})$.
\end{lemma}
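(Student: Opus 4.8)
The plan is to leverage the fact that $\bV_{\bS}(\btheta_{\bS}^{*})$ is a symmetric positive semidefinite matrix, being an average of the covariance matrices $\Var_{\pi_{\btheta_{\bS}^{*}}}(\bphi(x,a))$. Consequently its column and null spaces are orthogonal complements, and $\bbR^{d}$ splits orthogonally as $\mathrm{C}(\bV_{\bS}(\btheta_{\bS}^{*}))\oplus\mathrm{N}(\bV_{\bS}(\btheta_{\bS}^{*}))$. I would first fix an orthonormal eigenbasis $\{\bxi_{i}\}_{i=1}^{d}$ with eigenvalues $\sigma_{1}\ge\cdots\ge\sigma_{m}>0=\sigma_{m+1}=\cdots=\sigma_{d}$, so that $\mathrm{C}(\bV_{\bS}(\btheta_{\bS}^{*}))=\mathrm{span}\{\bxi_{i}\}_{i\le m}$. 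In the decomposition of Assumption \ref{ass:span space}, the term $\bV_{\bS}(\btheta_{\bS}^{*})\bb_{\bS}(x,a)$ lies in the column space and $\br_{\bS}(x,a)$ lies in the null space, so $\bV_{\bS}(\btheta_{\bS}^{*})\bb_{\bS}(x,a)$ is forced to equal the orthogonal projection $\bphi^{C}:=\sum_{i\le m}\langle\bphi(x,a),\bxi_{i}\rangle\bxi_{i}$ of the feature onto $\mathrm{C}(\bV_{\bS}(\btheta_{\bS}^{*}))$.

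With this identification, the construction of the witness vector is explicit. Since $\bb_{\bS}(x,a)$ is only determined up to an additive element of the null space, I would propose the minimum-norm representative lying in the column space, namely
\begin{equation*}
\bb_{\bS}(x,a):=\sum_{i\le m}\frac{\langle\bphi(x,a),\bxi_{i}\rangle}{\sigma_{i}}\bxi_{i},
\end{equation*}
which indeed satisfies $\bV_{\bS}(\btheta_{\bS}^{*})\bb_{\bS}(x,a)=\bphi^{C}$ by direct eigenvalue bookkeeping. Its norm is then controlled by dividing each squared coefficient by the smallest positive eigenvalue and invoking that orthogonal projection is non-expansive together with Assumption \ref{ass:bounded}:
\begin{equation*}
\|\bb_{\bS}(x,a)\|^{2}=\sum_{i\le m}\frac{\langle\bphi(x,a),\bxi_{i}\rangle^{2}}{\sigma_{i}^{2}}\le\frac{\|\bphi^{C}\|^{2}}{(\sigma_{\min}^{+}(\bV_{\bS}(\btheta_{\bS}^{*})))^{2}}\le\frac{C^{2}}{(\sigma_{\min}^{+}(\bV_{\bS}(\btheta_{\bS}^{*})))^{2}},
\end{equation*}
which gives exactly $\|\bb_{\bS}(x,a)\|\le C/\sigma_{\min}^{+}(\bV_{\bS}(\btheta_{\bS}^{*}))=\max_{\sigma_{i}>0}C/\sigma_{i}(\bV_{\bS}(\btheta_{\bS}^{*}))$ after taking square roots.

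I do not expect a serious obstacle here, since the estimate is essentially the Moore--Penrose pseudoinverse bound; the only genuine subtlety is that $\bb_{\bS}(x,a)$ is underdetermined, so the statement must be read as an existence claim and I must argue that the \emph{particular} column-space representative I construct has the advertised norm, rather than attempting to bound an arbitrary solution (which could have unbounded null-space components). The two facts that make this work and should be stated carefully are the symmetry of $\bV_{\bS}(\btheta_{\bS}^{*})$, ensuring the column--null decomposition is orthogonal so that $\|\bphi^{C}\|\le\|\bphi(x,a)\|\le C$, and the well-definedness of $\sigma_{\min}^{+}$, which holds whenever $\bV_{\bS}(\btheta_{\bS}^{*})\neq 0$ (the degenerate all-zero case being vacuous, as then $\bphi^{C}=\bzero$ and one may take $\bb_{\bS}(x,a)=\bzero$).
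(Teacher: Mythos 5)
Your proof is correct and follows essentially the same route as the paper's: both restrict $\bb_{\bS}(x,a)$ to a column-space representative of $\bV_{\bS}(\btheta_{\bS}^{*})$, use the orthogonality of $\br_{\bS}(x,a)$ to bound $\|\bV_{\bS}(\btheta_{\bS}^{*})\bb_{\bS}(x,a)\| \le \|\bphi(x,a)\| \le C$, and then invert via the smallest positive eigenvalue in the eigenbasis. Your version is in fact slightly tidier than the paper's, since you construct the minimum-norm (pseudoinverse) witness explicitly and flag both the underdetermined nature of $\bb_{\bS}(x,a)$ and the degenerate $\bV_{\bS}(\btheta_{\bS}^{*})=\bzero$ case, which the paper glosses over by asserting $\bb_{\bS}(x,a)\in\mathrm{C}(\bV_{\bS}(\btheta_{\bS}^{*}))$ as if it were part of Assumption \ref{ass:span space}.
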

\begin{proof}
     The Assumption \ref{ass:span space} supposes that $\bb_{\bS}(x,a) \in \mathrm{C}(\bV_{\bS}(\btheta_{\bS}^{*}))$. And $\bV_{\bS}$ is a real symmetric matrix, which indicates that $\bV_{\bS}(\btheta_{\bS}^{*})\bP = \bP\Lambda$, where the column vectors of $\bP$ are eigenvectors and $\Lambda = \diag\left\{\sigma_{1},\cdots, \sigma_{d}\right\}$ are eigenvalues. Since 
    \[ \small
    \bV_{\bS}(\btheta_{\bS}^{*})\bx = \bV_{\bS}(\btheta_{\bS}^{*})\bP\bP^{-1}\bx = \bP(\Lambda\bP^{-1}\bx),\]
    we know that $\bb_{\bS}(x, a)$ can be expressed as $\bb_{\bS}(x,a) = \sum_{i=1}^{m}c_{i}(x,a)\bxi_{i}$, with $\{\bxi\}_{i=1}^{m}$ are eigenvectors of $\bV_{\bS}(\btheta_{\bS}^{*})$ corresponded to non-negative eigenvalues $\sigma_{i}$. Thus, from one-side, we have 
    \begin{equation}
        \small
        \|\bV_{\bS}(\btheta_{\bS}^{*})\bb_{\bS}(x,a)\| = \|\bphi(x,a)-\br_{\bS}(x,a)\| \leq C,
    \end{equation}
    since the residual term $\br_{\bS}(x, a)$ is orthogonal to $\mathrm{C}(\bV_{\bS}(\btheta_{\bS}^{*}))$. On the other hand, 
    \begin{equation}
        \small
        \begin{aligned}
        \|\bV_{\bS}(\btheta_{\bS}^{*})\bb_{\bS}(x,a)\| &= \left\|\sum_{i=1}^{m}c_{i}(x,a)\sigma_{i}\bxi_{i}\right\| \\
        & = \sqrt{\sum_{i=1}^{m}c_{i}^{2}(x,a)\sigma_{i}^{2}(\bV_{\bS}(\btheta_{\bS}^{*}))} \\
        & \geq \min_{1\leq i \leq m}\sigma_{i}(\bV_{\bS}(\btheta_{\bS}^{*}))\|\bc(x,a)\| \\
        & = \min_{1\leq i \leq m}\sigma_{i}(\bV_{\bS}(\btheta_{\bS}^{*}))\|\bb_{\bS}\|.
        \end{aligned}
    \end{equation}
    Combining it with the above inequality proves our conclusion. 
\end{proof}

Now we are ready to prove Lemma \ref{lem:key lemma}.

\begin{proof}
    According to Lemma \ref{lem:general stability gap step I}, we have
    \begin{equation}\label{eq:key lemma part 1}
        \small
        \begin{aligned}
            |f_{\btheta_{1}}(x) - f_{\btheta_{2}}(x)| \le & 2C\sqrt{\left\langle \mE_{\pi_{\btheta_{1}}}\bphi(x, a), \btheta_{1} - \btheta_{2}\right\rangle + \left\langle\int_{0}^{1}\mE_{\pi_{\btheta_{t}}}[\bphi(a, x)], \btheta_{1} - \btheta_{2}\right\rangle} \\
            & + \left|\left\langle\mE_{\pi_{\btheta_{1}}}\bphi(x, a), \btheta_{1} - \btheta_{2}\right\rangle\right| + \left|\left\langle\int_{0}^{1}\mE_{\pi_{\btheta_{t}}}[\bphi(a, x)], \btheta_{1} - \btheta_{2}\right\rangle\right|.
        \end{aligned}
    \end{equation}

    Under the assumption \ref{ass:span space}, by using Lemma \ref{lem:projection bound}, \ref{lem:upper bound on b}, it holds that
    \begin{equation}\label{eq:key lemma part 2}
        \small
        \begin{aligned}
            \left\|\langle\bphi(x,a),\btheta_{1}-\btheta_{2}\rangle\right\|=&\left\|\left\langle\bb_{\bS}(x,a),\bV_{\bS}(\btheta_{\bS}^{*})(\btheta_{1}-\btheta_{2})\right\rangle+\left\langle \br_{\bS}(x,a),\btheta_{1}-\btheta_{2}\right\rangle\right\|
            \\ =& \left\|\left\langle\bb_{\bS}(x,a),\bV_{\bS}(\btheta_{\bS}^{*})(\btheta_{1}-\btheta^{*})\right\rangle+\left\langle\bb_{\bS}(x,a),\bV_{\bS}(\btheta_{\bS}^{*})(\btheta^{*}-\btheta_{2})\right\rangle+\left\langle \br_{\bS}(x,a),\btheta_{1}-\btheta_{2}\right\rangle\right\|
            \\ \le&2R\epsilon_{n}+\left\|\bb_{\bS}(x,a) \right\| \cdot \left(\left\|\bV_{\bS}(\btheta_{\bS}^{*})(\btheta_{1}-\btheta^{*})\right\|+\left\|\bV_{\bS}(\btheta_{\bS}^{*})(\btheta_{2}-\btheta^{*})\right\|\right)
            \\ \le &2R\epsilon_{n}+\frac{C\delta_{1}\cdot \sigma_{\max}(\bV_{\bS}(\btheta_{\bS}^{*}))}{\sigma_{\min}^{+}(\bV_{\bS}(\btheta_{\bS}^{*}))\sigma_{\min}^{+}(\bV_{\bS}(\btheta_{1}))}+\frac{C\delta_{2}\cdot \sigma_{\max}(\bV_{\bS}(\btheta_{\bS}^{*}))}{\sigma_{\min}^{+}(\bV_{\bS}(\btheta_{\bS}^{*}))\sigma_{\min}^{+}(\bV_{\bS}(\btheta_{2}))}.
        \end{aligned}
    \end{equation}
    Combining \eqref{eq:key lemma part 2} with \eqref{eq:key lemma part 1}, we complete the proof.
\end{proof}

\section{Some Regularity Properties on $J_{\bS}(\pi_{\btheta})$, $\bV_{\bS}(\btheta)$}

\begin{proposition}\label{pro:expected optimum is empirical optimum}
    When $\pi_{\btheta}$ is from Definition \ref{def:policy function}, we have 
    \begin{equation}\label{eq:ground truth is expected optimal}
        \small
        \btheta^{*}\in \arg\max_{\btheta}J(\pi_{\btheta}),
    \end{equation}
    and 
    \begin{equation}\label{eq:ground truth is empirical optimal}
        \small
        \btheta^{*}\in \arg\max_{\btheta}J_{\bS}(\pi_{\btheta}).
    \end{equation}
    
\end{proposition}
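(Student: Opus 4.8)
The plan is to reduce both claims to a single \emph{pointwise} optimality statement: for every fixed prompt $x$, the conditional distribution $\pi_{\btheta^{*}}(\cdot\mid x)$ maximizes the per-prompt objective $f_{\pi}(x)$ from \eqref{eq:single sample loss} over all valid distributions on $\cA$. Once this is in hand, both \eqref{eq:ground truth is expected optimal} and \eqref{eq:ground truth is empirical optimal} follow at once, since $J$ and $J_{\bS}$ are respectively the population average $\mE_{x}[f_{\pi}(x)]$ and the empirical average $\frac{1}{n}\sum_{i}f_{\pi}(x_{i})$ of the \emph{same} per-prompt objective, and a pointwise dominance is preserved under any averaging over $x$.

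First I would fix $x$ and recall that $f_{\pi}(x)=\mE_{a\sim\pi(\cdot\mid x)}[r(x,a)]-D_{KL}(\pi(\cdot\mid x)\|\pi_{\rm{ref}}(\cdot\mid x))$ is the standard KL-regularized reward objective, whose maximizer over the simplex is the Gibbs/Boltzmann distribution proportional to $\pi_{\rm{ref}}(a\mid x)\exp(r(x,a))$; this is exactly Proposition \ref{pro:closed form} read at a single prompt. Substituting the realizability assumption $r(x,a)=\langle\btheta^{*},\bphi(x,a)\rangle$ from Assumption \ref{ass:ground truth r}, this optimal distribution is precisely $\pi_{\btheta^{*}}(\cdot\mid x)$ as in Definition \ref{def:policy function}, so for every $\btheta$ and every $x$ we obtain $f_{\pi_{\btheta}}(x)\le f_{\pi_{\btheta^{*}}}(x)$. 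If one prefers a self-contained derivation avoiding Proposition \ref{pro:closed form}, I would instead complete the log-partition term to get the identity $f_{\pi_{\btheta^{*}}}(x)-f_{\pi}(x)=D_{KL}\bigl(\pi(\cdot\mid x)\,\|\,\pi_{\btheta^{*}}(\cdot\mid x)\bigr)\ge 0$, which simultaneously yields the inequality and shows $\pi_{\btheta^{*}}$ is the \emph{unique} per-prompt maximizer.

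Then I would simply take averages of the pointwise inequality. Applying $\mE_{x}[\cdot]$ gives $J(\pi_{\btheta})\le J(\pi_{\btheta^{*}})$ for all $\btheta$, establishing \eqref{eq:ground truth is expected optimal}; applying the empirical average $\frac{1}{n}\sum_{i=1}^{n}(\cdot)$ over $\bS$ gives $J_{\bS}(\pi_{\btheta})\le J_{\bS}(\pi_{\btheta^{*}})$, establishing \eqref{eq:ground truth is empirical optimal}. Because $\pi_{\btheta^{*}}\in\Pi$ by construction, restricting the maximization from all distributions to the parametric Boltzmann class loses nothing, so $\btheta^{*}$ is a maximizer within the family. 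The only genuinely substantive point, and hence the step I would argue most carefully, is the decoupling that licenses reading Proposition \ref{pro:closed form} pointwise: since $J(\pi)=\mE_{x}[f_{\pi}(x)]$ depends on $\pi$ only through its independently-choosable conditionals $\pi(\cdot\mid x)$, the global maximizer must maximize each $f_{\pi}(x)$ separately, and this is exactly what lets the identical argument apply verbatim to the empirical measure supported on $\bS$. Everything else is elementary monotonicity of averaging under a pointwise inequality.
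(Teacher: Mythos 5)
Your proposal is correct and takes essentially the same route as the paper: the paper's proof also completes the log-partition term to obtain $J(\pi_{\btheta}) = \mE_{x}[\log Z(x)] - \mE_{x}[D_{KL}(\pi_{\btheta}(\cdot\mid x)\,\|\,\pi^{*}(\cdot\mid x))]$ with $\pi^{*}=\pi_{\btheta^{*}}$ under Assumption \ref{ass:ground truth r}, and then notes the identical computation works for $J_{\bS}$. The only (cosmetic) difference is that you establish the KL identity pointwise in $x$ and then average, whereas the paper carries the same identity directly under $\mE_{x}$ and invokes ``similarly'' for the empirical objective.
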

\begin{proof}
    \begin{equation}
        \small
        \begin{aligned}
            J(\pi_{\btheta})= & \mathbb{E}_{x}\left[\mathbb{E}_{a \sim \pi_{\btheta}(\cdot|x)}[r(x,a)]-D_{KL}\left(\pi_{\btheta}(\cdot|x)\|\pi_{\rm{ref}}(\cdot|x)\right)\right]
            \\ = & \mathbb{E}_{x}\left[\mathbb{E}_{a \sim \pi_{\btheta}(\cdot|x)}\left[r(x,a)-\log\frac{\pi_{\btheta}(a|x)}{\pi_{\rm{ref}}(a|x)}\right]\right]
            \\ = & -\mathbb{E}_{x}\left[\mathbb{E}_{a \sim \pi_{\btheta}(\cdot|x)}\left[\log\frac{\pi_{\btheta}(a|x)}{\pi_{\rm{ref}}(a|x)\exp(r(x,a))}\right]\right]
            \\ = & -\mathbb{E}_{x}\left[\mathbb{E}_{a \sim \pi_{\btheta}(\cdot|x)}\left[\log\frac{\pi_{\btheta}(a|x) \cdot Z(x)}{\pi_{\rm{ref}}(a|x)\exp(r(x,a))}\right]-\log Z(x)\right]
            \\ = & \mathbb{E}_{x}[\log Z(x)]-\mathbb{E}_{x}[D_{KL}(\pi_{\btheta}(\cdot|x)\|\pi^{*}(\cdot|x))]
            \\ \le & \mathbb{E}_{x}[\log Z(x)],
        \end{aligned}
    \end{equation}
    where $Z(x)=\mathbb{E}_{a \sim \pi_{\rm{ref}}(\cdot|x)}[\exp(r(x,a))]$ and $\pi^{*}(a|x)=\frac{\pi_{\rm{ref}}(a|x)\cdot\exp(r(x,a))}{Z(x)}$, for any $(x,a) \in \mathcal{X} \times \mathcal{A}$. The last inequality holds for $D_{KL}(\pi_{\btheta}(\cdot|x)\|\pi^{*}(\cdot|x)) \ge 0$, and the equality holds if and only if $\pi_{\btheta}(a|x)\equiv\pi^{*}(a|x)$. The definition of $Z(x)$ indicates that $\mathbb{E}_{x}[\log Z(x)]$ is a constant independent of $\btheta$.

    According to the definition of $\btheta^{*}$, it holds that $\pi_{\btheta^{*}}(a|x)\equiv\pi^{*}(a|x)$, thus we prove the \eqref{eq:ground truth is expected optimal}. Similarly, we can prove the conclusion in \eqref{eq:ground truth is empirical optimal}.
\end{proof}
This proposition shows that the ground truth parameter $\btheta^{*}$ locates in the optimal solutions of $J(\pi_{\btheta})$ and $J_{\bS}(\pi_{\btheta})$.

\begin{lemma} \label{lem:gradient}
    Under Assumption \ref{ass:ground truth r}, it holds that 
    \begin{equation}
    \small
        \nabla_{\btheta} J_{\bS}(\pi_{\btheta}) = \bV_{\bS}(\btheta)(\btheta^{*} - \btheta),
    \end{equation}
    where $\bV_{\bS}(\pi_{\btheta}) = \frac{1}{n}\sum_{i=1}^{n}\Var_{\pi_{\btheta}(a\mid x_{i})}[\bphi(x_{i}, a)]$.
\end{lemma}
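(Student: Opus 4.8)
The plan is to reduce the claim to a single prompt and then differentiate the softmax-type policy using standard exponential-family identities. Since $J_{\bS}(\pi_{\btheta}) = \frac{1}{n}\sum_{i=1}^{n} f_{\btheta}(x_i)$ by \eqref{eq:empirical loss}, and the asserted $\bV_{\bS}(\btheta)$ is precisely the average of the single-prompt variance matrices $\Var_{\pi_{\btheta}(a\mid x_i)}[\bphi(x_i,a)]$, linearity of the gradient reduces everything to the pointwise identity $\nabla_{\btheta} f_{\btheta}(x) = \Var_{\pi_{\btheta}(a\mid x)}[\bphi(x,a)]\,(\btheta^{*} - \btheta)$ for a fixed $x$; averaging over $x_1,\dots,x_n$ then yields the lemma.

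First I would put $f_{\btheta}(x)$ into a convenient closed form. Using $r(x,a) = \langle \btheta^{*}, \bphi(x,a)\rangle$ from Assumption \ref{ass:ground truth r} together with $\log\frac{\pi_{\btheta}(a\mid x)}{\pi_{\rm{ref}}(a\mid x)} = \langle\btheta,\bphi(x,a)\rangle - Z_{\btheta}(x)$, where $Z_{\btheta}(x)$ is the log-partition function from Lemma \ref{lem:energy gap}, and recalling $\sum_{a}\pi_{\btheta}(a\mid x)=1$, the definition \eqref{eq:single sample loss} collapses to
\[
f_{\btheta}(x) = \left\langle \btheta^{*} - \btheta,\ \mE_{\pi_{\btheta}}[\bphi(x,a)]\right\rangle + Z_{\btheta}(x).
\]

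Next I would differentiate the two terms. For the log-partition term the standard identity gives $\nabla_{\btheta} Z_{\btheta}(x) = \mE_{\pi_{\btheta}}[\bphi(x,a)] =: \bmu(\btheta)$. The key technical step is differentiating the mean map $\bmu(\btheta)$: from the softmax derivative $\partial_{\theta_j}\pi_{\btheta}(a\mid x) = \pi_{\btheta}(a\mid x)\big(\phi_j(x,a) - \mE_{\pi_{\btheta}}[\phi_j]\big)$, the Jacobian $\nabla_{\btheta}\bmu(\btheta)$ equals the symmetric covariance matrix $\Var_{\pi_{\btheta}(a\mid x)}[\bphi(x,a)] = \bV_{x}(\btheta)$ — this is exactly the second-derivative-of-log-partition identity. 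Applying the product rule to $\langle\btheta^{*} - \btheta,\bmu(\btheta)\rangle$ produces $-\bmu(\btheta) + \bV_{x}(\btheta)(\btheta^{*}-\btheta)$, and the $-\bmu(\btheta)$ cancels against the $+\bmu(\btheta)$ coming from $\nabla_{\btheta} Z_{\btheta}(x)$, leaving $\nabla_{\btheta} f_{\btheta}(x) = \bV_{x}(\btheta)(\btheta^{*}-\btheta)$.

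The main obstacle is purely bookkeeping rather than conceptual: one must apply the product rule to $\langle \btheta^{*} - \btheta,\bmu(\btheta)\rangle$ correctly, since $\btheta$ enters both explicitly and through $\bmu$, and then verify the exact cancellation of the two $\bmu(\btheta)$ contributions. Once the exponential-family fact $\nabla_{\btheta}\bmu(\btheta) = \Var_{\pi_{\btheta}(a\mid x)}[\bphi(x,a)]$ is established (and its symmetry noted, so transposes cause no trouble), the remainder is immediate, and averaging the pointwise identity over the prompts $\{x_i\}_{i=1}^{n}$ gives $\nabla_{\btheta} J_{\bS}(\pi_{\btheta}) = \bV_{\bS}(\btheta)(\btheta^{*}-\btheta)$.
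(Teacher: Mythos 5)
Your proposal is correct, and every step checks out: the closed form $f_{\btheta}(x) = \langle \btheta^{*}-\btheta,\ \bmu_{x}(\btheta)\rangle + Z_{\btheta}(x)$ follows from Assumption \ref{ass:ground truth r} and $\log\frac{\pi_{\btheta}(a\mid x)}{\pi_{\rm ref}(a\mid x)} = \langle\btheta,\bphi(x,a)\rangle - Z_{\btheta}(x)$; the identities $\nabla_{\btheta}Z_{\btheta}(x) = \bmu_{x}(\btheta)$ and $\nabla_{\btheta}\bmu_{x}(\btheta) = \Var_{\pi_{\btheta}(a\mid x)}[\bphi(x,a)]$ are the standard first and second log-partition derivatives; and the product rule plus cancellation of the two $\bmu_{x}(\btheta)$ terms gives the pointwise identity, which averages to the lemma. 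However, your route is genuinely different from the paper's. The paper never writes a closed form for $f_{\btheta}(x)$; instead it starts from the score-function (policy-gradient) representation $\nabla_{\btheta}J_{\bS}(\pi_{\btheta}) = \frac{1}{n}\sum_{i}\mE_{\pi_{\btheta}(a\mid x_{i})}\left[\nabla_{\btheta}\log\pi_{\btheta}(a\mid x_{i})\left(r(x_{i},a)-\langle\btheta,\bphi(x_{i},a)\rangle\right)\right]$, substitutes the log-trick $\nabla_{\btheta}\log\pi_{\btheta}(a\mid x) = \bphi(x,a) - \mE_{\pi_{\btheta}}[\bphi(x,a)]$, and reads off the covariance from $\mE\left[(\bphi-\mE\bphi)\langle\btheta^{*}-\btheta,\bphi\rangle\right] = \bV_{x}(\btheta)(\btheta^{*}-\btheta)$. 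The trade-off: the paper's REINFORCE-style form is the one that would survive a nonlinear reward or a more general policy parameterization, but it silently uses $\mE_{\pi_{\btheta}}[\nabla_{\btheta}\log\pi_{\btheta}] = 0$ to drop both the entropy-differentiation term and the $a$-independent $Z_{\btheta}(x)$ part of the KL log-ratio — steps the paper asserts rather than derives. Your exponential-family route is more self-contained and elementary for this specific Boltzmann class: the cancellation is fully explicit, and you get the symmetry of $\bV_{x}(\btheta)$ (needed so the Jacobian transpose causes no trouble) for free from the Hessian-of-$Z_{\btheta}$ interpretation, which also foreshadows why the same covariance reappears in the smoothness analysis of Lemma \ref{lem:smoothness coefficient}.
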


\begin{proof}
    Firstly, let us check the gradient of loss function $J_{\bS}(\pi_{\btheta})$, which has the following formulation 
    \begin{equation}\label{eq:gradient}
        \small
        \begin{aligned}
            \nabla_{\btheta} J_{\bS}(\pi_{\btheta}) = \frac{1}{n}\sum_{i=1}^{n}\mE_{\pi_{\btheta}(a\mid x_{i})}&\left[\nabla_{\btheta}\log{\pi_{\btheta}(a\mid x_{i})} \left(r(x_{i}, a) - \left\langle\btheta, \bphi(x_{i}, a)\right\rangle\right)\right].
        \end{aligned}
    \end{equation}
    Due to the formulation of $\pi_{\btheta}$, we know that
    \begin{equation}\label{eq:log-trick}
        \small
        \nabla_{\btheta}\log{\pi_{\btheta}(a\mid x)} = \bphi(x, a) - \mE_{\pi_{\btheta}(a\mid x)}[\bphi(x, a)].
    \end{equation}
    Then, due to Assumption \ref{ass:ground truth r} and combining the above equations, we obtain our conclusion.  
\end{proof}


\begin{proposition}\label{pro:first order optimum is global optimum}
    Under the defined linear reward parameterization and Posterior Boltzmann policy class $\Pi$, any parameter $\btheta$ satisfying the first-order stationarity condition $\nabla_{\btheta} J_{\bS}(\pi_{\btheta}) = 0$ is a global maximizer of the empirical objective $J_{\bS}(\pi_{\btheta})$. In other words, for any $\btheta'\in\mathbb{R}^{d}$, we have 
    \begin{equation*}
    \small
        \begin{aligned}
            \btheta' \in \arg \max_{\btheta}J_{\bS}(\pi_{\btheta}) \quad \Longleftrightarrow \quad \nabla_{\btheta}J_{\bS}(\pi_{\btheta'})=0.
        \end{aligned}
    \end{equation*}
\end{proposition}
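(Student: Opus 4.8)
The forward implication is immediate and requires no work: the maximization in \eqref{eq:empirical loss} is unconstrained over $\mathbb{R}^{d}$ and $J_{\bS}(\pi_{\btheta})$ is differentiable, so any global maximizer must be a stationary point. The entire content is the reverse implication, which I would establish not by arguing concavity of $J_{\bS}$ (which does not hold, since by Lemma \ref{lem:gradient} the gradient is $\bV_{\bS}(\btheta)(\btheta^{*}-\btheta)$ and the Hessian picks up the $\btheta$-dependence of $\bV_{\bS}$), but by showing that any stationary point induces \emph{exactly the same policy} as $\btheta^{*}$ on every training prompt, and then invoking the variational identity behind Proposition \ref{pro:expected optimum is empirical optimum}.

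First I would apply Lemma \ref{lem:gradient} to rewrite the stationarity condition $\nabla_{\btheta}J_{\bS}(\pi_{\btheta'})=0$ as $\bV_{\bS}(\btheta')(\btheta^{*}-\btheta')=0$, i.e.\ $\btheta^{*}-\btheta'\in\mathrm{N}(\bV_{\bS}(\btheta'))$. The key algebraic step exploits that $\bV_{\bS}(\btheta')=\frac{1}{n}\sum_{i=1}^{n}\Var_{\pi_{\btheta'}(a\mid x_{i})}[\bphi(x_{i},a)]$ is a sum of positive semidefinite covariance matrices. Setting $\bv=\btheta^{*}-\btheta'$ and forming the quadratic form, $\bv^{\top}\bV_{\bS}(\btheta')\bv=\frac{1}{n}\sum_{i=1}^{n}\Var_{\pi_{\btheta'}(a\mid x_{i})}[\langle\bv,\bphi(x_{i},a)\rangle]=0$ forces every nonnegative summand to vanish; hence $\langle\btheta^{*}-\btheta',\bphi(x_{i},a)\rangle$ is constant in $a$ over the support of $\pi_{\btheta'}(\cdot\mid x_{i})$ for each $i$.

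Next I would translate this constant-log-odds condition into equality of policies. Writing out the Boltzmann form of Definition \ref{def:policy function}, for each training prompt the ratio $\pi_{\btheta^{*}}(a\mid x_{i})/\pi_{\btheta'}(a\mid x_{i})$ equals $\bigl(Z_{\btheta'}(x_{i})/Z_{\btheta^{*}}(x_{i})\bigr)\exp\!\bigl(\langle\btheta^{*}-\btheta',\bphi(x_{i},a)\rangle\bigr)$, which is constant in $a$ by the previous step. Since $\pi_{\btheta^{*}}$ and $\pi_{\btheta'}$ are both normalized distributions on the same support (that of $\pi_{\rm{ref}}(\cdot\mid x_{i})$), the constant ratio must equal $1$, yielding $\pi_{\btheta'}(\cdot\mid x_{i})=\pi_{\btheta^{*}}(\cdot\mid x_{i})$ on every training prompt.

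Finally I would substitute this into the empirical analogue of the identity derived in the proof of Proposition \ref{pro:expected optimum is empirical optimum}, namely $J_{\bS}(\pi_{\btheta})=\frac{1}{n}\sum_{i=1}^{n}\log Z(x_{i})-\frac{1}{n}\sum_{i=1}^{n}D_{KL}\bigl(\pi_{\btheta}(\cdot\mid x_{i})\,\|\,\pi^{*}(\cdot\mid x_{i})\bigr)$. Because $\pi_{\btheta'}$ matches $\pi^{*}=\pi_{\btheta^{*}}$ on each $x_{i}$, every KL term vanishes, so $J_{\bS}(\pi_{\btheta'})$ attains the $\btheta$-independent constant $\frac{1}{n}\sum_{i}\log Z(x_{i})=J_{\bS}(\pi_{\btheta^{*}})=\max_{\btheta}J_{\bS}(\pi_{\btheta})$, proving $\btheta'\in\arg\max_{\btheta}J_{\bS}(\pi_{\btheta})$. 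I expect the middle step to be the main obstacle: correctly arguing that the vanishing of the \emph{aggregate} quadratic form forces \emph{per-prompt} constancy of $\langle\btheta^{*}-\btheta',\bphi(x_{i},\cdot)\rangle$ on the support, and that this constancy is precisely enough to collapse the policy ratio to unity. Once policy coincidence on the training prompts is secured, the conclusion about the objective value is routine.
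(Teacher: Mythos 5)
Your proposal is correct and follows essentially the same route as the paper's proof: stationarity plus Lemma \ref{lem:gradient} yields a vanishing quadratic form in $\bV_{\bS}(\btheta')$, forcing $\langle\btheta^{*}-\btheta',\bphi(x_{i},a)\rangle$ to be constant per prompt, which collapses the Boltzmann policy ratio and gives $\pi_{\btheta'}(\cdot\mid x_{i})=\pi_{\btheta^{*}}(\cdot\mid x_{i})$, whence $J_{\bS}(\pi_{\btheta'})=J_{\bS}(\pi_{\btheta^{*}})$ and global optimality via the KL identity behind Proposition \ref{pro:expected optimum is empirical optimum}. The only (harmless) differences are presentational: you argue the constant ratio must equal one by normalization where the paper manipulates the exponentials directly, and you are slightly more careful about restricting to the support of $\pi_{\rm{ref}}$.
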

\begin{proof}
    The necessity is straightforward. We now focus only on the proof of sufficiency. Observe that $\btheta^{*} \in \arg \max_{\btheta}J_{\bS}(\pi_{\btheta})$. According to Proposition \ref{lem:gradient}, we know
    \begin{equation*}
        \small
        \begin{aligned}
            \nabla_{\btheta}J_{\bS}(\pi_{\btheta'})=\bV_{\bS}(\btheta')(\btheta^{*}-\btheta')=0.
        \end{aligned}
    \end{equation*}

    Then we have
    \begin{equation}
        \small
        \begin{aligned}
            0 &=(\btheta^{*}-\btheta')^{\top}\bV_{\bS}(\btheta')(\btheta^{*}-\btheta') \notag
            \\ &=\frac{1}{n}\sum_{i=1}^{n}(\btheta^{*}-\btheta')^{\top}\mathbb{E}_{\pi_{\btheta'}}\left[\left(\bphi(x_{i},a)-\mathbb{E}_{ \pi_{\btheta'}}[\bphi(x_{i},a')]\right)\left(\bphi(x_{i},a)-\mathbb{E}_{\pi_{\btheta'}}[\bphi(x_{i},a')]\right)^{\top}\right](\btheta^{*}-\btheta')  \notag
            \\ &=\frac{1}{n}\sum_{i=1}^{n}\mathbb{E}_{\pi_{\btheta'}}\left[\left((\btheta^{*}-\btheta')^{\top}\left(\bphi(x_{i},a)-\mathbb{E}_{\pi_{\btheta'}}[\bphi(x_{i},a')]\right)\right)^{2}\right].
        \end{aligned}
    \end{equation}
    Therefore, for any $x_{i}\in \bS$ and $a$, we have
    \begin{equation*}
        \small
        \begin{aligned}
            (\btheta^{*}-\btheta')^{\top}\bphi(x_{i},a)=(\btheta^{*}-\btheta')^{\top}\mathbb{E}_{ \pi_{\btheta'}}[\bphi(x_{i},a')],
        \end{aligned}
    \end{equation*}
    which means that 
    \begin{equation}
        \small
        \begin{aligned}
            \frac{\pi_{\btheta'}(a|x_{i})}{\pi_{\rm{ref}}(a|x_{i})} &=\frac{\text{exp}(\lambda^{-1}\langle \btheta',\bphi(x_{i},a)\rangle)}{\mathbb{E}_{\pi_{\rm{ref}}}\left[\text{exp}(\lambda^{-1}\langle \btheta^{'},\bphi(x_{i},a)\rangle)\right]} \notag
            \\ &=\frac{\text{exp}(\lambda^{-1}\langle \btheta^{'}-\btheta^{*},\bphi(x_{i},a)\rangle)\cdot\text{exp}(\lambda^{-1}\langle\btheta^{*},\bphi(x_{i},a)\rangle)}{\text{exp}\left(\lambda^{-1}\langle \btheta'-\btheta^{*},\bphi(x_{i},a)\rangle\right)\cdot \mathbb{E}_{\pi_{\rm{ref}}}\left[\text{exp}(\lambda^{-1}\langle \btheta^{*},\bphi(x_{i},a)\rangle)\right]}  \notag
            \\ &=\frac{\pi_{\btheta^{*}}(a|x_{i})}{\pi_{\rm{ref}}(a|x_{i})}.
        \end{aligned}
    \end{equation}
    Therefore, it holds that 
    \begin{equation}\label{eq:stationary policy equiv ground truth}
        \small
        \pi_{\btheta^{\prime}}(a|x_{i})=\pi_{\btheta^{*}}(a|x_{i}),\forall (x_{i},a) \in \bS \times \mathcal{A}.
    \end{equation}
    Combining \eqref{eq:stationary policy equiv ground truth} with \eqref{eq:empirical loss}, we have
    \begin{equation*}
        \small
        \begin{aligned}
            J_{\bS}(\pi_{\btheta'})&=\frac{1}{n}\sum_{i=1}^{n}\mathbb{E}_{a \sim \pi_{\btheta^{\prime}}(\cdot|x_{i})}\left[r(x_{i},a)-\log \frac{\pi_{\btheta^{\prime}}(a|x_{i})}{\pi_{\rm{ref}}(a|x_{i})}\right]=\frac{1}{n}\sum_{i=1}^{n}\sum_{a} \pi_{\btheta^{\prime}}(a|x_{i})\left[r(x_{i},a)-\log \frac{\pi_{\btheta^{\prime}}(a|x_{i})}{\pi_{\rm{ref}}(a|x_{i})}\right]
            \\ & =\frac{1}{n}\sum_{i=1}^{n}\sum_{a} \pi_{\btheta^{*}}(a|x_{i})\left[r(x_{i},a)-\log \frac{\pi_{\btheta^{*}}(a|x_{i})}{\pi_{\rm{ref}}(a|x_{i})}\right] =\frac{1}{n}\sum_{i=1}^{n}\mathbb{E}_{a \sim \pi_{\btheta^{*}}(\cdot|x_{i})}\left[r(x_{i},a)-\log \frac{\pi_{\btheta^{*}}(a|x_{i})}{\pi_{\rm{ref}}(a|x_{i})}\right]=J_{\bS}(\pi_{\btheta^{*}}).
        \end{aligned}
    \end{equation*}
    By combining with Proposition \ref{pro:expected optimum is empirical optimum}, we prove the conclusion.
\end{proof}
\begin{remark}
    This proposition connects our definition of $\btheta_{\bS}^{*}$ in \eqref{eq:empirical optimal} to global optima. It implies that finding a stationary point is equivalent to finding the empirical global optimum, i.e., $J_{\bS}(\pi_{\btheta_{\bS}^{*}}) \ge J_{\bS}(\pi_{\btheta})$ for any $\btheta \in \mathbb{R}^{d}$.
\end{remark}

    

Next, we introduce two lemmas, which states the smoothness of $J_{\bS}(\pi_{\btheta})$ w.r.t. $\btheta$.

\begin{lemma}\label{lem:smoothness coefficient}
    Under Assumptions \ref{ass:bounded} and \ref{ass:ground truth r}, the empirical risk $J_{\bS}(\pi_{\btheta})$ is $L_{f}-$smoothness with coefficient $8RC^{3} + C^{2}$ within the region $\Theta_{R}$. That says, 
    \begin{equation}
        \small
        \|\nabla^{2}_{\btheta} J_{\bS}(\pi_{\btheta})\| \leq 8RC^{3} + C^{2}.
    \end{equation}
\end{lemma}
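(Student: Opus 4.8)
The plan is to start from the closed-form gradient in Lemma~\ref{lem:gradient}, namely $\nabla_{\btheta} J_{\bS}(\pi_{\btheta}) = \bV_{\bS}(\btheta)(\btheta^{*} - \btheta)$, and differentiate once more. Since $J_{\bS} = \frac{1}{n}\sum_{i} f_{\btheta}(x_i)$ and the spectral norm is convex, it suffices to bound the per-prompt Hessian $\nabla^2_{\btheta} f_{\btheta}(x)$ uniformly in $x$ and then average over $i$. Writing $\bV_x(\btheta) = \Var_{\pi_{\btheta}(a\mid x)}[\bphi(x,a)]$, the product rule applied to $\bV_x(\btheta)(\btheta^{*}-\btheta)$ produces two contributions: the term $-\bV_x(\btheta)$, from differentiating the factor $(\btheta^{*}-\btheta)$, and a third-order term $[\nabla_{\btheta}\bV_x(\btheta)](\btheta^{*}-\btheta)$, from differentiating the covariance itself. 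Thus $\nabla^2_{\btheta} f_{\btheta}(x) = -\bV_x(\btheta) + [\nabla_{\btheta}\bV_x(\btheta)](\btheta^{*}-\btheta)$, and I would bound the two pieces separately.

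For the first piece I would use that $\bV_x(\btheta)$ is the covariance of a bounded random vector: under Assumption~\ref{ass:bounded}, $\sigma_{\max}(\bV_x(\btheta)) = \max_{\|\bv\|=1}\Var_{\pi_{\btheta}}[\langle\bphi,\bv\rangle] \le \mE_{\pi_{\btheta}}[\|\bphi\|^2] \le C^2$, so $\|\bV_x(\btheta)\| \le C^2$. The heart of the argument is the second piece. The key observation is that, because $\pi_{\btheta}$ is an exponential family in the natural parameter $\btheta$ with sufficient statistic $\bphi(x,a)$ (cf.\ the log-derivative identity \eqref{eq:log-trick}), the derivative of the covariance is exactly the third central-moment tensor: for each coordinate $l$, $\partial_{\theta_l}\bV_x(\btheta) = \mE_{\pi_{\btheta}}[(\bphi-\bar{\bphi})(\bphi-\bar{\bphi})^{\top}(\phi_l - \bar{\phi}_l)]$ with $\bar{\bphi} = \mE_{\pi_{\btheta}}[\bphi]$. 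I would verify this identity by differentiating $\bV_x(\btheta) = \mE[\bphi\bphi^{\top}] - \bar{\bphi}\bar{\bphi}^{\top}$ coordinate by coordinate and checking that the mean-correction terms cancel.

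Given this identity, the contracted matrix $\bM := [\nabla_{\btheta}\bV_x(\btheta)](\btheta^{*}-\btheta)$ is symmetric with entries $\bM_{jk} = \mE_{\pi_{\btheta}}[(\phi_j-\bar{\phi}_j)(\phi_k-\bar{\phi}_k)\langle\bphi-\bar{\bphi},\btheta^{*}-\btheta\rangle]$. To bound its spectral norm I would reduce to the quadratic form against a unit vector $\bv$,
\[
\bv^{\top} \bM \bv = \mE_{\pi_{\btheta}}\!\left[\langle\bphi-\bar{\bphi},\bv\rangle^{2}\,\langle\bphi-\bar{\bphi},\btheta^{*}-\btheta\rangle\right],
\]
and control the two factors using the uniform bound $\|\bphi-\bar{\bphi}\| \le 2C$ (from Assumption~\ref{ass:bounded}, since $\|\bphi\|\le C$ and $\|\bar{\bphi}\|\le C$), the covariance estimate $\mE_{\pi_{\btheta}}[\langle\bphi-\bar{\bphi},\bv\rangle^{2}] = \bv^{\top}\bV_x(\btheta)\bv \le C^2$, and $\|\btheta^{*}-\btheta\| \le 2R$ — valid because $\|\btheta^{*}\|\le D \le R$ by Assumption~\ref{ass:ground truth r} and $\btheta\in\Theta_R$, so both parameters lie in $\Theta_R$. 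Careful bookkeeping of these factors yields $\|\bM\| = O(RC^3)$, matching the coefficient $8RC^3$ claimed; combining with the first piece and averaging over $i=1,\dots,n$ gives $\|\nabla^2_{\btheta}J_{\bS}(\pi_{\btheta})\| \le 8RC^3 + C^2$.

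The step I expect to be the main obstacle is establishing the third-cumulant identity $\partial_{\btheta}\bV_x(\btheta) = M_3$ cleanly and then contracting and bounding the resulting tensor: the naive term-by-term differentiation of $\bV_x$ produces several mean-correction terms whose cancellation must be checked, and the spectral-norm estimate on a third-order object only becomes tractable after the reduction to the quadratic form $\bv^{\top}\bM\bv$ rather than a direct operator-norm computation. By comparison, the covariance bound $\|\bV_x\|\le C^2$ and the final averaging over the $n$ prompts are routine.
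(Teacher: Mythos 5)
Your proof is correct, and while it shares the paper's skeleton---differentiate the closed-form gradient $\nabla_{\btheta} f_{\btheta}(x) = \bV_{x}(\btheta)(\btheta^{*}-\btheta)$ from Lemma \ref{lem:gradient}, bound the $-\bV_{x}(\btheta)$ piece by $C^{2}$, and bound the contracted derivative-of-covariance piece by $\cO(RC^{3})$---it executes the crucial third-order step by a genuinely different route. The paper never centers the moments: it passes to $g_{i}(\bu,\btheta)=\bu^{\top}\bV_{x_{i}}(\btheta)\bu$, applies the score identity \eqref{eq:log-trick}, and bounds the three resulting partially-raw-moment terms separately, accumulating $2RC^{3}+4RC^{3}+2RC^{3}=8RC^{3}$ in \eqref{eq:empirical hessian 1}. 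You instead exploit the exponential-family structure of the policy class (natural parameter $\btheta$, sufficient statistic $\bphi$, base measure $\pi_{\rm{ref}}$, so $\bV_{x}(\btheta)=\nabla^{2}Z_{\btheta}(x)$), identify $\nabla_{\btheta}\bV_{x}$ with the totally symmetric third-cumulant tensor, and the cancellation you flag as the main obstacle does go through: direct differentiation gives $\partial_{\theta_{l}}\bV_{jk}=\mE[\phi_{j}\phi_{k}(\phi_{l}-\bar{\phi}_{l})]-\bar{\phi}_{j}\Cov(\phi_{k},\phi_{l})-\bar{\phi}_{k}\Cov(\phi_{j},\phi_{l})$, which is exactly the expansion of $\mE[(\phi_{j}-\bar{\phi}_{j})(\phi_{k}-\bar{\phi}_{k})(\phi_{l}-\bar{\phi}_{l})]$. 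Your quadratic-form reduction then yields $|\bv^{\top}\bM\bv|\le \sup_{a}|\langle\bphi-\bar{\bphi},\btheta^{*}-\btheta\rangle|\cdot \bv^{\top}\bV_{x}(\btheta)\bv \le 2R\cdot 2C\cdot C^{2}=4RC^{3}$, i.e.\ the sharper smoothness constant $4RC^{3}+C^{2}$, which implies the stated $8RC^{3}+C^{2}$ a fortiori (using $\|\btheta^{*}-\btheta\|\le D+R\le 2R$ since $R=3D$). Two details to make explicit in a write-up: in the Hessian the contraction of $\nabla_{\btheta}\bV_{x}$ with $\btheta^{*}-\btheta$ is over the column index of $\bV_{x}$, not the differentiation index, and it is precisely the total symmetry of the third-cumulant tensor that lets you identify the two and that makes $\bM$ symmetric---which you need for $\|\bM\|=\max_{\|\bv\|=1}|\bv^{\top}\bM\bv|$; the paper's proof relies on the same symmetry implicitly when it rewrites the operator norm as $\langle\nabla_{\btheta}g_{i}(\bu,\btheta),\btheta^{*}-\btheta\rangle$ in \eqref{eq:g_i}. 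Net effect: same decomposition and same log-derivative engine, but your centered-cumulant route avoids the paper's term-by-term raw-moment bookkeeping and buys a constant twice as small.
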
  
\begin{proof}
    To simplify the notations, we define $\Sigma_{x}(\btheta) = \mE_{\pi_{\btheta}(a\mid x)}[\bphi(x,a)\bphi(x,a)^{\top}]$, and $\bmu_{x}(\btheta) = \mE_{\pi_{\btheta}(a\mid x)}[\bphi(x,a)]$. Then we know 
    \begin{equation}\label{eq:empirical hessian}
        \small
        \begin{aligned}
            \left\|\nabla^{2} J_{\bS}(\pi_{\btheta})\right\| & = \frac{1}{n}\left\|\sum_{i=1}^{n}\nabla_{\btheta}(\Sigma_{x_{i}}(\btheta) - \bmu_{x_{i}}(\btheta)\bmu_{x_{i}}(\btheta)^{\top})(\btheta^{*} - \btheta) - \bV_{x_{i}}(\btheta)\right\| \\
            & \le \frac{1}{n}\sum\limits_{i=1}^{n}\left(\left\|\nabla_{\btheta}(\Sigma_{x_{i}}(\btheta) - \bmu_{x_{i}}(\btheta)\bmu_{x_{i}}(\btheta)^{\top})(\btheta^{*} - \btheta)\right\| + \left\|V_{x_{i}}(\btheta)\right\|\right).	
        \end{aligned}
    \end{equation}
    For the first term, we denote $g_{i}(\bu,\btheta)=\bu^{\top}(\Sigma_{x_{i}}(\btheta) - \bmu_{x_{i}}(\btheta)\bmu_{x_{i}}(\btheta)^{\top})\bu$, then for any $x_{i}$ we have
    \begin{equation}\label{eq:g_i}
        \small
        \begin{aligned}
            &\left\|\nabla_{\btheta}(\Sigma_{x_{i}}(\btheta) - \bmu_{x_{i}}(\btheta)\bmu_{x_{i}}(\btheta)^{\top})(\btheta^{*} - \btheta)\right\|
            \\ =&\max_{\bu\in \mathbb{R}^{d},\|\bu\|\le1}\bu^{\top}\nabla_{\btheta}(\Sigma_{x_{i}}(\btheta) - \bmu_{x_{i}}(\btheta)\bmu_{x_{i}}(\btheta)^{\top})(\btheta^{*} - \btheta)\bu
            \\ =&\max_{\bu\in \mathbb{R}^{d},\|\bu\|\le1}\left\langle\nabla_{\btheta }g_{i}(\bu,\btheta),\btheta^{*} - \btheta\right\rangle.
        \end{aligned}
    \end{equation}
    By applying lemma\eqref{eq:log-trick}, we have
    \begin{equation}\label{eq:g_i gradient}
        \small 
        \begin{aligned}
            &\nabla_{\btheta }g_{i}(\bu,\btheta) \\
            =&\nabla_{\btheta}\left(\mathbb{E}_{\pi_{\btheta}}[\bu^{\top}\bphi(x_{i},a)]^{2}-\mathbb{E}_{\pi_{\btheta}}^{2}[\bu^{\top}\bphi(x_{i},a)]\right)
        \\ = &\mathbb{E}_{\pi_{\btheta}}\left[\nabla_{\btheta}\log \pi_{\btheta}(a|x_{i})\cdot\left(\bu^{\top}\bphi(x_{i},a)\right)^{2}\right]-2\mathbb{E}_{\pi_{\btheta}}\left[\bu^{\top}\bphi(x_{i},a)\right]\mathbb{E}_{\pi_{\btheta}}\left[\nabla_{\btheta}\log \pi_{\btheta}(a|x_{i})\cdot\bu^{\top}\bphi(x_{i},a)\right]
        \\ =& \mathbb{E}_{\pi_{\btheta}}\left[\left(\bphi(x_{i},a)-\bmu_{x_{i}}(\btheta)\right)\cdot\left(\bu^{\top}\bphi(x_{i},a)\right)^{2}\right]-2\mathbb{E}_{\pi_{\btheta}}\left[\bu^{\top}\bphi(x_{i},a)\right]\mathbb{E}_{\pi_{\btheta}}\left[\left(\bphi(x_{i},a)-\bmu_{x_{i}}(\btheta)\right)\cdot\bu^{\top}\bphi(x_{i},a)\right]
        \\ =&\mathbb{E}_{\pi_{\btheta}}\left[\bphi(x_{i},a)\left(\bu^{\top}\bphi(x_{i},a)\right)^{2}\right]-2\mathbb{E}_{\pi_{\btheta}}\left[\bu^{\top}\bphi(x_{i},a)\right]\mathbb{E}_{\pi_{\btheta}}\left[\bphi(x_{i},a)\bu^{\top}\bphi(x_{i},a)\right]
        \\
        & -\bmu_{x_{i}}(\btheta)\left(\Var_{\pi_{\btheta}}\left(\bu^{\top}\bphi(x_{i},a)\right)-\mathbb{E}_{\pi_{\btheta}}^{2}\left[\bu^{\top}\bphi(x_{i},a)\right]\right).
        \end{aligned}
    \end{equation}
    Substituting \eqref{eq:g_i gradient} into \eqref{eq:g_i}, we obtain
    \begin{equation}\label{eq:empirical hessian 1}
        \small
        \begin{aligned}
            &\left\|\nabla_{\btheta}(\Sigma_{x_{i}}(\btheta) - \bmu_{x_{i}}(\btheta)\bmu_{x_{i}}(\btheta)^{\top})(\btheta^{*} - \btheta)\right\| 
            \\ \le &\max_{\bu\in \mathbb{R}^{d},\|\bu\|\le1}\left\|\nabla_{\btheta }g_{i}(\bu,\btheta)\right\|\cdot\left\|\btheta^{*} - \btheta\right\|
            \\ \le & 2R\left\|\mathbb{E}_{\pi_{\btheta}}\left[\bphi(x_{i},a)\left(\bu^{\top}\bphi(x_{i},a)\right)^{2}\right]\right\|+4R\left\|\mathbb{E}_{\pi_{\btheta}}\left[\bu^{\top}\bphi(x_{i},a)\right]\mathbb{E}_{\pi_{\btheta}}\left[\bphi(x_{i},a)\bu^{\top}\bphi(x_{i},a)\right]\right\|
            \\ &+2R\left\|\bmu_{x_{i}}(\btheta)\left(\Var_{\pi_{\btheta}}\left(\bu^{\top}\bphi(x_{i},a)\right)-R\mathbb{E}_{\pi_{\btheta}}^{2}\left[\bu^{\top}\bphi(x_{i},a)\right]\right)\right\|
            \\ \le& 2RC^{3}+4RC^{3}+2RC\max\left\{\Var_{\pi_{\btheta}}\left(\bu^{\top}\bphi(x_{i},a)\right),\mathbb{E}_{\pi_{\btheta}}^{2}\left[\bu^{\top}\bphi(x_{i},a)\right]\right\}
            \\ \le &8RC^{3}.
        \end{aligned}
    \end{equation}
    For the second term in \eqref{eq:empirical hessian}, we have
    \begin{equation}\label{eq:covariance norm}
        \small
        \begin{aligned}
            \left\|\bV_{x_{i}}(\btheta)\right\|=\left\|\Sigma_{x_{i}}(\btheta) - \bmu_{x_{i}}(\btheta)\bmu_{x_{i}}(\btheta)^{\top}\right\| \le \max\left\{\left\|\Sigma_{x_{i}}(\btheta)\right\|,\left\|\bmu_{x_{i}}(\btheta)\bmu_{x_{i}}(\btheta)^{\top}\right\|\right\}\le C^{2}.
        \end{aligned}
    \end{equation}
    Then, by combining \eqref{eq:empirical hessian}, \eqref{eq:empirical hessian 1} and \eqref{eq:covariance norm}, we can prove that the Hessian matrix has all eigenvalues with absolute values smaller than $8RC^{3} + C^{2}$.   
\end{proof}

\begin{lemma}\label{lem:Holder continuous}
     Let $\bV_{x}(\btheta) = \Var_{\pi_{\btheta}(a\mid x)}[\bphi(a, x)]$, it satisfies the Lipschitz continuous condition 
    \begin{equation}
        \small
        \|\bV_{x}(\btheta_{1}) - \bV_{x}(\btheta_{2})\| \leq 3C^{\frac{5}{2}}\|\btheta_{1} - \btheta_{2}\|^{\frac{1}{2}},
    \end{equation}
    for any $\btheta_{1}, \btheta_{2}\in\mathbb{R}^{d}$ and $x$. 
\end{lemma}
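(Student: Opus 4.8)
The plan is to route the entire estimate through the total variation distance between the two Boltzmann policies $\pi_{\btheta_1}$ and $\pi_{\btheta_2}$; this is what will produce the square-root exponent, via Pinsker's inequality. The starting point is the decomposition $\bV_x(\btheta) = \mE_{\pi_{\btheta}}[\bphi(x,a)\bphi(x,a)^{\top}] - \bmu_x(\btheta)\bmu_x(\btheta)^{\top}$ with $\bmu_x(\btheta) = \mE_{\pi_{\btheta}}[\bphi(x,a)]$. By the triangle inequality it suffices to control the second-moment gap $\|\mE_{\pi_{\btheta_1}}[\bphi\bphi^{\top}] - \mE_{\pi_{\btheta_2}}[\bphi\bphi^{\top}]\|$ and the mean-outer-product gap $\|\bmu_x(\btheta_1)\bmu_x(\btheta_1)^{\top} - \bmu_x(\btheta_2)\bmu_x(\btheta_2)^{\top}\|$ separately.

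First I would bound each piece by $\TV(\pi_{\btheta_1},\pi_{\btheta_2})$. Both $\bphi\bphi^{\top}$ (spectral norm at most $C^2$ by Assumption \ref{ass:bounded}) and $\bphi$ (norm at most $C$) are bounded test functions, so writing $\mE_{\pi_{\btheta_1}}[g]-\mE_{\pi_{\btheta_2}}[g]=\sum_a g(a)\bigl(\pi_{\btheta_1}(a\mid x)-\pi_{\btheta_2}(a\mid x)\bigr)$ and centering $g$ by a constant (which leaves the sum unchanged since $\sum_a(\pi_{\btheta_1}-\pi_{\btheta_2})=0$, so only the oscillation of $g$ enters) controls each gap by the oscillation times $\TV$. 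For the outer-product term I would additionally use $\bmu_1\bmu_1^{\top}-\bmu_2\bmu_2^{\top} = \tfrac12\bigl[(\bmu_1+\bmu_2)(\bmu_1-\bmu_2)^{\top} + (\bmu_1-\bmu_2)(\bmu_1+\bmu_2)^{\top}\bigr]$ together with $\|\bmu_x(\btheta_i)\|\le C$ to reduce it to $\|\bmu_x(\btheta_1)-\bmu_x(\btheta_2)\|$, itself a $\TV$-controlled quantity. Collecting the two pieces yields a bound of the form $\|\bV_x(\btheta_1)-\bV_x(\btheta_2)\| \le 3C^2\,\TV(\pi_{\btheta_1},\pi_{\btheta_2})$ once the centering constants are optimized.

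It then remains to convert the $\TV$ distance into $\|\btheta_1-\btheta_2\|^{1/2}$. Pinsker's inequality (already used in \eqref{eq:first gap}) gives $\TV(\pi_{\btheta_1},\pi_{\btheta_2}) \le \sqrt{\tfrac12 D_{KL}(\pi_{\btheta_1}\,\|\,\pi_{\btheta_2})}$, so I need a \emph{linear}-in-$\|\btheta_1-\btheta_2\|$ bound on the KL divergence. This is exactly what the identity in \eqref{eq:kl gap}, obtained from Lemma \ref{lem:energy gap}, supplies: $D_{KL}(\pi_{\btheta_1}\,\|\,\pi_{\btheta_2}) = \langle\mE_{\pi_{\btheta_1}}[\bphi],\btheta_1-\btheta_2\rangle + \langle\int_0^1\mE_{\pi_{\btheta_t}}[\bphi]\,dt,\btheta_1-\btheta_2\rangle$, and bounding each inner product by Cauchy--Schwarz with $\|\mE[\bphi]\|\le C$ gives $D_{KL} \le 2C\|\btheta_1-\btheta_2\|$. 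Substituting back produces $\TV \le \sqrt{C}\,\|\btheta_1-\btheta_2\|^{1/2}$ and hence the claimed $3C^{5/2}\|\btheta_1-\btheta_2\|^{1/2}$.

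The main obstacle, and the one genuine conceptual choice, is the exponent. A naive Lipschitz estimate obtained by differentiating $\bV_x$ as in the Hessian computation of Lemma \ref{lem:smoothness coefficient} would yield the wrong, linear modulus of continuity, and a quadratic KL bound (which would restore Lipschitzness) holds only locally. The Hölder-$\tfrac12$ exponent is therefore a feature rather than a defect: it arises precisely from pushing the \emph{global} linear KL bound through the square root in Pinsker, which is what makes the estimate valid for all $\btheta_1,\btheta_2\in\bbR^d$, including far-apart parameters, where the right-hand side correctly stays comparable to the trivial bound $\|\bV_x(\btheta)\|\le C^2$ from \eqref{eq:covariance norm}. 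The only remaining care is the bookkeeping of constants when passing from expectation gaps to $\TV$, so as to land on the stated factor.
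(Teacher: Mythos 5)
Your proposal is correct and follows essentially the same route as the paper's proof: split $\bV_{x}(\btheta_{1})-\bV_{x}(\btheta_{2})$ into the second-moment and mean-outer-product parts, bound each by $\TV(\pi_{\btheta_{1}},\pi_{\btheta_{2}})$ using $\|\bphi\|\le C$, and then convert $\TV$ to $\|\btheta_{1}-\btheta_{2}\|^{1/2}$ via Pinsker together with the exact KL identity of \eqref{eq:kl gap} (Lemma \ref{lem:energy gap}), which is precisely what the paper invokes when it cites the proof of Lemma \ref{lem:general stability gap step I}. Your symmetrized outer-product identity versus the paper's difference-of-squares in the quadratic form $\bu^{\top}(\cdot)\bu$ is only a cosmetic difference, and your reading of the H\"older-$\tfrac12$ exponent as coming from the global linear KL bound is exactly right.
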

\begin{proof}
    Due to the definition, we have 
    \begin{equation}
        \small
        \begin{aligned}
            & \|\bV_{x}(\btheta_{1}) - \bV_{x}(\btheta_{2})\| = \sup_{\bu:\|\bu\| = 1}\bu^{\top}(\bV_{x}(\btheta_{1}) - \bV_{x}(\btheta_{2}))\bu \\
            & \leq \left|\mE_{\pi_{\btheta_{1}}(a\mid x)}\left[(\bu^{\top}\bphi(a, x))^{2}\right] - \mE_{\pi_{\btheta_{2}}(a\mid x)}\left[(\bu^{\top}\bphi(a, x))^{2}\right]\right| + \left|\left(\bu^{\top}\mE_{\pi_{\btheta_{1}}(a\mid x)}\left[\bphi(a, x)\right]\right)^{2} - \left(\bu^{\top}\mE_{\pi_{\btheta_{2}}(a\mid x)}\left[\bphi(a, x)\right]\right)^{2}\right|\\
            & \leq C^{2}\TV(\pi_{\btheta_{1}}(a\mid x), \pi_{\btheta_{2}}(a\mid x)) + 2C^{2}\TV(\pi_{\btheta_{1}}(a\mid x), \pi_{\btheta_{2}}(a\mid x))\\
            & \leq  3C^{\frac{5}{2}}\|\btheta_{1} - \btheta_{2}\|^{\frac{1}{2}},
        \end{aligned}
    \end{equation}
    which proves our conclusion. Here the last inequality is based on the proof of Lemma \ref{lem:general stability gap step I}.
\end{proof}

The following lemma shows that the loss function $f_{\btheta}(x)$ is bounded.
\begin{lemma}\label{lem:bounded loss}

Under Assumption \ref{ass:bounded}, for any $\btheta \in \Theta_{R}$ and $x$, we have $|f_{\btheta}(x)| \leq 3RC$, which implies $|J_{\bS}(\pi_{\btheta})| \leq 3RC$ as well.

\end{lemma}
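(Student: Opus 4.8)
The plan is to bound $f_{\btheta}(x)$ by splitting it into its reward and KL parts via the triangle inequality,
\[
|f_{\btheta}(x)| \le \left|\mathbb{E}_{\pi_{\btheta}(a\mid x)}[r(x,a)]\right| + \left|D_{KL}(\pi_{\btheta}(\cdot\mid x)\,\|\,\pi_{\rm{ref}}(\cdot\mid x))\right|,
\]
and to control each piece using only the norm constraints in Assumptions \ref{ass:bounded} and \ref{ass:ground truth r}. For the reward term I would use $r(x,a)=\langle\btheta^{*},\bphi(x,a)\rangle$ together with Cauchy--Schwarz and $\|\btheta^{*}\|\le D$, $\|\bphi(x,a)\|\le C$, yielding $\left|\mathbb{E}_{\pi_{\btheta}}[r(x,a)]\right|\le \max_{a}|\langle\btheta^{*},\bphi(x,a)\rangle|\le DC\le RC$, where the last step uses $R=3D\ge D$. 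This part is entirely routine.

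The substantive work is bounding the KL term. Using the explicit form of $\pi_{\btheta}$ from Definition \ref{def:policy function}, I would write $\log\bigl(\pi_{\btheta}(a\mid x)/\pi_{\rm{ref}}(a\mid x)\bigr)=\langle\btheta,\bphi(x,a)\rangle - Z_{\btheta}(x)$, where $Z_{\btheta}(x)=\log\mathbb{E}_{\pi_{\rm{ref}}(a\mid x)}[\exp(\langle\btheta,\bphi(x,a)\rangle)]$ is the log-partition function as in Lemma \ref{lem:energy gap}. Taking expectation over $\pi_{\btheta}$ gives $D_{KL}=\mathbb{E}_{\pi_{\btheta}}[\langle\btheta,\bphi(x,a)\rangle] - Z_{\btheta}(x)$. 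The first summand is bounded by $RC$ by Cauchy--Schwarz and $\|\btheta\|\le R$ (valid since $\btheta\in\Theta_{R}$). The key step is the uniform two-sided bound on $Z_{\btheta}(x)$: because $|\langle\btheta,\bphi(x,a)\rangle|\le RC$ for every $a$, the integrand $\exp(\langle\btheta,\bphi(x,a)\rangle)$ lies in $[e^{-RC},e^{RC}]$, so $Z_{\btheta}(x)\in[-RC,RC]$ and hence $|Z_{\btheta}(x)|\le RC$. Combining the two estimates gives $|D_{KL}|\le 2RC$.

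Putting the pieces together yields $|f_{\btheta}(x)|\le RC+2RC=3RC$. Finally, since $J_{\bS}(\pi_{\btheta})=\frac{1}{n}\sum_{i=1}^{n} f_{\btheta}(x_{i})$ is an average of quantities each bounded in absolute value by $3RC$, the triangle inequality immediately gives $|J_{\bS}(\pi_{\btheta})|\le 3RC$. The only nontrivial ingredient is the uniform bound on the partition function; everything else follows from linearity of the reward and the boundedness assumptions, so I expect the argument to be short once the log-partition bound is in place.
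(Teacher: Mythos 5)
Your proposal is correct and follows essentially the same route as the paper's proof: both reduce the log-ratio $\log\bigl(\pi_{\btheta}(a\mid x)/\pi_{\rm{ref}}(a\mid x)\bigr)$ to $\langle\btheta,\bphi(x,a)\rangle - Z_{\btheta}(x)$ via the explicit Boltzmann form, bound the log-partition term uniformly by $RC$, and handle the linear pieces with Cauchy--Schwarz under $\|\btheta^{*}\|\le D\le R$ and $\|\btheta\|\le R$ (the paper merely groups the terms as $\langle\btheta^{*}-\btheta,\bphi(x,a)\rangle$ rather than splitting reward and KL first, a cosmetic difference). The concluding average argument for $|J_{\bS}(\pi_{\btheta})|\le 3RC$ is also identical to the paper's.
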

\begin{proof}
    According to the definition of $f_{\btheta}(x)$ in \eqref{eq:single sample loss}, we have:
    \begin{equation}
        \small
        \begin{aligned}
            \left|f_{\btheta}(x)\right|
             &\le \max_{x}\left|\mathbb{E}_{\pi_{\btheta}}\left[r(x,a)-\log\frac{\pi_{\btheta}(a|x)}{\pi_{\rm{ref}}(a|x)}\right]\right|
            \\ &=\max_{x}\left|\sum_{a\in \mathcal{A}}\pi_{\btheta}(a|x)\left(\left\langle\btheta^{*},\bphi(x,a)\right\rangle-\log\frac{\exp(\langle\btheta,\bphi(x,a)\rangle)}{\sum_{a^{\prime}}\pi_{\rm{ref}}(a^{\prime}|x)\exp(\langle\btheta,\bphi(x,a)\rangle)}\right)\right|
            \\ &\le\max_{x,a}\left|\left\langle\btheta^{*}-\btheta,\bphi(x,a)\right\rangle\right|+\max_{x,a}\left|\log(\exp(\langle\btheta,\bphi(x,a)\rangle))\right|
            \\ &\le 3RC.
        \end{aligned}
    \end{equation}
    Furthermore, according to the definition of $J_{\bS}(\pi_{\btheta})$ in \eqref{eq:empirical loss}, we have:
    \begin{equation}
        \small
        \begin{aligned}
            \left|J_{\bS}(\pi_{\btheta})\right|&=\left|\frac{1}{n}\sum_{i=1}^{n}f_{\btheta}(x_{i})\right|\le\max_{x}\left|f_{\btheta}(x)\right| \le 3RC.
        \end{aligned}
    \end{equation}
\end{proof}

\begin{lemma}\label{lem:gradient norm between two different sets}
For any $\btheta \in \Theta_{R}$, it holds that
    \begin{equation*}
        \small
        \left\|\nabla_{\btheta}J_{\bS}(\pi_{\btheta})-\nabla_{\btheta}J_{\bS^{\prime}}(\pi_{\btheta})\right\| \le \frac{4RC^{2}}{n}.
    \end{equation*}
\end{lemma}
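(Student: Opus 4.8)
The plan is to reduce the gradient difference to a difference of empirical covariance matrices and then exploit the single-sample perturbation structure of $\bS$ and $\bS'$. First I would invoke Lemma~\ref{lem:gradient}, which gives the clean factorization $\nabla_{\btheta}J_{\bS}(\pi_{\btheta}) = \bV_{\bS}(\btheta)(\btheta^{*} - \btheta)$ and likewise for $\bS'$. Since both gradients share the same vector factor $(\btheta^{*}-\btheta)$, subtracting yields
\[
\nabla_{\btheta}J_{\bS}(\pi_{\btheta})-\nabla_{\btheta}J_{\bS^{\prime}}(\pi_{\btheta}) = \left(\bV_{\bS}(\btheta) - \bV_{\bS^{\prime}}(\btheta)\right)(\btheta^{*} - \btheta),
\]
so that by submultiplicativity of the spectral norm it suffices to bound $\|\bV_{\bS}(\btheta) - \bV_{\bS^{\prime}}(\btheta)\|$ and $\|\btheta^{*}-\btheta\|$ separately.

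Next I would use the fact that $\bS$ and $\bS'$ differ in exactly one prompt (the standard uniform-stability setting, with $\bS = \{x_{1},\dots,x_{n}\}$ and $\bS' = \{x_{1},\dots,x_{n-1},x_{n}'\}$). Because $\bV_{\bS}(\btheta) = \frac{1}{n}\sum_{i=1}^{n}\Var_{\pi_{\btheta}(a\mid x_{i})}[\bphi(x_{i},a)]$ is an average over samples, all common terms cancel and only the differing index survives:
\[
\bV_{\bS}(\btheta) - \bV_{\bS^{\prime}}(\btheta) = \frac{1}{n}\left(\Var_{\pi_{\btheta}(a\mid x_{n})}[\bphi(x_{n},a)] - \Var_{\pi_{\btheta}(a\mid x_{n}')}[\bphi(x_{n}',a)]\right).
\]
Applying the triangle inequality together with the per-sample covariance bound $\|\bV_{x}(\btheta)\| \le C^{2}$ already established in \eqref{eq:covariance norm} of Lemma~\ref{lem:smoothness coefficient} then gives $\|\bV_{\bS}(\btheta) - \bV_{\bS^{\prime}}(\btheta)\| \le \frac{2C^{2}}{n}$.

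For the remaining factor I would bound $\|\btheta^{*}-\btheta\| \le \|\btheta^{*}\| + \|\btheta\| \le D + R \le 2R$, using $\|\btheta^{*}\|\le D$ from Assumption~\ref{ass:ground truth r}, $\|\btheta\|\le R$ from $\btheta\in\Theta_{R}$, and $D \le R$ since $R=3D$. Multiplying the two estimates produces the claimed bound $\frac{2C^{2}}{n}\cdot 2R = \frac{4RC^{2}}{n}$. This argument is essentially routine; there is no serious obstacle, and the only point requiring care is the bookkeeping of constants—specifically tracking the factor of two from the two surviving variance terms and the factor $2R$ from the crude bound on $\|\btheta^{*}-\btheta\|$, so that the final constant matches $4RC^{2}$ exactly rather than a looser multiple.
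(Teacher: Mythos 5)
Your proof is correct and follows essentially the same route as the paper: both factor the gradients via Lemma~\ref{lem:gradient}, reduce the difference to the single differing per-sample covariance term, and bound it using $\|\bV_{x}(\btheta)\|\le C^{2}$ together with $\|\btheta^{*}-\btheta\|\le 2R$. The only cosmetic difference is that the paper first isolates $\frac{1}{n}(\nabla_{\btheta}f_{\btheta}(x_{i})-\nabla_{\btheta}f_{\btheta}(x_{i}^{\prime}))$ before factoring, whereas you subtract the full-dataset factorizations and cancel the common terms—these are the same computation in a different order.
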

\begin{proof}
Since datasets $\bS$ and $\bS^{\prime}$ differ by one example, W.o.l.g. $x_{i} \in \bS$ and $x_{i}^{\prime} \in \bS^{\prime}$ are different, then
    \begin{equation*}
        \small 
        \begin{aligned}
            \left\|\nabla_{\btheta}J_{\bS}(\pi_{\btheta})-\nabla_{\btheta}J_{\bS^{\prime}}(\pi_{\btheta})\right\|= & \left\|\frac{1}{n}(\nabla_{\btheta}f_{\btheta}(x_{i})-\nabla_{\btheta}f_{\btheta}(x_{i}^{\prime}))\right\| \\
            = & \frac{1}{n}\left\|\left(\bV_{x_{i}}(\btheta)-\bV_{x_{i}^{\prime}}(\btheta)\right)(\btheta^{*}-\btheta)\right\| 
            \\ \le & \frac{1}{n}\left\|\bV_{x_{i}}(\btheta)-\bV_{x_{i}^{\prime}}(\btheta)\right\|\|\btheta^{*}-\btheta\|\\
            \le & \frac{2R}{n}(\|\bV_{x_{i}}(\btheta)\|+\|\bV_{x_{i}^{\prime}}(\btheta)\|) \\
            \le & \frac{4RC^{2}}{n}.
        \end{aligned}
    \end{equation*}
    where the last inequality is from \eqref{eq:covariance norm}.
\end{proof}
\begin{lemma}\label{lem:second-order moment}
    For any $\btheta \in \Theta_{R}$, we have 
    \[ \small \mE_{i_{t}}\left[\left|\nabla_{\btheta}f_{\btheta}(x_{i_{t}}) - \mE_{i_{t}}[\nabla_{\btheta}f_{\btheta}(x_{i_{t}})]\right|^{2}\right] \leq 16R^{2}C^{4}.\]
\end{lemma}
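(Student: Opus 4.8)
The plan is to reduce the second-moment (variance) bound to a uniform bound on the norm of the single-sample stochastic gradient $\nabla_{\btheta}f_{\btheta}(x_{i_{t}})$, and then absorb both the stochastic gradient and its mean via the triangle inequality. First I would invoke Lemma \ref{lem:gradient}, which gives the explicit form $\nabla_{\btheta}f_{\btheta}(x) = \bV_{x}(\btheta)(\btheta^{*}-\btheta)$ with $\bV_{x}(\btheta)=\Var_{\pi_{\btheta}(a\mid x)}[\bphi(x,a)]$. Since $\mE_{i_{t}}[\nabla_{\btheta}f_{\btheta}(x_{i_{t}})] = \frac{1}{n}\sum_{i=1}^{n}\nabla_{\btheta}f_{\btheta}(x_{i}) = \nabla_{\btheta}J_{\bS}(\pi_{\btheta})$, the quantity inside the expectation is the deviation of the per-sample gradient from the full empirical gradient, so controlling the per-sample norm controls everything.

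The key computation is the uniform bound $\|\nabla_{\btheta}f_{\btheta}(x)\| \le \|\bV_{x}(\btheta)\|\,\|\btheta^{*}-\btheta\| \le 2RC^{2}$ for every $x$ and every $\btheta \in \Theta_{R}$. Here the operator-norm factor $\|\bV_{x}(\btheta)\|\le C^{2}$ is already established in \eqref{eq:covariance norm}, and the distance factor $\|\btheta^{*}-\btheta\|\le 2R$ follows from $\|\btheta^{*}\|\le D \le R$ (since $R=3D$) together with $\|\btheta\|\le R$. Because $\mE_{i_{t}}[\nabla_{\btheta}f_{\btheta}(x_{i_{t}})]$ is an average of terms each obeying this same bound, its norm is likewise at most $2RC^{2}$.

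Finally I would combine these by the triangle inequality, writing
\[
\small
\left\|\nabla_{\btheta}f_{\btheta}(x_{i_{t}}) - \mE_{i_{t}}[\nabla_{\btheta}f_{\btheta}(x_{i_{t}})]\right\| \le \left\|\nabla_{\btheta}f_{\btheta}(x_{i_{t}})\right\| + \left\|\mE_{i_{t}}[\nabla_{\btheta}f_{\btheta}(x_{i_{t}})]\right\| \le 4RC^{2},
\]
so that the squared deviation is at most $16R^{2}C^{4}$ pointwise in $i_{t}$; taking the expectation over $i_{t}$ preserves this deterministic bound and yields the claim. This proof is essentially routine given the earlier structural results, so there is no genuine obstacle; the only point requiring minor care is the book-keeping of the constant, i.e.\ recognizing that the stated $16R^{2}C^{4}$ comes from the crude triangle-inequality bound $4RC^{2}$ rather than from the sharper variance identity $\mE[\|X-\mE X\|^{2}] = \mE\|X\|^{2}-\|\mE X\|^{2}$ (which would give the tighter $4R^{2}C^{4}$). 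I would use the triangle-inequality route to match the constant appearing in the statement.
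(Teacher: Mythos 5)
Your proof is correct and takes essentially the same route as the paper: both reduce the claim to the deterministic uniform bounds $\|\bV_{x}(\btheta)\|\le C^{2}$ and $\|\btheta^{*}-\btheta\|\le 2R$ via the per-sample gradient formula $\nabla_{\btheta}f_{\btheta}(x)=\bV_{x}(\btheta)(\btheta^{*}-\btheta)$, the only cosmetic difference being that the paper writes the deviation as $\frac{1}{n}\sum_{i\neq i_{t}}\left(\nabla_{\btheta}f_{\btheta}(x_{i_{t}})-\nabla_{\btheta}f_{\btheta}(x_{i})\right)$ and bounds each pairwise difference by $2\sup_{x}\|\bV_{x}(\btheta)\|\cdot\|\btheta^{*}-\btheta\|\le 4RC^{2}$, whereas your triangle inequality against the mean produces the identical $4RC^{2}$ and hence the same constant $16R^{2}C^{4}$. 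Your closing remark is also accurate: the variance identity $\mE\|X-\mE X\|^{2}=\mE\|X\|^{2}-\|\mE X\|^{2}$ would sharpen the constant to $4R^{2}C^{4}$, and the paper, like you, forgoes this in favor of the cruder bound stated in the lemma.
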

\begin{proof}
Firstly, due to the selection of the index $i_{t}$, $\nabla_{\btheta}f_{\btheta}(x_{i_{t}})$ is an unbiased estimation of $\nabla_{\btheta}J_{\bS}(\pi_{\btheta})$, i.e., $\mathbb{E}[\nabla_{\btheta}f_{\btheta}(x_{i_{t}})]=\nabla_{\btheta}J_{\bS}(\pi_{\btheta})$. Thus we have 
    \begin{equation}
        \small
        \begin{aligned}
    \mathbb{E}\left[\left\|\nabla_{\btheta}f_{\btheta}(x_{i_{t}})-\mathbb{E}[\nabla_{\btheta}f_{\btheta}(x_{i_{t}})]\right\|^{2}\right]  = & \mathbb{E}\left[\left\|\nabla_{\btheta}f_{\btheta}(x_{i_{t}})-\nabla_{\btheta}J_{\bS}(\pi_{\btheta})\right\|^{2}\right]
            \\ = & \mathbb{E}\left[\frac{1}{n^{2}}\left\|\sum_{i\neq i_{t}}\left(\nabla_{\btheta}f_{\btheta}(x_{i_{t}})-\nabla_{\btheta}f_{\btheta}(x_{i})\right)\right\|^{2}\right] 
            \\ \le & \sup_{x_{i},x_{j}\in \bS}\left\|\nabla_{\btheta}f_{\btheta}(x_{i})-\nabla_{\btheta}f_{\btheta}(x_{j})\right\|^{2} 
            \\ = & \sup_{x_{i},x_{i} \in \bS}\left\| (\bV_{x_{i}}(\btheta)-\bV_{x_{j}}(\btheta))(\btheta^{*}-\btheta) \right\|^{2}
            \\ \le & (2\sup_{x}\|\bV_{x}(\btheta)\|\cdot \|\btheta^{*}-\btheta\|)^{2}
            \\ \le &16R^{2}C^{4}.
        \end{aligned}
    \end{equation}
\end{proof}

\begin{remark}
    In fact, according to the proof of Lemma \ref{lem:second-order moment}, we have a stronger results: 
    
     Given $\bS$, for any $\btheta \in \Theta_{R}$, we have 
    \begin{equation}
        \small
        \begin{aligned}
            \sup_{x\in\bS}\|\nabla_{\btheta}f_{\btheta}(x)-\nabla_{\btheta}J_{\bS}(\pi_{\btheta})\|^{2} \le 16R^{2}C^{4} =G^{2}.
        \end{aligned}
    \end{equation}
\end{remark}

We now introduce two lemmas that describe the properties of the covariance matrices $\bV_{x}(\btheta)$ and $\bV_{\bS}(\btheta)$.
\begin{lemma}\label{lem:the same span space}
    Under Assumption \ref{ass:linear indep}, for any $\btheta_{1}, \btheta_{2}$ and $x$, it holds that
    \begin{equation}\label{eq:the same span space}
        \small
        \mathrm{C}(\bV_{x}(\btheta_{1})) = \mathrm{C}(\bV_{x}(\btheta_{2})), 
    \end{equation}
    when $d \ge |\cA|$. Specifically, for any $\btheta$, it holds that 
    \begin{equation*}
        \small
        \mathrm{C}(\bV_{x}(\btheta)) = \mathrm{C}(\bV_{x}(\btheta_{\bS}^{*})).
    \end{equation*}
\end{lemma}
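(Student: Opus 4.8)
The plan is to represent the covariance matrix as a weighted sum of rank-one outer products and to identify its column space with a \emph{difference space} of the feature vectors that manifestly does not depend on $\btheta$.

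First, using the identity $\Var[X] = \mE[(X - \mE X)(X - \mE X)^{\top}]$, I would write, for fixed $x$,
\[
\bV_{x}(\btheta) = \sum_{i=1}^{|\cA|} p_{i}(\btheta)\,\big(\bphi(x,a_{i}) - \bmu_{x}(\btheta)\big)\big(\bphi(x,a_{i}) - \bmu_{x}(\btheta)\big)^{\top},
\]
where $p_{i}(\btheta) = \pi_{\btheta}(a_{i}\mid x)$ and $\bmu_{x}(\btheta) = \mE_{\pi_{\btheta}(a\mid x)}[\bphi(x,a)]$. Because $\pi_{\rm{ref}}(\cdot\mid x) > 0$ and the inner products $\langle\btheta,\bphi(x,a)\rangle$ are finite under Assumption \ref{ass:bounded}, the softmax weights satisfy $p_{i}(\btheta) > 0$ for every $i$ and every $\btheta$.

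Second, I would invoke the elementary fact that for a positive-semidefinite matrix $\bM = \sum_{i} p_{i}\,\bv_{i}\bv_{i}^{\top}$ with strictly positive weights $p_{i}$, we have $\bx^{\top}\bM\bx = \sum_{i} p_{i}(\bv_{i}^{\top}\bx)^{2}$, so $\bM\bx = 0$ iff $\bv_{i}^{\top}\bx = 0$ for all $i$; hence $\mathrm{N}(\bM) = (\mathrm{span}\{\bv_{i}\})^{\perp}$ and therefore $\mathrm{C}(\bM) = \mathrm{span}\{\bv_{i} : i\}$. Applying this with $\bv_{i} = \bphi(x,a_{i}) - \bmu_{x}(\btheta)$ gives $\mathrm{C}(\bV_{x}(\btheta)) = \mathrm{span}\{\bphi(x,a_{i}) - \bmu_{x}(\btheta) : i = 1,\dots,|\cA|\}$. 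I would then show this span coincides with the $\btheta$-free difference space $V := \mathrm{span}\{\bphi(x,a_{i}) - \bphi(x,a_{j}) : i, j\}$: since $\bmu_{x}(\btheta) = \sum_{j} p_{j}(\btheta)\bphi(x,a_{j})$ is a convex combination of the features, $\bphi(x,a_{k}) - \bmu_{x}(\btheta) = \sum_{j} p_{j}(\btheta)\big(\bphi(x,a_{k}) - \bphi(x,a_{j})\big) \in V$ (one inclusion), while $\bphi(x,a_{i}) - \bphi(x,a_{j}) = (\bphi(x,a_{i}) - \bmu_{x}(\btheta)) - (\bphi(x,a_{j}) - \bmu_{x}(\btheta))$ (the other). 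As $V$ depends only on the features at $x$, I conclude $\mathrm{C}(\bV_{x}(\btheta_{1})) = V = \mathrm{C}(\bV_{x}(\btheta_{2}))$ for arbitrary $\btheta_{1},\btheta_{2}$, which specializes to the stated identity with $\btheta_{\bS}^{*}$. Assumption \ref{ass:linear indep} (with $|\cA|\le d$) enters only to pin down the rank, fixing $\dim V = |\cA|-1$; the span-invariance itself holds regardless.

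The main obstacle is the identification in the second step: equating the column space with the span of the difference vectors hinges entirely on every weight $p_{i}(\btheta)$ being strictly positive, so that no feature direction is annihilated by the weighting. Consequently the crux of the argument is the positivity of the softmax probabilities, which I would carefully justify from the boundedness of the features together with the strict positivity of $\pi_{\rm{ref}}$, rather than any delicate spectral estimate.
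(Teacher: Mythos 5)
Your proof is correct, and it arrives at the same structural characterization as the paper by a genuinely different, more self-contained route. The paper's proof factorizes $\bV_{x}(\btheta) = \Phi(x)\left(\Lambda_{\btheta} - \bp_{\btheta}\bp_{\btheta}^{\top}\right)\Phi(x)^{\top} = \Phi(x)C_{\bp_{\btheta}}\Phi(x)^{\top}$ with $\Phi(x) = \left(\bphi(x,a_{1}),\cdots,\bphi(x,a_{|\cA|})\right)$, and then delegates to two auxiliary algebra lemmas: Lemma \ref{lem:Cp}, which shows $\mathrm{N}(C_{\bp}) = \mathrm{span}\{\bone\}$ for any strictly positive probability vector $\bp$, and Lemma \ref{lem:Cp2}, which requires $\Phi(x)$ to have full column rank (i.e., Assumption \ref{ass:linear indep} with $d \ge |\cA|$) to conclude $\mathrm{C}(\Phi(x) C_{\bp}\Phi(x)^{\top}) = \{\Phi(x)\bw \mid \bone^{\top}\bw = 0\}$, a set manifestly independent of $\btheta$. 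You instead work directly with the centered rank-one decomposition $\sum_{i} p_{i}(\btheta)(\bphi(x,a_{i})-\bmu_{x}(\btheta))(\bphi(x,a_{i})-\bmu_{x}(\btheta))^{\top}$, read off $\mathrm{N}(\bV_{x}(\btheta))$ from the quadratic form, and identify $\mathrm{C}(\bV_{x}(\btheta))$ with the difference space $V = \mathrm{span}\{\bphi(x,a_{i})-\bphi(x,a_{j})\}$ via the convex-combination identity; note that $V$ coincides exactly with the paper's set $\{\Phi(x)\bw \mid \bone^{\top}\bw = 0\}$, so the two characterizations agree. Both arguments hinge on the same crux you correctly isolate: strict positivity of the softmax weights $p_{i}(\btheta)$ (the paper's Lemma \ref{lem:Cp} likewise assumes $\bp > 0$, and both proofs tacitly need $\pi_{\rm{ref}}(\cdot\mid x) > 0$, which the paper uses throughout without stating). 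What your route buys: it needs no separate matrix lemmas, and it dispenses with the full-column-rank hypothesis entirely — your observation that Assumption \ref{ass:linear indep} only pins down $\dim V = |\cA|-1$ while span-invariance holds regardless is a genuine (if mild) strengthening relative to the paper's Lemma \ref{lem:Cp2}, whose stated proof uses full column rank to invert $A^{\top}\bu = \bv$. What the paper's factorized form buys is the explicit $C_{\bp}$ structure and rank bookkeeping, which it reuses when extending to datasets in Lemma \ref{lem:general same span}; your conclusion plugs into that extension equally well.
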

\begin{proof}
    According to \eqref{lem:gradient}, we have 
    \begin{equation*}
        \small
        \begin{aligned}
            \bV_{x}(\btheta) & =\mathbb{E}_{\pi_{\btheta}}\left[\left(\bphi(x,a)-\mathbb{E}[\bphi(x,a^{\prime})]\right)\left(\bphi(x,a)-\mathbb{E}[\bphi(x,a^{\prime})]\right)^{\top}\right]
            \\ &=\mathbb{E}\left[\bphi(x,a)\bphi(x,a)^{\top}\right]-\mathbb{E}[\bphi(x,a)]\mathbb{E}[\bphi(x,a)]^{\top}
            \\ &=\sum_{i=1}^{|\mathcal{A}|}\bphi(x,a_{i})\pi_{\btheta}(a_{i}|x)\bphi(x,a_{i})^{\top}-\left(\sum_{i=1}^{|\mathcal{A}|}\bphi(x,a_{i})\pi_{\btheta}(a_{i}|x)\right)\left(\sum_{i=1}^{|\mathcal{A}|}\bphi(x,a_{i})\pi_{\btheta}(a_{i}|x)\right)^{\top}
            \\ & =\Phi(x)\Lambda_{\btheta}\Phi(x)^{\top}-\left(\Phi(x)\bp_{\btheta}\right)\left(\Phi(x)\bp_{\btheta}\right)^{\top}
            \\ & =\Phi(x)(\Lambda_{\btheta}-\bp_{\btheta}\bp_{\btheta}^{\top})\Phi(x)^{\top},
        \end{aligned}
    \end{equation*}
    where $\Phi(x) \triangleq \left(\bphi(x,a_{1}),\bphi(x,a_{2}),\cdots,\bphi({x,a_{|\cA|}})\right)\in \mathbb{R}^{d\times|\mathcal{A}|}$, $\Lambda_{\btheta}=\diag\{\pi_{\btheta}(a_{1}|x),\cdots,\pi_{\btheta}(a_{|\mathcal{A}|}|x)\}$ and $\bp_{\btheta}=(\pi_{\btheta}(a_{1}|x),\cdots,\pi_{\btheta}(a_{|\mathcal{A}|}|x))^{\top}$. Let $C_{\bp_{\btheta}}=\Lambda_{\btheta}-\bp_{\btheta}\bp_{\btheta}^{\top}$. 
    
    By Lemma \ref{lem:Cp2}, it holds that:
    \begin{equation*}
        \small
        \begin{aligned}
            \mathrm{C}(\bV_{x}(\btheta_{1}))=\mathrm{C}\left(\Phi(x)C_{\bp_{\btheta_{1}}}\Phi(x)^{\top}\right)=\{Aw|w^{\top}v=0,v=(1,\cdots,1)^{\top}\}=\mathrm{C}\left(\Phi(x)C_{\bp_{\btheta_{2}}}\Phi(x)^{\top}\right)=
        \mathrm{C}(\bV_{x}(\btheta_{2})),
        \end{aligned}
    \end{equation*}
    which proves our conclusion.
\end{proof}

\begin{lemma}\label{lem:general same span}
    Under Assumption \ref{ass:linear indep}, for any $\btheta_{1}, \btheta_{2}$ and dataset $\bS$, it holds
    \begin{align}\label{eq:general same span}
        \small
        \mathrm{C}(\bV_{\bS}(\btheta_{1}))=\mathrm{C}(\bV_{\bS}(\btheta_{2})),
    \end{align}
    when $d \ge |\cA|$. Specifically, for any $\btheta$, it holds that 
    \begin{equation*}
        \small
        \mathrm{C}(\bV_{\bS}(\btheta)) = \mathrm{C}(\bV_{\bS}(\btheta_{\bS}^{*})).
    \end{equation*}
\end{lemma}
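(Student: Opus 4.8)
The plan is to reduce the dataset-level claim to the single-prompt result already obtained in Lemma \ref{lem:the same span space}, exploiting the fact that $\bV_{\bS}(\btheta)$ is an average of the per-prompt covariance matrices, each of which is positive semidefinite. The starting point is the decomposition $\bV_{\bS}(\btheta) = \frac{1}{n}\sum_{i=1}^{n}\bV_{x_{i}}(\btheta)$, and the key structural fact I would establish is that the column space of a sum of positive semidefinite matrices equals the vector-space sum of the individual column spaces.

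First I would prove the auxiliary claim: for symmetric positive semidefinite matrices $\bA_{1},\dots,\bA_{n}$, one has $\mathrm{C}(\sum_{i}\bA_{i}) = \sum_{i}\mathrm{C}(\bA_{i})$. The cleanest route is through null spaces. Since each $\bA_{i}\succeq 0$, for any vector $\bv$ the identity $\bv^{\top}(\sum_{i}\bA_{i})\bv = \sum_{i}\bv^{\top}\bA_{i}\bv$ is a sum of nonnegative terms; hence $(\sum_{i}\bA_{i})\bv = 0$ forces each quadratic form $\bv^{\top}\bA_{i}\bv$ to vanish, and for a positive semidefinite matrix $\bv^{\top}\bA_{i}\bv = 0$ implies $\bA_{i}\bv = 0$. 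This shows $\mathrm{N}(\sum_{i}\bA_{i}) = \bigcap_{i}\mathrm{N}(\bA_{i})$. Taking orthogonal complements and using that for symmetric matrices $\mathrm{C}(\bA) = \mathrm{N}(\bA)^{\perp}$, together with the identity $(\bigcap_{i}U_{i})^{\perp} = \sum_{i}U_{i}^{\perp}$ for subspaces, yields the claim.

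With this in hand, the lemma follows quickly. Applying Lemma \ref{lem:the same span space} prompt-by-prompt gives $\mathrm{C}(\bV_{x_{i}}(\btheta_{1})) = \mathrm{C}(\bV_{x_{i}}(\btheta_{2}))$ for every $i$ — this is where Assumption \ref{ass:linear indep} and the condition $d \ge |\cA|$ enter, via the single-prompt lemma. Summing these equal subspaces and invoking the auxiliary claim,
\[
\mathrm{C}(\bV_{\bS}(\btheta_{1})) = \sum_{i=1}^{n}\mathrm{C}(\bV_{x_{i}}(\btheta_{1})) = \sum_{i=1}^{n}\mathrm{C}(\bV_{x_{i}}(\btheta_{2})) = \mathrm{C}(\bV_{\bS}(\btheta_{2})).
\]
The specialization to $\mathrm{C}(\bV_{\bS}(\btheta)) = \mathrm{C}(\bV_{\bS}(\btheta_{\bS}^{*}))$ is the case $\btheta_{1} = \btheta$, $\btheta_{2} = \btheta_{\bS}^{*}$.

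The only delicate point — and the main obstacle — is the auxiliary positive-semidefinite column-space identity; everything else is bookkeeping. In particular, one must note that the scalar factor $1/n$ is irrelevant to column spaces, and that the per-prompt equality from Lemma \ref{lem:the same span space} must hold uniformly across all $i$ before summing, since the sum-of-subspaces operation is only well-behaved once each summand is pinned down.
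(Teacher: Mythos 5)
Your proposal is correct and follows essentially the same route as the paper: the paper also reduces to the single-prompt Lemma~\ref{lem:the same span space} and uses the positive-semidefinite identity $\mathrm{C}(\bA+\bB)=\mathrm{C}(\bA)+\mathrm{C}(\bB)$ (its Lemma~\ref{lem:A two matrix column space}, proved via exactly your null-space intersection and orthogonal-complement argument, Lemmas~\ref{lem:A orthogonal complement} and~\ref{lem:A two matrix null space}), extending to $n$ prompts by induction. Your only deviation is proving the $n$-ary sum identity directly instead of inducting on the two-matrix case, which is a cosmetic streamlining rather than a different argument.
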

\begin{proof}
    We prove this lemma by induction on $n$, i.e., the number of elements in the set $\bS$. When $n=1$, \eqref{eq:general same span} becomes Lemma \ref{lem:the same span space}. Thus Lemma \ref{lem:the same span space} implies the conclusion. 
    
    For $n=2$, we prove that 
    \[\small \mathrm{C}\left(\bV_{x_{1}}(\btheta_{1})+\bV_{x_{2}}(\btheta_{1})\right)=\mathrm{C}\left(\bV_{x_{1}}(\btheta_{2})+\bV_{x_{2}}(\btheta_{2})\right),\] given the conditions that\[\small \mathrm{C}\left(\bV_{x_{1}}(\btheta_{1})\right)=\mathrm{C}\left(\bV_{x_{1}}(\btheta_{2})\right) \text{ and }\mathrm{C}\left(\bV_{x_{2}}(\btheta_{1})\right)=\mathrm{C}\left(\bV_{x_{2}}(\btheta_{2})\right).\]
    Then, the conclusion is implied by Lemma \ref{lem:A two matrix column space}.
    \par
    Now, suppose the lemma holds for $n = k \geq 2$, since $\small\mathrm{C}(\sum_{i=1}^{k}\bV_{x_{i}}(\btheta_{1}))=\mathrm{C}(\sum_{i=1}^{k}\bV_{x_{i}}(\btheta_{2}))$ and $\small \mathrm{C}(\bV_{x_{k+1}}(\btheta_{1}))=\mathrm{C}(\bV_{x_{k+1}}(\btheta_{2}))$, we prove the conclusion for $n = k + 1$ by Lemma \ref{lem:the same span space}. Then we prove our conclusion. 
\end{proof}

\begin{lemma}\label{lem:bounded parameter}
    Under Assumption \ref{ass:ground truth r}, for a given initial parameter $\btheta_{1}$ satisfying $\left\|\btheta_{1}\right\| \le D$, any parameter $\btheta$ obtained by \eqref{eq:GD update rule} and \eqref{eq:SGD update rule} satisfies 
    \begin{equation}
        \small
        \begin{aligned}
            \left\|\btheta\right\| \le 3D.
        \end{aligned}
    \end{equation}
\end{lemma}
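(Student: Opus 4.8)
The plan is to exploit the explicit linear-in-$\btheta$ form of the gradient established in Lemma \ref{lem:gradient}, namely $\nabla_{\btheta} J_{\bS}(\pi_{\btheta}) = \bV_{\bS}(\btheta)(\btheta^{*} - \btheta)$, which reveals that both update rules are mean-reverting maps toward the ground truth $\btheta^{*}$. Substituting this identity into \eqref{eq:GD update rule} turns the GA recursion into the affine form
\[
\btheta_{t+1} - \btheta^{*} = \left(\bI - \eta\, \bV_{\bS}(\btheta_{t})\right)(\btheta_{t} - \btheta^{*}),
\]
and the single-sample analogue $\nabla_{\btheta} f_{\btheta}(x) = \bV_{x}(\btheta)(\btheta^{*} - \btheta)$ turns \eqref{eq:SGD update rule} into $\btheta_{t+1} - \btheta^{*} = \left(\bI - \eta_{t}\, \bV_{x_{i_{t}}}(\btheta_{t})\right)(\btheta_{t} - \btheta^{*})$. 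The proof therefore reduces to showing that the linear operator applied at each step is non-expansive in the Euclidean norm.

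First I would record that $\bV_{\bS}(\btheta)$ and $\bV_{x}(\btheta)$ are symmetric positive semi-definite (being averages of covariance matrices) with spectral norm at most $C^{2}$, which follows directly from \eqref{eq:covariance norm}. Consequently their eigenvalues lie in $[0, C^{2}]$. Under the step sizes prescribed for the two algorithms, $\eta = 1/L_{f}$ for GA and $\eta_{t} = 1/(2 L_{f}\sqrt{t})$ for SGA, with $L_{f} = 8RC^{3} + C^{2} \ge C^{2}$, every step size satisfies $\eta\, \sigma_{\max}(\bV) \le C^{2}/L_{f} \le 1 < 2$. Hence each matrix $\bI - \eta\,\bV$ has eigenvalues in $[1 - \eta C^{2}, 1] \subseteq [0, 1]$, so its spectral norm is at most $1$.

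With non-expansiveness in hand, a one-line induction gives $\|\btheta_{t} - \btheta^{*}\| \le \|\btheta_{t-1} - \btheta^{*}\| \le \cdots \le \|\btheta_{1} - \btheta^{*}\|$ for both algorithms. The triangle inequality, together with $\|\btheta_{1}\| \le D$ (the initialization hypothesis) and $\|\btheta^{*}\| \le D$ (Assumption \ref{ass:ground truth r}), bounds the initial distance by $\|\btheta_{1} - \btheta^{*}\| \le 2D$, and finally $\|\btheta_{t}\| \le \|\btheta_{t} - \btheta^{*}\| + \|\btheta^{*}\| \le 2D + D = 3D$, as claimed.

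The only genuinely delicate point, and the step I expect to be the main obstacle to state cleanly, is verifying the contraction condition $\eta\,\sigma_{\max}(\bV) \le 2$ uniformly over all iterates: this is precisely where positive semi-definiteness of the covariance and the bound $\|\bV\| \le C^{2}$ must combine with the specific learning-rate choices. For SGA one must additionally observe that $\eta_{t}$ is decreasing in $t$, so the worst case is the first step, where $\eta_{1} C^{2} = C^{2}/(2L_{f}) \le 1/2$ already leaves ample margin; the remaining manipulations are routine.
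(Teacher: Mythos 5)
Your proof is correct and follows essentially the same route as the paper's: rewrite each update via Lemma \ref{lem:gradient} as $\btheta_{t+1}-\btheta^{*}=(\bI-\eta\,\bV(\btheta_{t}))(\btheta_{t}-\btheta^{*})$, show the matrix is non-expansive using $0\preceq\bV\preceq C^{2}\bI$ and the step-size bounds, and conclude by induction and the triangle inequality with $\|\btheta_{1}\|,\|\btheta^{*}\|\le D$. If anything, you are more complete than the paper, which handles only GA explicitly and dismisses SGA with ``analogously,'' whereas you spell out the per-sample identity $\nabla_{\btheta}f_{\btheta}(x)=\bV_{x}(\btheta)(\btheta^{*}-\btheta)$ and the decreasing step sizes $\eta_{t}$ needed for that case.
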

\begin{proof}
    We first analyze the case where the parameters are updated using \eqref{eq:GD update rule}. According to Lemma \ref{lem:gradient}, the learning rate $\eta=\frac{1}{L_{f}}$, for any $t\ge 1$, we have
    \begin{equation}\label{eq:nonexpansive of GA output}
        \small
        \begin{aligned}
            \btheta_{t+1}-\btheta^{*}=\btheta_{t}+\eta \nabla_{\btheta}J_{\bS}(\pi_{\btheta_{t}})-\btheta^{*}=\btheta_{t}-\btheta^{*}+\eta \bV_{\bS}(\btheta_{t})(\btheta^{*}-\btheta_{t})=\left(I-\eta \bV_{\bS}(\btheta_{t})\right)(\btheta_{t}-\btheta^{*}).
        \end{aligned}
    \end{equation}
    According to the definition of $\bV_{\bS}(\btheta_{t})$ and \eqref{eq:covariance norm}, for any $\bu$ satisfies $\|\bu\| = 1$, it holds that
    \begin{equation}
        \small 
        \begin{aligned}
            \bu^{\top}(I-\eta \bV_{\bS}(\btheta_{t}))\bu=\bu^{\top}\bu-\eta\cdot\bu^{\top}\bV_{\bS}(\btheta_{t})\bu \ge \bu^{\top}\bu(1-\eta\cdot C^{2})=1-\frac{1}{8RC+1} \ge 0.
        \end{aligned}
    \end{equation}
    Additionally, by the positive semi-definiteness of covariance matrix $\bV_{\bS}(\btheta)$, it holds that
    \begin{equation}
        \small
        \bu^{\top}(I-\eta \bV_{\bS}(\btheta_{t}))\bu=\bu^{\top}\bu-\eta\cdot\bu^{\top}\bV_{\bS}(\btheta_{t})\bu \le \bu^{\top}\bu =1,
    \end{equation}
    which means that the spectral radius of the matrix $I-\eta \bV_{\bS}(\btheta_{t})$ is no more than $1$. Combining with \eqref{eq:nonexpansive of GA output}, we have
    \begin{equation}
        \small
        \|\btheta_{t+1}\|\le\|\btheta_{t+1}-\btheta^{*}\|+\|\btheta^{*}\|=\|\left(I-\eta \bV_{\bS}(\btheta_{t})\right)(\btheta_{t}-\btheta^{*})\|+\|\btheta^{*}\| \le \|\btheta_{t}-\btheta^{*}\|+\|\btheta^{*}\| \le \|\btheta_{1}-\btheta^{*}\|+\|\btheta^{*}\| \le 3D.
    \end{equation}
    Analogously, the parameters obtained from \eqref{eq:SGD update rule} are also bounded by $\left\|\btheta\right\| \le 3D$.
\end{proof}
This lemma demonstrates that with proper initialization, the parameters obtained by gradient-based algorithms are naturally constrained to lie within a compact set.

\section{Some Technical Lemmas}
\concentrationinequality*
\begin{proof}
    According to Propositions \ref{pro:first order optimum is global optimum} and \ref{pro:expected optimum is empirical optimum}, it holds that
    \begin{equation*}
        \small
        J_{\bS}(\pi_{\btheta_{\bS}^{*}})=J_{\bS}(\pi_{\btheta^{*}}).
    \end{equation*}
    
    Let the dataset be $\small \bS = \{x_{1}, \cdots, x_{n}\}$. We define the random variable $X_{i}$ for each sample as: $X_{i} = f_{\btheta^{*}}(x_{i})$. Since the samples $x_{i}$ are drawn i.i.d. from the context space $\cX$, $X_{i}$ are independent random variables, therefore
    \begin{equation*}
        \small 
        J_{\bS}(\pi_{\btheta_{\bS}^{*}})=J_{\bS}(\pi_{\btheta^{*}})=\frac{1}{n}\sum_{i=1}^{n}X_{i}.
    \end{equation*}
    Then we can calculate the expectation of $X_{i}$ as follow:
    \begin{equation*}
        \small
        \mathbb{E}[X_{i}]=\mathbb{E}_{\bS}[J_{\bS}(\pi_{\btheta_{\bS}^{*}})]=\mathbb{E}_{\bS}[J_{\bS}(\pi_{\btheta^{*}})]=J(\pi_{\btheta^{*}}).
    \end{equation*}
    Furthermore, according to Lemma \ref{lem:bounded loss}, $X_{i}$ is bounded by:
    \begin{equation*}
        \small
        \left|X_{i}\right|=\left|f_{x_{i}}(\pi_{\btheta^{*}})\right| \le 3RC.
    \end{equation*}
    We apply the Chernoff-Hoeffding inequality for bounded random variables $X_{i}$. For the empirical mean of $n$ independent random variables $X_{i} \in [-3RC, 3RC]$, the probability of deviating from the expectation by more than $\epsilon$ is bounded by:
    \begin{equation}\label{eq:hoeffding inequality}
        \small
        \begin{aligned}
            \mathbb{P}\left( \left|J_{\bS}(\pi_{\btheta_{\bS}^{*}}) - J(\pi_{\btheta^{*}})\right| \ge \epsilon \right)=\mathbb{P}\left(\left|\sum_{i=1}^{n}\left(X_{i}-\mathbb{E}[X_{i}]\right)\right|\ge n\epsilon \right) \le 2 \exp\left( -\frac{2n\epsilon^{2}}{(6RC)^2} \right).
        \end{aligned}
    \end{equation}
    We set the failure probability in \eqref{eq:hoeffding inequality} to $\rho$, with a simple identity transformation, we complete the proof of this Lemma.
\end{proof}
\convergencerateofGA*
\begin{proof}
	Owing to the smoothness of $J_{\bS}(\pi_{\btheta})$, we know that 
	\begin{equation}
		\small
		\begin{aligned}
			J_{\bS}(\pi_{\btheta_{t + 1}}) - J_{\bS}(\pi_{\btheta_{t}}) 
             \geq \left\langle\nabla_{\btheta}J_{\bS}(\btheta_{t}), \btheta_{t + 1} - \btheta_{t}\right\rangle + \frac{L_{f}}{2}\|\btheta_{t + 1} - \btheta_{t}\|^{2} 
			= \frac{1}{2L_{f}}\|\nabla_{\btheta}J_{\bS}(\pi_{\btheta_{t}})\|^{2}.  
		\end{aligned}
	\end{equation}
	Taking a telescoping sum of the above inequality, we obtain 
	\begin{equation}
		\small
		\begin{aligned}
			\|\nabla_{\btheta}J_{\bS}(\pi_{\btheta_{\bS,T}^{\rm{GA}}})\|^{2} 
            \leq  \frac{2L_{f}}{T}(J_{\bS}(\pi_{\btheta_{T+1}}) - J_{\bS}(\pi_{\btheta_{1}})) 
            \leq  \frac{12L_{f}RC}{T},	
		\end{aligned}
	\end{equation}
	where the last inequality is obtained by Assumption \ref{ass:bounded} and Lemma \ref{lem:bounded loss}.  
\end{proof}


\paragraph{Lemma 4.}\emph{By taking $\eta_{t}=\frac{1}{2L_{f}\sqrt{t}}$\rm{(}$L_{f}$ is defined in Lemma \ref{lem:smoothness coefficient}\rm{)}, for any $\rho^{\prime} \in (0,1)$, with probability at least $1-\rho^{\prime}$, the parameter $\small \btheta_{\bS,T}^{\rm{SGA}}$ satisfies
     \begin{equation}
        \small
        \begin{aligned}
            \left\|\nabla_{\btheta}J_{\bS}(\pi_{\btheta_{\bS,T}^{\rm{SGA}}})\right\|^{2}
            \le \frac{6L_{f}}{\sqrt{T}}\left(\left(6RC+\frac{2R^{2}C^{4}}{L_{f}}\right)+\frac{2R^{2}C^{4}}{L_{f}}\log(T)+ \frac{1}{\lambda}\log\left(\frac{1}{\rho^{\prime}}\right)\right).
        \end{aligned}
    \end{equation}
    where $\lambda>0$ satisfies $\frac{e^{\lambda}-\lambda-1}{\lambda}\le\frac{L_{f}}{8R^{2}C^{4}}$.}
\begin{proof}
    Let the stochastic gradient noise be \[\small \nu_{t}=\nabla_{\btheta}f_{\btheta_{t}}(x_{i_{t}})-\nabla_{\btheta}J_{\bS}(\pi_{\btheta_{t}}).\] 
    By the $L_{f}-$smoothness of the empirical objective $J_{\bS}$ according to Lemma \ref{lem:smoothness coefficient}, for the SGA update rule in \eqref{eq:SGD update rule}, we have the following equality:
    \begin{equation}
        \small
        \begin{aligned}
            J_{\bS}(\pi_{\btheta_{t+1}})-J_{\bS}(\pi_{\btheta_{t}}) \ge& \left\langle \nabla_{\btheta}J_{\bS}(\pi_{\btheta_{t}}),\btheta_{t+1}-\btheta_{t}\right\rangle-\frac{L_{f}}{2}\left\|\btheta_{t+1}-\btheta_{t}\right\|^{2}
            \\ =&\left\langle \nabla_{\btheta}J_{\bS}(\pi_{\btheta_{t}}),\eta_{t}(\nu_{t}+\nabla_{\btheta}J_{\bS}(\pi_{\btheta_{t}}))\right\rangle-\frac{L_{f}}{2}\eta_{t}^{2}\left\|\nu_{t}+\nabla_{\btheta}J_{\bS}(\pi_{\btheta_{t}})\right\|^{2}
            \\ =& \left(\eta_{t}-\frac{L_{f}}{2}\eta_{t}^{2}\right)\left\|\nabla_{\btheta}J_{\bS}(\pi_{\btheta_{t}})\right\|^{2}-\frac{L_{f}}{2}\eta_{t}^{2}\|\nu_{t}\|^{2}+(\eta_{t}-L_{f}\eta_{t}^{2})\langle \nabla_{\btheta}J_{\bS}(\pi_{\btheta_{t}}),\nu_{t}\rangle.
        \end{aligned}
    \end{equation}
    By rearranging the terms, we have
    \begin{equation}\label{eq:single t}
        \small
        \begin{aligned}
            \left(\eta_{t}-\frac{L_{f}}{2}\eta_{t}^{2}\right)\left\|\nabla_{\btheta}J_{\bS}(\pi_{\btheta_{t}})\right\|^{2} 
            \le J_{\bS}(\pi_{\btheta_{t+1}})-J_{\bS}(\pi_{\btheta_{t}})+\frac{L_{f}}{2}\eta_{t}^{2}\|\nu_{t}\|^{2}-(\eta_{t}-L_{f}\eta_{t}^{2})\langle \nabla_{\btheta}J_{\bS}(\pi_{\btheta_{t}}),\nu_{t}\rangle.
        \end{aligned}
    \end{equation}
    Summing the \eqref{eq:single t} from $t=1$ to $T$, we obtain that:
    \begin{equation}\label{eq:sum t}
        \small
        \begin{aligned}
            \sum_{t=1}^{T}\left(\eta_{t}-\frac{L_{f}}{2}\eta_{t}^{2}\right)\left\|\nabla_{\btheta}J_{\bS}(\pi_{\btheta_{t}})\right\|^{2} &\le
            \Delta J+\frac{L_{f}}{2}\sum_{t=1}^{T}\eta_{t}^{2}\|\nu_{t}\|^{2}-\sum_{t=1}^{T}(\eta_{t}-L_{f}\eta_{t}^{2})\langle \nabla_{\btheta}J_{\bS}(\pi_{\btheta_{t}}),\nu_{t}\rangle
            \\ & \le \Delta J+\frac{L_{f}G^{2}}{2}\sum_{t=1}^{T}\frac{1}{4L_{f}^{2}t}-\sum_{t=1}^{T}(\eta_{t}-L_{f}\eta_{t}^{2})\langle \nabla_{\btheta}J_{\bS}(\pi_{\btheta_{t}}),\nu_{t}\rangle
            \\ &\le \Delta J+\frac{G^{2}}{8L_{f}}(1+\log(T))\underbrace{-\sum_{t=1}^{T}(\eta_{t}-L_{f}\eta_{t}^{2})\langle \nabla_{\btheta}J_{\bS}(\pi_{\btheta_{t}}),\nu_{t}\rangle}_{\rm{Term}\ 1}.
            \end{aligned}
    \end{equation}
    where $\Delta J=J_{\bS}(\pi_{\btheta_{T+1}})-J_{\bS}(\pi_{\btheta_{1}})$, the second equality is from Lemma \ref{lem:second-order moment} that for any $t$, we have $\|\nu_{t}\|^{2} \in [0,G^{2}]$. 
    
    We now need to prove the left term in \eqref{eq:sum t} is small with high probability. 

    By using Freedman’s inequality (Theorem 1.6 in\cite{freedman1975tail}), and combining $\mE[\langle \nabla_{\btheta}J_{\bS}(\btheta_{t}),\nu_{t} \rangle] = 0$, for any $\lambda \neq 0$, with probability at least $1-\rho_{1}$, we have 
    \begin{equation}\label{eq:term 1}
        \small
        \begin{aligned}
            -\sum_{t=1}^{T}m_{t}\langle \nabla_{\btheta}J_{\bS}(\btheta_{t}),\nu_{t} \rangle &\le \frac{e^{\lambda}-\lambda-1}{\lambda}\sum_{t=1}^{T}\mathbb{E}\left[\|m_{t}\langle \nabla_{\btheta}J_{\bS}(\btheta_{t}),\nu_{t} \rangle\|^{2}|\mathcal{F}_{t}\right]+\frac{1}{\lambda}\log\left(\frac{1}{\rho}\right)
            \\ & \le \frac{e^{\lambda}-\lambda-1}{\lambda}\sum_{t=1}^{T}\frac{\eta_{t}^{2}}{4}\|\nabla_{\btheta}J_{\bS}(\btheta_{t})\|^{2}\cdot\mathbb{E}\left[\|\nu_{t}\|^{2}|\mathcal{F}_{t-1}\right]+\frac{1}{\lambda}\log\left(\frac{1}{\rho}\right)
            \\ &\le \frac{e^{\lambda}-\lambda-1}{\lambda}\cdot \frac{G^{2}}{4}\cdot \sum_{t=1}^{T}\eta_{t}^{2}\|\nabla_{\btheta}J_{\bS}(\btheta_{t})\|^{2}+\frac{1}{\lambda}\log\left(\frac{1}{\rho}\right),
        \end{aligned}
    \end{equation}
    where $m_{t}=\eta_{t}-L_{f}\eta_{t}^{2}\le \frac{\eta_{t}}{2}$ for any $t \ge 1$.

    Combining \eqref{eq:term 1} with \eqref{eq:sum t}, it holds that
    \begin{equation}\label{eq:sum norm with lr}
        \small
        \begin{aligned}
            \sum_{t=1}^{T}\left(\eta_{t}-\frac{L_{f}}{2}\eta_{t}^{2}-\frac{e^{\lambda}-\lambda-1}{\lambda}\cdot \frac{G^{2}}{4}\eta_{t}^{2}\right)\left\|\nabla_{\btheta}J_{\bS}(\pi_{\btheta_{t}})\right\|^{2} \le \Delta J+\frac{G^{2}}{8L_{f}}(1+\log(T))+ \frac{1}{\lambda}\log\left(\frac{1}{\rho}\right).
        \end{aligned}
    \end{equation}
    Based on the property of the function $\frac{e^{\lambda}-\lambda-1}{\lambda}$, we can choose $\lambda$ such that $\frac{e^{\lambda}-\lambda-1}{\lambda}\le\frac{2L_{f}}{G^{2}}$. Therefore, for any $t\ge 1$, we have
    \begin{equation}\label{eq:simplify lr}
        \small
        \begin{aligned}
            \eta_{t}-\frac{L_{f}}{2}\eta_{t}^{2}-\frac{e^{\lambda}-\lambda-1}{\lambda}\cdot \frac{G^{2}}{4}\eta_{t}^{2} \ge \eta_{t}\left(1-\frac{L_{f}}{2}\frac{1}{2L_{f}\sqrt{t}}-\frac{2L_{f}}{G^{2}}\cdot \frac{G^{2}}{4}\frac{1}{2L_{f}\sqrt{t}}\right) \ge \frac{\eta_{t}}{2}.
        \end{aligned}
    \end{equation}
    Take \eqref{eq:simplify lr} into \eqref{eq:sum norm with lr}, then with probability at least $1-\rho_{1}$, it holds that
    \begin{equation}\label{eq:expected gradient norm with t}
        \small
        \begin{aligned}
            \frac{1}{\sum_{t=1}^{T}\frac{\eta_{t}}{2}}\sum_{t=1}^{T}\frac{\eta_{t}}{2}\left\|\nabla_{\btheta}J_{\bS}(\pi_{\btheta_{t}})\right\|^{2} 
            & \le \frac{1}{\sum_{t=1}^{T}\frac{\eta_{t}}{2}}\cdot \sum_{t=1}^{T}(\eta_{t}-\frac{L_{f}}{2}\eta_{t}^{2}-\frac{e^{\lambda}-\lambda-1}{\lambda}\cdot \frac{G^{2}}{4}\eta_{t}^{2})\left\|\nabla_{\btheta}J_{\bS}(\pi_{\btheta_{t}})\right\|^{2} 
            \\ & \le \frac{2L_{f}}{\sqrt{T+1}-1}\left(\Delta J+\frac{G^{2}}{8L_{f}}(1+\log(T))+ \frac{1}{\lambda}\log\left(\frac{1}{\rho}\right)\right)
            \\ & \le \frac{6L_{f}}{\sqrt{T}}\left(\left(\Delta J+\frac{G^{2}}{8L_{f}}\right)+\frac{G^{2}}{8L_{f}}\log(T)+ \frac{1}{\lambda}\log\left(\frac{1}{\rho}\right)\right).
        \end{aligned}
    \end{equation}
    According to the definition of $\btheta_{\bS,T}^{\rm{SGA}}$ in \ref{def:SGA output}, it holds that
    \begin{equation}\label{eq:SGA convergence rate}
        \small 
        \left\|\nabla_{\btheta}J_{\bS}(\pi_{\btheta_{\bS,T}^{\rm{SGA}}})\right\|^{2}\le\mathbb{E}_{\tau}\left[\left\|\nabla_{\btheta}J_{\bS}(\pi_{\btheta_{\tau}})\right\|^{2}\right] \le \frac{6L_{f}}{\sqrt{T}}\left(\left(\Delta J+\frac{G^{2}}{8L_{f}}\right)+\frac{G^{2}}{8L_{f}}\log(T)+ \frac{1}{\lambda}\log\left(\frac{1}{\rho}\right)\right).
    \end{equation}
    According to Lemmas \ref{lem:bounded loss} and \ref{lem:second-order moment}, we have $\small \Delta J \le 6RC$ and $\small G^{2}=16R^{2}C^{4}$. By combining with \eqref{eq:SGA convergence rate}, we prove our conclusion.
\end{proof}

\section{Proofs for Section \ref{sec:rl with sufficient prompt}}\label{app:sec_sufficient prompts}

\generalizationwithsufficientprompts*
\begin{proof}
	As clarified before, to make $\btheta_{\bS}^{*} = \btheta^{*}$, it is sufficient to show $\bV_{\bS}(\btheta^{*}_{\bS})$ is positive-definite. This can be done by combining the concentration of $\bV_{\bS}(\btheta^{*})$ and the continuity obtained from Lemma \ref{lem:Holder continuous}. Firstly, let us show the concentration result. By defining $\bV(\btheta) = \mE_{x}[\bV_{x}(\btheta)]$, and i.i.d. Rademacher random variable $\{\epsilon_{i}\}_{i=1}^{n}$ independent with $\bS$, for any $\gamma > 0$, we have 
	\begin{equation}
		\small
		\begin{aligned}
			\mE\left[\exp\left(\gamma\|\bV_{\bS}(\btheta) - \bV(\btheta)\|\right)\right] = & \mE\left[\exp\left(\gamma\sup_{\bu:\|\bu\| = 1}\bu^{\top}\left(\bV_{\bS}(\btheta) - \bV(\btheta)\right)\bu\right)\right]	\\
			\leq & \mE\left[\exp\left(2\gamma\sup_{\bu:\|\bu\|=1}\bu^{\top}\left(\frac{1}{n}\sum_{i=1}^{n}\epsilon_{i}\bV_{x_{i}}(\btheta)\right)\bu\right)\right] \\
			= & \mE\left[\mE\left[\exp\left(2\gamma\sup_{\bu:\|\bu\|=1}\frac{1}{n}\sum_{i=1}^{n}\epsilon_{i}\bu^{\top}\bV_{x_{i}}(\btheta)\bu\right)\mid \bS\right]\right] \\
			\leq & \exp\left(\frac{8C^{4}\gamma^{2}}{n}\right),
		\end{aligned}
	\end{equation}
	where the first inequality is from Proposition 4.11 (b) in \cite{wainwright2019high}, the last inequality is obtained from $\|\bV_{x}(\btheta)\|\leq C^{2}$, and the sub-Gaussian property of Rademacher random variable. With the above inequality, by applying Markov's inequality, we have 
	\begin{equation}\label{eq:single concentration}
		\small
		\begin{aligned}
			\bbP\left(\left\|\bV_{\bS}(\btheta) - \bV(\btheta)\right\| \geq \delta\right) \leq \inf_{\gamma}\exp\left(-\gamma\delta + \frac{8C^{4}\gamma^{2}}{n}\right) 
            = \exp\left(-\frac{n\delta^{2}}{32C^{4}}\right),
		\end{aligned}
	\end{equation} 
	by taking $\gamma = \frac{n\delta}{16C^{4}}$. Since $\rm{diam}(\Theta_{R}) \leq R$, from \citep{zhang2017empirical} we know that $\Theta_{R}$ has a $r$-covering $\cC$ such that for any $\btheta\in\Theta_{R}$ there exists a $\btheta^{\prime}\in\cC$ satisfies $\|\btheta - \btheta^{\prime}\| \leq r$ and $|\cC|\leq (3R/r)^{d}$. Then by applying Bonferroni inequality and \eqref{eq:single concentration}, we have 
	\begin{equation}\label{eq:uniform concentration}
		\small
        \begin{aligned}
            \bbP\left(\sup_{\btheta\in\cC}\left\|\bV_{\bS}(\btheta) - \bV(\btheta)\right\| \geq \delta\right) 
            \leq |\cC|\exp\left(-\frac{n\delta^{2}}{32C^{4}}\right) 
            \leq \exp\left(d\log{\left(\frac{3R}{r}\right)} - \frac{n\delta^{2}}{32C^{4}}\right). 
        \end{aligned}
	\end{equation}
	By applying Lemma \ref{lem:Holder continuous}, we know that when $\{\sup_{\btheta\in\cC}\left\|\bV_{\bS}(\btheta) - \bV(\btheta)\right\| \leq \delta\}$ happens, for any $\btheta\in\Theta_{R}$ and its corresponded $\btheta^{\prime}\in\cC$, 
	\begin{equation}
		\small
		\begin{aligned}
			\|\bV_{\bS}(\btheta) - \bV(\btheta)\| 
            \leq \|\bV_{\bS}(\btheta) - \bV_{\bS}(\btheta^{\prime})\| + \|\bV(\btheta) - \bV(\btheta^{\prime})\|	+ \|\bV_{\bS}(\btheta^{\prime}) - \bV(\btheta^{\prime})\| 
			 \leq 6C^{\frac{5}{2}}r^{\frac{1}{2}} + \delta.
		\end{aligned}
	\end{equation}
	Then, by Weyl’s theorem, we have 
	\begin{equation}
		\small
		  \begin{aligned}
		      |\sigma_{\min}(\bV_{\bS}(\btheta)) - \sigma_{\min}(\bV(\btheta))| 
                \leq \|\bV_{\bS}(\btheta) - \bV(\btheta)\| 
              \leq 6C^{\frac{5}{2}}r^{\frac{1}{2}} + \delta,
		  \end{aligned} 
	\end{equation}
	which implies $\sigma_{\min}(\bV_{\bS}(\btheta)) > 0$ by taking $r = (1 / 6C^{\frac{5}{2}})^{2}\delta^{2}$ and $\delta < \sigma / 2$. By plugging these into \eqref{eq:uniform concentration}, we obtain our conclusion. 
\end{proof}

\section{Proofs for Section \ref{sec:rl without sufficient prompts}}\label{app:sec_insufficient prompts}

\paragraph{Theorem 2.}\emph{Under Assumptions \ref{ass:bounded}, \ref{ass:ground truth r} and \ref{ass:span space}, we have 
\begin{equation}
    \small
    \cE_{\rm{stab}}(\btheta_{\bS}^{*}) \leq \left(\frac{\Gamma_{1}}{n}+2RC\sqrt{\frac{\Gamma_{1}}{n}}\right),
\end{equation}
where \[\small\Gamma_{1}=\sup_{\bS,\bS^{\prime}}\left\{4nR\epsilon_{n}+\frac{8RC^{3}\cdot \sigma_{\max}(\bV_{\bS}(\btheta_{\bS}^{*}))}{\sigma_{\min}^{+}(\bV_{\bS}(\btheta_{\bS}^{*}))\sigma_{\min}^{+}(\bV_{\bS}(\btheta_{\bS^{\prime}}^{*}))}\right\}.\]
Then,
for any $\rho \in (0,1)$, the following bound holds with probability at least $1-\rho$ over the randomness of $\bS$,  
    \begin{equation}
        \small
        \begin{aligned}
            \left|J_{\bS}(\pi_{\btheta_{\bS}^{*}})-J(\pi_{\btheta_{\bS}^{*}})\right|
             \le \cE_{\rm{stab}}(\btheta_{\bS}^{*})\log (n)\log\left(\frac{1}{\rho}\right)+6RC\sqrt{\frac{\log(1/\rho)}{n}} 
             = \tilde{\cO}\left(\frac{1}{\sqrt{n}}\right).
        \end{aligned}
    \end{equation}}

\begin{proof}
    As clarified in Section \ref{sec:suboptimality}, the generalization error is implied by invoking uniform algorithmic stability. To obtain this, for any $\btheta_{\bS}^{*}$ and $\btheta_{\bS}^{*}$ satisfies $\small \|\nabla_{\btheta}J_{\bS}(\btheta_{\bS}^{*})\|=\|\nabla_{\btheta}J_{\bS^{\prime}}(\btheta_{\bS^{\prime}}^{*})\|=0$, we explore the algorithmic stability. We first evaluate the gradient norm of $\btheta_{\bS^{\prime}}^{*}$ with respect to the objective $J_{\bS}$. According to Lemma \ref{lem:gradient norm between two different sets}, it holds that
    \begin{equation*}
        \small
        \begin{aligned}
            \left\|\nabla_{\btheta}J_{\bS}(\btheta_{\bS^{\prime}}^{*})\right\|=\|\nabla_{\btheta}J_{\bS}(\btheta_{\bS^{\prime}}^{*})-\nabla_{\btheta}J_{\bS^{\prime}}(\btheta_{\bS^{\prime}}^{*})+\nabla_{\btheta}J_{\bS^{\prime}}(\btheta_{\bS^{\prime}}^{*})\|\le \|\nabla_{\btheta}J_{\bS}(\btheta_{\bS^{\prime}}^{*})-\nabla_{\btheta}J_{\bS^{\prime}}(\btheta_{\bS^{\prime}}^{*})\|+\|\nabla_{\btheta}J_{\bS^{\prime}}(\btheta_{\bS^{\prime}}^{*})\| \le\frac{4RC^{2}}{n}.
        \end{aligned}
    \end{equation*}
    By using Lemma \ref{lem:key lemma}, it holds that
    \begin{equation}
        \small
        \begin{aligned}
           \cE_{\rm{stab}}(\btheta_{\bS}^{*})= & \sup_{x\in \mathcal{X},\bS,\bS^{\prime}}|f_{\btheta_{\bS}^{*}}(x)-f_{\btheta_{\bS^{\prime}}^{*}}(x)| 
            \\ \le & \sup_{\bS,\bS^{\prime}}\left\{4R\epsilon_{n}+\frac{8RC^{3}\cdot \sigma_{\max}(\bV_{\bS}(\btheta_{\bS}^{*}))}{n\sigma_{\min}^{+}(\bV_{\bS}(\btheta_{\bS}^{*}))\sigma_{\min}^{+}(\bV_{\bS}(\btheta_{\bS^{\prime}}^{*}))}\right\}
             +2CR\sup_{\bS,\bS^{\prime}}\left\{\sqrt{4R\epsilon_{n}+\frac{8RC^{3}\cdot \sigma_{\max}(\bV_{\bS}(\btheta_{\bS}^{*}))}{n\sigma_{\min}^{+}(\bV_{\bS}(\btheta_{\bS}^{*}))\sigma_{\min}^{+}(\bV_{\bS}(\btheta_{\bS^{\prime}}^{*}))}}\right\}
            \\ = & \frac{\Gamma_{1}}{n}+2CR\sqrt{\frac{\Gamma_{1}}{n}},
        \end{aligned}
    \end{equation}
    which proves our conclusion by invoking Proposition \ref{pro:generalization bound}.
\end{proof}

\paragraph{Theorem 3.}\emph{Under Assumption \ref{ass:bounded}, \ref{ass:ground truth r}, \ref{ass:span space}, for any $\rho \in (0,1)$, the following bound holds with probability at least $1-\rho$ over the randomness of $\bS$, 
    \begin{equation}
        \small 
        \begin{aligned}
            |J(\pi_{\btheta_{\bS}^{*}})-J(\pi_{\btheta^{*}})| 
            \le \left(\frac{\Gamma_{1}}{n}+2RC\sqrt{\frac{\Gamma_{1}}{n}}\right)\log (n)\log\left(\frac{2}{\rho}\right)
            +6RC\sqrt{\frac{\log(2/\rho)}{n}}
            +3RC\sqrt{\frac{2\log(4/\rho)}{n}}
            = \tilde{\cO}\left(n^{-\frac{1}{2}}\right).
        \end{aligned}
    \end{equation}}
\begin{proof}
    Firstly, we decompose the suboptimality gap of $\pi_{\btheta_{\bS}^{*}}$ as follows:
    \begin{equation}\label{eq:exact optimization decomposition}
        \small
        \begin{aligned}
            \left|J(\pi_{\btheta_{\bS}^{*}})-J(\pi_{\btheta^{*}})\right|
            \le & \left|J(\pi_{\btheta_{\bS}^{*}})-J_{\bS}(\pi_{\btheta_{\bS}^{*}})\right|+\left|J_{\bS}(\pi_{\btheta_{\bS}^{*}})-J(\pi_{\btheta^{*}})\right|.
        \end{aligned}
    \end{equation}
    Then, according to Theorem \ref{thm:generalization of exact optimization}, for $\rho_{1} \in (0,1)$, with probability at least $1-\rho_{1}$, it holds that
    \begin{equation}\label{eq:exact optimization decomposition term1}
        \small
        \begin{aligned}
            \left|J(\pi_{\btheta_{\bS}^{*}})-J_{\bS}(\pi_{\btheta_{\bS}^{*}})\right| \le \left(\frac{\Gamma_{1}}{n}+2RC\sqrt{\frac{\Gamma_{1}}{n}}\right)\log(n)\log\left(\frac{1}{\rho_{1}}\right)+6RC\sqrt{\frac{\log(1/\rho_{1})}{n}}.
        \end{aligned}
    \end{equation}
    Now we turn to the second term in \eqref{eq:exact optimization decomposition}. By using Lemma \ref{lem:concentration of empirical optimal}, for $\rho_{2} \in (0,1)$, with probability at least $1-\rho_{2}$, it holds that
    \begin{equation}\label{eq:exact optimization decomposition term2}
        \small
        \begin{aligned}
            \left|J_{\bS}(\pi_{\btheta_{\bS}^{*}})-J(\pi_{\btheta^{*}})\right| \le 3RC\sqrt{\frac{2\log(2/\rho_{2})}{n}}.
        \end{aligned}
    \end{equation}
    Let $\rho_{1}=\rho_{2}=\frac{\rho}{2}$, by combining \eqref{eq:exact optimization decomposition term1} and \eqref{eq:exact optimization decomposition term2}, we complete the proof.
\end{proof}



\section{Proofs for Section \ref{sec:suboptimality of GA}}\label{app:sec_GA}

\optimizationofGA*
\begin{proof}
    According to Lemma \ref{lem:convergence of GA}, we have 
    \begin{equation*}
        \small
        \left\|\nabla_{\btheta}J_{\bS}(\pi_{\btheta_{\bS,T}^{\rm{GA}}})\right\| \le 2\sqrt{\frac{3L_{f}RC}{T}} \quad \text{and} \quad \left\|\nabla_{\btheta}J_{\bS}(\pi_{\btheta_{\bS}^{*}})\right\|=0.
    \end{equation*}
    By using Lemma \ref{lem:key lemma}, it holds that
    \begin{equation*}
        \small
        \begin{aligned}
            & \left|J_{\bS}(\pi_{\btheta_{\bS,T}^{\rm{GA}}})-J_{\bS}(\pi_{\btheta_{\bS}^{*}})\right| \le \frac{1}{n}\sum_{i=1}^{n}\left|f_{\btheta_{\bS,T}^{\rm{GA}}}(x_{i})-f_{\btheta_{\bS}^{*}}(x_{i})\right|\le \sup_{x\in \mathcal{X}}\left|f_{\btheta_{\bS,T}^{\rm{GA}}}(x)-f_{\btheta_{\bS}^{*}}(x)\right| 
            \\ \le & 4R\epsilon_{n}+4C\sqrt{\frac{3L_{f}RC}{T}}\frac{ \sigma_{\max}(\bV_{\bS}(\btheta_{\bS}^{*}))}{\sigma_{\min}^{+}(\bV_{\bS}(\btheta_{\bS}^{*}))\sigma_{\min}^{+}(\bV_{\bS}(\btheta_{\bS,T}^{\rm{GA}}))}+2RC\sqrt{4R\epsilon_{n}+4C\sqrt{\frac{3L_{f}RC}{T}}\frac{ \sigma_{\max}(\bV_{\bS}(\btheta_{\bS}^{*}))}{\sigma_{\min}^{+}(\bV_{\bS}(\btheta_{\bS}^{*}))\sigma_{\min}^{+}(\bV_{\bS}(\btheta_{\bS,T}^{\rm{GA}}))}}.
        \end{aligned}
    \end{equation*}
    We highlight that the bound above is of the form $X + \alpha\sqrt{X}$. In the regime where $n$ and $T$ are large, $X$ is small enough such that the linear term $X$ becomes negligible compared to the square root term $\sqrt{X}$ (i.e., $X = o(\sqrt{X})$). Thus, the bound is dominated by the square root component. Applying the inequality $\sqrt{a+b} \le \sqrt{a} + \sqrt{b}$, we obtain that
    \begin{equation*}
        \small
        \left|J_{\bS}(\pi_{\btheta_{\bS,T}^{\rm{GA}}})-J_{\bS}(\pi_{\btheta_{\bS}^{*}})\right| \lesssim \sqrt{\epsilon_{n}}+\sqrt{\Gamma^{\rm{GA}}_{\bS,\bS}}T^{-\frac{1}{4}} =\cO(T^{-\frac{1}{4}}+n^{-\frac{1}{2}}).
    \end{equation*}
    where $\Gamma^{\rm{GA}}_{\bS,\bS}=\sup_{\bS}\frac{ \sigma_{\max}(\bV_{\bS}(\btheta_{\bS}^{*}))}{\sigma_{\min}^{+}(\bV_{\bS}(\btheta_{\bS}^{*}))\sigma_{\min}^{+}(\bV_{\bS}(\btheta_{\bS,T}^{\rm{GA}}))}$.
    
    We complete the proof of this theorem.
\end{proof}

\paragraph{Theorem 4.}\emph{Under Assumptions \ref{ass:bounded}, \ref{ass:ground truth r} and \ref{ass:span space}, for the GA output $\btheta_{\bS,T}^{\rm{GA}}$ defined in Definition \ref{def:GA output}, we have 
        \[ \small \cE_{\rm{stab}}(\btheta_{\bS,T}^{GA}) \lesssim \sqrt{\epsilon_n} + \left(\sqrt{\Gamma^{\rm{GA}}_{\bS,\bS}}+\sqrt{\Gamma^{\rm{GA}}_{\bS,\bS^{\prime}}}\right) \left( T^{-\frac{1}{4}} + n^{-\frac{1}{2}} \right), \]
        where \[\small \Gamma^{\rm{GA}}_{\bS,\bS^{\prime}}=\sup_{\bS,\bS^{\prime}}\frac{\sigma_{\max}(\bV_{\bS}(\btheta_{\bS}^{*}))}{\sigma_{\min}^{+}(\bV_{\bS}(\btheta_{\bS}^{*}))\sigma_{\min}^{+}(\bV_{\bS}(\btheta_{\bS^{\prime},T}^{\rm{GA}}))}
        \quad \rm{and} \quad
        \small \Gamma^{\rm{GA}}_{\bS,\bS}=\sup_{\bS}\frac{\sigma_{\max}(\bV_{\bS}(\btheta_{\bS}^{*}))}{\sigma_{\min}^{+}(\bV_{\bS}(\btheta_{\bS}^{*}))\sigma_{\min}^{+}(\bV_{\bS}(\btheta_{\bS,T}^{\rm{GA}}))}.\] 
       Then, for any $\rho \in (0,1)$, the following bound holds with probability at least $1-\rho$ over the randomness of $\bS$:
		\begin{equation}
			\small
			\begin{aligned}
				\left|J(\pi_{\btheta_{\bS,T}^{\rm{GA}}})-J_{\bS}(\pi_{\btheta_{\bS,T}^{\rm{GA}}})\right| 
				\le  \cE_{\rm{stab}}(\btheta_{\bS,T}^{GA})\log (n)\log\left(\frac{1}{\rho}\right)+6RC\sqrt{\frac{\log(1/\rho)}{n}} 
				= \tilde{\cO}\left(T^{-\frac{1}{4}}+n^{-\frac{1}{2}}\right).
			\end{aligned}
		\end{equation}}
\begin{proof}
    Similarly to Theorem \ref{thm:generalization of exact optimization}, we prove the algorithmic stability. Given $\bS,\bS^{\prime}$, we can obtain $\btheta_{\bS,T}^{*}$ and $\btheta_{\bS^{\prime},T}^{*}$ defined in Definition \ref{def:GA output}, respectively. According to Lemma \ref{lem:convergence of GA}, we have
    \begin{equation*}
        \small
        \begin{aligned}
            \left\|\nabla_{\btheta}J_{\bS}(\pi_{\btheta_{\bS,T}^{\rm{GA}}})\right\| \leq 2\sqrt{\frac{3L_{f}RC}{T}} \quad \text{and} \quad \left\|\nabla_{\btheta}J_{\bS^{\prime}}(\pi_{\btheta_{\bS^{\prime},T}^{\rm{GA}}})\right\| \leq 2\sqrt{\frac{3L_{f}RC}{T}}.
        \end{aligned}
    \end{equation*}
    According to Lemma \ref{lem:gradient norm between two different sets}, it holds that
    \begin{equation*}
        \small
        \begin{aligned}
            \left\|\nabla_{\btheta}J_{\bS}(\pi_{\btheta_{\bS^{\prime},T}^{\rm{GA}}})\right\| \le \left\|\nabla_{\btheta}J_{\bS}(\pi_{\btheta_{\bS^{\prime},T}^{\rm{GA}}})-\nabla_{\btheta}J_{\bS^{\prime}}(\pi_{\btheta_{\bS^{\prime},T}^{\rm{GA}}})\right\|+\left\|\nabla_{\btheta}J_{\bS^{\prime}}(\pi_{\btheta_{\bS^{\prime},T}^{\rm{GA}}})\right\| \le 2\sqrt{\frac{3L_{f}RC}{T}}+\frac{4RC^{2}}{n}.
        \end{aligned}
    \end{equation*}
    By using Lemma \ref{lem:key lemma}, it holds that
    \begin{equation}
        \small
        \begin{aligned}
            &\cE_{\text{stab}}(\btheta_{\bS,T}^{\rm{GA}}) = \sup_{x\in \mathcal{X},\bS,\bS^{\prime}}|f_{\btheta_{\bS,T}^{\rm{GA}}}(x)-f_{\btheta_{\bS^{\prime},T}^{\rm{GA}}}(x)| 
            \\ \le & \sup_{\bS,\bS^{\prime}}\left\{4R\epsilon_{n}+4C\sqrt{\frac{3L_{f}RC}{T}} \Gamma^{\rm{GA}}_{\bS,\bS}+\left(4C\sqrt{\frac{3L_{f}RC}{T}}+\frac{8RC^{3}}{n}\right)\Gamma^{\rm{GA}}_{\bS,\bS^{\prime}}\right\}
            \\ &+2RC\sup_{\bS,\bS^{\prime}}\left\{\sqrt{4R\epsilon_{n}+4C\sqrt{\frac{3L_{f}RC}{T}} \Gamma^{\rm{GA}}_{\bS,\bS}+\left(4C\sqrt{\frac{3L_{f}RC}{T}}+\frac{8RC^{3}}{n}\right)\Gamma^{\rm{GA}}_{\bS,\bS^{\prime}}}\right\}
            \\ \le & \sup_{\bS,\bS^{\prime}}\left\{4R\epsilon_{n}+4C\sqrt{\frac{3L_{f}RC}{T}} \Gamma^{\rm{GA}}_{\bS,\bS}+\left(4C\sqrt{\frac{3L_{f}RC}{T}}+\frac{8RC^{3}}{n}\right)\Gamma^{\rm{GA}}_{\bS,\bS^{\prime}}\right\}
            \\ &+2RC\sup_{\bS,\bS^{\prime}}\left\{\sqrt{4R\epsilon_{n}}+\sqrt{4C\sqrt{\frac{3L_{f}RC}{T}} \left(\Gamma^{\rm{GA}}_{\bS,\bS}+\Gamma^{\rm{GA}}_{\bS,\bS^{\prime}}\right)}+\sqrt{\frac{8RC^{3}}{n}\Gamma^{\rm{GA}}_{\bS,\bS^{\prime}}}\right\}
            \\ \lesssim &  \sqrt{\epsilon_n} + \left(\sqrt{\Gamma^{\rm{GA}}_{\bS,\bS}}+\sqrt{\Gamma^{\rm{GA}}_{\bS,\bS^{\prime}}}\right) \left( T^{-\frac{1}{4}} + n^{-\frac{1}{2}} \right).
        \end{aligned}
    \end{equation}
    where $\Gamma^{\rm{GA}}_{\bS,\bS}=\sup_{\bS}\frac{\sigma_{\max}(\bV_{\bS}(\btheta_{\bS}^{*}))}{\sigma_{\min}^{+}(\bV_{\bS}(\btheta_{\bS}^{*}))\sigma_{\min}^{+}(\bV_{\bS}(\btheta_{\bS,T}^{\rm{GA}}))}$ and $\Gamma^{\rm{GA}}_{\bS,\bS^{\prime}}=\sup_{\bS,\bS^{\prime}}\frac{\sigma_{\max}(\bV_{\bS}(\btheta_{\bS}^{*}))}{\sigma_{\min}^{+}(\bV_{\bS}(\btheta_{\bS}^{*}))\sigma_{\min}^{+}(\bV_{\bS}(\btheta_{\bS^{\prime},T}^{\rm{GA}}))}$. 

    We can now complete the proof of the theorem by applying Proposition \ref{pro:generalization bound}.
\end{proof}

\errorboundofGA*
\begin{proof}
    Recall the suboptimality gap decomposition in Definition \ref{def:decomposition}, we have
    \begin{equation*}
        \small
        \begin{aligned}
            \left|J(\pi_{\btheta_{\bS,T}^{\rm{GA}}})-J(\pi_{\btheta^{*}})\right|\le\left|J(\pi_{\btheta_{\bS,T}^{\rm{GA}}})-J_{\bS}(\pi_{\btheta_{\bS,T}^{\rm{GA}}})\right|+\left|J_{\bS}(\pi_{\btheta_{\bS,T}^{\rm{GA}}})-J_{\bS}(\pi_{\btheta_{\bS}^{*}})\right|+\left|J_{\bS}(\pi_{\btheta_{\bS}^{*}})-J(\pi_{\btheta^{*}})\right|.
        \end{aligned}
    \end{equation*}
    By combining the results of Theorem \ref{thm:generalization of GA}, Lemma \ref{lem:optimization of GA} and Lemma \ref{lem:concentration of empirical optimal}, we complete the proof of this theorem. 
\end{proof}

\section{Proofs for Section \ref{sec:suboptimality of SGA}}\label{app:sec_SGA}

When we consider a randomized algorithm $\mathcal{H}:\mathcal{X}^{n}\times\Omega \rightarrow \mathbb{R}^{d}$, the standard notion of uniform stability defined in Definition \ref{def:uniform stability} is insufficient to provide a high-probability generalization guarantee. Instead, we adopt the notion of uniform stability with high probability \citep{yuan2023l_2}.
\begin{definition}\label{def:uniform stability for randomized algorithms}
    A randomized algorithm $\mathcal{H}$ has $\cE_{\rm{stab}}-$uniform stability with probability at least $1-\rho^{\prime}$ for some $\rho^{\prime}$ over the random draw of $\zeta \in \Omega$, if for any two datasets $\bS, \bS^{\prime} \in \mathcal{X}^{n}$ that differ by at most one example, it holds that:
    \begin{equation}
        \small
        \begin{aligned}
            \mathbb{P}\left\{\sup_{x\in \mathcal{X},\bS,\bS^{\prime}}|f_{\mathcal{H}(\bS,\zeta)}(x)-f_{\mathcal{H}(\bS^{\prime},\zeta)}(x)|\le \cE_{\rm{stab}}\right\} \ge 1-\rho^{\prime}.
        \end{aligned}
    \end{equation}
\end{definition}

\begin{corollary}\label{cor:generalization via stability for rondomized algorithm}
    Suppose that the randomness of $\mathcal{H}$ is independent of the training set $\bS$. Then the bound in Proposition\ref{pro:generalization bound} naturally implies that with probability at least $1-\rho-\rho^{\prime}$ over $\bS$ and $\zeta$,
    \begin{equation}
        \small
        \begin{aligned}
             \left|J(\pi_{\mathcal{H}(\bS,\zeta)}) - J_{\bS}(\pi_{\mathcal{H}(\bS,\zeta)})\right| 
             \lesssim  \cE_{\rm{stab}}\log(n)\log\left(\frac{1}{\rho}\right)+6 RC\sqrt{\frac{\log\left(1/\rho\right)}{n}} + \cO(\rho^{\prime}).
        \end{aligned}
    \end{equation}
\end{corollary}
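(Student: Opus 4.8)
The plan is to reduce the randomized case to the deterministic bound of Proposition~\ref{pro:generalization bound} by conditioning on the internal randomness $\zeta$, exploiting the hypothesis that $\zeta$ is independent of $\bS$. First I would introduce the ``good event'' over $\zeta$,
\[
E \triangleq \Bigl\{\zeta\in\Omega : \sup_{x\in\cX,\ \bS,\bS^{\prime}}\bigl|f_{\mathcal{H}(\bS,\zeta)}(x)-f_{\mathcal{H}(\bS^{\prime},\zeta)}(x)\bigr|\le \cE_{\rm{stab}}\Bigr\},
\]
which by Definition~\ref{def:uniform stability for randomized algorithms} satisfies $\mathbb{P}_{\zeta}(E)\ge 1-\rho^{\prime}$. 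The key observation is that for every \emph{fixed} $\zeta\in E$, the map $\bS\mapsto \mathcal{H}(\bS,\zeta)$ is a genuinely deterministic algorithm that is $\cE_{\rm{stab}}$-uniformly stable in the sense of Definition~\ref{def:uniform stability}; moreover its outputs remain in $\Theta_{R}$ (Lemma~\ref{lem:bounded parameter}), so the boundedness hypothesis $|f_{\btheta}(\cdot)|\le 3RC$ holds by Lemma~\ref{lem:bounded loss}. Both hypotheses of Proposition~\ref{pro:generalization bound} are therefore met for each such $\zeta$.

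Second I would apply Proposition~\ref{pro:generalization bound} conditionally on each $\zeta\in E$. Because $\zeta\perp\bS$, the probability-$\rho$ statement there is taken purely over the draw of $\bS$, and yields: for every $\zeta\in E$, with probability at least $1-\rho$ over $\bS$,
\[
\bigl|J(\pi_{\mathcal{H}(\bS,\zeta)})-J_{\bS}(\pi_{\mathcal{H}(\bS,\zeta)})\bigr|
\lesssim \cE_{\rm{stab}}\log(n)\log\!\left(\tfrac{1}{\rho}\right)+6RC\sqrt{\tfrac{\log(1/\rho)}{n}}.
\]
Third I would combine the two failure probabilities by the law of total probability. Writing $B$ for the event over $(\bS,\zeta)$ that this bound is violated, independence (Fubini) gives $\mathbb{P}(B)=\mathbb{E}_{\zeta}[\mathbb{P}_{\bS}(B\mid\zeta)]\le \mathbb{E}_{\zeta}[\mathbb{P}_{\bS}(B\mid\zeta)\,\mathbf{1}_{E}]+\mathbb{P}(E^{c})\le \rho\cdot\mathbb{P}(E)+\rho^{\prime}\le \rho+\rho^{\prime}$, so the displayed bound holds with probability at least $1-\rho-\rho^{\prime}$ over $(\bS,\zeta)$.

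Finally I would account for the residual $\cO(\rho^{\prime})$ term. On the complementary event $E^{c}$ (of probability at most $\rho^{\prime}$) uniform stability may fail, yet the gap is still trivially controlled by $6RC$ since $|J|,|J_{\bS}|\le 3RC$; folding this low-probability contribution back in adds at most $6RC\cdot\rho^{\prime}=\cO(\rho^{\prime})$, which is precisely the extra additive term in the statement. I expect the only genuinely delicate point to be this bookkeeping of the two independent randomness sources: one must ensure that Proposition~\ref{pro:generalization bound} is invoked for a \emph{fixed} $\zeta$, so that the algorithm is deterministic and its concentration is over $\bS$ alone, and that it is exactly the independence $\zeta\perp\bS$ which licenses the conditioning step separating the $\rho$ (concentration over $\bS$) and $\rho^{\prime}$ (stability over $\zeta$) contributions. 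Everything else follows by directly quoting the already-established deterministic bound.
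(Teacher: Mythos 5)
Your proposal is correct and fills in exactly the argument the paper leaves implicit: you condition on the good event $E$ over $\zeta$ guaranteed by Definition~\ref{def:uniform stability for randomized algorithms}, observe that for each fixed $\zeta\in E$ the map $\bS\mapsto\mathcal{H}(\bS,\zeta)$ is a deterministic $\cE_{\rm{stab}}$-uniformly stable algorithm with outputs in $\Theta_{R}$ (so $|f_{\btheta}(\cdot)|\le 3RC$ by Lemmas~\ref{lem:bounded parameter} and \ref{lem:bounded loss}), invoke Proposition~\ref{pro:generalization bound} with the concentration taken over $\bS$ alone thanks to $\zeta\perp\bS$, and union-bound the two failure probabilities to get $1-\rho-\rho^{\prime}$. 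One minor remark: your step 3 already establishes the displayed inequality \emph{without} the $\cO(\rho^{\prime})$ term at probability $1-\rho-\rho^{\prime}$, so your step 4 ``folding in'' of $6RC\cdot\rho^{\prime}$ (an expectation-style accounting) is not needed for the high-probability claim --- the extra additive $\cO(\rho^{\prime})$ in the statement only weakens it and is satisfied a fortiori.
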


\paragraph{Theorem 6.}\emph{Under Assumptions \ref{ass:bounded}, \ref{ass:ground truth r} and \ref{ass:span space}, for any $\rho^{\prime} \in (0,1)$ and parameter $\btheta_{\bS,T}^{\rm{SGA}}$ , we have
 \begin{equation*}
     \small
     \cE_{\rm{stab}}(\btheta_{\bS,T}^{\rm{SGA}}) \lesssim \sqrt{\epsilon_n} + \left(\sqrt{\Gamma^{\rm{SGA}}_{\bS,\bS}}+\sqrt{\Gamma^{\rm{SGA}}_{\bS,\bS^{\prime}}}\right) \left( T^{-\frac{1}{8}} + n^{-\frac{1}{2}} \right),
 \end{equation*}
 where \[\small\Gamma^{\rm{SGA}}_{\bS,\bS}=\sup_{\bS}\frac{\sigma_{\max}(\bV_{\bS}(\btheta_{\bS}^{*}))}{\sigma_{\min}^{+}(\bV_{\bS}(\btheta_{\bS}^{*}))\sigma_{\min}^{+}(\bV_{\bS}(\btheta_{\bS,T}^{\rm{SGA}}))}
 \quad \rm{and}\quad
 \small\Gamma^{\rm{SGA}}_{\bS,\bS^{\prime}}=\sup_{\bS,\bS^{\prime}}\frac{\sigma_{\max}(\bV_{\bS}(\btheta_{\bS}^{*}))}{\sigma_{\min}^{+}(\bV_{\bS}(\btheta_{\bS}^{*}))\sigma_{\min}^{+}(\bV_{\bS}(\btheta_{\bS^{\prime},T}^{\rm{SGA}}))}.\]
 Then, for any $\rho \in (0,1)$, the following bound holds with probability at least $1-\rho-\rho^{\prime}$ over the randomness of $\bS$ and $i_{t}$ in SGA:
    \begin{equation}
        \small
        \begin{aligned}
            \left|J(\pi_{\btheta_{\bS,T}^{\rm{SGA}}})-J_{\bS}(\pi_{\btheta_{\bS,T}^{\rm{SGA}}})\right| 
            \le \cE_{\rm{stab}}(\btheta_{\bS,T}^{\rm{SGA}})\log (n)\log\left(\frac{1}{\rho}\right)+6RC\sqrt{\frac{\log(1/\rho)}{n}} 
            = \tilde{\cO}\left(T^{-\frac{1}{8}}+n^{-\frac{1}{2}}\right).
        \end{aligned}
    \end{equation}}
\begin{proof}
    We first analyze the uniform stability of the algorithm. For the given dataset $\bS,\bS^{\prime}$, we can obtain two parameters $\btheta_{\bS,T}^{\rm{SGA}}$ and $\btheta_{\bS^{\prime},T}^{\rm{SGA}}$ defined in Definition \ref{def:SGA output}, respectively. According to Lemma \ref{lem:gradient norm of SGA}, with probability at least $1-\frac{\rho^{\prime}}{2}$, it holds that:
    \begin{equation*}
        \small
        \begin{aligned}
            &\left\|\nabla_{\btheta}J_{\bS}(\pi_{\btheta_{\bS,T}^{\rm{SGA}}})\right\| \leq \sqrt{\frac{6L_{f}}{\sqrt{T}}\left(\left(6RC+\frac{2R^{2}C^{4}}{L_{f}}\right)+\frac{2R^{2}C^{4}}{L_{f}}\log(T)+ \frac{1}{\lambda}\log\left(\frac{2}{\rho^{\prime}}\right)\right)} \triangleq \Delta(T,\rho^{\prime})
            \\ \text{and} \quad &\left\|\nabla_{\btheta}J_{\bS^{\prime}}(\pi_{\btheta_{\bS^{\prime},T}^{\rm{SGA}}})\right\| \leq \sqrt{\frac{6L_{f}}{\sqrt{T}}\left(\left(6RC+\frac{2R^{2}C^{4}}{L_{f}}\right)+\frac{2R^{2}C^{4}}{L_{f}}\log(T)+ \frac{1}{\lambda}\log\left(\frac{2}{\rho^{\prime}}\right)\right)}=\Delta(T,\rho^{\prime}).
        \end{aligned}
    \end{equation*}    
    Furthermore, according to Lemma \ref{lem:gradient norm between two different sets}, it holds that
    \begin{equation*}
        \small
        \begin{aligned}
            \left\|\nabla_{\btheta}J_{\bS}(\pi_{\btheta_{\bS^{\prime},T}^{\rm{SGA}}})\right\| & \le \left\|\nabla_{\btheta}J_{\bS}(\pi_{\btheta_{\bS^{\prime},T}^{\rm{SGA}}})-\nabla_{\btheta}J_{\bS^{\prime}}(\pi_{\btheta_{\bS^{\prime},T}^{\rm{SGA}}})\right\|+\left\|\nabla_{\btheta}J_{\bS^{\prime}}(\pi_{\btheta_{\bS^{\prime},T}^{\rm{SGA}}})\right\| 
            \\ & \le \sqrt{\frac{6L_{f}}{\sqrt{T}}\left(\left(6RC+\frac{2R^{2}C^{4}}{L_{f}}\right)+\frac{2R^{2}C^{4}}{L_{f}}\log(T)+ \frac{1}{\lambda}\log\left(\frac{2}{\rho^{\prime}}\right)\right)}+\frac{4RC^{2}}{n} = \Delta(T,\rho^{\prime})+\frac{4RC^{2}}{n}.
        \end{aligned}
    \end{equation*}
    According to Definition \ref{def:uniform stability for randomized algorithms}, by using Lemma \ref{lem:key lemma}, with probability at least $1-\rho^{\prime}$, it holds that:
    \begin{equation}
        \small
        \begin{aligned}
            \cE_{\rm{stab}}(\btheta_{\bS,T}^{\rm{SGA}}) \le & \sup_{\bS,\bS^{\prime}}\left\{4R\epsilon_{n}+2C\Delta(T,\rho^{\prime}) \Gamma^{\rm{SGA}}_{\bS,\bS}+2C\left(\Delta(T,\rho^{\prime})+\frac{4RC^{2}}{n}\right)\Gamma^{\rm{SGA}}_{\bS,\bS^{\prime}}\right\}
            \\ &+2RC\sup_{\bS,\bS^{\prime}}\left\{\sqrt{4R\epsilon_{n}+2C\Delta(T,\rho^{\prime}) \Gamma^{\rm{SGA}}_{\bS,\bS}+2C\left(\Delta(T,\rho^{\prime})+\frac{4RC^{2}}{n}\right)\Gamma^{\rm{SGA}}_{\bS,\bS^{\prime}}}\right\}
            \\ \le & \sup_{\bS,\bS^{\prime}}\left\{4R\epsilon_{n}+2C\Delta(T,\rho^{\prime}) \Gamma^{\rm{SGA}}_{\bS,\bS}+2C\left(\Delta(T,\rho^{\prime})+\frac{4RC^{2}}{n}\right)\Gamma^{\rm{SGA}}_{\bS,\bS^{\prime}}\right\}
            \\ &+2RC\sup_{\bS,\bS^{\prime}}\left\{\sqrt{4R\epsilon_{n}}+\sqrt{2C\Delta(T,\rho^{\prime}) \left(\Gamma^{\rm{SGA}}_{\bS,\bS}+\Gamma^{\rm{SGA}}_{\bS,\bS^{\prime}}\right)}+\sqrt{\frac{8RC^{3}}{n}\Gamma^{\rm{SGA}}_{\bS,\bS^{\prime}}}\right\}
            \\ \lesssim &  \sqrt{\epsilon_n} + \left(\sqrt{\Gamma^{\rm{SGA}}_{\bS,\bS}}+\sqrt{\Gamma^{\rm{SGA}}_{\bS,\bS^{\prime}}}\right) \left( T^{-\frac{1}{8}} + n^{-\frac{1}{2}} \right),
        \end{aligned}
    \end{equation}
    where $\Gamma^{\rm{SGA}}_{\bS,\bS}=\sup_{\bS}\frac{\sigma_{\max}(\bV_{\bS}(\btheta_{\bS}^{*}))}{\sigma_{\min}^{+}(\bV_{\bS}(\btheta_{\bS}^{*}))\sigma_{\min}^{+}(\bV_{\bS}(\btheta_{\bS,T}^{\rm{SGA}}))}$ and $\Gamma^{\rm{SGA}}_{\bS,\bS^{\prime}}=\sup_{\bS,\bS^{\prime}}\frac{\sigma_{\max}(\bV_{\bS}(\btheta_{\bS}^{*}))}{\sigma_{\min}^{+}(\bV_{\bS}(\btheta_{\bS}^{*}))\sigma_{\min}^{+}(\bV_{\bS}(\btheta_{\bS^{\prime},T}^{\rm{SGA}}))}$.

    Then we can complete the proof using Corollary \ref{cor:generalization via stability for rondomized algorithm}.

\end{proof}

\optimizationofSGA*
\begin{proof}
    According to Lemma \ref{lem:gradient norm of SGA}, with probability at least $1-\frac{\rho^{\prime}}{2}$, it holds that: 
    \begin{equation*}
        \small
        \left\|\nabla_{\btheta}J_{\bS}(\pi_{\btheta_{\bS,T}^{\rm{SGA}}})\right\| \leq \sqrt{\frac{6L_{f}}{\sqrt{T}}\left(\left(6RC+\frac{2R^{2}C^{4}}{L_{f}}\right)+\frac{2R^{2}C^{4}}{L_{f}}\log(T)+ \frac{1}{\lambda}\log\left(\frac{2}{\rho^{\prime}}\right)\right)} \triangleq \Delta(T,\rho^{\prime}) \quad \text{and} \quad \left\|\nabla_{\btheta}J_{\bS}(\pi_{\btheta_{\bS}^{*}})\right\|=0.
    \end{equation*}
    By using Lemma \ref{lem:key lemma}, with probability at least $1-\rho^{\prime}$, it holds that
    \begin{equation*}
        \small
        \begin{aligned}
            &\left|J_{\bS}(\pi_{\btheta_{\bS,T}^{\rm{SGA}}})-J_{\bS}(\pi_{\btheta_{\bS}^{*}})\right| \le \frac{1}{n}\sum_{i=1}^{n}\left|f_{\btheta_{\bS,T}^{\rm{SGA}}}(x_{i})-f_{\btheta_{\bS}^{*}}(x_{i})\right|\le \sup_{x\in \mathcal{X}}\left|f_{\btheta_{\bS,T}^{\rm{SGA}}}(x)-f_{\btheta_{\bS}^{*}}(x)\right| 
            \\ \le & 4R\epsilon_{n}+2C\Delta(T,\rho^{\prime})\Gamma^{\rm{SGA}}_{\bS,\bS}+2RC\sqrt{4R\epsilon_{n}+2C\Delta(T,\rho^{\prime})\Gamma^{\rm{SGA}}_{\bS,\bS}}
            \\ \le &4R\epsilon_{n}+2C\Delta(T,\rho^{\prime})\Gamma^{\rm{SGA}}_{\bS,\bS}+2RC\sqrt{4R\epsilon_{n}}+\sqrt{2C\Delta(T,\rho^{\prime})\Gamma^{\rm{SGA}}_{\bS,\bS}}
            \\ = & \cO\left(T^{-\frac{1}{8}}\right).
        \end{aligned}
    \end{equation*}
    where $\Gamma^{\rm{SGA}}_{\bS,\bS}=\sup_{\bS}\frac{\sigma_{\max}(\bV_{\bS}(\btheta_{\bS}^{*}))}{\sigma_{\min}^{+}(\bV_{\bS}(\btheta_{\bS}^{*}))\sigma_{\min}^{+}(\bV_{\bS}(\btheta_{\bS,T}^{\rm{SGA}}))}$.
    
    We complete the proof of this theorem.
\end{proof}

\errorboundofSGA*
\begin{proof}
     Recall the suboptimality gap decomposition in Definition \ref{def:decomposition}, we have
    \begin{equation*}
        \small
        \begin{aligned}
            \left|J(\pi_{\btheta_{\bS,T}^{\rm{SGA}}})-J(\pi_{\btheta^{*}})\right|\le\left|J(\pi_{\btheta_{\bS,T}^{\rm{SGA}}})-J_{\bS}(\pi_{\btheta_{\bS,T}^{\rm{SGA}}})\right|+\left|J_{\bS}(\pi_{\btheta_{\bS,T}^{\rm{SGA}}})-J_{\bS}(\pi_{\btheta_{\bS}^{*}})\right|+\left|J_{\bS}(\pi_{\btheta_{\bS}^{*}})-J(\pi_{\btheta^{*}})\right|.
        \end{aligned}
    \end{equation*}
    By combining the results of Theorem \ref{thm:generalization of SGA}, Lemma \ref{lem:concentration of empirical optimal} and Lemma \ref{lem:optimization of SGA}, we complete the proof of this theorem.
\end{proof}
\begin{remark}
    Crucially, while both the optimization and generalization bounds are probabilistic, they are not independent events. The optimization bound depends on the randomness of the single parameter $\btheta_{\bS,T}^{\rm{SGA}}$, while the generalization bound  depends on the randomness of both $\btheta_{\bS,T}^{\rm{SGA}}$ and the perturbed parameter $\btheta_{\bS^{\prime},T}^{\rm{SGA}}$. Consequently, the event where the generalization bound holds strictly implies the satisfaction of the optimization bound. 
\end{remark}

\section{Detailed Experiments Setup}
In this appendix, we provide the detailed configurations for the experiments presented in Section \ref{sec:experiments}.
\subsection{The Results under Positive Definite Covariance Matrix}\label{app:positive definite covariance}
This setup corresponds to the results in Section \ref{subsec:parameter identifiable}.
\paragraph{Feature Construction.}
We construct the feature space using a standard orthonormal basis $\mathcal{X}=\{x_{1}, \cdots, x_{d}\}$ in $\mathbb{R}^d$. $x$ is directly sampled from the basis set $\mathcal{S}$ according to a non-uniform distribution: the specific direction $x_{d}$ is selected with a probability $p$ (where $p$ is very small), while the remaining $d-1$ directions are selected uniformly.
Consequently, by defining the action space to encompass opposite directions, the set of feature vectors corresponding to all responses for a given $x$ is simply $\{x, -x\}$, i.e., the feature vectors for a given $x$ are defined as $\bphi(x,a_{1})=x, \bphi(x,a_{2})=-x$.

\paragraph{Hyper-parameters.}
We set the ambient dimension to $d=20$. We vary the sample size $n$ to observe the convergence behavior. Furthermore, we assume the ground truth parameter $\btheta^{*}$ and all feasible parameters are bounded by a unit ball. 

\paragraph{Verification of Assumption \ref{ass:positive definite}}
According to Lemma \ref{lem:the same span space}, for any $x \in \mathcal{S}$, it holds that
\begin{equation*}
    \small
    \begin{aligned}
        \mathbb{E}_{x}[\bV_{x}(\btheta)]&=\sum_{i=1}^{d-1}\frac{1-p}{d-1}\bV_{x_{i}}(\btheta) +p\bV_{x_{d}}(\btheta)
        \\ & =\sum_{i=1}^{d-1}\frac{1-p}{d-1}\frac{4}{(\exp({\langle \btheta, x_{i} \rangle})+\exp({-\langle \btheta, x_{i} \rangle}))^{2}} x_{i} x_{i}^{\top}+\frac{4p}{(\exp({\langle \btheta, x_{d} \rangle})+\exp({-\langle \btheta, x_{d} \rangle}))^{2}} x_{d} x_{d}^{\top}.
    \end{aligned}
\end{equation*}
From the definition of $\{x_{1}, \cdots, x_{d}\}$, it is straightforward to see that each $x_{i}$ is an eigenvector of matrix $\mathbb{E}_{x}[\bV_{x}(\btheta)]$, with a corresponding eigenvalue satisfies 
\begin{align*}
        \small
        \sigma_{i}(\mathbb{E}_{x}[\bV_{x}(\btheta)])=
        \begin{cases}
        \frac{1-p}{d-1}\frac{4}{(\exp({\langle \btheta, x_{i} \rangle})+\exp({-\langle \btheta, x_{i} \rangle}))^{2}}\quad, i \in \{1,2,\cdots,d-1\};\\
        p\cdot\frac{4}{(\exp({\langle \btheta, x_{i} \rangle})+\exp({-\langle \btheta, x_{i} \rangle}))^{2}}\ \ \quad, i=d.
        \end{cases}
\end{align*}
Therefore, for any $\btheta$ bounded by a unit ball, it holds that
\begin{equation*}
    \small
    \min_{\btheta}\sigma_{\min}(\mathbb{E}_{x}[\bV_{x}(\btheta)])=\frac{4p}{e+e^{-1}}>0.
\end{equation*}

This confirms that Assumption \ref{ass:positive definite} holds theoretically. We vary the sample size $n$ and check whether the empirical covariance matrix is strictly positive definite.

\paragraph{Evaluation.}
We estimate the probability of strict empirical coverage ($1-\rho$) as a function of sample size $n$ using Monte Carlo simulation. In our setting, checking whether the empirical covariance matrix $\bV_{\bS}$ is positive definite simplifies to a coverage problem. Since the feature vectors are drawn from a standard orthonormal basis $\mathcal{X}=\{x_{1}, \cdots,  x_{d}\}$, the matrix $\bV_{\bS}$ is strictly positive definite if and only if the sampled features span the entire $\mathbb{R}^{d}$ space. Mathematically, this requires that every basis vector in $\mathcal{X}$ appears at least once in the dataset.

For each rare-feature probability $p$ and sample size $n$, we conduct $m = 10,000$ independent trials. In each trial, we independently sample $n$ feature vectors from the specified distribution. We record a trial as ``successful'' if the set of unique vectors collected has a cardinality of $d$ (i.e., full coverage). The reported probability $1-\rho$ is the fraction of successful trials out of the $m$ repetitions.

\subsection{Generalization under Insufficient Coverage}\label{app:feature coverage}
This setup corresponds to the results in Section \ref{subsec:insufficient coverage}.

\paragraph{Feature Construction.}

We construct the feature space constrained to a low-dimensional subspace spanned by an orthonormal basis set $\mathcal{A} = \{a_{1}, \cdots, a_{d_{\rm{eff}}}\}$ in $\mathbb{R}^{d}$, where $d_{\rm{eff}} < d$. For a given $x$, there are two responses corresponding to it, the feature vectors are defined as $\bphi(x,a_{i_u})=a_{i_u} a_{i_u}^\top x$ and $\bphi(x,a_{i_{v}})=a_{i_{v}} a_{i_{v}}^\top x$.


In detail, we generate a vector $x$ from a standard Gaussian distribution and normalize it to satisfy the boundedness assumption, ensuring it is non-orthogonal to the effective subspace $\mathcal{A}$.
At the same time, we randomly select two indices $i_{u}, i_{v} \in \{1, \dots, d_{\rm{eff}}\}$.

\paragraph{Hyper-parameters.}
    We set the ambient dimension $d$ and the effective subspace dimension $d_{\rm{eff}} < d$, creating a rank-deficient environment. The ground truth parameter $\btheta^{*}$ is generated strictly within this effective subspace.
    \par
    \textbf{Case 1: Suboptimality gap scaling with sample size $n$.} We vary the sample size $n$ from 7 to 22 to study the generalization error scaling. To verify the conclusion of Theorems \ref{thm:generalization of exact optimization}, \ref{thm:generalization of GA} and \ref{thm:generalization of SGA}, we calculate the empirical stationary point exactly, run Gradient Ascent (GA) and Stochastic Gradient Ascent (SGA) for sufficiently many steps to ensure convergence, respectively. Specifically, based on our convergence analysis, we set the iteration steps $T^{\rm{GA}} = 50 n^{2}$ and $T^{\rm{SGA}} = 10 n^{4}$. We evaluate the generalization performance under two distinct settings: $d=38, d_{\rm{eff}}=32$ and $d=32, d_{\rm{eff}}=24$. 

    \textbf{Case 2: Suboptimality gap decreasing with optimization steps $T$.}
    We set the ambient dimension $d=18$ and the effective subspace dimension $d_{\rm{eff}}=14$, creating a rank-deficient environment. We fix the sample size at two representative levels, $n=7$ (high rank deficiency) and $n=10$ (moderate rank deficiency). We vary the optimization steps $T$ on a scale from $10^1$ to $10^7$ to observe the trajectory of the learned policy. 

\paragraph{Verification of Assumption \ref{ass:span space}.}
    By design, every feature vector $\bphi(x, a)$ lies within the $d_{\rm{eff}}$-dimensional effective subspace. According to Lemma \ref{lem:the same span space}, When $n <d_{\rm{eff}}$, the empirical covariance matrix $\bV_{\bS}$ can only span at most an $n-$ dimensional subspace. Consequently, for any new feature vector $\bphi$, it cannot be fully represented by the column space of $\bV_{\bS}$. It decomposes into a component within $\mathrm{C}(\bV_{\bS})$ and a non-zero residual term $\br_{\bS}$ orthogonal to it. As the sample size $n$ increases, the subspace spanned by $\bV_{\bS}$ expands, causing the magnitude of the residual $\br_{\bS}$ to diminish statistically. This geometric property perfectly simulates the conditions characterized in Assumption \ref{ass:span space}, allowing us to validate the dimension-independent bounds derived in our theorems.

\paragraph{Evaluation.} 
We first generate an independent set $\bS_{\rm{train}}$ of size $n$ for training. To rigorously evaluate the theoretical quantities, we generate a large, independent test set $\bS_{\rm{test}}$ of size 15000 to approximate the expected objective.

 For \textbf{Case 1}, the learned $\btheta$ is the parameter obtained after the full training duration. For \textbf{Case 2}, $\btheta$ corresponds to the parameter at each check-pointed iteration step $t$. For any learned $\btheta$, 
 the Suboptimality Gap is estimated by  $|J_{\bS_{\rm{test}}}(\pi_{\btheta^{*}}) - J_{\bS_{\rm{test}}}(\pi_{\btheta})|$.

\section{Some Lemmas on Algebra.}

\begin{lemma}\label{lem:A orthogonal complement}
    Let $\mathbb{U}$ and $\mathbb{V}$ be any two subspaces of the vector space $\mathbb{R}^{d}$. Then, the orthogonal complement of their intersection, $(\mathbb{U}\cap\mathbb{V})^{\perp}$, is equal to the sum of their individual orthogonal complements, $\mathbb{U}^{\perp}$ and $\mathbb{V}^{\perp}$. That is:
    \begin{equation*}
        \small
        \mathbb{U}^{\perp}\cap\mathbb{V}^{\perp}=(\mathbb{U}+\mathbb{V})^{\perp}.
    \end{equation*}
\end{lemma}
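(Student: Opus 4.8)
The plan is to prove the identity $\mathbb{U}^{\perp}\cap\mathbb{V}^{\perp}=(\mathbb{U}+\mathbb{V})^{\perp}$ by establishing the two set inclusions, relying only on the definition of the orthogonal complement with respect to the Euclidean inner product on $\mathbb{R}^{d}$. This is an elementary duality fact, so the argument is entirely self-contained.

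First I would prove $(\mathbb{U}+\mathbb{V})^{\perp}\subseteq \mathbb{U}^{\perp}\cap\mathbb{V}^{\perp}$. Take any $x \in (\mathbb{U}+\mathbb{V})^{\perp}$. Since $\mathbb{V}$ is a subspace it contains the zero vector, so every $u \in \mathbb{U}$ can be written $u = u + 0 \in \mathbb{U}+\mathbb{V}$; hence $\mathbb{U}\subseteq \mathbb{U}+\mathbb{V}$, and by the symmetric argument $\mathbb{V}\subseteq \mathbb{U}+\mathbb{V}$. Orthogonality of $x$ to the larger set $\mathbb{U}+\mathbb{V}$ therefore forces orthogonality to both $\mathbb{U}$ and $\mathbb{V}$ separately, giving $x \in \mathbb{U}^{\perp}$ and $x \in \mathbb{V}^{\perp}$, i.e. $x \in \mathbb{U}^{\perp}\cap\mathbb{V}^{\perp}$.

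Next I would prove the reverse inclusion $\mathbb{U}^{\perp}\cap\mathbb{V}^{\perp}\subseteq(\mathbb{U}+\mathbb{V})^{\perp}$. Take $x \in \mathbb{U}^{\perp}\cap\mathbb{V}^{\perp}$ and an arbitrary $w \in \mathbb{U}+\mathbb{V}$, decomposed as $w = u+v$ with $u \in \mathbb{U}$ and $v\in\mathbb{V}$. By bilinearity of the inner product, $\langle x, w\rangle = \langle x, u\rangle + \langle x, v\rangle = 0+0 = 0$, so $x$ is orthogonal to every element of $\mathbb{U}+\mathbb{V}$, i.e. $x \in (\mathbb{U}+\mathbb{V})^{\perp}$. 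Combining the two inclusions yields the claimed equality.

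I do not anticipate any genuine obstacle here; the only subtle point worth stating explicitly is that the containment $\mathbb{U}\subseteq\mathbb{U}+\mathbb{V}$ uses the fact that the subspace $\mathbb{V}$ contains the zero vector (and dually for $\mathbb{V}$). I would also remark that the alternative form $(\mathbb{U}\cap\mathbb{V})^{\perp}=\mathbb{U}^{\perp}+\mathbb{V}^{\perp}$ appearing in the lemma's prose follows at once by applying the proven identity to the pair $\mathbb{U}^{\perp},\mathbb{V}^{\perp}$ and invoking the involution $(\mathbb{W}^{\perp})^{\perp}=\mathbb{W}$, which holds in the finite-dimensional space $\mathbb{R}^{d}$.
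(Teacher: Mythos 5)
Your proof is correct and follows essentially the same route as the paper's: both establish the two inclusions directly from the definition of the orthogonal complement, using $\mathbb{U},\mathbb{V}\subseteq\mathbb{U}+\mathbb{V}$ for one direction and linearity of the inner product for the other. Your added remark deriving the prose form $(\mathbb{U}\cap\mathbb{V})^{\perp}=\mathbb{U}^{\perp}+\mathbb{V}^{\perp}$ via the involution $(\mathbb{W}^{\perp})^{\perp}=\mathbb{W}$ is a small bonus the paper omits, but otherwise the arguments coincide.
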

\begin{proof}
    We prove the equality by showing both inclusions. First, we show that $(\mathbb{U}+\mathbb{V})^{\perp} \subseteq \mathbb{U}^{\perp}\cap\mathbb{V}^{\perp}$.
    Clearly we have $\mathbb{U} \subseteq \mathbb{U}+\mathbb{V}$, it is straightforward to show that \[(\mathbb{U}+\mathbb{V})^{\perp} \subseteq \mathbb{U}^{\perp},\] similarly, we also have \[(\mathbb{U}+\mathbb{V})^{\perp} \subseteq \mathbb{V}^{\perp},\] combining these two conditions, we have\[(\mathbb{U}+\mathbb{V})^{\perp} \subseteq \mathbb{U}^{\perp}\cap\mathbb{V}^{\perp}.\]
    On the other hand, we will show that $\mathbb{U}^{\perp}\cap\mathbb{V}^{\perp}\subseteq(\mathbb{U}+\mathbb{V})^{\perp}$.
    Let $x$ be an arbitrary vector in $\mathbb{U}^{\perp}\cap\mathbb{V}^{\perp}$, which means that for any $y_{1} \in \mathbb{U}$ and $y_{2}\in \mathbb{V}$, we have \[\langle x,y_{1} \rangle=\langle x,y_{2}\rangle=\langle x,y_{1}+y_{2} \rangle=0.\] Since $y_{1}$ and $y_{2}$ are chosen arbitrarily, we have $x \in \mathbb{U}+\mathbb{V}$. Therefore, we can obtain that\[\mathbb{U}^{\perp}\cap\mathbb{V}^{\perp}\subseteq(\mathbb{U}+\mathbb{V})^{\perp}.\]
    This completes the proof of the lemma.
    
\end{proof}

\begin{lemma}\label{lem:A two matrix null space}
    Let $A,B \in \mathbb{R}^{n\times n}$ be two positive semi-definite matrices, then it holds that 
    \begin{equation}
    \small
        \begin{aligned}
            \text{N}(A+B)=\text{N}(A)\cap \text{N}(B).
        \end{aligned}
    \end{equation}
\end{lemma}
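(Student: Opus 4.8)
The plan is to prove the set equality by establishing the two inclusions separately, exploiting positive semi-definiteness to convert the vanishing of a quadratic form into the vanishing of a matrix-vector product.

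First I would dispatch the easy inclusion $\mathrm{N}(A)\cap\mathrm{N}(B)\subseteq\mathrm{N}(A+B)$. If $x$ satisfies $Ax=0$ and $Bx=0$, then by linearity $(A+B)x=Ax+Bx=0$, so $x\in\mathrm{N}(A+B)$. This direction uses nothing beyond linearity and holds for arbitrary matrices.

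The substantive direction is $\mathrm{N}(A+B)\subseteq\mathrm{N}(A)\cap\mathrm{N}(B)$. Take any $x\in\mathrm{N}(A+B)$, so that $(A+B)x=0$. Left-multiplying by $x^{\top}$ yields
\[
x^{\top}Ax+x^{\top}Bx=x^{\top}(A+B)x=0.
\]
Since $A\succeq 0$ and $B\succeq 0$, both summands are nonnegative, and a sum of nonnegative terms that vanishes forces each term to vanish, giving $x^{\top}Ax=0$ and $x^{\top}Bx=0$. The crucial step — the only place where positive semi-definiteness is genuinely needed — is to upgrade $x^{\top}Ax=0$ to $Ax=0$. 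For this I would invoke the symmetric square-root factorization: since $A$ is PSD, write $A=A^{1/2}A^{1/2}$ with $A^{1/2}$ symmetric PSD, so that $x^{\top}Ax=\|A^{1/2}x\|^{2}$. Then $x^{\top}Ax=0$ forces $A^{1/2}x=0$, whence $Ax=A^{1/2}(A^{1/2}x)=0$. Applying the identical argument to $B$ gives $Bx=0$, so $x\in\mathrm{N}(A)\cap\mathrm{N}(B)$, completing the inclusion.

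The main (and essentially only) obstacle is recognizing that the implication $x^{\top}Ax=0\Rightarrow Ax=0$ for PSD $A$ is what powers the nontrivial direction; once that standard quadratic-form fact is in hand, the remainder is routine linearity and the nonnegativity of each quadratic form.
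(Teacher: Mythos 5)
Your proof is correct and follows essentially the same route as the paper's: both proofs argue by double inclusion, with the easy direction being pure linearity and the substantive direction using $x^{\top}(A+B)x=0$ together with nonnegativity of the two quadratic forms to force $x^{\top}Ax=x^{\top}Bx=0$. The only difference is that you explicitly justify the upgrade $x^{\top}Ax=0\Rightarrow Ax=0$ via the symmetric square root $A=A^{1/2}A^{1/2}$ and $x^{\top}Ax=\|A^{1/2}x\|^{2}$, a step the paper's proof passes over as ``straightforward,'' so your write-up is if anything slightly more complete.
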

\begin{proof}
    We prove the equality by showing both inclusions. First, we show that $\text{N}(A+B)\subseteq\text{N}(A)\cap \text{N}(B)$. Let $\bx$ be an arbitrary vector in $\text{N}(A+B)$, which means that $\bx$ satisfies \[(A+B)\bx=0,\] then we have \[\bx^{\top}A\bx+\bx^{\top}B\bx=\bx^{\top}(A+B)\bx=0,\]
    moreover, given that $A$ and $B$ are both positive semi-definite, it follows that\[\bx^{\top}A\bx=\bx^{\top}B\bx=0,\] it is straightforward to know that $\bx$ belonging to both $\text{N}(A)$ and $\text{N}(B)$, i.e., $\bx \in \text{N}(A)\cap \text{N}(B)$. Therefore, we conclude that \[\text{N}(A+B)\subseteq\text{N}(A)\cap \text{N}(B).\] On the other hand, we will show that $\text{N}(A)\cap \text{N}(B) \subseteq \text{N}(A+B)$. Similarly, for an arbitrary vector $\bx \in \text{N}(A)\cap \text{N}(B)$, we have\[\bx^{\top}(A+B)\bx=\bx^{\top}A\bx=\bx^{\top}B\bx=Ax=B\bx=0.\] Therefore, it holds that $(A+B)\bx=0$, i.e. $\bx \in \text{N}(A+B)$. Therefore, we conclude that \[\text{N}(A)\cap \text{N}(B)\subseteq \text{N}(A+B).\] This completes the proof of the lemma. 
\end{proof}

\begin{lemma}\label{lem:A two matrix column space}
    Let $A,B \in \mathbb{R}^{n\times n}$ be two positive semi-definite matrices, it holds that
    \begin{equation}
        \small 
        \begin{aligned}
            \mathrm{C}(A+B)=\mathrm{C}(A)+\mathrm{C}(B).
        \end{aligned}
    \end{equation}
\end{lemma}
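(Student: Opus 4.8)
The plan is to convert the column-space identity into a null-space identity by passing to orthogonal complements, so that the two previously established lemmas, Lemma \ref{lem:A two matrix null space} and Lemma \ref{lem:A orthogonal complement}, can be combined directly. The key observation is that a positive semi-definite matrix $M$ is in particular symmetric, so by the fundamental theorem of linear algebra $\mathrm{C}(M) = \mathrm{C}(M^{\top}) = \mathrm{N}(M)^{\perp}$. This reduces every column space appearing in the claim to the orthogonal complement of a null space, where our existing tools apply.

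First I would record the symmetry-based identity $\mathrm{C}(M) = \mathrm{N}(M)^{\perp}$ for $A$, $B$, and $A+B$; the last matrix is itself positive semi-definite as a sum of positive semi-definite matrices. Applying this to $A+B$ gives $\mathrm{C}(A+B) = \mathrm{N}(A+B)^{\perp}$, and then Lemma \ref{lem:A two matrix null space} rewrites the right-hand side as $\mathrm{C}(A+B) = (\mathrm{N}(A)\cap\mathrm{N}(B))^{\perp}$.

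Next I would invoke Lemma \ref{lem:A orthogonal complement} with the choice $\mathbb{U} = \mathrm{C}(A)$ and $\mathbb{V} = \mathrm{C}(B)$. Since $\mathbb{U}^{\perp} = \mathrm{N}(A)$ and $\mathbb{V}^{\perp} = \mathrm{N}(B)$, the lemma yields $\mathrm{N}(A)\cap\mathrm{N}(B) = (\mathrm{C}(A)+\mathrm{C}(B))^{\perp}$. Taking orthogonal complements of both sides and using the finite-dimensional involution $(\mathbb{W}^{\perp})^{\perp} = \mathbb{W}$, I obtain $(\mathrm{N}(A)\cap\mathrm{N}(B))^{\perp} = \mathrm{C}(A)+\mathrm{C}(B)$. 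Chaining this equality with the previous step delivers $\mathrm{C}(A+B) = \mathrm{C}(A)+\mathrm{C}(B)$, which is the desired conclusion.

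The argument is essentially bookkeeping once the two supporting lemmas are in hand, so there is no substantial obstacle; the only points requiring care are (i) justifying $\mathrm{C}(M)=\mathrm{N}(M)^{\perp}$ from symmetry rather than assuming it outright, and (ii) being consistent about which subspaces play the role of $\mathbb{U},\mathbb{V}$ versus $\mathbb{U}^{\perp},\mathbb{V}^{\perp}$ when applying Lemma \ref{lem:A orthogonal complement}, since a careless substitution would interchange the sum and the intersection and break the chain.
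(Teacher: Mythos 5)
Your proof is correct and follows essentially the same route as the paper's: both reduce the column-space claim to a null-space claim by using symmetry of positive semi-definite matrices to write $\mathrm{C}(M)=\mathrm{N}(M)^{\perp}$, then invoke Lemma \ref{lem:A two matrix null space} to get $\mathrm{N}(A+B)=\mathrm{N}(A)\cap\mathrm{N}(B)$, and finish by combining Lemma \ref{lem:A orthogonal complement} with the finite-dimensional double-complement identity. Your explicit attention to the substitution $\mathbb{U}=\mathrm{C}(A)$, $\mathbb{V}=\mathrm{C}(B)$ (equivalently $\mathbb{U}^{\perp}=\mathrm{N}(A)$, $\mathbb{V}^{\perp}=\mathrm{N}(B)$) matches the paper's chain of equalities exactly.
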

\begin{proof}
    For the positive semi-definite matrix $A+B$, we have
    \begin{equation*}
        \small
        \mathrm{C}(A+B)=\mathrm{C}((A+B)^{\top})=\text{N}(A+B)^{\perp},
    \end{equation*}
    where the first equality holds because the matrix $A+B$ is symmetric, while the second equality follows from the Fundamental Theorem of Linear Algebra. Combining with lemma\ref{lem:A orthogonal complement} and lemma\ref{lem:A two matrix null space}, it holds that
    \begin{equation*}
        \small
        \mathrm{C}(A+B)=\text{N}(A+B)^{\perp}=(\text{N}(A)\cap \text{N}(B))^{\perp}=\text{N}(A)^{\perp}+\text{N}(B)^{\perp}=\mathrm{C}(A^{\top})+\mathrm{C}(B^{\top})=\mathrm{C}(A)+\mathrm{C}(B),
    \end{equation*}
    which proves our conclusion.
\end{proof}

\begin{lemma}\label{lem:Cp}
    Given a positive vector $\bp=(p_{1},\cdots ,p_{n})^{\top} \in \mathbb{R}^{n}$ that satisfies $\sum_{i=1}^{n}p_{i}=1$, denote $C_{\bp}$ as:
    \[C_{\bp}=\diag\{p_{1},\cdots,p_{n}\}-\bp\bp^{\top}.\] Then we have that the rank of the matrix $C_{\bp}$ is $n-1$, and the column space of $C_{\bp}$ can be demonstrated as \[\mathrm{C}(C_{\bp})=\{\bw|\bw^{\top}\bv=0,\bv=(1,\cdots,1)^{\top}\}.\]
\end{lemma}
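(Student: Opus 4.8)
The plan is to identify $C_{\bp}$ as the covariance matrix of a categorical random variable and exploit its positive semi-definiteness together with the fundamental theorem of linear algebra. First I would record the quadratic-form identity: for any $\bu \in \mathbb{R}^{n}$,
\[
\bu^{\top} C_{\bp} \bu = \sum_{i=1}^{n} p_i u_i^2 - \Big(\sum_{i=1}^{n} p_i u_i\Big)^2 = \Var_{\bp}(\bu) \ge 0,
\]
so that $C_{\bp}$ is symmetric and positive semi-definite.

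The key step is to determine the null space $\mathrm{N}(C_{\bp})$ exactly. A direct computation gives $C_{\bp}\bone = \diag\{p_1,\dots,p_n\}\bone - \bp(\bp^{\top}\bone) = \bp - \bp = \bzero$, so $\bone \in \mathrm{N}(C_{\bp})$ and hence $\mathrm{span}\{\bone\} \subseteq \mathrm{N}(C_{\bp})$. For the reverse inclusion, if $C_{\bp}\bu = \bzero$ then in particular $\bu^{\top} C_{\bp} \bu = \Var_{\bp}(\bu) = 0$; since every $p_i > 0$ by hypothesis, vanishing variance forces all coordinates $u_i$ to coincide, i.e.\ $\bu \in \mathrm{span}\{\bone\}$. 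Thus $\mathrm{N}(C_{\bp}) = \mathrm{span}\{\bone\}$, which is one-dimensional, and the rank-nullity theorem immediately yields $\rank(C_{\bp}) = n - 1$.

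For the column space I would invoke the symmetry of $C_{\bp}$ together with the fundamental theorem of linear algebra (exactly as used in Lemma \ref{lem:A two matrix column space}): $\mathrm{C}(C_{\bp}) = \mathrm{C}(C_{\bp}^{\top}) = \mathrm{N}(C_{\bp})^{\perp} = (\mathrm{span}\{\bone\})^{\perp} = \{\bw \mid \bw^{\top}\bone = 0\}$, which is precisely the asserted set with $\bv = \bone$.

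There is no genuinely hard step here; the only place demanding care is the reverse null-space inclusion, where the strict positivity $p_i > 0$ is essential — were some $p_i$ to vanish, the corresponding coordinate of $\bu$ would be left unconstrained and the null space would strictly enlarge. Everything else follows from routine linear algebra once the variance identity is established.
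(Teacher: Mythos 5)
Your proof is correct and follows essentially the same route as the paper: both pin down $\mathrm{N}(C_{\bp})=\mathrm{span}\{\bone\}$ and then obtain $\mathrm{C}(C_{\bp})=\mathrm{N}(C_{\bp})^{\perp}$ from the symmetry of $C_{\bp}$ and the fundamental theorem of linear algebra. The only (harmless) variation is in the null-space step, where you argue via positive semi-definiteness and the vanishing of the variance quadratic form $\bu^{\top}C_{\bp}\bu=\Var_{\bp}(\bu)$ together with $p_{i}>0$, whereas the paper solves $C_{\bp}\bu=\bzero$ coordinate-wise (reaching the same use of strict positivity); you also make the rank-nullity deduction of $\rank(C_{\bp})=n-1$ explicit, which the paper leaves implicit.
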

\begin{proof}
     Let vector $\bv$ be any vector falls in the null space of matrix $C_{\bp}$, then we have  
    \begin{equation*}
        \small
        0=C_{\bp}\bv=\left(\diag\{p_{1},\cdots,p_{n}\}-\bp\bp^{\top}\right)\bv= \diag\{p_{1},\cdots,p_{n}\}\bv-(\bp^{\top}\bv)\bp,
    \end{equation*}
    then for any $i,j \in \{1,2,\cdots,n\}$, it holds that
    \begin{equation*}
        \small
        0=p_{i}v_{i}-(\bp^{\top}\bv)p_{i}=p_{j}v_{j}-(\bp^{\top}\bv)p_{j},
    \end{equation*}
    therefore, it holds that \[\small v_{i}=(\bp^{\top}\bv)=v_{j},\] for vector $\bp$ is a positive vector. 
    It follows that the null space of matrix $C_{\bp}$ is $\text{N}(C_{\bp})=\{\bv|\bv=k(1,\cdots,1)^{\top},k\in \mathbb{R}\}$. According to the Fundamental Theorem of Linear Algebra, we can further express the row space of $C_{\bp}$ as \[\mathrm{C}(C_{\bp}^{\top})=\{\bw|\bw^{\top}\bv=0,\bv=(1,\cdots,1)^{\top}\}.\]
    Furthermore, since $C_{\bp}$ is a symmetric matrix, its row space and column space are identical, which proves the lemma.
\end{proof}

\begin{lemma}\label{lem:Cp2}
    Let $A \in \mathbb{R}^{m \times n}$ be a matrix with full column rank and $C_{\bp}$ be a matrix satisfying Lemma \ref{lem:Cp}, where $m>n$. Then, the column spaces of matrices $AC_{\bp}A^{\top}$ and $AC_{\bp}$ are identical, and both can be expressed in the form of $\{A\bw|\bw^{\top}\bv=0,\bv=(1,\cdots,1)^{\top}\}$, i.e. \[\mathrm{C}(AC_{\bp}A^{\top})=\mathrm{C}(AC_{\bp})=\{A\bw|\bw^{\top}\bv=0,\bv=(1,\cdots,1)^{\top}\}.\]
\end{lemma}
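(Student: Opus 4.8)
The plan is to reduce everything to Lemma~\ref{lem:Cp}, which already identifies $\mathrm{C}(C_{\bp})$ explicitly, and then to transport that description through $A$ by a rank-counting argument. Throughout I would use two facts about $A$: it has full column rank, hence is injective, so $\rank(AM)=\rank(M)$ and $\mathrm{C}(AM)=\{A\bw:\bw\in\mathrm{C}(M)\}$ for any $M$; and that $C_{\bp}=\diag\{p_1,\dots,p_n\}-\bp\bp^\top$ is positive semidefinite (for any $\bu$, $\bu^\top C_{\bp}\bu=\sum_i p_i u_i^2-(\sum_i p_i u_i)^2\ge 0$), with $\rank(C_{\bp})=n-1$ by Lemma~\ref{lem:Cp}.

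First I would settle $\mathrm{C}(AC_{\bp})$ directly. Since the column space of a product satisfies $\mathrm{C}(AC_{\bp})=\{A\bw:\bw\in\mathrm{C}(C_{\bp})\}$, and Lemma~\ref{lem:Cp} gives $\mathrm{C}(C_{\bp})=\{\bw:\bw^\top\bv=0\}$ with $\bv=(1,\dots,1)^\top$, I immediately obtain $\mathrm{C}(AC_{\bp})=\{A\bw:\bw^\top\bv=0\}$. This disposes of the second claimed equality outright.

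Next I would connect $\mathrm{C}(AC_{\bp}A^\top)$ to $\mathrm{C}(AC_{\bp})$. One inclusion is free: writing $AC_{\bp}A^\top=(AC_{\bp})A^\top$ shows each column of $AC_{\bp}A^\top$ is a combination of columns of $AC_{\bp}$, so $\mathrm{C}(AC_{\bp}A^\top)\subseteq\mathrm{C}(AC_{\bp})$. For the reverse I would argue by dimension. On one side, injectivity of $A$ gives $\rank(AC_{\bp})=\rank(C_{\bp})=n-1$. On the other side, I would exploit the positive semidefiniteness of $C_{\bp}$ by factoring $C_{\bp}=C_{\bp}^{1/2}C_{\bp}^{1/2}$, so that $AC_{\bp}A^\top=(AC_{\bp}^{1/2})(AC_{\bp}^{1/2})^\top$; then $\rank(MM^\top)=\rank(M)$ together with injectivity of $A$ yields $\rank(AC_{\bp}A^\top)=\rank(AC_{\bp}^{1/2})=\rank(C_{\bp}^{1/2})=n-1$. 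Both subspaces thus have dimension $n-1$, and combined with the inclusion this forces $\mathrm{C}(AC_{\bp}A^\top)=\mathrm{C}(AC_{\bp})$, closing the chain of equalities.

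The only genuine subtlety I anticipate is the clean justification that $\rank(AC_{\bp}A^\top)=n-1$: here the positive semidefiniteness of $C_{\bp}$ is essential, since it lets me pass from the quadratic form $AC_{\bp}A^\top$ back to $AC_{\bp}$ without dropping rank (equivalently, $AC_{\bp}A^\top\bx=0$ iff $C_{\bp}^{1/2}A^\top\bx=0$, giving $\mathrm{N}(AC_{\bp}A^\top)=\mathrm{N}(C_{\bp}A^\top)$ and hence equal ranks). Everything else is routine manipulation of column spaces under the injective map $A$.
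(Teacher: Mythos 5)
Your proof is correct, but it takes a genuinely different route on the nontrivial inclusion $\mathrm{C}(AC_{\bp})\subseteq \mathrm{C}(AC_{\bp}A^{\top})$. The paper argues directly: since $A$ has full column rank, $A^{\top}\in\mathbb{R}^{n\times m}$ has full row rank and is therefore surjective onto $\mathbb{R}^{n}$, so any $\by=AC_{\bp}\bv$ can be rewritten as $\by=AC_{\bp}A^{\top}\bu$ by solving $A^{\top}\bu=\bv$; this is a two-line argument that never touches positive semidefiniteness and in fact establishes $\mathrm{C}(AMA^{\top})=\mathrm{C}(AM)$ for an arbitrary $M\in\mathbb{R}^{n\times n}$. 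You instead close the gap by a rank count: factoring $C_{\bp}=C_{\bp}^{1/2}C_{\bp}^{1/2}$, using $\rank(MM^{\top})=\rank(M)$ and injectivity of $A$ to get $\rank(AC_{\bp}A^{\top})=\rank(AC_{\bp}^{1/2})=\rank(C_{\bp})=n-1=\rank(AC_{\bp})$, so the easy inclusion forces equality of the two $(n-1)$-dimensional subspaces. Your argument is sound -- the quadratic-form step $AC_{\bp}A^{\top}\bx=0\Rightarrow C_{\bp}^{1/2}A^{\top}\bx=0$, hence $\mathrm{N}(AC_{\bp}A^{\top})=\mathrm{N}(C_{\bp}A^{\top})$, is exactly right -- but it genuinely leans on the positive semidefiniteness of $C_{\bp}$, which the paper's version does not need; conversely, your approach yields as byproducts the explicit rank $n-1$ and the null-space identity, which the paper's inclusion-chasing leaves implicit. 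Both proofs handle the second equality $\mathrm{C}(AC_{\bp})=\{A\bw\mid\bw^{\top}\bv=0\}$ identically, by pushing the description of $\mathrm{C}(C_{\bp})$ from Lemma \ref{lem:Cp} through the injective map $A$.
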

\begin{proof}
    
    We begin by showing that $\mathrm{C}(AC_{\bp}A^{\top})$ is contained in $\mathrm{C}(AC_{\bp})$. For any $\by \in \mathrm{C}(AC_{\bp}A^{\top})$, there exists a $\bu$ such that \[\by=AC_{\bp}A^{\top}\cdot \bu=AC_{\bp}\cdot(A^{\top} \bu),\]
    thus it is clear that we also have $\by \in \mathrm{C}(AC_{\bp})$.
    
    On the other hand, for any $\by \in \mathrm{C}(AC_{\bp})$, we can find a $\bv$ such that \[\by = AC_{\bp}\cdot\bv.\]
    Since matrix $A$ has full column rank, it follows that there exists a vector $\bu$ such that \[\small A^{\top}\bu=\bv.\] 
    
    We have thus far proven that $\mathrm{C}(AC_{\bp}A^{\top})=\mathrm{C}(AC_{\bp})$. Furthermore, using Lemma \ref{lem:Cp}, it is straightforward to show that \[\mathrm{C}(AC_{\bp})=\{A\bw|\bw^{\top}\bv=0,\bv=(1,\cdots,1)^{\top}\},\]
    which proves our conclusion.
\end{proof}

\end{document}

